\documentclass[11pt]{article} 
\usepackage{etoolbox}
\newtoggle{coltformat}
\togglefalse{coltformat}
\newcommand{\colt}[1]{\iftoggle{coltformat}{#1}{}}
\newcommand{\arxiv}[1]{\iftoggle{coltformat}{}{#1}}

\arxiv{%
 \usepackage[letterpaper, left=1in, right=1in, top=1in, bottom=1in]{geometry}

\usepackage[colorlinks=true, linkcolor=blue!70!black, citecolor=blue!70!black]{hyperref}
\usepackage{microtype}

\usepackage{natbib}
\bibliographystyle{plainnat}
\bibpunct{(}{)}{;}{a}{,}{,}

\usepackage{amsthm}
\usepackage{mathtools}
\usepackage{amsmath}
\usepackage{bbm}
\usepackage{amsfonts}
\usepackage{amssymb}

\usepackage{xpatch}
\usepackage{pifont}
\usepackage{array}
\usepackage{booktabs}
\usepackage{floatrow}
\newfloatcommand{capbtabbox}{table}[][\FBwidth]
\usepackage{blindtext}
\usepackage{caption}

\theoremstyle{definition}  %
 \newtheorem{observation}{Observation}

\theoremstyle{theorem}
\newtheorem{lemma}{Lemma}
\newtheorem{theorem}{Theorem}

\newtheorem{definition}{Definition}

\xpatchcmd{\proof}{\itshape}{\normalfont\proofnameformat}{}{}
\newcommand{\proofnameformat}{\bfseries}

\usepackage{prettyref}
\newcommand{\pref}[1]{\prettyref{#1}}
\newcommand{\pfref}[1]{Proof of \prettyref{#1}}
\newcommand{\savehyperref}[2]{\texorpdfstring{\hyperref[#1]{#2}}{#2}}
\newrefformat{eq}{\savehyperref{#1}{\textup{(\ref*{#1})}}}
\newrefformat{ineq}{\savehyperref{#1}{\textup{(\ref*{#1})}}}
\newrefformat{eqn}{\savehyperref{#1}{Equation~\ref*{#1}}}
\newrefformat{obs}{\savehyperref{#1}{Observation~\ref*{#1}}}
\newrefformat{lem}{\savehyperref{#1}{Lemma~\ref*{#1}}}
\newrefformat{def}{\savehyperref{#1}{Definition~\ref*{#1}}}
\newrefformat{thm}{\savehyperref{#1}{Theorem~\ref*{#1}}}
\newrefformat{fact}{\savehyperref{#1}{Fact~\ref*{#1}}}
\newrefformat{tab}{\savehyperref{#1}{Table~\ref*{#1}}}
\newrefformat{table}{\savehyperref{#1}{Table~\ref*{#1}}}
\newrefformat{line}{\savehyperref{#1}{Line~\ref*{#1}}}
\newrefformat{corr}{\savehyperref{#1}{Corollary~\ref*{#1}}}
\newrefformat{cor}{\savehyperref{#1}{Corollary~\ref*{#1}}}
\newrefformat{sec}{\savehyperref{#1}{Section~\ref*{#1}}}
\newrefformat{app}{\savehyperref{#1}{Appendix~\ref*{#1}}}
\newrefformat{ass}{\savehyperref{#1}{Assumption~\ref*{#1}}}
\newrefformat{ex}{\savehyperref{#1}{Example~\ref*{#1}}}
\newrefformat{fig}{\savehyperref{#1}{Figure~\ref*{#1}}}
\newrefformat{alg}{\savehyperref{#1}{Algorithm~\ref*{#1}}}
\newrefformat{rem}{\savehyperref{#1}{Remark~\ref*{#1}}}
\newrefformat{conj}{\savehyperref{#1}{Conjecture~\ref*{#1}}}
\newrefformat{prop}{\savehyperref{#1}{Proposition~\ref*{#1}}}
\newrefformat{proto}{\savehyperref{#1}{Protocol~\ref*{#1}}}
\newrefformat{prob}{\savehyperref{#1}{Problem~\ref*{#1}}}
\newrefformat{claim}{\savehyperref{#1}{Claim~\ref*{#1}}}
\newrefformat{tbl}{\savehyperref{#1}{Table~\ref*{#1}}} %
}
\colt{%
\usepackage{microtype}

\usepackage{mathtools}
\usepackage{amsmath}
\usepackage{bbm}
\usepackage{amsfonts}

\usepackage{amssymb}

\usepackage{xpatch}

\usepackage{pifont}

\usepackage{array}
\usepackage{booktabs}

\usepackage{floatrow}
\newfloatcommand{capbtabbox}{table}[][\FBwidth]

\usepackage{blindtext}
\usepackage{caption}

\newtheorem{observation}{Observation}

\xpatchcmd{\proof}{\itshape}{\normalfont\proofnameformat}{}{}
\newcommand{\proofnameformat}{\bfseries}

\usepackage{prettyref}
\newcommand{\pref}[1]{\prettyref{#1}}
\newcommand{\pfref}[1]{Proof of \prettyref{#1}}
\newcommand{\savehyperref}[2]{\texorpdfstring{\hyperref[#1]{#2}}{#2}}
\newrefformat{eq}{\savehyperref{#1}{\textup{(\ref*{#1})}}}
\newrefformat{ineq}{\savehyperref{#1}{\textup{(\ref*{#1})}}}
\newrefformat{eqn}{\savehyperref{#1}{Equation~\ref*{#1}}}
\newrefformat{obs}{\savehyperref{#1}{Observation~\ref*{#1}}}
\newrefformat{lem}{\savehyperref{#1}{Lemma~\ref*{#1}}}
\newrefformat{def}{\savehyperref{#1}{Definition~\ref*{#1}}}
\newrefformat{thm}{\savehyperref{#1}{Theorem~\ref*{#1}}}
\newrefformat{fact}{\savehyperref{#1}{Fact~\ref*{#1}}}
\newrefformat{tab}{\savehyperref{#1}{Table~\ref*{#1}}}
\newrefformat{table}{\savehyperref{#1}{Table~\ref*{#1}}}
\newrefformat{line}{\savehyperref{#1}{Line~\ref*{#1}}}
\newrefformat{corr}{\savehyperref{#1}{Corollary~\ref*{#1}}}
\newrefformat{cor}{\savehyperref{#1}{Corollary~\ref*{#1}}}
\newrefformat{sec}{\savehyperref{#1}{Section~\ref*{#1}}}
\newrefformat{app}{\savehyperref{#1}{Appendix~\ref*{#1}}}
\newrefformat{ass}{\savehyperref{#1}{Assumption~\ref*{#1}}}
\newrefformat{ex}{\savehyperref{#1}{Example~\ref*{#1}}}
\newrefformat{fig}{\savehyperref{#1}{Figure~\ref*{#1}}}
\newrefformat{alg}{\savehyperref{#1}{Algorithm~\ref*{#1}}}
\newrefformat{rem}{\savehyperref{#1}{Remark~\ref*{#1}}}
\newrefformat{conj}{\savehyperref{#1}{Conjecture~\ref*{#1}}}
\newrefformat{prop}{\savehyperref{#1}{Proposition~\ref*{#1}}}
\newrefformat{proto}{\savehyperref{#1}{Protocol~\ref*{#1}}}
\newrefformat{prob}{\savehyperref{#1}{Problem~\ref*{#1}}}
\newrefformat{claim}{\savehyperref{#1}{Claim~\ref*{#1}}}
\newrefformat{tbl}{\savehyperref{#1}{Table~\ref*{#1}}}

}

\newcommand\numberthis{\addtocounter{equation}{1}\tag{\theequation}}
\allowdisplaybreaks

\DeclarePairedDelimiter{\abs}{\lvert}{\rvert} 
\DeclarePairedDelimiter{\brk}{[}{]}
\DeclarePairedDelimiter{\crl}{\{}{\}}
\DeclarePairedDelimiter{\prn}{(}{)}
\DeclarePairedDelimiter{\nrm}{\|}{\|}
\DeclarePairedDelimiter{\norm}{\|}{\|}
\DeclarePairedDelimiter{\tri}{\langle}{\rangle}

\DeclarePairedDelimiter{\ceil}{\lceil}{\rceil}
\DeclarePairedDelimiter{\floor}{\lfloor}{\rfloor}

\newcommand{\lmid}{~\Big|~}

\let\Pr\undefined

\DeclareMathOperator{\En}{\mathbb{E}}

\DeclareMathOperator{\Pr}{Pr}

\DeclareMathOperator*{\argmin}{arg\,min} %

\newcommand{\ls}{\ell}

\newcommand{\ind}[1]{\mathbbm{1}\crl*{#1}}    %

\newcommand{\eps}{\epsilon}
\newcommand{\veps}{\varepsilon}

\newcommand{\ldef}{\vcentcolon=}

\newcommand{\wt}[1]{\widetilde{#1}}
\newcommand{\wh}[1]{\widehat{#1}}
\newcommand{\mb}[1]{\boldsymbol{#1}}
\newcommand{\proman}[1]{\prn*{\romannumeral #1}}

\def\ddefloop#1{\ifx\ddefloop#1\else\ddef{#1}\expandafter\ddefloop\fi}
\def\ddef#1{\expandafter\def\csname bb#1\endcsname{\ensuremath{\mathbb{#1}}}}
\ddefloop ABCDEFGHIJKLMNOPQRSTUVWXYZ\ddefloop
\def\ddefloop#1{\ifx\ddefloop#1\else\ddef{#1}\expandafter\ddefloop\fi}
\def\ddef#1{\expandafter\def\csname b#1\endcsname{\ensuremath{\mathbf{#1}}}}
\ddefloop ABCDEFGHIJKLMNOPQRSTUVWXYZ\ddefloop
\def\ddef#1{\expandafter\def\csname c#1\endcsname{\ensuremath{\mathcal{#1}}}}
\ddefloop ABCDEFGHIJKLMNOPQRSTUVWXYZ\ddefloop
\def\ddef#1{\expandafter\def\csname h#1\endcsname{\ensuremath{\widehat{#1}}}}
\ddefloop ABCDEFGHIJKLMNOPQRSTUVWXYZabcdefghijklmnopqrsuvwxyz\ddefloop    %
\def\ddef#1{\expandafter\def\csname hc#1\endcsname{\ensuremath{\widehat{\mathcal{#1}}}}}
\ddefloop ABCDEFGHIJKLMNOPQRSTUVWXYZ\ddefloop
\def\ddef#1{\expandafter\def\csname t#1\endcsname{\ensuremath{\widetilde{#1}}}}
\ddefloop ABCDEFGHIJKLMNOPQRSTUVWXYZ\ddefloop
\def\ddef#1{\expandafter\def\csname tc#1\endcsname{\ensuremath{\widetilde{\mathcal{#1}}}}}
\ddefloop ABCDEFGHIJKLMNOPQRSTUVWXYZ\ddefloop

\newcommand{\ball}{\mathbb{B}}

\DeclareSymbolFont{extraup}{U}{zavm}{m}{n}
\DeclareMathSymbol{\varheart}{\mathalpha}{extraup}{86}
\DeclareMathSymbol{\vardiamond}{\mathalpha}{extraup}{87}
\newcommand{\grad}{\nabla}
\newcommand{\hess}{\nabla^2}

\newcommand{\overleq}[1]{\overset{ #1}{\leq{}}}
\newcommand{\overgeq}[1]{\overset{#1}{\geq{}}}
\newcommand{\overeq}[1]{\overset{#1}{=}}

\usepackage{tikz}

 \newcommand{\markedterm}[1]{(\text{\bf #1})}      
\newcommand{\xhat}{\wh{x}}

\newcommand{\hessn}[1]{m_{#1}}

\newcommand{\bern}{\mathrm{Bernoulli}}

\newcommand{\numbabysteps}{K}
\newcommand{\bias}{b}
\newcommand{\newestparams}{{\epsilon}} 
\newcommand{\Q}{Q}
\newcommand{\coin}{C} 
\newcommand{\derF}[2]{\grad^{#2}F (#1)}
\newcommand{\ojaF}{\mathsf{Oja}}
\newcommand{\oja}{\ojaF}
\newcommand{\Eoja}{\mathsf{E}^{\ojaF}}

\newcommand{\del}{\partial}
\newcommand{\barzDelta}{\tilde{\Delta}_0}
\newcommand{\Phibar}{\Lambda}
\newcommand{\barF}{G_T}
\newcommand{\barFunscaled}{G_T}
\newcommand{\barFscaled}{G^\star_T}
\newcommand{\lambdamin}{\lambda_{\min}}
\newcommand{\bt}{{(t)}} 
\newcommand{\zi}{z^{(i)}}
\newcommand{\progf}[2]{\prog_{\frac{#1}{#2}}}

\newcommand{\pchain}{\rho}
\newcommand{\stocOhigh}{\mathsf{O}^{p}}

\newcommand{\HOSF}{\cF_{p} \prn*{\Delta, \Lipp}} %

\newcommand{\pest}[1]{\widehat{\grad^{#1} F}}
\newcommand{\pestunscaled}[1]{\widehat{\grad^{#1} F_T}}
\newcommand{\pestscaled}[1]{\widehat{\grad^{#1} F_T^\star}}

\newcommand{\pestunscaledBar}[1]{\widehat{\grad^{#1} G_T}}
\newcommand{\pestscaledBar}[1]{\widehat{\grad^{#1} 
G_T^\star}}

\newcommand{\Fclasstilp}{\cF_p(\Delta, \Lip{1:p})}

\newcommand{\stocpOracle}[1][p]{\cO_{#1}\prn{F,\sigma_{1:#1} }}

\newcommand{\oracle}{\mathsf{O}}
\newcommand{\estimator}[1][p]{\oracle^{\,#1}_F}

\newcommand{\stocder}[2]{\widehat{\grad^{#2} F}(#1)}
\newcommand{\stocgrad}[1]{\widehat{\grad F}\prn*{#1}}
\newcommand{\stochess}[1]{\widehat{\hess F}\prn*{#1}}
\newcommand{\stocgradNA}{\widehat{\grad F}}
\newcommand{\stochessNA}{\widehat{\hess F}}

\newcommand{\xprev}{x_{\mathrm{prev}}}

\newcommand{\func}[1]{F\prn{#1}}
\newcommand{\gradestc}[1]{\gradest[#1]}

\newcommand{\tepsilon}{\tilde{\epsilon}}

\newcommand{\stocpOracleas}{\overline{\cO}_2(F, \sigma_1, 
\bar{\sigma}_2)}
 
\newcommand{\bsigma}{\bar{\sigma}}

\makeatletter
\newcommand{\hessest}[1][\@nil]{%
  \def\tmp{#1}%
   \ifx\tmp\@nnil
	H 
    \else
     H^{\prn*{#1}}  
    \fi}
\makeatother

\makeatletter
\newcommand{\gradest}[1][\@nil]{%
  \def\tmp{#1}%
   \ifx\tmp\@nnil
	g 
    \else
     g^{\prn*{#1}}  
    \fi}
\makeatother

\newcommand{\outerbatchsize}{n}

\newcommand{\indicator}[1]{\mb{1}\crl*{#1}}
\newcommand{\indic}{\mb{1}} 
\newcommand{\gprev}{g_{\mathrm{prev}}}

\newcommand{\getgradientfunc}{\textsf{HVP-RVR-Gradient-Estimator}}
\newcommand{\mainalg}{\textsf{HVP-RVR}\xspace}

\newcommand{\R}{\mathbb{R}}
\newcommand{\E}{\mathbb{E}}
\newcommand{\N}{\mathbb{N}}

\newcommand{\defeq}{\coloneqq}

\newcommand{\Lip}[1]{L_{#1}}

\newcommand{\LipGradBar}{\bar{L}}

\newcommand{\Lipp}{\Lip{p}}

\newcommand{\lip}[1]{\ell_{#1}}
\newcommand{\lipBar}[1]{\tilde{\ell}_{#1}}

\newcommand{\Pz}{P_z}

\newcommand{\op}{\mathrm{op}}
\newcommand{\opnorm}[1]{\norm{#1}_{\rm op}}

\newcommand{\alg}{\mathsf{A}}

\newcommand{\AlgZR}{\mathcal{A}_{\textnormal{\textsf{zr}}}}

\newcommand{\unscaled}[1]{#1}

\newcommand{\Funscaled}{\unscaled{F}_T}

\newcommand{\Fscaled}{F^{\star}_T}

\newcommand{\Otilde}{\wt{O}}

\newcommand{\eigmin}{\lambda_{\mathrm{min}}}

\renewcommand{\ind}[1]{^{(#1)}}

\DeclareMathOperator*{\support}{\mathrm{support}}

\newcommand{\prog}{\mathrm{prog}}

\newcommand{\iid}{\mathrm{i.i.d.}}
\newcommand{\iidsim}{\overset{\iid}{\sim}}

\newcommand{\symd}{\bbS^{d}}

\newcommand{\tth}{$t^\text{th}$\xspace}
\newcommand{\pth}{$p$th\xspace}

\newcommand{\tensT}{T}

\definecolor{innerboxcolor}{rgb}{.9,.95,1}
\definecolor{outerlinecolor}{rgb}{.6,0,.2}

\usepackage{xcolor}
\arxiv{
 \usepackage[suppress]{color-edits}
\addauthor{as}{red}
\addauthor{df}{green!50!black}
\addauthor{ks}{blue}
\addauthor{yc}{cyan!75!black}
\addauthor{ya}{red!50!black}

}
\colt{
 \usepackage[suppress]{color-edits}
\addauthor{as}{red}
\addauthor{df}{green!50!black}
\addauthor{ks}{blue}
\addauthor{yc}{cyan!75!black}
\addauthor{ya}{red!50!black}

}

\colt{
 
\setcitestyle{numbers,square,comma} %
\usepackage{times}
}

\usepackage{longtable}
\usepackage{import}
\usepackage{enumitem}
\usepackage{algorithm}
\usepackage[noend]{algpseudocode} 

\algblockdefx[class]{Class}{EndClass}[1]{\textbf{object} \textsc{#1}:}{\textbf{end object}}
\makeatletter 
\ifthenelse{\equal{\ALG@noend}{t}}%
  {\algtext*{EndClass}}
  {}%
\makeatother
\usepackage{amsmath}
\usepackage{makecell}
\usepackage{enumitem}
\usepackage{xspace}
\usepackage[scaled=.9]{helvet}
\usepackage{booktabs}
\usepackage[export]{adjustbox}
\setcounter{tocdepth}{0}

\arxiv{
\newcommand{\algcomment}[1]{\textcolor{blue!70!black}{\footnotesize{\texttt{\textbf{//
          #1}}}}}

}
\colt{
\newcommand{\algcomment}[1]{\textcolor{blue}{\footnotesize{\texttt{\textbf{//
          #1}}}}}

}

\newcommand{\sigmss}{\sigma_{\mathrm{mss}}}
\renewcommand{\veps}{\eps}
\allowdisplaybreaks

\colt{
\title[Second-Order Information in Non-Convex Stochastic 
Optimization]{Second-Order Information in Non-Convex 
Stochastic Optimization: Power and Limitations} 
}
\arxiv{
\title{Second-Order Information in Non-Convex Stochastic Optimization: Power and Limitations} 
}

\colt{\usepackage{times}}

\arxiv{
\author{ Yossi Arjevani\\
	New York University\\
        {\small\texttt{yossia@nyu.edu}}\\ 
	  \and
	  Yair Carmon\\ 
	  Stanford University\\
	  {\small\texttt{yairc@stanford.edu}}\\
	  \and
	  John C.\ Duchi\\
	  Stanford University\\
	  {\small\texttt{jduchi@stanford.edu}}\\
          \and
	~ Dylan J.\ Foster\\
        MIT\\
        {\small\texttt{dylanf@mit.edu}}\\ 
	  \and
	Ayush Sekhari\\
	Cornell University\\
                {\small\texttt{as3663@cornell.edu}}\\
       \and
        Karthik Sridharan\\
        Cornell University\\
        {\small\texttt{ks999@cornell.edu}}\\
}
}

\colt{
\coltauthor{%
 \Name{Yossi Arjevani}\footnotemark[1] \Email{yossia@nyu.edu}
 \AND
 \Name{Yair Carmon}\footnotemark[2] \Email{yairc@stanford.edu}
 \AND
  \Name{John C.\ Duchi}\footnotemark[2] \Email{jduchi@stanford.edu}
 \AND
  \Name{Dylan J.\ Foster}\footnotemark[3] \Email{dylanf@mit.edu}
 \AND
  \Name{Ayush Sekhari}\footnotemark[4] \Email{as3663@cornell.edu}
 \AND
  \Name{Karthik Sridharan}\footnotemark[4] \Email{ks999@cornell.edu}\\
 \addr \footnotemark[1]New York University, \footnotemark[2]Stanford University,
 \footnotemark[3]Massachusetts Institute of Technology, \footnotemark[4]Cornell University
}
}

\date{} 

\begin{document}

\maketitle
 \begin{abstract}
  We design an algorithm which finds an $\epsilon$-approximate stationary 
  point (with $\|\nabla F(x)\|\le \epsilon$) using $O(\epsilon^{-3})$ 
  stochastic gradient and Hessian-vector products, matching guarantees 
  that were previously available only under a stronger assumption of access 
  to multiple queries with the same random seed. 
 We prove a lower bound which establishes that this rate
is optimal and---surprisingly---that it cannot be improved using stochastic $p$th order methods
for any $p\ge 2$, even when the first $p$ derivatives of the objective 
are Lipschitz. Together, these results characterize the complexity of
non-convex stochastic optimization with second-order methods and beyond.
Expanding our scope to the oracle complexity of finding 
$(\epsilon,\gamma)$-approximate second-order stationary points, we 
establish nearly matching upper and lower bounds for stochastic 
second-order methods. Our lower bounds here are novel even in the noiseless 
case.
\end{abstract}

\colt{
\begin{keywords}
  Stochastic optimization, non-convex optimization, second-order 
  methods, variance reduction, Hessian-vector products.
\end{keywords}
}

\section{Introduction}\label{sec:intro}
Let $F:\R^d\to\R$ have Lipschitz continuous gradient and Hessian, and 
consider the task of finding an $(\epsilon, \gamma)$-second-order 
stationary point (SOSP), that is, $x\in\R^d$ such that
\begin{equation}\label{eq:sosp}
\norm{\grad F(x)} \le \eps
\quad\mbox{and}\quad
\hess F(x) \succeq - \gamma I.
\end{equation}
This task plays a central role in the study of non-convex optimization: for 
functions satisfying a weak strict saddle condition \citep{ge2015escaping}, exact SOSPs 
(with 
$\epsilon=\gamma=0$) are 
local minima, and therefore the condition \eqref{eq:sosp} serves
as a proxy for 
approximate local optimality.\footnote{
	 However, it is NP-Hard to decide whether a SOSP 
	 is a local minimum or a high-order saddle 
	 point~\citep{murty1987some}.}
Moreover, for a growing set of non-convex optimization problems arising 
in 
machine learning, SOSPs are in fact \emph{global minima} \citep{ge2015escaping, ge2016matrix, sun2018geometric, 
mc2019implicit}. 
Consequently, there has been intense recent interest in the design of efficient 
algorithms 
for finding approximate SOSPs~\citep{jin2017how,allen2018how,carmon2018accelerated,fang2018spider,tripuraneni2018stochastic,xu2018first,fang2019sharp}.

In stochastic approximation tasks---particularly those motivated by
machine learning---access 
to the objective function is often restricted to
stochastic estimates of its gradient; for each query point $x\in\R^d$ we observe 
$\stocgrad{x,z}$, where $z\sim P_z$ is a random variable such that
\begin{equation}\label{eq:stoc-grad} 
\E \bigl[\stocgrad{x,z}\bigr] = \grad F(x)
~~\mbox{and}~~
\E \, \norm{\stocgrad{x,z}  - \grad F(x)}^2 \le \sigma_1^2.
\end{equation}
This restriction typically arises due to computational considerations (when  
$\stocgrad{\cdot{}, z}$ is much cheaper to compute than $\grad F(\cdot{})$, as in 
empirical risk minimization or Monte Carlo simulation), or due to
fundamental online nature of the 
 problem at hand (e.g., when $x$ represents a routing scheme 
and $z$ represents traffic on a given day). However, for many problems with additional structure, we have
access to extra information. For example, we
often have access to stochastic second-order 
information in the form of a Hessian estimator $\stochess{x,z}$ satisfying
\begin{equation}\label{eq:stoc-hess}
\E \bigl[\stochess{x,z}\bigr] = \hess F(x)
~~\mbox{and}~~
\E \, \opnorm{\stochess{x,z}  - \hess F(x)}^2 \le \sigma_2^2.
\end{equation} 

In this paper, we characterize the extent to which the stochastic Hessian 
information~\pref{eq:stoc-hess}, as well as higher-order information, contributes to the efficiency of
finding first- and second-order stationary points. We approach
this question from the 
perspective of \emph{oracle complexity}~\citep{nemirovski1983problem}, 
which measures  
efficiency by the number of queries to estimators of the 
form~\pref{eq:stoc-grad}---and 
possibly~\pref{eq:stoc-hess}---required to satisfy the 
condition~\pref{eq:sosp}. 

\subsection{Our Contributions}
\arxiv{\begin{figure}[t]
      \centering
      		\includegraphics[width=0.6\textwidth]{{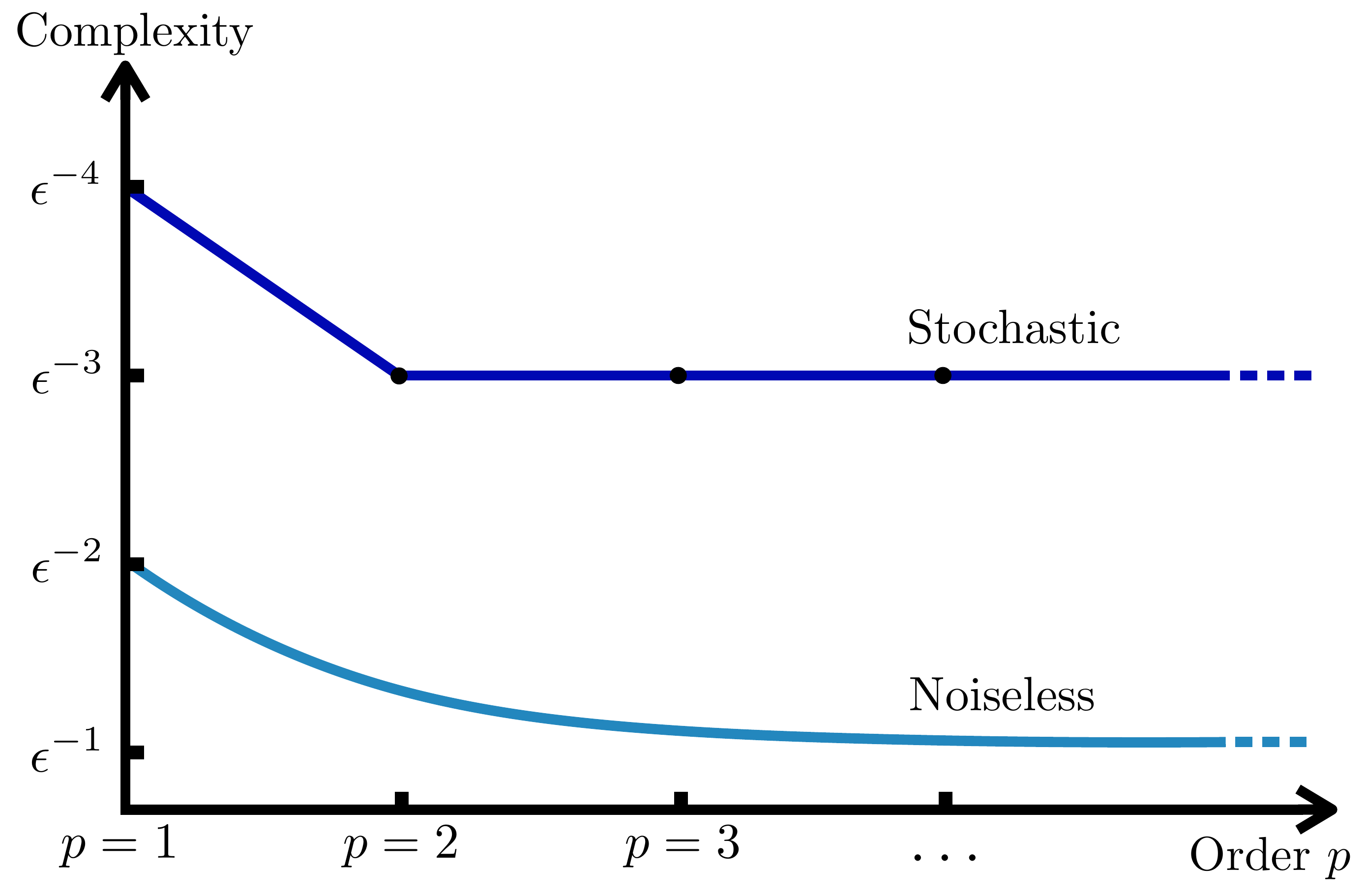}}
      \vspace{.85cm} 
      		\caption{The \emph{elbow effect}: For stochastic
      			oracles, the optimal complexity sharply improves from $\veps^{-4}$ 
      			for
      			$p=1$ to $\veps^{-3}$ for $p=2$,
      			but there is no further improvement for
                        $p>2$. For noiseless oracles, the optimal
                        complexity begins at $\eps^{-2}$ for $p=1$ and smoothly 
      			approaches 
      			$\eps^{-1}$ as the derivative order 
      			$p\to\infty$.}\label{fig:elbow}%
\end{figure}}

\colt{%
\captionsetup[figure]{font=scriptsize}
 \begin{figure}[t]
      \centering
      \begin{floatrow}
      	\ffigbox[0.43\textwidth]{%
      		      		\centering
      		\resizebox{0.45\textwidth}{!}{%
      		\includegraphics[width=0.5\textwidth]{{figures/full_complexity.pdf}}
              }
      \vspace{.85cm}
    }{%
      		\caption{The ``elbow effect:'' for stochastic
      			oracles, optimal complexity sharply improves from $\veps^{-4}$ 
      			for
      			$p=1$ to $\veps^{-3}$ for $p=2$,
      			but has no further improvement for $p>2$. Noiseless
      			complexity begins at $\eps^{-2}$ for $p=1$ and smoothly 
      			approaches 
      			$\eps^{-1}$ as the derivative order 
      			$p\to\infty$.}\label{fig:elbow}%
      	}
        ~
      	\capbtabbox[0.5\textwidth]{%
      		\resizebox{0.5\textwidth}{!}{
      		\begin{tabular}{p{3.8cm} p{1.4cm} p{1.6cm} p{2.5cm}}
      			\toprule
      			\makecell[lt]{Method} & \makecell[lt]{Requires \\
      				$\stochessNA$?} & \makecell[lt]{Complexity\\ bound} & 
      			\makecell[lt]{Additional\\ assumptions} \\
      			\midrule
      			\makecell[lt]{SGD~\citep{ghadimi2013stochastic}\\~} & No 
      			& 
      			$O(\epsilon^{-4})$ &\\
      			\makecell[lt]{Restarted SGD~\citep{fang2019sharp}} & No & 
      			$O(\eps^{-3.5})$ & \makecell[tl]{$\stocgradNA$ Lipschitz\\
      				almost surely}
      			\\
      			\makecell[lt]{Subsampled regularized\\
      				Newton~\citep{tripuraneni2018stochastic}} & Yes & 
      				$O(\eps^{-3.5})$ & \\
      			\makecell[lt]{Recursive variance\\ 
      			reduction~\citep[e.g.,][]{fang2018spider}} & No &
      			$O(\epsilon^{-3})$ 
      			&\makecell[tl]{Mean-squared
      				\\smoothness,\\Sim.\ queries \\
      				(see \pref{app:oracle})}
      			\\
      			\Xhline{4\arrayrulewidth}
      			\addlinespace[3pt]
      			Hessian-vector recursive VR ({\pref{alg:epsilon_hvp}}) & Yes & 
      			${O(\eps^{-3})}$& None\\
      			Subsampled Newton with VR ({\pref{alg:epsilon_cubic}}) & Yes & 
      			${O(	\eps^{-3})}$& None\\
      			\bottomrule 
      		\end{tabular}}
      	}{%
      		 \caption{Comparison of guarantees for finding $\eps$-stationary 
      			points 
      			(i.e., $\E \norm{\grad F(x)}\le \eps$) for a function $F$ with
      			Lipschitz gradient and Hessian. See
      			\pref{table:summary_detailed} for detailed comparison.}
      		\label{table:summary}%
      	}
    \end{floatrow} 
\end{figure}
\captionsetup[figure]{font=normalsize}

}

We provide new upper and lower bounds on the 
stochastic oracle complexity of finding $\eps$-stationary points and ($\epsilon,\gamma)$-SOSPs. In
brief, our main results are as follows.
\arxiv{
  \begin{itemize}
  }
  \colt{
  \begin{itemize}[leftmargin=\parindent]
    }
\item \textbf{Finding $\veps$-stationary points: The elbow effect.} 
	We propose a new algorithm that finds an $\veps$-stationary point 
	($\gamma=\infty$) with  $O(\veps^{-3})$ stochastic 
	gradients and stochastic Hessian-vector products. We furthermore show  
	that this guarantee is not improvable via a complementary
  $\Omega(\eps^{-3})$ lower bound. All previous algorithms achieving
  $O(\veps^{-3})$ complexity require ``multi-point'' queries, in
  which the algorithm can query stochastic gradients at multiple
  points for the same random seed. Moreover, we show that
  $\Omega(\eps^{-3})$ remains a lower bound for stochastic \pth-order
  methods for all $p\geq{}2$ and hence---in contrast to the deterministic 
  setting---the optimal rates for higher-order methods exhibit an ``elbow
  effect''; see \pref{fig:elbow}.
  
\item \textbf{$(\veps,\gamma)$-stationary points: Improved algorithm and
  nearly matching lower bound.} We extend our algorithm to find $(\eps,\gamma)$-stationary points using $O(\veps^{-3}+
  \veps^{-2}\gamma^{-2} + \gamma^{-5})$ stochastic gradient and
  Hessian-vector products, and prove a nearly matching   $\Omega(\veps^{-3}+ \gamma^{-5})$ lower bound.
\end{itemize}

In the remainder of this section we overview our results in greater
detail. Unless otherwise stated, we assume $F$ has both Lipschitz
gradient and Hessian. To simplify the overview, we focus on dependence
on $\eps^{-1}$ and $\gamma^{-1}$ while keeping the other
parameters---namely the initial optimality 
gap $F(x\ind{0})-\inf_{x\in\R^d} F(x)$, the Lipschitz constants of $\grad F$ 
and $\hess
F$,  and the variances of their estimators---held fixed. Our main theorems 
give
explicit dependence on these parameters.

\subsubsection{First-order stationary points ($\gamma=\infty$)}
We first describe our developments for the task of finding 
$\eps$-approximate first-order 
stationary points (satisfying~\pref{eq:sosp} with $\gamma=\infty$), and 
subsequently extend our results to general $\gamma$. The reader may also 
refer to \pref{table:summary}  for a succinct comparison of upper bounds.

\paragraph{Variance reduction via Hessian-vector products: A new gradient estimator.}
Using stochastic gradients and stochastic Hessian-vector products as primitives, we 
design a new variance-reduced gradient estimator. Plugging it into standard stochastic gradient descent (SGD), we obtain an algorithm that returns a 
point 
$\hx$ satisfying $\E \,\norm{\grad F(\hx)}\le \eps$
and requires 
$O(\eps^{-3})$ stochastic gradient and HVP queries in expectation. In comparison, vanilla SGD requires 
$O(\eps^{-4})$ queries~\citep{ghadimi2013stochastic}, and the previously 
best known rate under our assumptions
was $O(\eps^{-3.5})$, by both cubic-regularized Newton's method and a
restarted variant of SGD \citep{tripuraneni2018stochastic,fang2019sharp}.

Our approach builds on a line of work by~\citet{fang2018spider,zhou2018stochastic,
	wang2018spiderboost,cutkosky2019momentum} 
      that also develop algorithms with complexity $O(\eps^{-3})$, but
      require a  ``multi-point'' oracle
      in which algorithm can query the stochastic gradient at
      multiple points for the same random seed. Specifically, in the
      $n$-point variant of this model, the algorithm can query at the set of points $(x_1, \ldots, x_n)$ and receive 
\begin{equation}
			\stocder{x_1, z}{}, \ldots, \stocder{x_n, z}{}, \quad \text{where} 
			\quad z \overset{\iid}{\sim} P_z, \label{eq:n_point_query_model}
\end{equation}
and where the estimator $\stocder{x,z}{}$ is unbiased and has bounded
variance in the sense of \pref{eq:stoc-grad}. The aforementioned works achieve
$O(\eps^{-3})$ complexity using $n=2$ simultaneous queries, while our 
new algorithm
achieves the same rate using $n=1$ (i.e., $z$ is drawn afresh at each
query), but using stochastic Hessian-vector products in addition to
stochastic gradients. However, we show in
\pref{app:oracle} that under the statistical assumptions made in these
works, the two-point stochastic gradient oracle model is
\emph{strictly} stronger than the single-point stochastic
gradient/Hessian-vector product oracle we consider here. On the other hand, unlike our
algorithm, these works do not require Lipschitz Hessian.

The algorithms that achieve complexity $O(\eps^{-3})$ using
two-point queries
work by estimating gradient differences of the form $\grad 
F(x) - \grad F(x')$ using 
$\stocgrad{x,z} - \stocgrad{x',z}$ and applying recursive variance 
reduction~\citep{nguyen2017sarah}. Our primary algorithmic
contribution is a second-order stochastic estimator 
for $\grad F(x) - \grad F(x')$ which avoids simultaneous queries while 
maintaining comparable error guarantees. To derive our estimator, we note 
that $\grad F(x) - \grad F(x')  = \int_0^1 \hess F(xt + x'(1-t))
(x-x') dt$, and 
use $K$ queries to the stochastic 
Hessian
estimator~\pref{eq:stoc-hess} to numerically approximate this
integral.\footnote{%
	More precisely, our estimator~\pref{eq:key-estimator} only requires 
	stochastic Hessian-vector 
	products, whose computation is often roughly as expensive as that of a 
	stochastic gradient~\citep{pearlmutter1994fast}.}
      Specifically, our estimator takes the form
\begin{equation}\label{eq:key-estimator}
\frac{1}{K}\sum_{k=0}^{K-1} \wh{\grad^{2}F}\bigl(x\cdot (1-\tfrac{k}{K}) + x' \cdot 
\tfrac{k}{K}, 
z\ind{i} \bigr)
(x-x'),
\end{equation}
where $z\ind{i}\iidsim{}P_z$. Unlike the usual estimator $\stocgrad{x,z} - \stocgrad{x',z}$, the 
estimator~\pref{eq:key-estimator} is biased.
Nevertheless, we show that 
\emph{choosing $K$ dynamically} according to $K\propto \norm{x-x'}^2$ 
provides adequate control over both bias and variance while maintaining 
the desired 
query complexity. Combining the
integral estimator~\pref{eq:key-estimator} with recursive variance 
reduction, we attain $O(\eps^{-3})$ complexity.

\paragraph{Demonstrating the power of second-order information.}
For functions with Lipschitz gradient and Hessian, we prove an
$\Omega(\eps^{-3.5})$ lower bound on the minimax oracle complexity 
of algorithms 
for finding stationary points using \emph{only} stochastic
gradients~\pref{eq:stoc-grad}.\footnote{We formally prove our results for the structured class 
 of \emph{zero-respecting algorithms}~\citep{carmon2019lower_i}; the
 lower bounds
 extend to general randomized algorithms via similar arguments 
 to~\citet
 {arjevani2019lower}.} This lower 
bound is an extension of the results 
of~\citet{arjevani2019lower}, who showed that for functions with Lipschitz
  gradient but \emph{not} Lipschitz Hessian, the optimal rate is
  $\Theta(\eps^{-4})$ using \textit{only} stochastic
  gradients~\pref{eq:stoc-grad}. Together with our new $O(\eps^{-3})$ upper bound, this lower 
bound reveals 
that stochastic Hessian-vector products offer an $\Omega(\eps^{-0.5})$ 
improvement in the oracle complexity for finding stationary points in
the single-point query model. 
This contrasts the noiseless optimization setting, where 
finite 
gradient differences can approximate Hessian-vector products arbitrarily well,
meaning these oracle models are equivalent. 
\arxiv{
\begin{table}
	\small
  \renewcommand{\arraystretch}{1.1}
      		\begin{tabular}{p{5.5cm} p{1.6cm} p{2cm} p{5.8cm}}
      			\toprule
      			\makecell[lt]{Method} & \makecell[lt]{Requires \\
      				$\stochessNA$?} & \makecell[lt]{Complexity\\ bound} & 
      			\makecell[lt]{Additional\\ assumptions} \\
      			\midrule
      			\makecell[lt]{SGD \citep{ghadimi2013stochastic}} & No 
      			& 
      			$O(\epsilon^{-4})$ &\\[9pt]
      	      			\makecell[lt]{Restarted SGD \citep{fang2019sharp}} & No & 
      			$O(\eps^{-3.5})$ & \makecell[tl]{$\stocgradNA$ Lipschitz\\
      				almost surely}
                  \\[9pt]
      			\makecell[lt]{Subsampled regularized 
      				Newton\\\citep{tripuraneni2018stochastic}} & Yes$^{*}$ & 
      				$O(\eps^{-3.5})$ & \\[9pt]
      			\makecell[lt]{Recursive variance 
      			reduction\\\citep[e.g.,][]{fang2018spider}} & No &
      			$O(\epsilon^{-3})$ 
      			&\makecell[tl]{Mean-squared
      				smoothness, \\simultaneous queries 
      				(see \pref{app:oracle})}
      			\\
      			\Xhline{4\arrayrulewidth}
      			\addlinespace[3pt]
                  \makecell[lt]{SGD with \mainalg\\({\pref{alg:epsilon_hvp}})} & Yes$^{*}$ & 
      			${O(\eps^{-3})}$& None\\[9pt]
      			Subsampled Newton \\w/ \mainalg ({\pref{alg:epsilon_cubic}}) & 
      			Yes & 
      			${O(	\eps^{-3})}$& None\\
      			\bottomrule 
      		\end{tabular}
      		 \caption{Comparison of guarantees for finding $\eps$-stationary 
      			points 
      			(i.e., $\E \norm{\grad F(x)}\le \eps$) for a function $F$ with
      			Lipschitz gradient and Hessian. See
      			\pref{table:summary_detailed} for explicit
                        dependence on problem parameters. Algorithms
                        marked with $^{*}$ require only stochastic Hessian-vector products.}
      		\label{table:summary}%
		\end{table} 
}

\paragraph{Demonstrating the limitations of higher-order 
information ($p > 2$).}
For algorithms that can query both stochastic gradients and stochastic 
Hessians, we prove a lower bound of $\Omega(\eps^{-3})$ on the oracle 
complexity of finding an expected $\epsilon$-stationary point. 
This proves that our $O(\eps^{-3})$ upper bound is optimal in the leading order term in 
$\eps$, despite using only stochastic Hessian-vector products rather than 
full stochastic Hessian queries.

Notably, our $\Omega(\eps^{-3})$ lower 
bound extends to 
settings where stochastic higher-order oracles are available, i.e, 
when the first $p$ derivatives are Lipschitz and we have bounded-variance estimators $\{\stocder{\cdot, 
\cdot}{q}\}_{q \le p}$. The lower bound holds for any finite $p$, and thus, as a function of the 
oracle order $p$, the minimax complexity has an elbow
(\pref{fig:elbow}): 
for $p=1$ the complexity is $\Theta(\eps^{-4})$~\citep{arjevani2019lower} 
while for all $p\ge 2$ it is $\Theta(\eps^{-3})$. This means that 
smoothness and stochastic 
derivatives beyond the second-order cannot improve the leading term in rates of 
convergence to stationarity, establishing a fundamental limitation of 
stochastic high-order information. This highlights another contrast with the noiseless setting, where $p$th order methods enjoy
improved complexity for every $p$ \citep{carmon2019lower_i}.

As we discuss in 
\pref{app:oracle}, for multi-point stochastic oracles
\pref{eq:n_point_query_model}, the 
rate $O(\eps^{-3})$ is attainable even without stochastic Hessian
access. Moreover, our $\Omega(\eps^{-3})$ lower bound for stochastic
$p$th order oracles holds even when multi-point queries are
allowed. Consequently, when viewed through the lens of  worst-case oracle complexity, our lower bounds show that even stochastic Hessian information is not 
helpful in the multi-point setting.

\subsubsection{Second-order stationary points}

\paragraph{Upper bounds for general $\gamma$.}
We incorporate our recursive variance-reduced Hessian-vector product-based gradient 
estimator into an algorithm that combines SGD with negative curvature
search. Under the slightly stronger (relative to \pref{eq:stoc-hess})
assumption that the stochastic Hessians have almost surely bounded error,
we prove that---with constant probability---the algorithm returns an $(\epsilon, \gamma)$-SOSP after performing $O(\eps^{-3} + 
\eps^{-2}\gamma^{-2} + \gamma^{-5})$  stochastic gradient and
Hessian-vector product queries. 
\paragraph{A lower bound for finding second-order stationary points.}
We prove a minimax lower bound which establishes that the stochastic second-order oracle complexity of finding 
$(\eps, \gamma)$-SOSPs is $\Omega(\eps^{-3} + \gamma^{-5})$. 
Consequently, the algorithms we develop have optimal worst-case 
complexity in the regimes $\gamma=O(\epsilon^{2/3})$ and $\gamma = 
\Omega(\eps^{0.5})$. Compared to  
our lower bounds for finding $\veps$-stationary points, 
proving the $\Omega(\gamma^{-5})$ lower bound requires a more 
substantial modification of the constructions 
of~\cite{carmon2019lower_i} and \cite{arjevani2019lower}. In fact, our lower bound is new 
even in the noiseless regime (i.e., $\sigma_1=\sigma_2=0$), where it 
becomes $\Omega(\eps^{-1.5} + \gamma^{-3})$; this matches the 
guarantee of the cubic-regularized Newton's 
method~\citep{nesterov2006cubic} and consequently characterizes the 
optimal rate for finding approximate SOSPs using noiseless second-order 
methods.
\subsection{Further related work} We briefly survey additional upper and lower complexity bounds 
related to our work and 
place our results within their context.
The works of \citet{monteiro2013accelerated,arjevani2019oracle,agarwal2017lower} delineate the second-order
oracle complexity 
of \emph{convex} optimization in the noiseless setting;
\cite{arjevani2017oracle} treat the finite-sum 
setting.

For 
functions with Lipschitz gradient and Hessian, oracle access to the Hessian 
significantly accelerates convergence to $\varepsilon$-approximate global 
minima, reducing the complexity from $\Theta(\varepsilon^{-0.5})$ to 
$\Theta(\varepsilon^{-2/7})$. 
However, since the hard instances for first-order 
convex optimization are  
quadratic~\citep{nemirovski1983problem,arjevani2016iteration,simchowitz2018randomized},
 assuming Lipschitz continuity of the Hessian does not improve the 
complexity if one only has access to a first-order oracle. This contrasts the case for finding $\eps$-approximate stationary points of 
\emph{non-convex} 
functions with noiseless oracles. There, Lipschitz continuity of the 
Hessian 
improves the first-order oracle complexity from $\Theta(\eps^{-2})$ to 
$O(\eps^{-1.75})$, with a lower bound of $\Omega(\eps^{-12/7})$ for 
deterministic algorithms~\citep{carmon2017convex,carmon2019lower_ii}. 
Additional access to full Hessian further improves this complexity to 
$\Theta(\eps^{-1.5})$, and for \pth-order oracles with Lipschitz \pth
derivative, the complexity further improves to
$\Theta(\eps^{-(1+\frac{1}{p})})$ \citep{carmon2019lower_i}; see
\pref{fig:elbow}.

\subsection{Paper organization}
We formally introduce our notation
and oracle model in \pref{sec:prelim}. 
\pref{sec:epsilon} contains our results concerning the complexity of finding 
$\epsilon$-first-order stationary points: algorithmic upper bounds 
(\pref{sec:epsilon_upper}) and algorithm-independent lower bounds 
(\pref{sec:epsilon_lower}). Following a similar outline, 
\pref{sec:epsilon_gamma} describes our upper and lower bounds for finding 
$(\eps, \gamma)$-SOSPs. \arxiv{We conclude the paper in \pref{sec:conclusion} 
with a discussion of directions for further research. }\colt{In
\pref{app:conclusion}, we discussion directions for future research. }Additional
technical comparison with related work is given in \pref{app:comparison}
and \ref{app:oracle}, and proofs are given in \pref{app:estimators}
through \pref{app:lower}.

\newcommand{\xmark}{\ding{55}}
\newcommand{\cmark}{\ding{51}}

\paragraph{Notation.} 
We let $\cC^p$ denote the class of $p$-times 
differentiable real-valued functions, and let $\grad^q F$ 
denote the $q$th derivative of a  
given function $F\in \cC^p$ for $q\in\{1,\dots,p\}$. 
Given a function $F\in\cC^1$, we let $
\grad_i F(x) \defeq \brk*{\grad 
F(x)}_i = \frac{\partial}{\partial x_i} F(x)$.  When $F\in\cC^2$ 
is twice 
differentiable, we 
define, $\nabla^2_{ij} f(x) 
\ldef \brk*{\grad^2 f(x)}_{ij} = 
\frac{\partial^2}{\partial x_i \partial x_j} f(x)$, 
and similarly 
define $\brk*{\grad^p f(x)}_{i_1,i_2,\ldots,i_p}=\frac{\partial^p}{\partial 
x_{i_1}\cdots \partial x_{i_p}} f(x)$ for 
\pth-order 
derivatives. 
For a vector $x\in\bbR^{d}$, $\nrm*{x}$ denotes the Euclidean 
norm and 
$\nrm*{x}_\infty$ denotes the $\ls_\infty$ norm. For matrices
$A\in\bbR^{d\times{}d}$, $\nrm*{A}_{\op}$ denotes the operator
norm. More generally, for symmetric \pth
order tensors $T$, we define the operator norm via
$\nrm*{T}_{\op}=\sup_{\nrm*{v}=1}\abs*{\tri*{T,v^{\otimes{}p}}}$, and
we let $T[v\ind{1},\ldots,v\ind{p}]=\tri*{T,v\ind{1}\otimes\cdots\otimes{}v\ind{p}}$.
Note that for a vector $x\in\bbR^{d}$ the operator norm $\nrm*{x}_\op$
coincides with the Euclidean norm $\nrm{x}$. We let $\symd$ denote the
space of symmetric matrices in $\bbR^{d\times{}d}$. We let $\bbB_r(x)$ denote
the Euclidean ball of radius $r$ centered at $x\in\bbR^{d}$ (with
dimension clear from context). We adopt non-asymptotic
big-O notation, where $f=O(g)$ for $f,g:\cX\to\bbR_{+}$ if 
$f(x)\leq{}Cg(x)$ for some constant $C>0$.

\section{Setup} \label{sec:prelim} 

We
study the problem of finding $\eps$-stationary and
$(\eps,\gamma)$-second order stationary points in the standard
 oracle complexity framework \citep{nemirovski1983problem}, which we
briefly review here.

\paragraph{Function classes.}
We consider $p$-times differentiable functions satisfying
standard regularity conditions, and define
\begin{equation*}
  \Fclasstilp = \left\{ F:\R^{d}\to\R \left| 
    \begin{array}{l}
      F \in \cC^p, \quad F(0) - \inf_x F(x) \leq \Delta, \\ 
      \nrm*{\grad^q{}F(x)-\grad{}^qF(y)}_\op \leq{}L_q\nrm*{x-y}
      ~\mbox{for~all~} x, y \in \R^d,~q\in[p] 
    \end{array}
    \right. \!\! \right\},
\end{equation*}
so that $\Lip{1:p}\defeq(L_1,\dots,L_p)$ specifies the Lipschitz constants 
of the $q$th order derivatives $\nabla^q F$ with respect to the operator norm.	We make no restriction on the ambient dimension $d$.
	\paragraph{Oracles.} 
        For a given function $F\in\Fclasstilp$, we consider a class of stochastic 
        \pth order oracles defined by a distribution $P_z$ over a measurable 
        set $\cZ$ and an estimator
	\begin{equation} \label{eq:oracle_answer}
  		\estimator(x,z) \defeq \prn*{\wh{F}(x,z), \stocder{x, 
  		z}{}, \stocder{x, z}{2}, \ldots, \stocder{x, z}{p}}, %
	\end{equation} 
        where 
	$\crl{\stocder{\cdot, z}{q}}_{q=0}^{p}$ are unbiased estimators of the respective 
	derivatives. That is, for all $x$, {$\En_{z\sim{}P_z}[\wh{F}(x,z)]=F(x)$} and $\En_{z 
	\sim P_z}[\stocder{x, z}{q}] = \grad^q 
	F(x)$ for all $q\in[p]$.\footnote{For $p\ge2$ we assume without 
	loss of generality that $\stocder{x,z}{p}$ is a symmetric tensor.}

	Given variance parameters $\sigma_{1:p} = (\sigma_1, 
	\ldots, \sigma_{p})$, we define the \textit{oracle class} 
	$\stocpOracle$ to be the set of all stochastic 
	\pth-order oracles for which the variance of the derivative estimators 
	satisfies
	\begin{equation} \label{eq:oracle_variance}
	\En_{z\sim P_z}\,{\nrm*{ 
	 \stocder{x, z}{q} - \nabla^q F(x) }_\op^2} \leq 
	 \sigma_q^2,~~q\in [p].
       \end{equation}
       The upper bounds in this paper hold even when 
       $\sigma_0^{2}\ldef{}\max_{x\in\bbR^{d}}\mbox{Var}(\wh{F}(x,z))$ is infinite,
       while our lower bounds hold when $\sigma_0=0$, so
       to reduce notation, we leave dependence on this parameter
       tacit.
	\paragraph{Optimization protocol.}
	We consider stochastic \pth-order optimization algorithms that access an 
	unknown 
	function $F\in\Fclasstilp$ through multiple rounds of queries to a 
	stochastic \pth-order oracle $(\oracle_F^{p},P_z) \in \stocpOracle$. 
	When queried at $x\ind{t}$ in round $t$, the oracle performs an 
        independent draw of $z^{(t)} \sim P_z$ and answers with 
        $\oracle_F^{p}(x\ind{t},z\ind{t})$. Algorithm queries depend on $F$ 
        only through the oracle answers; see e.g.\ \citet[Section 
        2]{arjevani2019lower} for a more formal treatment.

\section{Complexity of finding first-order stationary points} 
\label{sec:epsilon} 

In this section we focus on the task of finding $\epsilon$-approximate 
stationary points (satisfying $\norm{\grad F(x)}\le \epsilon$). As prior work 
observes~\citep[cf.][]{carmon2017convex,allen2018how}, stationary point 
search is a useful primitive for achieving the end goal of finding 
second-order stationary points~\eqref{eq:sosp}. We begin with describing 
algorithmic upper bounds on the complexity of finding stationary points 
with stochastic second-order oracles, and then proceed to match their 
leading terms with general $p$th order lower bounds.

\subsection{Upper bounds} \label{sec:epsilon_upper}
Our algorithms rely on \emph{recursive variance 
reduction}~\citep{nguyen2017sarah}: 
we sequentially estimate the 
gradient at the points $\{x\ind{t}\}_{t\ge0}$ by accumulating cheap 
estimators 
 of $\grad F(x\ind{\tau})-\grad 
F(x\ind{\tau-1})$ for $\tau=t_0+1,\ldots, t$, where at iteration $t_0$ we 
reset the gradient estimator by computing a high-accuracy approximation 
of $\grad F(x\ind{t_0})$ with many oracle 
queries. Our implementation of recursive variance reduction, 
\pref{alg:gradient_estimator}, differs from previous approaches~\citep{fang2018spider,zhou2018stochastic,wang2018spiderboost} in 
three aspects.
  \colt{

 First, in \pref{line:hvp_update_step} we estimate differences of
	the form $\grad F(x\ind{\tau})-\grad F(x\ind{\tau-1})$ by averaging 
	stochastic Hessian-vector products. This allows us to do away
        with multi-point queries and operate under 
	weaker assumptions than prior work (see \pref{app:oracle}), but it also 
	introduces bias to our estimator, which makes its analysis more involved. 
	This is the key novelty in our algorithm.
         Second, rather than resetting the gradient estimator 
	every fixed number of steps, we reset with a 
	user-defined probability $\bias$ (\pref{line:initialization_est}); 
	this makes the estimator stateless and greatly simplifies its analysis, 
	especially in our algorithms for finding SOSPs, where we use a
        varying value of $\bias$. Finally, we dynamically select 
	the batch size $\numbabysteps$ for estimating gradient differences 
	based on the distance between iterates (\pref{line:rvr-params}), while 
	prior work uses a constant batch size. Our dynamic batch size scheme is 
	crucial for controlling the bias in our  estimator,
        while still allowing for large step sizes as in~\citet{wang2018spiderboost}.
    }
\arxiv{
  \begin{enumerate}
	\item In \pref{line:hvp_update_step} we estimate differences of
	the form $\grad F(x\ind{\tau})-\grad F(x\ind{\tau-1})$ by averaging 
	stochastic Hessian-vector products. This allows us to do away
        with multi-point queries and operate under 
	weaker assumptions than prior work (see \pref{app:oracle}), but it also 
	introduces bias to our estimator, which makes its analysis more involved. 
	This is the key novelty in our algorithm.
	\item Rather than resetting the gradient estimator 
	every fixed number of steps, we reset with a 
	user-defined probability $\bias$ (\pref{line:initialization_est}); 
	this makes the estimator stateless and greatly simplifies its analysis, 
	especially when we use a varying value of $\bias$ to find second-order 
	stationary points.
	\item We dynamically select 
	the batch size $\numbabysteps$ for estimating gradient differences 
	based on the distance between iterates (\pref{line:rvr-params}), while 
	prior work uses a constant batch size. Our dynamic batch size scheme is 
	crucial for controlling the bias in our gradient estimator,
        while still allowing for large step sizes as in~\citet{wang2018spiderboost}.
      \end{enumerate}
      }

The core of our analysis is the following lemma, which bounds the
gradient estimation error and expected oracle complexity. To
state the lemma, we let $\{x\ind{t}\}_{t\ge0}$ be sequence of queries to
  \pref{alg:gradient_estimator}, and let ${g\ind{t} =
  \getgradientfunc_{\epsilon,b}(x\ind{t}, x\ind{t-1}, g\ind{t-1})}$
be the sequence of estimates it returns.
\begin{lemma}\label{lem:base-var-bound}
	For any oracle in  $\cO_{2}(F, \sigma_{1:2})$ and 
  $F \in \cF_2\prn{\Delta, L_{1:2}}$, \pref{alg:gradient_estimator} guarantees that
	\begin{equation*}
	\E\, \norm{g\ind{t} - \grad F(x\ind{t})}^2 \le \epsilon^2
      \end{equation*}
       for all $t\geq{}1$. 
Furthermore, conditional on $x\ind{t-1}$, $x\ind{t}$ and $g\ind{t-1}$, the 
$t^\textup{th}$ 
 execution of \pref{alg:gradient_estimator}  with reset probability $\bias$ 
uses at most
\[
O\Big(1 + \bias \frac{\sigma_1^2}{\epsilon^2} + {\nrm[\big]{x\ind{t} - 
x\ind{t-1}}^2 \cdot \frac{  
{\sigma_2^2 + \epsilon L_2}}{\bias\epsilon^2}}\Big)
\]
stochastic gradient and Hessian-vector product queries in expectation. 
\end{lemma}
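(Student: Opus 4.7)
The plan is to decompose the estimator error into a reset branch (taken with probability $b$) and a variance-reduction branch (taken with probability $1-b$), and to show inductively that $V_t \defeq \E\|g\ind{t} - \grad F(x\ind{t})\|^2 \le \epsilon^2$ is preserved by each iteration. On the reset branch, $g\ind{t}$ is a fresh empirical mean of $n_{\textnormal{reset}} = \Theta(\sigma_1^2/\epsilon^2)$ stochastic gradients at $x\ind{t}$, so a standard variance bound gives a conditional squared error of at most, say, $\epsilon^2/4$. On the non-reset branch, writing $e\ind{t} \defeq g\ind{t} - \grad F(x\ind{t})$, the recursion takes the form $e\ind{t} = e\ind{t-1} + \mu_t + N_t$, where $\mu_t$ is the (conditionally deterministic) bias of the HVP-based integral estimator~\pref{eq:key-estimator} and $N_t$ is a mean-zero noise term given $(x\ind{t-1},x\ind{t})$ and the past.

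Next I would analyze $\mu_t$ and $N_t$ separately. The bias appears because the conditional expectation of the estimator is the $K$-point Riemann sum of $\int_0^1 \hess F(x\ind{t-1} + s(x\ind{t}-x\ind{t-1}))(x\ind{t}-x\ind{t-1})\,ds = \grad F(x\ind{t}) - \grad F(x\ind{t-1})$. By the assumed Lipschitzness of $\hess F$, the integrand is Lipschitz in $s$ with constant $L_2\|x\ind{t}-x\ind{t-1}\|^2$, so the Riemann error obeys $\|\mu_t\| \le L_2\|x\ind{t}-x\ind{t-1}\|^2/K$. For the variance, each stochastic HVP applied to the fixed vector $x\ind{t}-x\ind{t-1}$ has variance at most $\sigma_2^2\|x\ind{t}-x\ind{t-1}\|^2$, and averaging $K$ i.i.d.\ queries gives $\E[\|N_t\|^2\mid x\ind{t-1},x\ind{t}] \le \sigma_2^2\|x\ind{t}-x\ind{t-1}\|^2/K$. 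Choosing $K$ so that $\|\mu_t\| \le b\epsilon/C$ and $\E[\|N_t\|^2\mid x\ind{t-1},x\ind{t}] \le b\epsilon^2/C$ for a large constant $C$ requires exactly $K = O\bigl(1 + \|x\ind{t}-x\ind{t-1}\|^2(\sigma_2^2+\epsilon L_2)/(b\epsilon^2)\bigr)$, which combined with the $b\cdot n_{\textnormal{reset}}$ cost of the reset branch gives the stated conditional query bound.

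The main obstacle is closing the inductive loop: because $\mu_t$ is not mean zero, the expansion $\E\|e\ind{t-1}+\mu_t\|^2 = V_{t-1} + 2\E\langle e\ind{t-1},\mu_t\rangle + \E\|\mu_t\|^2$ carries a nontrivial cross term that breaks the martingale-orthogonality argument used in standard recursive variance reduction analyses. I would handle this via Young's inequality $\|a+\mu\|^2 \le (1+\alpha)\|a\|^2 + (1+1/\alpha)\|\mu\|^2$ with the carefully tuned choice $\alpha = b/(2(1-b))$, which yields $(1-b)(1+\alpha)=1-b/2$ and the contractive bound
\[
V_t \;\le\; b\cdot\tfrac{\epsilon^2}{4} + \bigl(1-\tfrac{b}{2}\bigr)V_{t-1} + \tfrac{2}{b}\E\|\mu_t\|^2 + \E\|N_t\|^2.
\]
Substituting $\E\|\mu_t\|^2 \le b^2\epsilon^2/C^2$ and $\E\|N_t\|^2 \le b\epsilon^2/C$ makes the last two terms $O(b\epsilon^2/C)$, which for sufficiently large $C$ is absorbed into the $b\epsilon^2/2$ slack coming from the contraction factor; hence $V_{t-1}\le\epsilon^2$ implies $V_t\le\epsilon^2$. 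The base case is immediate by initializing (or forcing the first call) with a reset, completing the induction.
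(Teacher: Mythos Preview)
Your proposal is correct and follows essentially the same approach as the paper's proof (Lemmas~\ref{lem:gradient_estimator_general_lemma} and~\ref{lem:gradient_estimator_oracle_complexity_meta_lemma} in \pref{app:estimators}): both use the reset/non-reset decomposition, bound the Riemann-sum bias by $O(L_2\|x\ind{t}-x\ind{t-1}\|^2/K)$ and the HVP variance by $\sigma_2^2\|x\ind{t}-x\ind{t-1}\|^2/K$, and close the induction via Young's inequality with parameter proportional to $b$ to produce the contraction factor $1-b/2$. The only cosmetic difference is that the paper takes Young's parameter $\alpha=b/2$ and bounds $(1-b)(1+b/2)\le 1-b/2$, whereas you take $\alpha=b/(2(1-b))$ to hit $1-b/2$ exactly.
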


We prove the lemma in \pref{app:estimators} by bounding the 
per-step variance using the HVP oracle's variance 
bound~\pref{eq:oracle_variance}, and by bounding the per-step bias 
relative to $\grad 
F(x\ind{t})-\grad F(x\ind{t-1})$ using the Lipschitz continuity of the
Hessian.

\colt{
\newlength{\textfloatsepsave}
\setlength{\textfloatsepsave}{\floatsep} 
\setlength{\floatsep}{6pt} 
\setlength{\textfloatsep}{0pt} 
}

\begin{algorithm}[t] 
	\caption{Recursive variance reduction with stochastic
		Hessian-vector products (\mainalg)} 
	\label{alg:gradient_estimator} 
	\begin{algorithmic}[1]
		\Statex{} \algcomment{Gradient estimator for $F \in \cF_2(\Delta, L_{1:2})$ given stochastic oracle in  $\cO_{2}(F, \sigma_{1:2})$.} 
		\Function{\getgradientfunc$_{\newestparams,\bias}$}{$x$, $\xprev$, 
			$\gprev$}: 
		\State  Set $\numbabysteps  = \ceil*{\frac{5 \prn*{ \sigma_2^2 + 
					L_2\epsilon}}{\bias \epsilon^2}\cdot \norm{x-\xprev}^2}$ and 
					$n = 
		\ceil*{\frac{5 
				\sigma_1^2}{\epsilon^2}}$.\label{line:rvr-params}
		\State Sample $\coin \sim \bern(\bias)$.\label{line:coin-toss} 
		\If {$\coin$ is  $1$ or $\gprev$ is $\bot$}{} 
		\label{line:initialization_est}
		\State\label{line:fresh_start} Query the oracle $\outerbatchsize $ 
		times at $x$ and set $g \leftarrow \frac{1}{n} 
		\sum_{j=1}^{\outerbatchsize} \stocder{x, z\ind{j}}{}$, where 
		$z\ind{j}\overset{\mathrm{i.i.d.}}{\sim}P_z.$
		\Else
		\State Define $x\ind{k} \ldef{} { \frac{k}{K}  x +
			\prn*{1 - \frac{k}{K}} \xprev}$ for $k \in \crl{0,\ldots,K}$. 
		\State \label{line:hvp_update_step} Query the oracle at the set of 
		points $\prn*{x\ind{k}}_{k=0}^{K-1}$ to compute 	
		\item[]  {\begin{center}
				$g \leftarrow \gprev +  \sum_{k=1}^K \stocder{x\ind{k-1},  
				z\ind{k}}{2} \prn*{x\ind{k} - x\ind{k-1}},\quad\text{where } 
				z\ind{k}\overset{\mathrm{i.i.d.}}{\sim}P_z.$ 
		\end{center}} 
		\EndIf 
		\State \Return $g$. 
		\EndFunction  
	\end{algorithmic}
\end{algorithm} 

\begin{algorithm}[t]
	\caption{Stochastic gradient descent with \mainalg} 
	\label{alg:epsilon_hvp} 
	\begin{algorithmic}[1]
		\Require  
		Oracle $(\estimator[2], P_z) \in 
		\stocpOracle[2]$ for $F \in \cF_2\prn*{\Delta, L_1, L_2}$.  
		Precision parameter $\epsilon$. 	
		\State Set $\eta = \frac{1}{2 \sqrt{ L_1^2 + \sigma_2^2 + \epsilon 
				L_2}}$, $T = \ceil*{\frac{2\Delta}{\eta \epsilon^2}}$, $\bias=\min \crl*{1, 
				\frac{\eta \epsilon \sqrt{\sigma_2^2 + \epsilon 
				L_2}}{\sigma_1}}$. \label{line:parameter_setting}
		\State Initialize $x\ind{0}, x\ind{1} \leftarrow 0$, $\gradest[0] 
		\leftarrow \bot$. 
		\For {$t = 1 ~\text{to}~ T$}
		\State $\gradest[t] \leftarrow$ $\getgradientfunc_{\newestparams, 
		\bias}(x 
		\ind{t}, x\ind{t-1}, \gradest[t-1])$. 
		\label{line:gradient_estimate_epsilon_hvp}
		\State $x \ind{t+1} \leftarrow x \ind{t} - \eta \gradest[t]$.  
		\label{line:hvp_update_rule}
		\EndFor 
		\State \textbf{return} $\hx$ chosen uniformly at random from 
		$\crl*{x\ind{t}}_{t=1}^{T}$.  
	\end{algorithmic} 
\end{algorithm}

Our first algorithm for finding $\eps$-stationary points, \pref{alg:epsilon_hvp}, is simply stochastic gradient descent using the 
\mainalg gradient estimator (\pref{alg:gradient_estimator}); we bound its 
complexity by
$O(\eps^{-3})$. Before stating the result formally, we briefly sketch the 
analysis here\colt{ (see \pref{app:epsilon_hvp} for details)}. Standard 
analysis of SGD with step size 
$\eta\le \frac{1}{2L_1}$ shows that its iterates satisfy $\E \norm{\grad 
F(x\ind{t})}^2 
\le \frac{1}{\eta}\E [F(x\ind{t+1}) - F(x\ind{t})] + O(1) \cdot \E\, 
\norm{g\ind{t} 
- \grad F(x\ind{t})}^2$. Telescoping over $T$ steps, using 
\pref{lem:base-var-bound} and substituting in the 
initial 
suboptimality bound $\Delta$, this implies that
\begin{equation} \label{eq:sgd-variance-bound}
\frac{1}{T} \sum_{t=0}^{T-1} \En{} \norm{\grad F(x\ind{t})}^2 \le
\frac{\Delta}{\eta T} + O(\eps^2).
\end{equation}
Taking  $T=\Omega(\frac{\Delta}{\eta \eps^2})$, we are guaranteed that
a uniformly selected iterate has expected norm $O(\veps)$.

\colt{\setlength{\textfloatsep}{\textfloatsepsave}}

To account for oracle complexity, we observe from
\pref{lem:base-var-bound} that $T$ calls to 
\pref{alg:gradient_estimator} require at most $T(\frac{\sigma_1^2 
\bias}{\epsilon^2} + 1) +   
\sum_{t=1}^{T} \En {\norm{x\ind{t}-x\ind{t-1}}^2} \cdot 
\big(\frac{\sigma_2^2 + L_2 \epsilon}{\bias\epsilon^2} \big)$ oracle 
queries in expectation. Using $x\ind{t}-x\ind{t-1}=\eta g\ind{t-1}$, 
\pref{lem:base-var-bound} and~\pref{eq:sgd-variance-bound} imply that 
$\sum_{t=1}^{T} \En {\norm{x\ind{t}-x\ind{t-1}}^2} \le O(T\epsilon^2)$. 
We 
then choose 
$\bias$ to out the  terms $T\big({\frac{\sigma_1^2 
\bias}{\epsilon^2}}\big)$ and $T\big({\frac{\sigma_2^2 + L_2\epsilon}{b}}\big)$.%
\arxiv{
This gives the 
following 
complexity guarantee, 
which we prove in~\pref{app:epsilon_hvp}.
}
\colt{
  This gives the 
following 
 guarantee.
}

\begin{theorem}\label{thm:epsilon_hvp}
 For any function $F \in \cF_2\prn*{\Delta, L_1, L_2}$, stochastic 
 second-order oracle in $\cO_{2}(F, \sigma_1, \sigma_2)$, and $\epsilon < 
\min \crl*{\sigma_1, \sqrt{\Delta{}L_1}}$, with probability at least 
$\frac{3}{4}$, \pref{alg:epsilon_hvp}  returns a point $\hx$ such that $
 \nrm*{\grad F(\hx)} \leq \epsilon$ and performs at most
 \begin{equation*}
 O\Big( 
 \frac{\Delta \sigma_1 \sigma_2}{\epsilon^3} + \frac{\Delta 
 L_2^{0.5}\sigma_1 }{\epsilon^{2.5}} 
 + \frac{\Delta L_1}{\epsilon^2} 
\Big)
 \end{equation*}
stochastic gradient and Hessian-vector product queries.
\end{theorem}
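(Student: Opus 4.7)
\textbf{Proof proposal for \pref{thm:epsilon_hvp}.} The plan is to execute the three-step outline sketched in the paragraphs preceding the theorem---(i) an SGD descent inequality driven by the mean-squared error of $g\ind{t}$, (ii) the error control of \pref{lem:base-var-bound}, and (iii) an expected oracle-count accounting that is balanced by the choice of $\bias$---and then convert the resulting expectation bounds to a probability-$3/4$ statement via Markov plus a union bound.

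First, I would establish a standard one-step descent lemma for SGD run with a biased gradient oracle: since $F$ has $L_1$-Lipschitz gradient, the update $x\ind{t+1} = x\ind{t} - \eta g\ind{t}$ with $\eta \le \tfrac{1}{2L_1}$ satisfies
\[
F(x\ind{t+1}) \le F(x\ind{t}) - \tfrac{\eta}{2}\nrm{\grad F(x\ind{t})}^2 + \eta\nrm{g\ind{t} - \grad F(x\ind{t})}^2 - \tfrac{\eta}{4}\nrm{g\ind{t}}^2.
\]
Taking expectations, applying the $\E\,\|g\ind{t}-\grad F(x\ind{t})\|^2\le\eps^2$ guarantee of \pref{lem:base-var-bound}, telescoping over $t=1,\dots,T$, and using $F(x\ind{1})-\inf F \le \Delta$, I obtain
\[
\tfrac{1}{T}\sum_{t=1}^{T}\E\,\nrm{\grad F(x\ind{t})}^2 \le \tfrac{2\Delta}{\eta T} + O(\eps^2) \quad\text{and}\quad \tfrac{1}{T}\sum_{t=1}^{T}\E\,\nrm{g\ind{t}}^2 \le \tfrac{8\Delta}{\eta T} + O(\eps^2).
\]
With $T = \lceil 2\Delta/(\eta\eps^2)\rceil$ from \pref{line:parameter_setting}, the first bound becomes $O(\eps^2)$, so Markov's inequality applied to the uniformly random $\hx$ yields $\nrm{\grad F(\hx)}\le\eps$ with probability at least $7/8$.

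Next I turn to the oracle count. Summing the per-call bound of \pref{lem:base-var-bound} over $t=1,\dots,T$ and using $x\ind{t}-x\ind{t-1} = \eta g\ind{t-1}$, the expected total number of oracle queries is at most
\[
O\!\left(T + T\bias\tfrac{\sigma_1^2}{\eps^2} + \tfrac{\sigma_2^2+\eps L_2}{\bias\eps^2}\,\eta^2\sum_{t=1}^T\E\,\nrm{g\ind{t-1}}^2\right) = O\!\left(T\Big(1 + \bias\tfrac{\sigma_1^2}{\eps^2} + \tfrac{\eta^2(\sigma_2^2+\eps L_2)}{\bias}\Big)\right),
\]
where in the last step I used the second telescoped bound together with $\eta T = 2\Delta/\eps^2$. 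The choice $\bias = \min\{1,\eta\eps\sqrt{\sigma_2^2+\eps L_2}/\sigma_1\}$ balances the two $\bias$-dependent terms, both evaluating to $O(\eta\sigma_1\sqrt{\sigma_2^2+\eps L_2}/\eps)$. Substituting $T = O(\Delta/(\eta\eps^2))$ and $\eta = \Theta(1/\sqrt{L_1^2+\sigma_2^2+\eps L_2})$ converts this to
\[
O\!\left(\tfrac{\Delta\sqrt{L_1^2+\sigma_2^2+\eps L_2}}{\eps^2} + \tfrac{\Delta\sigma_1\sqrt{\sigma_2^2+\eps L_2}}{\eps^3}\right),
\]
and splitting the two square roots (using $\sqrt{a+b}\le\sqrt{a}+\sqrt{b}$), together with the hypothesis $\eps<\sigma_1$ to absorb lower-order terms, recovers the stated $O(\Delta\sigma_1\sigma_2/\eps^3 + \Delta L_2^{1/2}\sigma_1/\eps^{2.5} + \Delta L_1/\eps^2)$ bound in expectation. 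A final Markov bound on the oracle count, combined by union bound with the earlier $7/8$ probability guarantee on $\nrm{\grad F(\hx)}$, yields the claimed joint probability $\ge 3/4$.

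The main obstacle is the bookkeeping in the oracle-count step: unlike prior recursive variance-reduction analyses with a fixed outer epoch length, the dynamic choices of $\numbabysteps \propto \nrm{x-\xprev}^2$ in \pref{line:rvr-params} and of the Bernoulli reset probability $\bias$ couple the number of queries at iteration $t$ to the estimator $g\ind{t-1}$ produced at the \emph{previous} iteration. I need to invoke the conditional expected-query bound of \pref{lem:base-var-bound} under a careful choice of filtration (conditioning on $x\ind{t-1},x\ind{t},g\ind{t-1}$, then taking the outer expectation), and then tie the resulting $\E\,\nrm{x\ind{t}-x\ind{t-1}}^2 = \eta^2\,\E\,\nrm{g\ind{t-1}}^2$ back to the telescoped descent inequality---which requires the stronger bound on $\sum_t \E\,\nrm{g\ind{t}}^2$ that I separate out above rather than only $\sum_t \E\,\nrm{\grad F(x\ind{t})}^2$.
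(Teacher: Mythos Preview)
Your proposal is correct and follows essentially the same approach as the paper's proof: descent lemma plus \pref{lem:base-var-bound} to bound $\tfrac{1}{T}\sum_t\E\|\grad F(x\ind{t})\|^2$, then the oracle-count accounting via \pref{lem:base-var-bound} with $\sum_t\E\|g\ind{t}\|^2=O(T\eps^2)$ and the balancing choice of $\bias$, followed by Markov and a union bound. The only cosmetic difference is that the paper bounds $\sum_t\E\|g\ind{t}\|^2$ by the triangle inequality $\|g\ind{t}\|^2\le 2\|g\ind{t}-\grad F(x\ind{t})\|^2+2\|\grad F(x\ind{t})\|^2$ rather than retaining a $-\tfrac{\eta}{4}\|g\ind{t}\|^2$ term in the descent inequality as you do.
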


The oracle complexity of \pref{alg:epsilon_hvp} depends on the Lipschitz 
parameters of $F$ only through lower-order terms in $\epsilon$, with the 
leading term scaling only with the variance of the gradient and Hessian 
estimators. In the low noise regime 
where $\sigma_1 < \epsilon$ and $\sigma_2 < \max\{L_1,\sqrt{L_2\eps}\}$, the 
complexity becomes $O(\Delta L_1 \epsilon^{-2} + 
\Delta L_2^{0.5}\epsilon^{-1.5})$ which is simply the maximum of the noiseless guarantees for gradient descent and Newton's method. We remark,
however, that in the noiseless regime $\sigma_1=\sigma_2=0$, a slightly 
better 
guarantee $O(\Delta L_1^{0.5}L_2^{0.25} \epsilon^{-1.75} + 
\Delta L_2^{0.5}\epsilon^{-1.5})$ is achievable~\citep{carmon2017convex}.

In the noiseless setting, any algorithm that
uses only first-order and Hessian-vector product queries must have
complexity scaling with $L_1$, but full Hessian access can remove this 
dependence \citep{carmon2019lower_ii}. We show that
the same holds true in the stochastic setting:
\pref{alg:epsilon_cubic}, a subsampled cubic regularized trust-region 
method using \pref{alg:gradient_estimator} for gradient estimation, 
enjoys a complexity bound independent of $L_1$. \arxiv{We defer the
analysis to \pref{app:upper_bounds_e_L2} and state the guarantee as 
follows.}\colt{We defer the
full description and analysis to \pref{app:upper_bounds_e_L2} and state the guarantee here.}
\begin{theorem}\label{thm:epsilon_cubic}
	For any function $F \in \cF_2\prn*{\Delta, \infty, L_2}$, stochastic 
	second 
	order oracle in $\cO_{2}(F, \sigma_1, \sigma_2)$, and $\epsilon < 
	\sigma_1$, with probability at least $\frac{3}{4}$, \pref{alg:epsilon_cubic} returns a point $\hx$ such that 
	$ \nrm*{\grad F(\hx)} \leq \epsilon$ and performs at most
	\begin{equation*}
	O\Big( 
		\frac{\Delta \sigma_1 \sigma_2 }{\epsilon^3}\cdot{}\log^{0.5}d + \frac{\Delta 
			L_2^{0.5}\sigma_1 }{\epsilon^{2.5}} 
	\Big)
	\end{equation*}
	stochastic gradient and Hessian queries.
\end{theorem}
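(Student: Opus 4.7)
The plan is to combine the HVP-RVR gradient estimator from \pref{alg:gradient_estimator} (analyzed in \pref{lem:base-var-bound}) with a subsampled cubic-regularized Newton method. \pref{alg:epsilon_cubic} presumably performs, at iterate $x\ind{t}$, the following three steps: (i) obtain a gradient estimate $g\ind{t}=\getgradientfunc_{\epsilon,b}(x\ind{t},x\ind{t-1},g\ind{t-1})$ with target accuracy $\epsilon$; (ii) obtain a Hessian estimate $H\ind{t}=\frac{1}{n_H}\sum_{j=1}^{n_H}\stocder{x\ind{t},z\ind{j}}{2}$ by averaging $n_H$ i.i.d.\ stochastic Hessian samples; (iii) take the cubic-regularized Newton step $x\ind{t+1}=x\ind{t}+s\ind{t}$, where $s\ind{t}$ minimizes the model $m_t(s)=\tri*{g\ind{t},s}+\tfrac{1}{2}\tri*{s,H\ind{t}s}+\tfrac{L_2}{6}\nrm*{s}^3$.

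The first step is to invoke the standard (inexact) analysis of cubic-regularized Newton due to \citet{nesterov2006cubic} (cf.\ \citealp{tripuraneni2018stochastic,carmon2017convex}): if $\nrm*{g\ind{t}-\grad F(x\ind{t})}\le \epsilon_g = O(\epsilon)$ and $\opnorm{H\ind{t}-\hess F(x\ind{t})}\le \epsilon_H = O(\sqrt{L_2\epsilon})$ along the run, then per-step progress is $F(x\ind{t})-F(x\ind{t+1})\gtrsim L_2^{-1/2}\epsilon^{3/2}$ as long as $\nrm*{\grad F(x\ind{t+1})}>\epsilon$. Telescoping against the initial gap $\Delta$ bounds the number of iterations by $T=O(\Delta L_2^{1/2}\epsilon^{-3/2})$, and the cubic minimizer obeys $\nrm*{s\ind{t}}^2\lesssim \epsilon/L_2$ throughout (matching what is needed for the bias of the HVP estimator).

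The second step is to account for oracle queries. For the Hessian, Matrix Bernstein \citep{tropp2012user} applied to the centered i.i.d.\ sum gives $\opnorm{H\ind{t}-\hess F(x\ind{t})}\le O(\sigma_2\sqrt{\log d / n_H})$ with high probability, so $n_H = O(\sigma_2^2\log d/(L_2\epsilon))$ suffices; a union bound over $T$ iterations and a slight inflation of $n_H$ by an $O(\log T)$ factor (absorbed in the log) yields the Hessian event with probability at least $7/8$. For the gradient, \pref{lem:base-var-bound} together with $\nrm*{s\ind{t}}^2\lesssim \epsilon/L_2$ gives a per-iteration HVP cost of $O\prn*{1+b\sigma_1^2/\epsilon^2 + \sigma_2^2/(bL_2\epsilon) + 1/b}$; choosing $b\asymp \sigma_2\sqrt{\epsilon}/(\sigma_1\sqrt{L_2})$ (as in \pref{line:parameter_setting} of \pref{alg:epsilon_hvp}) yields cost $O(\sigma_1\sigma_2/(\sqrt{L_2}\epsilon^{3/2}))$. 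Summing over $T$ iterations, the gradient cost is $O(\Delta\sigma_1\sigma_2/\epsilon^3)$ and the Hessian cost is $O(\Delta \sigma_2^2\log d /(L_2^{1/2}\epsilon^{5/2}))$; weighted AM-GM on the latter,
\begin{equation*}
\frac{\Delta\sigma_2^2\log d}{L_2^{1/2}\epsilon^{5/2}} = \sqrt{\frac{\Delta\sigma_1\sigma_2\log^{1/2}d}{\epsilon^3}\cdot \frac{\Delta\sigma_2^3\log^{3/2}d}{\sigma_1 L_2\epsilon^2}}\lesssim \frac{\Delta\sigma_1\sigma_2\log^{1/2}d}{\epsilon^3}+\frac{\Delta L_2^{1/2}\sigma_1}{\epsilon^{5/2}}
\end{equation*}
(under $\epsilon<\sigma_1$), gives the claimed bound.

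The hard part will be reconciling the fact that \pref{lem:base-var-bound} only controls the expected squared gradient error, whereas the inexact cubic-Newton analysis demands a small gradient error along the whole trajectory. I would handle this by (a) running a logarithmic number of independent restarts of the inner cubic-Newton loop (or repeatedly resampling the gradient until a validation estimate certifies accuracy $\epsilon$) and (b) using Markov's inequality together with a stopping rule that tests $\nrm*{g\ind{t}}\le c\epsilon$ and re-estimates before committing to a step, absorbing the overhead into constants. The Hessian concentration event and the gradient event can then be combined by a union bound to produce the overall success probability $\ge 3/4$, completing the proof.
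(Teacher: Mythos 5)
Your overall architecture---HVP-RVR gradient estimation combined with a subsampled cubic-regularized Newton step---matches \pref{alg:epsilon_cubic}, but there is a genuine gap in the complexity accounting. Fixing the cubic regularizer at $L_2$ does not give the claimed bound. With tolerance $\opnorm{H\ind{t}-\hess F(x\ind{t})}=O(\sqrt{L_2\epsilon})$ you get $T=O(\Delta L_2^{1/2}\epsilon^{-3/2})$ and $n_H=O(\sigma_2^2\log d/(L_2\epsilon))$, hence a Hessian cost of order $\Delta\sigma_2^2\log d/(L_2^{1/2}\epsilon^{5/2})$. Your AM-GM step writes this as $\sqrt{A\cdot B}$ with $B=\Delta\sigma_2^3\log^{3/2}d/(\sigma_1L_2\epsilon^2)$, but $B$ is \emph{not} $\lesssim \Delta L_2^{1/2}\sigma_1/\epsilon^{5/2}$ under the sole assumption $\epsilon<\sigma_1$ (take $\sigma_2$ large relative to $\sigma_1$ and $L_2$); indeed no weighted geometric mean of the two claimed terms carries the exponent $2$ on $\sigma_2$ needed to dominate the Hessian cost. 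The paper's resolution is to \emph{over-regularize}: it sets $M=5\max\{L_2,\,\epsilon\sigma_2^2\log d/\sigma_1^2\}$ and $\eta\propto\sqrt{\epsilon/M}$, trading per-step progress ($T\propto\Delta\sqrt{M}/\epsilon^{3/2}$) for cheaper Hessian estimation ($n_H\propto\sigma_2^2\eta^2\log d/\epsilon^2$), so that $T\cdot n_H=O(\Delta\sigma_1\sigma_2\log^{1/2}d/\epsilon^{3})$. This regularizer inflation is the key algorithmic ingredient missing from your plan; without it the stated theorem is not reachable.

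The difficulty you flag at the end---that \pref{lem:base-var-bound} controls only expected errors while an inexact-Newton analysis appears to need accuracy along the whole trajectory---is resolved in the paper without restarts or validation. The step is a \emph{trust-region} constrained cubic minimization over $\|y-x\ind{t}\|\le\eta$, which deterministically bounds the step length fed to the dynamic batch size (your unconstrained step need not satisfy $\|s\ind{t}\|^2\lesssim\epsilon/L_2$ when gradients are large), and the descent lemmas (\pref{lem:constrained_cubic_struct_lem1}--\pref{lem:epsilon_cubic_descent_epsilon_step}) lower bound $\E[F(x\ind{t})-F(x\ind{t+1})]$ by $\frac{M\eta^3}{60}\Pr\bigl(\|\grad F(x\ind{t+1})\|\ge M\eta^2/2\bigr)$ minus $3/4$-powers of the \emph{expected squared} gradient and Hessian errors. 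Telescoping and returning a uniformly random iterate then yields the constant-probability guarantee directly via Markov, with no per-step high-probability events or union bounds over $T$. Your proposed fixes would change the algorithm, and the ``resample until a validation estimate certifies accuracy $\epsilon$'' idea is itself circular (certifying $\|g-\grad F(x)\|\le\epsilon$ requires an estimator at least as accurate as the one being certified) and would break the recursive structure of the variance-reduced estimator.
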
 

The guarantee of \pref{thm:epsilon_cubic} constitutes an improvement in 
query complexity over
\pref{thm:epsilon_hvp} in the regime $L_1 \gtrsim
(1+\frac{\sigma_1}{\eps})(\sigma_2 + \sqrt{L_2 \epsilon})$. However, 
depending on the problem, full stochastic Hessians can be up to $d$ times 
more expensive to compute than stochastic Hessian-vector products. 

\arxiv{%
\begin{algorithm}[t]
   \caption{Subsampled cubic-regularized trust-region method with 
     \mainalg} 
	\label{alg:epsilon_cubic}
	\begin{algorithmic}[1] 
		\Require 
		 Oracle $(\estimator[2], P_z) \in 
\stocpOracle[2]$ for $F \in \cF_2\prn*{\Delta, \infty, L_2}$.
		 Precision parameter $\epsilon$.
		\State Set $M = 5 \max\crl*{L_2, \frac{\epsilon\sigma_2^2\log (d)}{\sigma_1^2}}$, $\eta =  25 \sqrt{\frac{\epsilon}{M}}$, $T = 
		\ceil*{\frac{5 \Delta}{3 \eta \epsilon}}$ and $n_H = \ceil*{\frac{22 \sigma_2^2 \eta^2 \log(d)}{\epsilon^2}}$.
		\State Set $\bias=\min\crl*{1, \frac{\eta \sqrt{\sigma_2^2 + \epsilon L_2}}{25 \sigma_1}}$. 
		\State Initialize $x\ind{0}, x\ind{1} \leftarrow 0$,~ $\gradest[0] \leftarrow \bot$.  
		\For {$t = 1 ~\text{to}~ T$}
		\State \label{line:Hessian_estimator}Query the oracle $n_H$ times at 
		$x \ind{t}$ and compute 
		\begin{equation*}
		\hessest[t] \leftarrow \frac{1}{n_H} \sum_{j=1}^{n_H} 
		\stocder{x\ind{t}, z\ind{t, j}}{2},\quad\text{where}\quad{}z\ind{t, 
		j}\overset{\mathrm{i.i.d.}}{\sim}P_z.
		\end{equation*}
		\State $\gradest[t] \leftarrow 
		\getgradientfunc_{\newestparams, \bias}\prn*{x\ind{t}, x\ind{t-1}, g\ind{t-1}}$.\label{line:gradient_estimation_cubic}  
		\State\label{line:constrained_cubic_step}Set the next point $x\ind{t+1}$ as 
	\begin{equation*}
			x\ind{t+1} \gets \argmin_{y:\norm{y-x\ind{t}}\le\eta} 
		\tri[\big]{\gradest[t], y - x\ind{t}} + \frac{1}{2}\tri[\big]{y-x\ind{t}, 
		\hessest[t](y - x\ind{t})} + \frac{M}{6} 
		\nrm[\big]{y-x\ind{t}}^{3}.
	\end{equation*}
		\EndFor
		\State \textbf{return} $\hx$ chosen uniformly at random from 
		$\crl*{x\ind{t}}_{t=2}^{T + 1}.$ 
\end{algorithmic} 
\end{algorithm}

}

\subsection{Lower bounds} \label{sec:epsilon_lower}

	Having presented stochastic second-order methods with
	$O(\eps^{-3})$-complexity bound for finding $\eps$-stationary 
	points, our we next show that this rates cannot be 
	improved. In fact, we show that this rate is optimal even when one is 
	given access to stochastic higher derivatives of \emph{any} order.
	We prove our lower bounds for the class of \emph{zero-respecting} 
	algorithms, which subsumes the majority of existing 
	optimization methods; see \pref{app:epsilon_lower} for a
        formal definition. We believe that existing 
	techniques~\citep{carmon2019lower_i,arjevani2019lower} can 
	strengthen our lower bounds to apply to general randomized 
	algorithms; for brevity, we do not pursue it here.  
	
	The lower bounds in this section closely follow a recent construction 
	by~\citet[Section 3]{arjevani2019lower}, who prove lower bounds for 
	stochastic first-order methods. To establish complexity bounds for 
	$p$th-order methods, we extend the `probabilistic zero-chain' gradient 
	estimator introduced in~\citet{arjevani2019lower} to high-order derivative 
	estimators.The most technically demanding part of our proof is a careful scaling of 
	the basic construction to simultaneously meet multiple Lipschitz continuity 
	and variance constraints. Deferring the proof details 
	to~\pref{app:epsilon_lower}, our 
	lower bound is as follows.

	\begin{theorem}
		\label{thm:eps_so_zero_respecting}
			For all $p\in\N$, $\Delta, \Lip{1:p}, \sigma_{1:p}>0$ and $\eps \le 
			O(\sigma_1)$, there exists $F \in \cF_p\prn*{\Delta, L_{1:p}}$ 
			and 
			$(\estimator[p], P_z) \in \stocpOracle[p]$, such that for any 
			\pth-order zero-respecting algorithm, the number of 
			 queries required to obtain an 
			$\eps$-stationary point with constant
                        probability is bounded from below by 
			\begin{align}\label{eq:minimax_eps_p}
			\Omega(1)\cdot \frac{\Delta\sigma_1^2}{\epsilon^3}
			\min\crl*{ 
				\min_{q\in\{2,\ldots,p\}}\prn*{\frac{\sigma_q}{\sigma_1}}^{\frac{1}{q-1}},
				\min_{q'\in\{1,\ldots,p\}}\prn*{\frac{L_{q'}}{\eps }}^{1/q'}}.
			\end{align}
			A construction of dimension 
			$\Theta\Bigl(\frac{\Delta 
			}{\epsilon}		
			\min\Bigl\{
			\min_{q\in\{2,\ldots,p\}}\prn*{\frac{\sigma_q}{\sigma_1}}^{\frac{1}{q-1}},
			\min_{q'\in\{1,\ldots,p\}}\prn*{\frac{L_{q'}}{\eps }}^{1/q'}\Bigr\}\Bigr)$ realizes 
			this lower bound.
	\end{theorem}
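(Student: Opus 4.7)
The plan is to extend the probabilistic zero-chain construction of \citet{arjevani2019lower} from first-order oracles to $p$th-order oracles. First, I would define an unscaled base function $\barFunscaled : \mathbb{R}^T \to \mathbb{R}$ that is a zero-chain, meaning that any derivative $\grad^q \barFunscaled(x)$ is supported only on coordinates $\{1,\dots,\mathrm{prog}(x)+1\}$, where $\mathrm{prog}(x)$ is the largest index $i$ with $|x_i|$ above some threshold. A suitable candidate is the one-dimensional Nesterov-style chain used in~\citet{arjevani2019lower}, already $\cC^\infty$ with all derivatives uniformly bounded and with $\barFunscaled(0)-\inf \barFunscaled = \Theta(T)$, and satisfying $\|\grad \barFunscaled(x)\| \gtrsim 1$ whenever $\mathrm{prog}(x) < T$.

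Next, I would construct a stochastic $p$th-order oracle $\estimator[p](x,z)$ whose action on the ``frontier coordinate'' $\mathrm{prog}(x)+1$ is rescaled by a Bernoulli: with probability $\pchain$, it returns the true derivative entry scaled by $1/\pchain$ on that coordinate, and zero otherwise; all other coordinates return the true derivative values. This produces an unbiased $p$th-order estimator for which $\En\|\stocder{x,z}{q} - \grad^q F(x)\|_\op^2 \lesssim {1}/{\pchain}$ for each $q\in[p]$. Since the algorithm is zero-respecting, each new coordinate can be discovered only when the oracle's Bernoulli fires, so any algorithm needs $\Omega(T/\pchain)$ queries in expectation to reach $\mathrm{prog}(x) = T$, and hence to drive the gradient below $\eps$.

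Now comes the scaling step, which is the most delicate part. I would introduce a rescaled function $\Fscaled(x) = \lambda \, \barFunscaled(x/r)$ with corresponding oracle rescaling, so that the derivatives transform as $\grad^q \Fscaled$ scales like $\lambda/r^q$ and their Lipschitz constants like $\lambda/r^{q+1}$; the new stationarity threshold pulls back to requiring $\lambda/r \lesssim \eps$. I would then simultaneously impose: (i) $\lambda T \lesssim \Delta$ (initial gap); (ii) $\lambda/r^{q'+1} \lesssim L_{q'}$ for each $q'\in\{1,\dots,p\}$ (Lipschitz bounds); (iii) $(\lambda/r^q)^2 / \pchain \lesssim \sigma_q^2$ for each $q\in\{1,\dots,p\}$ (variance bounds); and (iv) $\lambda/r \asymp \eps$. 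The first and fourth give $T \asymp \Delta/(\eps r)$, while the third for $q=1$ forces $\pchain \asymp \eps^2/\sigma_1^2$. This yields a base lower bound of $\Omega(T/\pchain) = \Omega(\Delta \sigma_1^2 / (\eps^3 r))$, so to make the bound as large as possible I should choose $r$ as small as allowed by the remaining constraints. The binding constraints come from two families: the variance bound (iii) for $q\ge 2$ gives $r \gtrsim (\sigma_1/\sigma_q)^{1/(q-1)}$, and the Lipschitz bound (ii) together with (iv) gives $r \gtrsim (\eps/L_{q'})^{1/q'}$. Taking $r$ to be the maximum over all these constraints and substituting back yields the stated minimum over $q,q'$.

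The main obstacles will be (a) verifying that the probabilistic high-order oracle can be chosen consistently across orders — i.e.\ a single draw $z$ that produces unbiased estimates of all $p$ derivatives simultaneously while preserving the zero-chain property and keeping every variance bounded by $1/\pchain$ — which likely requires defining the derivative estimators through a common Bernoulli masking of the frontier coordinate, and (b) bookkeeping the scaling so that the dimension $T = \Theta(\Delta/(\eps r))$ genuinely realizes the claimed bound in every regime, including the range of $r$ where different constraints in (ii) and (iii) become active. The zero-respecting lower bound itself then follows by a standard argument: conditional on the past, each query advances $\mathrm{prog}$ by at most one and does so with probability at most $\pchain$, so a standard geometric/Chernoff estimate shows that $T/(2\pchain)$ queries are necessary with high probability to reach $\mathrm{prog} = T$.
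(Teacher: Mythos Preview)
Your proposal is correct and follows essentially the same approach as the paper: the same base chain function, the same single-Bernoulli masking of the frontier coordinate across all derivative orders, the same zero-respecting progress argument, and the same scaling analysis (your $(\lambda,r)$ are the paper's $(\alpha,1/\beta)$, with identical activation of the $\eps$- and $\sigma_1$-constraints and the remaining $\sigma_q$, $L_{q'}$ constraints determining $r$). The obstacles you flag in (a) and (b) are exactly the points the paper handles explicitly, and your resolution of them matches.
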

			For second-order methods (with $p=2$), 
		\pref{thm:eps_so_zero_respecting} 
		specializes to the oracle complexity lower bound 
		\begin{align}\label{eq:minimax_eps_2}
		\Omega(1)\cdot 
		\min\crl*{ 				
			\frac{\Delta\sigma_1\sigma_2}{\epsilon^3}, 
				\frac{\Delta L_2^{0.5}\sigma_1}{\epsilon^{3.5}},
				\frac{\Delta L_1 \sigma_1^2}{\epsilon^{4}}},
		\end{align}
		which is tight in that it matches (up to 
		numerical constants) the convergence rate of~\pref{alg:epsilon_hvp} 
		in the regime where 
		$\Delta 
		\sigma_1\sigma_2\eps^{-3}$ dominates both the upper bound 
		in~\pref{thm:epsilon_hvp} and expression~\pref{eq:minimax_eps_2}. The lower 
		bound~\pref{eq:minimax_eps_2} is also tight when the second-order 
		information is not available or reliable
                ($\sigma_2$ is infinite or very 
		large, respectively): Standard SGD matches the $\eps^{-4}$ 
		term~\citep{ghadimi2013stochastic}, while more
                sophisticated variants based on
                restarting~\citep{fang2019sharp} and normalized
                updates with momentum \citep{cutkosky2020momentum} match the 
		$\eps^{-3.5}$ term 
		(the former up to logarithmic factors)---neither of 
		these algorithms requires stochastic second derivative estimation. 
				
		\pref{thm:eps_so_zero_respecting} implies that while  higher-order 
		methods (with $p>2$) might achieve better dependence on the 
		variance parameters than the upper bounds 
		for~\pref{alg:epsilon_hvp} or \pref{alg:epsilon_cubic}, they cannot improve the
		$\eps^{-3}$ scaling. This highlights a fundamental limitation for higher-order methods in 
		stochastic non-convex optimization which does not exist in the 
		noiseless case. Indeed, without noise the optimal rate for finding 
		$\epsilon$-stationary point with a $p$th order method is 
		$\Theta(\eps^{-1+\frac{1}{p}})$~\cite{carmon2019lower_i}; we 
		illustrate this contrast in \pref{fig:elbow}. 
                
       Altogether, the results presented in this section
                fully characterize (with respect to dependence on
                $\eps$) the complexity of finding $\eps$-stationary
                points with stochastic second-order methods and beyond in the
                single-point query model. We briefly remark that lower
                bound in \pref{eq:minimax_eps_p} immediately extends
                to multi-point queries, which shows that even
                second-order methods offer little benefit once two or
                more simultaneous queries are allowed.

\section{Complexity of finding second-order stationary points} 
Having established rates of convergence for finding $\eps$-stationary
points, we now turn our attention to $(\eps,\gamma)$-second order
stationary points, which have the additional 
requirement that $\lambda_{\min}(\hess F(x)) \ge -\gamma$, i.e.\ that $F$ 
is $\gamma$-weakly convex around $x$. This section follows the general 
organization of the prequel: we first design and analyze an algorithm 
with improved upper bounds, and then develop nearly-matching lower bounds that apply to a 
broad class of algorithms.

\label{sec:epsilon_gamma}
\arxiv{\subsection{Upper bounds}}
\colt{\subsection{Upper bounds}}
\label{sec:epsilon_gamma_upper}
Our first contribution for this section is an algorithm that enjoys improved
complexity for finding $(\eps,\gamma)$-second-order stationary points,
and that achieves this using only stochastic gradient and Hessian-vector product queries.  
To guarantee second-order stationarity, we follow the established technique 
 of interleaving an algorithm for finding a first-order stationary point with 
 negative curvature descent~\citep{carmon2017convex, allen2018how}. 
 However, we employ a randomized variant of this approach. Specifically, at every iteration we 
 flip a biased coin to determine whether to perform a stochastic 
 gradient step or a stochastic negative curvature descent step.
 
Our algorithm estimates stochastic gradients using the \mainalg scheme 
(\pref{alg:gradient_estimator}), where the value of the restart probability 
$\bias$ depends on the type of the previous step (gradient or negative 
curvature). To implement negative curvature 
descent, 
we apply Oja's 
 method~\citep{oja1982simplified,allen2017follow} which detects
 directions of negative curvature using only stochastic Hessian-vector
 product queries. For technical reasons pertaining to the analysis of Oja's method, we require 
 the stochastic 
 Hessians to be bounded almost surely, i.e.,  
 $\opnorm{\stochess{x,z}-\hess 
 	F(x)}\le\bar{\sigma}_2$ a.s.; we let $\stocpOracleas$ denote 
 	the  class of such bounded noise oracles. Under this assumption, 
 \pref{alg:epsilon_gamma_hvp}---whose description is deferred to the \pref{app:upper_eg}---enjoys the following convergence 
 guarantee.\footnote{The notation $\Otilde(\cdot)$ hides lower-order terms and logarithmic dependence on the dimension $d$. See the proof in \pref{app:upper_eg} for the complete description of the algorithm and the full
  complexity bound, including lower order terms.}    

\newcommand{\sigt}{\tilde{\sigma}_2}
\begin{theorem}
\label{thm:epsilon_gamma_hvp} For any function $F \in \cF_2(\Delta, L_{1:2})$, stochastic Hessian-vector product oracle in $\stocpOracleas$, 
$\epsilon \leq \min\crl*{\sigma_1, \sqrt{\Delta L_1}}$, and $\gamma \leq \min\crl*{\bar{\sigma}_2, L_1, 
\sqrt{\epsilon{}L_2}}$, with probability at least $\frac{5}{8}$ \pref{alg:epsilon_gamma_hvp} returns a point $\xhat$ such that  
\[
  \nrm*{\grad \func{\xhat}} \leq \epsilon \quad\text{ 
   and} \quad\lambda_{\min{}} \prn*{\grad^2 \func{\xhat}} \geq -\gamma,
\] and performs at most
 \begin{align*}
\widetilde{O} \prn*{\frac{  \Delta \sigma_1 \bsigma_2}{\epsilon^3}  +  
\frac{\Delta L_2 \sigma_1\bsigma_2}{\gamma^2 \epsilon^2} +  
\frac{\Delta L^2_2 \prn*{\bsigma_2 + L_1}^2}{\gamma^5} + \frac{\Delta 
L_1}{\epsilon^2}}
\end{align*}
 stochastic gradient and Hessian-vector product queries.
\end{theorem}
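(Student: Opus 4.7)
The proof analyzes \pref{alg:epsilon_gamma_hvp}, which at each iteration flips a biased coin to decide between a stochastic gradient step---using the variance-reduced estimator \mainalg (\pref{alg:gradient_estimator})---and a negative curvature descent step driven by Oja's method applied to stochastic Hessian-vector products. Let $\alpha$ denote the negative curvature step probability, $\eta$ the gradient step size, $b$ the reset probability passed to \mainalg, and $T$ the total number of iterations. The high-level strategy is to show that whenever a current iterate $x\ind{t}$ fails to be an $(\eps,\gamma)$-SOSP, either type of step produces expected function-value decrease, so that summing progress over $T$ iterations and bounding by the initial suboptimality $\Delta$ constrains $T$ and hence the total oracle cost.

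For gradient step iterations, \pref{lem:base-var-bound} guarantees $\En\nrm{g\ind{t}-\grad F(x\ind{t})}^2\leq\eps^2$, and the standard descent lemma with step size $\eta\leq 1/(2L_1)$ yields expected progress $\Omega(\eta\eps^2)$ whenever $\nrm{\grad F(x\ind{t})}\geq\eps$, mirroring the argument behind \pref{thm:epsilon_hvp}. For negative curvature step iterations, the standard analysis of Oja's method (cf.\ \citet{allen2017follow}) shows that using $\widetilde{O}((\bsigma_2+L_1)^2/\gamma^2)$ stochastic HVP queries one can, with high probability, either certify $\lambda_{\min}(\hess F(x\ind{t}))\geq -\gamma/2$ or produce a unit vector $v$ with $\tri{v,\hess F(x\ind{t})v}\leq -\gamma$. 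In the latter case, a step of size $\Theta(\gamma/L_2)$ along $\pm v$ (with the sign chosen by comparing a few noisy function value queries) achieves expected progress $\Omega(\gamma^3/L_2^2)$ via a third-order Taylor expansion that exploits the Lipschitz Hessian. The almost-sure Hessian error bound afforded by $\stocpOracleas$ is precisely what enables the sharp Oja guarantee.

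Summing these per-iteration contributions gives $T\cdot\min\{(1-\alpha)\eta\eps^2,\,\alpha\gamma^3/L_2^2\}\lesssim\Delta$, after which outputting a uniformly random iterate and applying Markov's inequality yields a $(\eps,\gamma)$-SOSP with probability at least $5/8$. The total query complexity is $(1-\alpha)T$ calls to \mainalg, each costing $O(1+b\sigma_1^2/\eps^2+\eta^2(\sigma_2^2+\eps L_2)/b)$ queries by \pref{lem:base-var-bound}, plus $\alpha T$ Oja invocations at cost $\widetilde{O}((\bsigma_2+L_1)^2/\gamma^2)$. Balancing $\alpha$, $b$, and $\eta$ (the last essentially as in the proof of \pref{thm:epsilon_hvp}) to minimize this sum produces the four stated terms: the $\Delta L_2^2(\bsigma_2+L_1)^2/\gamma^5$ term accounts for at most $\Delta L_2^2/\gamma^3$ negative curvature iterations at Oja cost $(\bsigma_2+L_1)^2/\gamma^2$ each, the $\Delta\sigma_1\bsigma_2/\eps^3$ and $\Delta L_1/\eps^2$ terms inherit from the gradient steps, and the mixed $\Delta L_2\sigma_1\bsigma_2/(\gamma^2\eps^2)$ term arises from coupling---intuitively, since each successful negative curvature iteration advances the iterate by $\Theta(\gamma/L_2)$, one needs to refresh \mainalg sufficiently often to keep its bias controlled. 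This last point is the principal technical obstacle: \mainalg is stateful through $g\ind{t-1}$, so the bias introduced after a large negative curvature displacement must be absorbed into the dynamic batch size $K\propto\nrm{x\ind{t}-x\ind{t-1}}^2$ built into \pref{line:rvr-params}, or suppressed by resetting the estimator with probability $1$ immediately after any negative curvature step. Carefully tracking this interaction while preserving the optimized $b$ from the first-order analysis is where the bulk of the technical work lies.
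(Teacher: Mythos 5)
Your outline matches the paper's proof in structure: a Bernoulli coin selects between an SGD step using \mainalg{} and a negative-curvature step via Oja's method, each non-stationary iterate yields expected decrease ($\Omega(\eta\eps^2)$ or $\Omega(\gamma^3/L_2^2)$), telescoping against $\Delta$ bounds the iteration counts, and the four complexity terms arise exactly as you attribute them, including the mixed $\Delta L_2\sigma_1\bsigma_2/(\gamma^2\eps^2)$ term coming from the $\Theta(\gamma/L_2)$ displacement of negative-curvature steps feeding into the dynamic batch size of \pref{line:rvr-params}.

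Two details would fail as written. First, you propose choosing the sign of the negative-curvature step ``by comparing a few noisy function value queries.'' The oracle model here gives no control on the variance of $\wh{F}(x,z)$ (the paper's upper bounds are stated to hold even when $\sigma_0^2=\infty$), so such a comparison cannot be made reliable. The paper instead multiplies $u\ind{t}$ by an independent Rademacher sign $r\ind{t}$; in the third-order Taylor expansion the linear term $\tri{\grad F(x\ind{t}), r\ind{t}u\ind{t}}$ then vanishes \emph{in expectation}, which is all the telescoping argument needs (\pref{lem:general_lem_curvature_descent}). Second, your fallback of ``resetting the estimator with probability $1$ immediately after any negative curvature step'' costs $\Theta(\sigma_1^2/\eps^2)$ queries per such step, i.e.\ $\Theta(\Delta L_2^2\sigma_1^2/(\gamma^3\eps^2))$ in total, which is not dominated by the claimed bound in general. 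The paper instead uses a second, separately tuned reset probability $\bias_H=\min\{1,\gamma\sqrt{\bsigma_2^2+\eps L_2}/(\sigma_1 L_2)\}$ for these steps, balancing the reset cost against the HVP cost of the length-$\gamma/L_2$ displacement to obtain precisely the mixed term; this is why the statelessness of \mainalg{} under a \emph{varying} reset probability matters. A smaller omission: the analysis must also account for the event that Oja's method fails (returns $\bot$ wrongly, or returns a vector without genuine negative curvature), which the paper handles by choosing $\delta=\Theta(\gamma/L_1)$ so the resulting possible \emph{increase} of $F$ is absorbed into the descent bound (\pref{lem:oja_nc_descent}).
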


Similar to the case for finding $\eps$-stationary points (see 
discussion preceding \pref{thm:epsilon_cubic}), using full stochastic 
Hessian information allows us to design an algorithm (\pref{alg:eg_cubic})
which removes the dependence on $L_1$ from the theorem above. Moreover, estimating negative curvature directly from empirical Hessian 
estimates saves us the need to use Oja's method, which means that we do 
not need the additional boundedness assumption on the stochastic 
Hessian used by \pref{alg:epsilon_gamma_hvp}. \arxiv{We defer the complete description and
analysis for \pref{alg:eg_cubic} to 
\pref{app:eg_cubic_upper}, and state its complexity guarantee
below.}\colt{We defer the complete description, complexity guarantee,
and for analysis for \pref{alg:eg_cubic} to 
\pref{app:eg_cubic_upper}.}

\arxiv{
\begin{theorem}
\label{thm:eg_cubic_upper} For any function $F \in \cF_2(\Delta, \infty, 
L_2)$, stochastic second order oracle in $\cO_{2}(F, \sigma_1, \sigma_2)$, 
$\epsilon \leq \sigma_1$, and $\gamma \leq \min\crl[\big]{\sigma_2, 
\sqrt{\epsilon{}L_2}, \Delta^{\frac{1}{3}} L_2^\frac{2}{3}}$, with probability at least $\frac{3}{5}$ \pref{alg:eg_cubic} returns a point $\xhat$ such that  
   \[
   \lambda_{\min{}} \prn*{\grad^2 \func{\xhat}} \geq -\gamma, \quad\text{ 
   and }\quad\nrm*{\grad \func{\xhat}} \leq \epsilon,
\]
 and performs at most
 \begin{align*}
\widetilde{O} \prn*{\frac{  \Delta \sigma_1 \sigma_2}{\epsilon^3}  +  
\frac{\Delta L_2 \sigma_1\sigma_2}{\gamma^2 \epsilon^2} +  \frac{\Delta 
L^2_2 \sigma_2^2}{\gamma^5}}
\end{align*}
 stochastic gradient and Hessian queries.
\end{theorem}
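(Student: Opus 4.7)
My plan is to analyze \pref{alg:eg_cubic} as a noisy cubic-regularized trust-region method that combines the \mainalg gradient estimator (\pref{lem:base-var-bound}) with a subsampled Hessian, and to mirror the structure of the proof of \pref{thm:epsilon_cubic} while additionally tracking the small-curvature certificate produced by the cubic subproblem. The three ingredients I would establish are a high-probability concentration bound for $\hessest[t]$ (via matrix Bernstein on $n_H = \Theta(\sigma_2^2 \log(d)/\gamma^2)$ samples, giving $\nrm{\hessest[t] - \hess F(x\ind{t})}_{\op} \leq c\gamma$ at every $t \leq T$), a high-probability bound $\nrm{\gradest[t] - \grad F(x\ind{t})} \leq c\eps$ obtained from \pref{lem:base-var-bound} together with Markov and a union bound, and a noisy cubic descent lemma with regularization $M = \Theta(L_2)$.

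For the descent lemma, I would use the first-order and second-order optimality conditions of the constrained cubic subproblem in the style of \citet{nesterov2006cubic}: they yield $\nrm{\grad F(x\ind{t+1})} \lesssim M\nrm{x\ind{t+1}-x\ind{t}}^2 + \eps$ and $\lambda_{\min}(\hess F(x\ind{t+1})) \gtrsim -M\nrm{x\ind{t+1}-x\ind{t}} - O(\gamma)$, so that an iterate failing the $(\eps,\gamma)$-SOSP condition must take a step of length at least $\min(c\sqrt{\eps/M}, c\gamma/M)$, which in turn guarantees per-step function decrease of $\Omega(M\nrm{x\ind{t+1}-x\ind{t}}^3)$. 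Telescoping against the suboptimality budget $\Delta$ bounds the number of non-SOSP iterations by $T = O\!\prn*{\Delta\sqrt{L_2}/\eps^{1.5} + \Delta L_2^2/\gamma^3}$, where the assumption $\gamma \leq \Delta^{1/3}L_2^{2/3}$ ensures the second term dominates.

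For the query counts, total Hessian queries are $T \cdot n_H = \tilde O\!\prn*{\Delta\sqrt{L_2}\sigma_2^2/(\eps^{1.5}\gamma^2) + \Delta L_2^2\sigma_2^2/\gamma^5}$, with the first summand absorbed into the lower-order terms. For the gradient and HVP queries I would sum the per-iteration expected cost from \pref{lem:base-var-bound}, namely $O\!\prn*{1 + b\sigma_1^2/\eps^2 + \nrm{x\ind{t}-x\ind{t-1}}^2(\sigma_2^2 + \eps L_2)/(b\eps^2)}$, and use H\"older's inequality with the cubic telescoping bound to control
\[
\sum_{t=1}^{T} \nrm{x\ind{t}-x\ind{t-1}}^2 \leq \prn*{\textstyle\sum_{t}\nrm{x\ind{t}-x\ind{t-1}}^3}^{2/3} T^{1/3} \lesssim (\Delta/L_2)^{2/3} \cdot (\Delta L_2^2/\gamma^3)^{1/3} = \Delta/\gamma.
\]
This gives a total gradient/HVP cost of $O\!\prn*{Tb\sigma_1^2/\eps^2 + \Delta \sigma_2^2/(b\gamma\eps^2)}$ in the second-order regime; choosing $b = \sigma_2\gamma/(L_2\sigma_1)$ balances these and produces the cross-term $\Delta L_2\sigma_1\sigma_2/(\gamma^2\eps^2)$, while the first-order regime inherits the leading $\Delta\sigma_1\sigma_2/\eps^3$ term exactly as in the proof of \pref{thm:epsilon_cubic}.

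The main obstacle is tuning the restart probability $b$ so that the bias–variance tradeoff of the \mainalg estimator integrates correctly against both regimes of the cubic descent at once: since the HVP-RVR estimator is stateful, one cannot simply analyze the two regimes in isolation. The key technical step is the H\"older-type bound on $\sum_t \nrm{x\ind{t}-x\ind{t-1}}^2$ against the $\ell^3$ telescoping inequality, which is what produces the unexpected cross-term and ultimately determines the optimal choice of $b$. A secondary subtlety is ensuring that the Hessian error $c\gamma$, rather than being amplified by the $M\nrm{x\ind{t+1}-x\ind{t}}$ factor in the subproblem, is absorbed into the negative-curvature certificate for the returned iterate, which follows once one notes $\nrm{x\ind{t+1}-x\ind{t}} = O(\gamma/M)$ at termination.
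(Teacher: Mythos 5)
Your proposal does not analyze the algorithm the theorem is about, and it contains a probabilistic step that does not go through. \pref{alg:eg_cubic} does \emph{not} certify $\gamma$-weak convexity through the second-order optimality conditions of the cubic subproblem. It flips a Bernoulli coin $Q_t$ each iteration: with probability $p$ it takes a cubic trust-region step using a Hessian subsampled to accuracy $O(\sqrt{\eps M})$ (i.e.\ $\hessn{1}=\Theta(\sigma_2^2\log d/(\eps M))$ samples, tuned for the \emph{first-order} part), and with probability $1-p$ it draws a separate Hessian with $\hessn{2}=\Theta(\sigma_2^2\log d/\gamma^2)$ samples, explicitly tests $\eigmin(\hessest[t]_2)\le-4\gamma$, and if so takes a random-sign negative-curvature step of fixed length $\gamma/L_2$ (\pref{lem:general_lem_curvature_descent}). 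The paper's proof combines the expected descent of the two branches, telescopes, and uses the uniformly random output $\hx$ to convert the averaged descent into $\Pr\prn{\eigmin(\grad^2F(\hx))\le-4\gamma}+\Pr\prn{\nrm{\grad F(\hx)}\ge 450\eps}\le 1/4$. Your route — a single cubic step per iteration with a $c\gamma$-accurate Hessian everywhere, and a last-iterate/termination certificate from the subproblem — is a different algorithm; moreover, for the \emph{constrained} subproblem on \pref{line:constrained_step_eg_cubic} the Nesterov--Polyak second-order condition $\eigmin(H)\ge-\tfrac{M}{2}\nrm{y-x}$ fails when the minimizer sits on the trust-region boundary (a Lagrange multiplier enters), so the negative-curvature certificate you rely on is not available in the form you state.

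The second concrete gap is your first ingredient: you cannot obtain a uniform high-probability bound $\nrm{\gradest[t]-\grad F(x\ind{t})}\le c\eps$ for all $t\le T$ ``from \pref{lem:base-var-bound} together with Markov and a union bound.'' Lemma~\ref{lem:base-var-bound} only controls the second moment, $\E\nrm{\gradest[t]-\grad F(x\ind{t})}^2\le\eps^2$; Markov plus a union bound over $T$ iterations forces the per-iterate failure probability down to $1/T$ and hence inflates the error to $\Theta(\sqrt{T}\,\eps)$, which destroys the descent argument. The paper's analysis is structured precisely to avoid this: all descent inequalities are kept in expectation (with Jensen absorbing the $3/2$-powers of the errors, as in \pref{lem:epsilon_cubic_descent_epsilon_step}), and Markov is applied only once, to the single returned iterate. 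On the positive side, your H\"older bound $\sum_t\nrm{x\ind{t}-x\ind{t-1}}^2\lesssim\Delta/\gamma$ and the balancing $b=\sigma_2\gamma/(L_2\sigma_1)$ do recover the correct cross-term $\Delta L_2\sigma_1\sigma_2/(\gamma^2\eps^2)$ — the paper reaches the same $\Delta/\gamma$ bound more directly, since its negative-curvature steps have deterministic length $\gamma/L_2$ and number at most $O(\Delta L_2^2/\gamma^3)$, and it uses two reset probabilities $\biasg,\biash$ rather than one — but these query-counting observations cannot rescue the descent and certification arguments above.
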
 
}

\arxiv{\subsection{Lower bounds}}
\colt{\subsection{Lower bounds}}
\label{sec:epsilon_gamma_lower}

	\newcommand{\dsunscaled}{\Sigma_{2T}}
        We now develop lower complexity bounds for the 
        task of finding $(\eps,\gamma)$-stationary points. To do so,
        we prove new lower bounds for the simpler sub-problem of finding a \emph{$\gamma$-weakly convex} point, i.e., a point $x$ such 
	that $\lambdamin(\grad^2 F(x))\ge-\gamma$ (with no restriction on 
	$\norm{\grad F(x)}$). Lower bounds for finding $(\eps,\gamma)$-SOSPs 
	 follow as the maximum (or, equivalently, the sum) of lower 
	bounds we develop here and the lower bounds for finding $\veps$-stationary 
	points given in~\pref{thm:gamma_so_zero_respecting}. To see why this is 
	so, let 
	$F_\eps$ and $F_\gamma$ be hard instances for finding 
	$\epsilon$-stationary and $\gamma$-weakly-convex points 
	respectively, and consider the ``direct sum''
	$F_{\epsilon,\gamma}(x)\ldef\frac{1}{2}F_\epsilon(x_1, \ldots, x_d) + 
	\frac{1}{2}F_\gamma(x_{d+1},\ldots,x_{2d})$; this is a hard instance for 
	finding $(\epsilon,\gamma)$-SOSPs that inherits all the regularity 
	properties of its constituent functions.

	The basic construction we use here is a modification of the zero-chain 
	introduced in~\citet{carmon2019lower_i} (see
        \pref{eq:hard_function_eps_LB} in \pref{app:lower}) 
	in 	which large 
	$\lambdamin(\grad^2 F(x))$ 
	is possible only when essentially none of 
	the entries of $x$ is zero. Given $T > 0$, we define the hard function
	\begin{equation}
	\barF(x) \defeq \Psi(1)\Phibar(x_1) + \sum_{i=2}^T ~ \brk*{ 
	\Psi(-x_{i-1}) 
        \Phibar(-x_i) + \Psi(x_{i-1}) \Phibar(x_i)) },
      	\label{eq:hard_function_gamma_LB}
      \end{equation}
	where $\Psi(x)\ldef\exp(1-\frac{1}{(2x-1)^2})\indicator{x>\frac{1}{2}}$ 
	(as in~\citet{carmon2019lower_i}) and $\Phibar(x)\defeq  8\prn{ 
	e^{\frac{-x^2}{2}}-1}$.

	Our design for the function $\Phibar$ guarantees that any query whose last coordinate 
	is zero has significant negative curvature, while maintaining the original 
	chain structure which guarantees that zero-respecting algorithms require 
	many queries before ``discovering'' the last coordinate. We complete the 
	construction by specifying a collection of stochastic derivative estimators similar to 
	those in~\pref{sec:epsilon_gamma_lower}, except for that we
        choose the stochastic gradient estimator $\wh{\grad 
	\barFunscaled}$ to be exactly equal to $\grad \barFunscaled$, so
      that the lower bound holds even for $\sigma_1=0$; 
	Appropriately scaling $\barFunscaled$ allows us to tune the 
	Lipschitz constants of its derivatives and the variance of the
	estimators, thereby	establishing the following complexity bounds (see 
	\pref{app:gamma_lower} for a full derivation).
		\begin{theorem} 
			\label{thm:gamma_so_zero_respecting}
		Let $p\ge 2$ and $\Delta, \Lip{1:p}, \sigma_{1:p}>0$ be fixed. 
		 If $\gamma \le 
		O(\min\{\sigma_2,L_1\})$, then there exists ${F \in \cF_p\prn*{\Delta, 
		L_{1:p}}}$ 
		and 
		$(\estimator[p], P_z) \in \stocpOracle[p]$ such that for any 
		stochastic \pth-order zero-respecting algorithm, the number of 
		queries to $\estimator[p]$required to obtain a 
		$\gamma$-weakly convex point with constant probability
                is at least
		\begin{align}\label{eq:minimax_gamma_p}
			\Omega(1)\cdot
			\begin{cases}
			\frac{\Delta \sigma_2^2 L_2^2}{\gamma^5}, & p=2, \\
			\frac{\Delta \sigma_2^2
			}{ \gamma^3 }
			\min\crl*{
				\min_{q\in\{3,\ldots,p\}}\prn*{\frac{\sigma_q}{\sigma_2}}^{\frac{2}{q-2}},
				\min_{q'\in\{2,\ldots,p\}}\prn*{\frac{\Lip{q'}}{\gamma}}^{\frac{2}{q'-1}}},
			& p>2.\\
			\end{cases}
		\end{align}
		\arxiv{A construction of dimension 
		$\Theta\Bigl(\frac{\Delta}{\gamma}  
				\min\Bigl\{
					\min_{q\in\{3,\ldots,p\}}\prn*{\frac{\sigma_q}{\sigma_2}}^{\frac{2}{q-2}},
					\min_{q'\in\{2,\ldots,p\}}\prn*{\frac{\Lip{q'}}{\gamma}}^{\frac{2}{q'-1}}\Bigr\}\Bigr)$
			 realizes the lower bound.}
	\end{theorem}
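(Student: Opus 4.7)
The overall strategy mirrors the structure used for \pref{thm:eps_so_zero_respecting}, adapted to the eigenvalue condition via the modified chain $\barF$ in \pref{eq:hard_function_gamma_LB}. The plan has four stages: (i) establish a deterministic ``discovery'' lemma showing that $\gamma$-weak convexity forces many coordinates to be non-zero; (ii) construct a stochastic $p$th order oracle for $\barF$ that is a probabilistic zero-chain, so that each new coordinate requires many queries in expectation to activate; (iii) scale the construction via $\barFscaled(x)=\kappa\,\barF(x/\lambda)$ and tune $\kappa,\lambda$ so that the function lies in $\cF_p(\Delta,\Lip{1:p})$ and the oracle lies in $\stocpOracle[p]$; (iv) combine these ingredients to obtain the stated lower bound, and finally lift from zero-weakly convex points to $\gamma$-weakly convex points via the direct-sum trick described in the preamble.

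\textbf{Stage (i): the deterministic discovery lemma.} The key property of $\Phibar(x)=8(e^{-x^2/2}-1)$ is that $\Phibar(0)=\Phibar'(0)=0$ and $\Phibar''(0)=-8$. Consequently, for $\barF$ in \pref{eq:hard_function_gamma_LB}, if for some index $i\in\{2,\dots,T\}$ we have $x_i=0$ while $|x_{i-1}|>\tfrac12$ (so that $\Psi(x_{i-1})+\Psi(-x_{i-1})>0$), then the $(i,i)$ diagonal entry of $\grad^2 \barF(x)$ is at most $-8(\Psi(x_{i-1})+\Psi(-x_{i-1}))$, forcing $\lambdamin(\grad^2 \barF(x))$ to be strictly negative. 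Under the scaling $\barFscaled=\kappa\barF(\cdot/\lambda)$ this yields a negative eigenvalue of magnitude $\Omega(\kappa/\lambda^2)$, so any $\gamma$-weakly convex point must, in each coordinate $i\in\{2,\dots,T\}$, either have $|x_{i-1}|\le\lambda/2$ (i.e.\ the previous coordinate undiscovered) or $|x_i|>0$. For a zero-respecting algorithm this is exactly the statement that to certify $\gamma$-weak convexity the algorithm must have ``discovered'' all $T$ coordinates. Tuning $\kappa/\lambda^2\asymp\gamma$ will determine one of the two scaling equations.

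\textbf{Stages (ii) and (iii): probabilistic zero-chain and scaling.} Following \cite{arjevani2019lower} and the proof of \pref{thm:eps_so_zero_respecting}, I would build an oracle that returns the exact function value and exact $p$th order information for every coordinate already discovered, but for the \emph{next} undiscovered coordinate it returns a signal with probability $\pi$ and no signal (still unbiased) with probability $1-\pi$. For the $q$th derivative this forces a $1/\pi$-type inflation in the variance, so enforcing the variance bound $\sigma_q$ imposes a lower bound on $\pi$ of the form $\pi\lesssim(\sigma_q/\kappa)^{-2}\lambda^{2q}$ (up to constants depending on $\Phibar,\Psi$). Writing the Lipschitz constraints $\opnorm{\grad^q\barFscaled}\le\Lip{q}$ gives $\kappa/\lambda^q\lesssim\Lip{q}$ for $q\in\{2,\dots,p\}$, and the progress constraint $\barFscaled(0)-\inf \barFscaled\le\Delta$ together with $T\cdot\kappa\asymp\Delta$ ties $T$ to $\Delta/\kappa$. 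The expected number of queries to discover all $T$ coordinates is $\Omega(T/\pi)$. One then \emph{jointly optimizes} $\kappa,\lambda,\pi$ subject to $\kappa/\lambda^2\asymp\gamma$ (from Stage (i)), all Lipschitz and variance constraints, and $T\kappa\asymp\Delta$, in order to maximize $T/\pi$. The maximum is achieved by the constraint that becomes tight, and gives a pointwise minimum over $q\in\{3,\dots,p\}$ (variance constraints) and $q'\in\{2,\dots,p\}$ (Lipschitz constraints) as in \pref{eq:minimax_gamma_p}. In the $p=2$ case only $q'=2$ is active and $\kappa/\lambda^2\asymp\gamma$, $\kappa/\lambda^2\lesssim\Lip{2}\cdot\lambda^{-1}\Leftrightarrow\lambda\lesssim \Lip{2}/\gamma$, yielding $T/\pi\asymp\Delta\sigma_2^2\Lip{2}^2/\gamma^5$.

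\textbf{Main obstacle.} The two genuinely non-routine ingredients are the discovery lemma of Stage (i) for the \emph{smallest Hessian eigenvalue} (rather than the gradient norm, as in previous chain constructions), and, more importantly, the joint scaling optimization in Stage (iii): one must verify that the same scaling $(\kappa,\lambda,\pi)$ simultaneously meets all the Lipschitz bounds on $\grad^q\barFscaled$ for $q=1,\dots,p$, all the variance bounds on $\pestscaledBar{q}$, the suboptimality bound $\Delta$, and the negative-eigenvalue budget $\gamma$, while still keeping the chain long enough to yield the claimed $1/\pi$ factor. This is essentially a careful bookkeeping exercise analogous to, but strictly more involved than, the one in the proof of \pref{thm:eps_so_zero_respecting}; the main novelty is handling the additional equation $\kappa/\lambda^2\asymp\gamma$ forced by the weak-convexity condition, which is what produces the distinctive $\gamma^{-5}$ (respectively $\gamma^{-3}$) scaling and the elbow between $p=2$ and $p>2$ seen in \pref{eq:minimax_gamma_p}.
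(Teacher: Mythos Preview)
Your plan is essentially the paper's own proof: the same modified chain $\barF$, the same probabilistic zero-chain oracle, and the same scaling-and-constraint-balancing argument (the paper writes $\barFscaled(x)=\alpha\barF(\beta x)$, activates the $\gamma$-constraint by $\alpha\beta^2\asymp\gamma$, activates the $\sigma_2$-constraint to fix $\rho$, and then chooses $\beta$ as the minimum over the remaining $\sigma_q$ and $L_{q'}$ constraints, exactly as you sketch). Two small corrections worth making when you write it out: (i) the paper deliberately takes the \emph{gradient} estimator to be exact (noiseless), which is possible because $\Phibar'(0)=\Psi'(0)=0$ so the true gradient already fails to reveal the next coordinate---this is why $\sigma_1$ never appears in \pref{eq:minimax_gamma_p} and why the variance minimum starts at $q=3$; (ii) your Stage~(iv) ``lift via the direct-sum trick'' is unnecessary here---the theorem is already about $\gamma$-weakly convex points, and the direct sum is used only later to combine this result with \pref{thm:eps_so_zero_respecting} into an $(\epsilon,\gamma)$-SOSP lower bound. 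A few of your inequality directions and exponents (e.g.\ $\kappa/\lambda^q\lesssim L_q$ versus the correct $\alpha\beta^{q+1}\lesssim L_q$, and the sign on the $\pi$ bound) are flipped, but these are bookkeeping slips that the detailed calculation will straighten out.
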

        \arxiv{
        \pref{thm:gamma_so_zero_respecting} is new even in the
        noiseless case  (in which $\sigma_{1} = \dots = 	
        \sigma_p=0$), where it specializes to
        \begin{equation}\label{eq:minimax_gamma_p_det}
          \Omega(1)\cdot\frac{\Delta 
          }{ \gamma }
          \min_{q\in\{2,\ldots,p\}}\prn*{\frac{\Lip{q}}{\gamma}}^{\frac{2}{q-1}}.
        \end{equation}
For the class $\cF_p(\Delta,L_p)$, the lower bound 
\pref{eq:minimax_gamma_p_det} further simplifies to
$\Delta{}L_p^{\frac{2}{p-1}}\gamma^{-\frac{p+1}{p-1}}$, which is attained by 
the \pth-order regularization method given in \citet[Theorem 
3.6]{cartis2017improved}.
}
\colt{
        \pref{thm:gamma_so_zero_respecting} is new even in the
        noiseless case  (in which $\sigma_{1} = \dots = 	
        \sigma_p=0$), where it specializes to
        $\frac{\Delta 
          }{ \gamma }
          \min_{q\in\{2,\ldots,p\}}\prn*{\frac{\Lip{q}}{\gamma}}^{\frac{2}{q-1}}$. For
          the class $\cF_p(\Delta,L_p)$, this further simplifies to
$\Delta{}L_p^{\frac{2}{p-1}}\gamma^{-\frac{p+1}{p-1}}$, which is attained by 
the \pth-order regularization method given in \citet[Theorem 
3.6]{cartis2017improved}.
}
Together, these results characterize the 
deterministic complexity of finding $\gamma$-weakly convex points with 
noiseless \pth-order methods.
	
Returning to the stochastic setting, the bound in \pref{thm:gamma_so_zero_respecting}, when combined with \pref{thm:eps_so_zero_respecting}, implies the 
following oracle complexity lower bound bound for finding $(\veps,\gamma)$-SOSPs with 
	zero-respecting stochastic second-order methods ($p=2$):
	\begin{align}
		\Omega(1)\cdot\prn*{ 
	\min\crl*{ 				
		\frac{\Delta\sigma_1\sigma_2}{\epsilon^3}, 
		\frac{\Delta L_2^{0.5}\sigma_1}{\epsilon^{3.5}},
		\frac{\Delta L_1 \sigma_1^2}{\epsilon^{4}}
	} + \frac{\Delta \sigma_2^2\Lip{2}^2	}{\gamma^5}}.
	\end{align}
	Our lower bound matches the $\epsilon^{-3} + \gamma^{-5}$ terms in 
	the upper bound given by~\pref{thm:epsilon_gamma_hvp}, but does not 
	match the mixed term $\eps^{-2}\gamma^{-2}$ appearing in the
        upper bound.\footnote{Young's 
	inequality only gives $\epsilon^{-3}+\gamma^{-5} \ge 
	\Omega(\epsilon^{-9/5}\gamma^{-2})$.} Overall, the rates match whenever
	$\gamma = \Omega(\epsilon^{0.5})$ or $\gamma =O(\epsilon^{2/3})$.

	\pref{thm:gamma_so_zero_respecting} is suggestive of another ``elbow'' phenomenon:
	In the stochastic regime, the rate does not improve beyond
        $\gamma^{-3}$ for $p\ge3$, while the optimal rate in the
        noiseless regime, $\gamma^{-\frac{p+1}{p-1}}$, continues
        improving for all $p$.\footnote{Indeed, when 
	high-order noise moments are assumed finite, the term	
	$\min_{q\in\{3,\ldots,p\}}\prn*{{\sigma_q}/{\sigma_2}}^{\frac{2}{q-2}}$ 
	can longer be disregarded. This, in turn, implies that for sufficiently 
	small $\gamma$, one cannot improve over $\gamma^{-3}$-scaling, as seen 
	by~\pref{eq:minimax_gamma_p}.} However, we are not yet aware of an
        algorithm using stochastic third-order information or higher
        that can achieve the $\gamma^{-3}$ complexity bound.

\arxiv{
\section{Conclusion} \label{sec:conclusion}
This paper provides a fairly complete picture of the worst-case oracle 
complexity of finding stationary points with a stochastic second-order 
oracle: for $\epsilon$-stationary points we characterize the leading term in 
$\eps^{-1}$ exactly and for ($\epsilon,\gamma$)-SOSPs we characterize the 
leading term in $\gamma^{-1}$ for a wide range of parameters. 
Nevertheless, our results point to a number of open questions.

\paragraph{Benefits of higher-order information for $\gamma$-weakly convex 
points.} 

Our upper and lower bounds (in \pref{thm:eg_cubic_upper} and 
\pref{thm:gamma_so_zero_respecting}) resolve the optimal rate to find an 
$(\epsilon, \gamma)$-stationary point for $p=2$, i.e., when $F$ is 
second-order smooth and the algorithm can query stochastic gradient and 
Hessian information. Furthermore, \pref{thm:eps_so_zero_respecting} 
shows that higher order information ($p \geq 3$) cannot improve the 
dependence of the rate on the first-order stationarity parameter $\epsilon$. 
However, our lower bound for dependence on $\gamma$ scales as 
$\gamma^{-5}$ for $p= 2$, but scales as $\gamma^{-3}$ for $p \geq
3$. The weaker lower bound for $p \geq 3$ leaves 
open the possibility of a stronger upper bound using third-order 
information or higher. 

\paragraph{Global methods.} For statistical learning and sample average 
approximation problems, it is natural to consider problem instances of the 
form $F(x)=\En\brk[\big]{\wh{F}(x,z)}$. For this setting, a more powerful 
oracle 
model is the \emph{global oracle}, in which samples 
$z\ind{1},\ldots,z\ind{n}$ are 
drawn i.i.d. and the learner observes the entire function 
$\wh{F}(\cdot,z\ind{t})$ 
for 
each $t\in[n]$. Global oracles are more powerful than stochastic \pth order 
oracles for every $p$, and lead to improved rates in the
convex setting \citep{foster2019complexity}. Is it possible to beat the $\eps^{-3}$ elbow for such 
oracles, or do our lower bounds extend to this setting?

\paragraph{Adaptivity and instance-dependent complexity.} 
Our lower bounds show that stochastic higher-order methods cannot 
improve the $\eps^{-3}$ oracle complexity attained with stochastic 
gradients and Hessian-vector products. Furthermore, in the multi-point
query model, stochastic second-order information does not even lead to
improved rates over stochastic first-order information. However, these 
conclusions could be artifacts of our worst-case point of view---are there 
natural families of problem instances for which higher-order methods can 
adapt to additional problem structure and obtain stronger 
instance-dependent convergence guarantees? Developing a theory of 
instance-dependent complexity that can distinguish adaptive algorithms 
stands out as an exciting research prospect.

 }

\colt{
\subsection*{Discussion}
Due to space constraints, we defer conclusions and discussion to \pref{app:conclusion}.}

\newpage
\subsection*{Acknowledgements} 
We thank Blake Woodworth and Nati Srebo for helpful discussions. YA 
acknowledges partial support from the Sloan Foundation and Samsung 
Research. JCD acknowledges support from the NSF CAREER award 
CCF-1553086, ONR YIP N00014-19-2288, Sloan Foundation, NSF HDR 
1934578 (Stanford Data Science Collaboratory), and the DAWN Consortium.
DF acknowledges the support of TRIPODS award 1740751.  KS 
acknowledges support from NSF CAREER Award 1750575 and a Sloan 
Research Fellowship.

\setlength{\bibsep}{6pt}

\newpage 

\appendix 

\renewcommand{\contentsname}{Contents of Appendix}
\tableofcontents
\addtocontents{toc}{\protect\setcounter{tocdepth}{3}}
\clearpage

\colt{\section{Further discussion} \label{app:conclusion}
This paper provides a fairly complete picture of the worst-case oracle 
complexity of finding stationary points with a stochastic second-order 
oracle: for $\epsilon$-stationary points we characterize the leading term in 
$\eps^{-1}$ exactly and for ($\epsilon,\gamma$)-SOSPs we characterize the 
leading term in $\gamma^{-1}$ for a wide range of parameters. 
Nevertheless, our results point to a number of open questions.

\paragraph{Benefits of higher-order information for $\gamma$-weakly convex 
points.} 

Our upper and lower bounds (in \pref{thm:eg_cubic_upper} and 
\pref{thm:gamma_so_zero_respecting}) resolve the optimal rate to find an 
$(\epsilon, \gamma)$-stationary point for $p=2$, i.e., when $F$ is 
second-order smooth and the algorithm can query stochastic gradient and 
Hessian information. Furthermore, \pref{thm:eps_so_zero_respecting} 
shows that higher order information ($p \geq 3$) cannot improve the 
dependence of the rate on the first-order stationarity parameter $\epsilon$. 
However, our lower bound for dependence on $\gamma$ scales as 
$\gamma^{-5}$ for $p= 2$, but scales as $\gamma^{-3}$ for $p \geq
3$. The weaker lower bound for $p \geq 3$ leaves 
open the possibility of a stronger upper bound using third-order 
information or higher. 

\paragraph{Global methods.} For statistical learning and sample average 
approximation problems, it is natural to consider problem instances of the 
form $F(x)=\En\brk[\big]{\wh{F}(x,z)}$. For this setting, a more powerful 
oracle 
model is the \emph{global oracle}, in which samples 
$z\ind{1},\ldots,z\ind{n}$ are 
drawn i.i.d. and the learner observes the entire function 
$\wh{F}(\cdot,z\ind{t})$ 
for 
each $t\in[n]$. Global oracles are more powerful than stochastic \pth order 
oracles for every $p$, and lead to improved rates in the
convex setting \citep{foster2019complexity}. Is it possible to beat the $\eps^{-3}$ elbow for such 
oracles, or do our lower bounds extend to this setting?

\paragraph{Adaptivity and instance-dependent complexity.} 
Our lower bounds show that stochastic higher-order methods cannot 
improve the $\eps^{-3}$ oracle complexity attained with stochastic 
gradients and Hessian-vector products. Furthermore, in the multi-point
query model, stochastic second-order information does not even lead to
improved rates over stochastic first-order information. However, these 
conclusions could be artifacts of our worst-case point of view---are there 
natural families of problem instances for which higher-order methods can 
adapt to additional problem structure and obtain stronger 
instance-dependent convergence guarantees? Developing a theory of 
instance-dependent complexity that can distinguish adaptive algorithms 
stands out as an exciting research prospect.

}

\section{Detailed comparison with existing rates}
\label{app:comparison}
\colt{\pref{table:summary_detailed} provides a detailed comparison between our 
upper bounds on the complexity of finding $\epsilon$-stationary points
and those of prior work.}

\begin{table}[h!]
	\begin{center}
          \renewcommand{\arraystretch}{1.3}
         \begin{tabular}{p{5.7cm} p{.9cm} p{4.5cm} p{3.5cm}}
		\toprule
                  \makecell[l]{Method} & \makecell[l]{Uses \\
		$\stochessNA$?} & \makecell[l]{Complexity bound} & 
                                                    \makecell[l]{Additional\\ assumptions} \\
		\midrule
                  \makecell[lt]{SGD~\\\citep{ghadimi2013stochastic}~} & No 
		& 
		$O(\Delta L_1 \sigma_1^2 \epsilon^{-4})$ &\\
                  \makecell[lt]{Restarted SGD~\\\citep{fang2019sharp}} & No & 
                  $O(\Delta L_2^{0.5} 
		\sigma_1^2 \eps^{-3.5})^{\dag}$ & \makecell[tl]{$\stocgradNA$ Lipschitz\\
                  almost surely} \\
                  \makecell[lt]{Normalized 
                  SGD~\\\citep{cutkosky2020momentum}} 
                  & No & 
$O(\Delta L_2^{0.5} 
                                                                                                  \sigma_1^2 \eps^{-3.5})^{\dag}$                                                                                                  &
                  \\
		          \makecell[lt]{Subsampled regularized\\
		Newton~\citep{tripuraneni2018stochastic}} & Yes$^*$ & $O(\Delta 
		L_2^{0.5} 
		\sigma_1^2 \eps^{-3.5})^{\dag}$ & \\
                  \makecell[lt]{Recursive variance 
                  \\reduction~\citep[e.g.,][]{fang2018spider}} & No &
		$O(\Delta \sigma_1 \sigmss \epsilon^{-3}+ \Delta L_1 
		\eps^{-2})$ 
                                                                 &\makecell[tl]{Mean-squared
                  \\smoothness  $\sigmss\leq\sigma_2$,\\simultaneous queries \\
		(\pref{app:oracle})}
		\\ 
		\midrule
		\makecell[lt]{SGD with \mainalg\\({\pref{alg:epsilon_hvp}})} & Yes$^*$ & 
		${O(\Delta \sigma_1 \sigma_2 
		\eps^{-3} + \Delta L_2^{0.5} \sigma_1 \eps^{-2.5}+\Delta L_1 
		\eps^{-2})}$& \\
           \makecell[lt]{Subsampled Newton with\\ \mainalg ({\pref{alg:epsilon_cubic}})} & Yes & 
		${O(\Delta \sigma_1 \sigma_2 
		\eps^{-3} + \Delta L_2^{0.5} \sigma_1 \eps^{-2.5}+\Delta \sigma_2  
		\eps^{-2})}$& \\
		\bottomrule 
	\end{tabular}
\end{center}
	\caption{Detailed comparison of guarantees for finding $\eps$-stationary points 
	(satisfying $\E \norm{\grad F(x)}\le \eps$) for a function $F$
        with $L_1$-Lipschitz gradients and $L_2$-Lipschitz Hessian. Here $\Delta$ is the initial 
	optimality gap, and $\sigma_p$ is the variance of $\widehat{\grad^p F}$. Algorithms
        marked with $^{*}$ require only stochastic Hessian-vector products. Complexity 
	bounds marked with $^\dag$  only 
	show  
	leading order term in $\epsilon$.}\label{table:summary_detailed}
\end{table}

\section{\mbox{Comparison: multi-point queries and mean-squared smoothness}}\label{app:oracle}
Stochastic first-order methods that utilize variance
reduction \citep{lei2017non,fang2018spider,zhou2018stochastic}
 employ the following \emph{mean-squared smoothness} (MSS) 
assumption on the stochastic gradient estimator:
\begin{equation*}\label{eq:mss}
\E \, \norm{ \stocgrad{x,z} - \stocgrad{y,z} }^2 
\le 
\LipGradBar^2\norm{x-y}^2~~\mbox{for all}~~x,y\in\R^d.
\end{equation*}
Since $\E \brk{\stocgrad{x,z}} = \grad F(x)$, this is equivalent to assuming 
\begin{equation}\label{eq:mss-var}
\E \, \norm{ \stocgrad{x,z} - \stocgrad{y,z} - \prn{\grad F(x) - \grad 
F(y)}}^2 
\le 
\sigmss^2\norm{x-y}^2~~\mbox{for all}~~x,y\in\R^d,
\end{equation}
for some $\sigmss < \LipGradBar$. In fact, while it always holds that
$\LipGradBar^2\leq{}L_1^2+\sigmss^2$, inspection of the results 
of~\citet{fang2018spider,wang2018spiderboost} shows one can replace 
$\LipGradBar$ with $\sigmss$ in the leading terms of their complexity 
bounds without any change to the algorithms. 

Algorithms that take advantage of the MSS structure rely on the following 
additional \emph{simultaneous query} assumption (which is a special
case of \pref{eq:n_point_query_model} for $n=2$):
\begin{equation}\label{eq:2-query}
\mbox{\emph{We may query $x,y\in\R^d$ and observe  $\estimator[1](x,z)$ 
and $\estimator[1](y,z)$ for the same draw of $z\sim \Pz$.}}
\end{equation}
In empirical risk minimization problems, $z$ represents the datapoint 
index and possibly data augmentation parameters, and the value of $z$ is 
typically part of the query, which means that assumption~\pref{eq:2-query} 
indeed holds. In certain online learning settings, however, the assumption can fail. 
For example, the variable $z$ could represent the instantaneous power 
demands in an electric grid, and testing two grid configurations for the same 
grid state might be impractical.

We observe that assuming access to both an MSS gradient estimator and 
simultaneous two-point queries is stronger than assuming a bounded 
variance stochastic Hessian-vector product estimator. This holds because the former 
allows us to simulate the latter with finite differencing. Formally, we have 
the following.
\begin{observation}\label{obs:mss-stronger}
	Let $F$ have $L_2$-Lipschitz Hessian, let $\stocgradNA$ 
	satisfy~\pref{eq:mss-var}, and assume we have access to a
        two-point query oracle as in \pref{eq:2-query}. Then, for 
	any $\delta > 0$ and every unit-norm vector $u$, the Hessian-vector 
	product estimator 
	\begin{equation}\label{eq:hess-from-mss}
	\stochessNA_\delta(x,z) u \defeq \frac{1}{\delta} \brk*{
		\stocgrad{x+\delta \cdot u, z} - \stocgrad{x , z}}
	\end{equation}
	satisfies
	\begin{equation*}
	\nrm*{ \E \brk{\stochessNA_\delta(x,z) u} - \hess F(x)u } \le \frac{L_2 
	\delta}{2}
	~~\mbox{and}~~
	\E \, \nrm*{ \stochessNA_\delta(x,z) u -\hess F(x)u}^2 \le \sigmss^2 
	+ 
	\frac{L_2^2 \delta^2}{4}.
	\end{equation*}
\end{observation}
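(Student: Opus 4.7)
The plan is to verify both inequalities by reducing them to (i) a Taylor expansion of $\grad F$ enabled by Lipschitz continuity of $\hess F$, and (ii) the mean-squared smoothness bound~\pref{eq:mss-var} applied at the pair of points $x$ and $x+\delta u$. Since $\stocgrad{\cdot,z}$ is unbiased, the expectation of the estimator is $\E[\stochessNA_\delta(x,z) u] = \tfrac{1}{\delta}\bigl(\grad F(x+\delta u) - \grad F(x)\bigr)$, which is purely a deterministic finite-difference quantity; the bias analysis is then disentangled from the noise analysis.

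For the bias bound, I would invoke the standard consequence of $L_2$-Lipschitz continuity of $\hess F$, namely
\begin{equation*}
\nrm[\big]{\grad F(x+\delta u) - \grad F(x) - \delta \hess F(x) u} \le \tfrac{L_2}{2}\nrm{\delta u}^2 = \tfrac{L_2 \delta^2}{2},
\end{equation*}
which follows from writing $\grad F(x+\delta u) - \grad F(x) = \int_0^1 \hess F(x+t\delta u)\,\delta u\,dt$ and comparing the integrand to $\hess F(x)\delta u$. Dividing by $\delta$ gives exactly the claimed bound $\nrm{\E[\stochessNA_\delta(x,z)u] - \hess F(x)u}\le \tfrac{L_2\delta}{2}$.

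For the variance bound, the key observation is the orthogonality between ``noise'' and ``bias'': setting $A = \stochessNA_\delta(x,z)u$, $B=\E[A]$, and $C=\hess F(x) u$, the cross term in the expansion of $\E\nrm{A-C}^2$ vanishes since $\E[A-B]=0$, giving
\begin{equation*}
\E\nrm{A-C}^2 = \E\nrm{A-B}^2 + \nrm{B-C}^2.
\end{equation*}
The second summand is already bounded by $\tfrac{L_2^2\delta^2}{4}$ from the bias analysis. For the first summand, I apply~\pref{eq:mss-var} with the pair $(x, x+\delta u)$: the quantity inside the norm is $\tfrac{1}{\delta}$ times $[\stocgrad{x+\delta u,z}-\stocgrad{x,z} - (\grad F(x+\delta u)-\grad F(x))]$, whose squared expectation is at most $\tfrac{1}{\delta^2}\cdot \sigmss^2\nrm{\delta u}^2 = \sigmss^2$. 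Summing the two contributions yields the claimed bound $\sigmss^2 + \tfrac{L_2^2\delta^2}{4}$.

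There is no real obstacle here; this is a routine check. The only minor care needed is bookkeeping the factor $\tfrac{1}{\delta^2}$ in the variance computation and ensuring we apply~\pref{eq:mss-var} to the centered quantity (difference minus expected difference) so that the $\sigmss$ (rather than a potentially larger $\LipGradBar$) constant appears. No further machinery beyond Lipschitz continuity of $\hess F$ and the definition of the two-point MSS oracle is required.
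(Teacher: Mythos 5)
Your proof is correct and follows essentially the same route as the paper's: the bias bound via the Taylor/Lipschitz estimate $\nrm{\grad F(x+\delta u)-\grad F(x)-\hess F(x)[\delta u]}\le \tfrac{L_2}{2}\delta^2$, and the variance bound by applying \pref{eq:mss-var} to the centered finite difference. The only (harmless) difference is that you make the bias--variance orthogonality decomposition explicit, whereas the paper leaves that final combination implicit.
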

\begin{proof}
	We have $\E \brk{\stochessNA_\delta(x,z) u} = \frac{1}{\delta}\brk{\grad 
	F(x+\delta\cdot u) - \grad F(x)}$, and by Lipschitz continuity of $\hess F$,
	\begin{equation*}
	\norm{ \grad F(x+\delta\cdot u) - \grad F(x) - \hess F(x) [\delta u]} \le 
	\frac{L_2}{2}\delta^2 \norm{u}^2 = \frac{L_2}{2}\delta^2,
	\end{equation*}
which implies the bound on the bias. To bound the variance, we note that
	\begin{flalign*}
	&\E \, \nrm*{  \stochessNA_\delta(x,z) u - \E\brk{ \stochessNA_\delta(x,z) 
	u}}^2\\& \quad
	\le \frac{1}{\delta^2}  \E\,  \nrm*{  
	\stocgrad{x+\delta u, z} - 
	\stocgrad{x,z} - \brk{\grad F(x+\delta u) - \grad F(x)}}^2 \le 
	\frac{1}{\delta^2} \cdot \sigma_2^2\norm{\delta u}^2 =\sigmss^2,
	\end{flalign*}
	by the MSS property~\pref{eq:mss-var}. 
\end{proof}
We conclude from \pref{obs:mss-stronger} that
\pref{alg:epsilon_hvp}, which only requires stochastic Hessian-vector products,
attains $O(\eps^{-3})$ complexity under assumptions no stronger than
previous algorithms.
In fact, we show now that our assumptions are in fact strictly weaker
than prior work. That is, while an MSS gradient estimator implies a bounded variance Hessian 
estimator, the opposite is not true in general. This is simply due to the fact 
that in our oracle model, $\stocgradNA$ and $\stochessNA$ can be completely unrelated. 
Consider for example the case where $\Pz$ is uniform on $\{-1,1\}$ and
\begin{equation*}
\stocgrad{x,z} = \begin{cases}
\grad F(x) + \frac{x}{\norm{x}} z & x\ne 0\\
\grad F(x)  & x=0,\\
\end{cases}
~~\mbox{while}~~\stochess{x,z}=\hess F(x).
\end{equation*}
Clearly $\stocgradNA$ is not MSS, even though $\stochessNA$ has zero 
variance. 

There is, however, an important setting where bounded variance for 
$\stochessNA$ \emph{does} imply that $\stocgradNA$ is MSS. Suppose that
the derivative of $\stocgradNA(x,z)$ exists, and has the form
\begin{equation}\label{eq:weak-stat-learning}
\grad [\stocgrad{x,z}] = \stochess{x,z}.
\end{equation}
That is, the Hessian estimator is the Jacobian of the gradient 
estimator. In this case, bounded variance for the Hessian estimator
implies mean-squared smoothness.
\begin{observation}\label{obs:mss-equiv}
	Let $F$ have gradient and Hessian estimators $\stocgradNA$ and 
	$\stochessNA$ satisfying~\pref{eq:stoc-hess} 
	and~\pref{eq:weak-stat-learning}. Then $\stocgradNA$ has the MSS 
	property~\pref{eq:mss-var} with $\sigmss\leq\sigma_2$.
\end{observation}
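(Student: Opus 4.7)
The plan is to apply the fundamental theorem of calculus to the map $t \mapsto \stocgrad{y + t(x-y), z}$, whose derivative along $x - y$ equals $\stochess{y + t(x-y), z}(x-y)$ by assumption \pref{eq:weak-stat-learning}. This yields the identity
\begin{equation*}
\stocgrad{x,z} - \stocgrad{y,z} = \int_0^1 \stochess{y + t(x-y), z}(x-y)\, dt,
\end{equation*}
and an analogous identity for $\grad F(x) - \grad F(y)$ follows by taking expectations inside the integral (justified by Fubini given the bounded-variance hypothesis on $\stochessNA$). Subtracting these two identities gives a pointwise representation of the MSS residual as an integral of the \emph{centered} Hessian error applied to $x - y$.

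Next, I would take the squared norm of both sides and move the square inside the integral using Jensen's inequality (on the probability measure $dt$ on $[0,1]$), then apply the elementary bound $\nrm*{A v}^2 \le \nrm*{A}_{\op}^2 \nrm*{v}^2$ with $A = \stochess{y + t(x-y), z} - \hess F(y + t(x-y))$ and $v = x - y$. Taking expectations over $z$ and invoking the variance bound \pref{eq:stoc-hess} pointwise in $t$ yields
\begin{equation*}
\E \, \nrm*{\stocgrad{x,z} - \stocgrad{y,z} - \prn*{\grad F(x) - \grad F(y)}}^2 \le \int_0^1 \sigma_2^2 \nrm*{x-y}^2\, dt = \sigma_2^2 \nrm*{x-y}^2,
\end{equation*}
which is exactly the MSS property \pref{eq:mss-var} with $\sigmss \le \sigma_2$.

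There is no real obstacle here; the argument is essentially a one-line application of the fundamental theorem of calculus together with Jensen's inequality. The only mild subtlety is justifying the exchange of expectation and integral (and the existence of the relevant integrals), which follows routinely from the assumption that $\stochess{\cdot, z}$ has finite variance uniformly in $x$ and is the a.e.\ Jacobian of $\stocgrad{\cdot, z}$.
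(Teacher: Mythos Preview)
Your proposal is correct and follows essentially the same route as the paper's proof: write the MSS residual as the integral $\int_0^1 \bigl(\stochess{xt+y(1-t),z}-\hess F(xt+y(1-t))\bigr)(x-y)\,dt$, then apply Jensen's inequality and the variance bound~\pref{eq:stoc-hess}. The paper's argument is stated in a single sentence and skips the Fubini justification you mention, but the substance is identical.
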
 
\begin{proof} 
	Under the property~\pref{eq:weak-stat-learning}, we have
	\begin{flalign*}
	&\stocgrad{x,z} - \stocgrad{y,z} - \brk{\grad F(x) - \grad 
		F(y)} \\&\quad\quad\quad
		= \int_{0}^1 \prn*{\stochess{x t + y(1-t), z} - \hess F (x t + y(1-t))}(x-y)
		dt.
	\end{flalign*}
	Taking the squared norm, applying Jensen's inequality, and substituting 
	the variance bound~\pref{eq:stoc-hess} gives the MSS 
	property~\pref{eq:mss-var}.
\end{proof}
 The property~\pref{eq:weak-stat-learning} holds for empirical risk 
 minimization, where we have the more general relation $\stocder{x,z}{p} = 
 \grad^p \wh{F}(x,z)$ for any $p$; That is, all the stochastic derivative
 estimators are themselves the derivatives of a single stochastic function. 
 Therefore, by \pref{obs:mss-stronger} and \pref{obs:mss-equiv}, in empirical risk 
 minimization 
 settings, mean-square smoothness is essentially equivalent to bounded 
 variance of the stochastic Hessian estimator.

\section{Variance-reduced gradient estimator (\mainalg)} \label{app:estimators}
In this section we prove \pref{lem:base-var-bound}. First, we formally 
describe the protocol in which our optimization algorithms query the 
gradient estimator $\getgradientfunc$ described in 
\pref{alg:gradient_estimator}, and define some additional notation. 

Given a function $F \in \cF_2(\Delta, L_1, L_2)$ and a stochastic 
second-order oracle in $\stocpOracle[2]$, the optimization algorithm 
interacts with $\getgradientfunc$ by sequentially querying points 
$\crl*{x\ind{t}}_{t=1}^\infty$ with reset probabilities 
$\crl*{\bias\ind{t}}_{t=1}^\infty$, to obtain estimates $\gradest[t]$ for 
$\grad \func{x\ind{t}}$ for each time $t$; that is,
\begin{flalign}
&x\ind{t}=\mathsf{A}\ind{t}(g\ind{0}, g\ind{1},\ldots,g\ind{t-1};r\ind{t-1}),
~\bias\ind{t}=\mathsf{B}\ind{t}({r}\ind{t-1}),
~~\mbox{and}~~\nonumber \\&
g\ind{t}=\getgradientfunc_{\epsilon, 
\bias\ind{t}}(x\ind{t},x\ind{t-1},g\ind{t-1}),
\label{eq:lemma1_protocol}
\end{flalign}
where $\mathsf{A}\ind{t}, \mathsf{B}\ind{t}$ are measurable mappings
modeling the optimization algorithm and  
$\{r\ind{t}\}$ is an independent 
sequence of random seeds.\footnote{This level of formalism is not used
  within the proof, but we include it here for clarity.} That is, 
\pref{lem:base-var-bound} holds for any sequence of queries where 
$x\ind{t}$, and $\bias\ind{t}$ are adapted to the filtration
\[
\cG\ind{t} = 
\sigma\prn*{\{g\ind{j}, r\ind{j}\}_{j<t}},
\]
but $\bias\ind{t}$ is independent of $\cG\ind{t-1}$ and $\gradest[t-1]$.

\pref{lem:base-var-bound} is an immediate consequence of  
\pref{lem:gradient_estimator_general_lemma} and 
\pref{lem:gradient_estimator_oracle_complexity_meta_lemma}, proven 
below, which respectively establish the estimator's error and complexity 
bounds. 

\begin{lemma} \label{lem:gradient_estimator_general_lemma} Given a 
function  $F \in \cF_2(\Delta, \infty, L_2)$, a stochastic oracle in 
$\stocpOracle[2]$, and initial points $x\ind{0}$ and $g\ind{0} 
= \bot$, let $\crl{g\ind{t}}_{t\geq0}$ denote the sequence of gradient 
estimates at $\crl{ x\ind{t} }_{ t \geq 0}$ respectively, returned by 
$\textnormal{\getgradientfunc}$ under the protocol \pref{eq:lemma1_protocol}. Then, for all $t \geq 1$, 
\begin{align*} 
\En{}{ \nrm[\big]{ \gradestc{t} - \derF{x\ind{t}}{}}^2} \leq \epsilon^2.   
\end{align*}
\end{lemma}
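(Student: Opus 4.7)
The plan is to induct on $t$, maintaining the invariant $\En\|\gradestc{t} - \grad F(x\ind{t})\|^2 \le \epsilon^2$. Let $e\ind{t} \defeq \gradestc{t} - \grad F(x\ind{t})$ and $V_t \defeq \En\|e\ind{t}\|^2$, and condition throughout on the filtration $\cG\ind{t-1}$ generated by the history up to time $t{-}1$ (so that $x\ind{t}$, $x\ind{t-1}$, $\bias\ind{t}$, $K$, and $e\ind{t-1}$ are $\cG\ind{t-1}$-measurable). The base case $t=1$ is immediate: since $g\ind{0}=\bot$, the estimator always takes the fresh-start branch, and the $n\ge 5\sigma_1^2/\epsilon^2$ i.i.d.\ gradient queries yield $V_1 \le \sigma_1^2/n \le \epsilon^2/5$.

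For the inductive step, I would split into the two branches of the conditional on line~\ref{line:initialization_est} by marginalizing over the independent coin $C\sim\bern(\bias)$, giving $V_t = \bias\cdot V_t^{(C=1)} + (1-\bias)\cdot V_t^{(C=0)}$. On the fresh-start branch ($C=1$) the same calculation as in the base case yields $V_t^{(C=1)} \le \epsilon^2/5$. On the recursive branch ($C=0$), write $\gradestc{t} - \gradestc{t-1} = \sum_k \stocder{x\ind{k-1},z\ind{k}}{2}(x\ind{k}-x\ind{k-1})$ and decompose
\[
e\ind{t} \;=\; e\ind{t-1} \;+\; \beta \;+\; \Bigl[\textstyle\sum_{k}\bigl(\stocder{x\ind{k-1},z\ind{k}}{2} - \grad^2 F(x\ind{k-1})\bigr)(x\ind{k}-x\ind{k-1})\Bigr],
\]
where $\beta = \sum_k \grad^2 F(x\ind{k-1})(x\ind{k}-x\ind{k-1}) - (\grad F(x\ind{t}) - \grad F(x\ind{t-1}))$ is the deterministic (given $\cG\ind{t-1}$) bias of the numerical integral. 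Since the $z\ind{k}$ are drawn afresh, the bracketed term is mean-zero conditional on $\cG\ind{t-1}$ and its squared norm has expectation at most $\sum_k \sigma_2^2 \|x\ind{k}-x\ind{k-1}\|^2 = \sigma_2^2\|x\ind{t}-x\ind{t-1}\|^2/K$, which is $\le \bias\epsilon^2/5$ by the choice of $K$ on line~\ref{line:rvr-params}. For the bias, writing $\grad F(x\ind{k})-\grad F(x\ind{k-1}) = \int_0^1 \grad^2 F(x\ind{k-1}+s(x\ind{k}-x\ind{k-1}))(x\ind{k}-x\ind{k-1})\,ds$ and using the $L_2$-Lipschitz continuity of $\grad^2 F$ gives $\|\beta\| \le \tfrac{L_2}{2K}\|x\ind{t}-x\ind{t-1}\|^2 \le \bias\epsilon/10$, again via the choice of $K$.

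Putting these pieces together, orthogonality of the mean-zero martingale increment yields
\[
\En\bigl[\|e\ind{t}\|^2\,\big|\,\cG\ind{t-1},C=0\bigr] \;\le\; \|e\ind{t-1}+\beta\|^2 + \bias\epsilon^2/5 \;\le\; \|e\ind{t-1}\|^2 + 2\|e\ind{t-1}\|\cdot\bias\epsilon/10 + \bias^2\epsilon^2/100 + \bias\epsilon^2/5.
\]
Taking a further expectation and using $\En\|e\ind{t-1}\| \le \sqrt{V_{t-1}} \le \epsilon$ by the inductive hypothesis, I get $V_t^{(C=0)} \le \epsilon^2 + 2\bias\epsilon^2/5 + \bias^2\epsilon^2/100$. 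Combining the two branches,
\[
V_t \;\le\; \bias\cdot\tfrac{\epsilon^2}{5} + (1-\bias)\bigl(\epsilon^2 + \tfrac{2\bias\epsilon^2}{5} + \tfrac{\bias^2\epsilon^2}{100}\bigr) \;\le\; \epsilon^2,
\]
where the last inequality follows after collecting terms (the negative contribution $-3\bias\epsilon^2/10$ from $(1-\bias)\cdot 1$ versus $\bias/5$ dominates the remaining $O(\bias^2)$ slack). The main obstacle is controlling the cross term $\langle e\ind{t-1},\beta\rangle$: because $\beta$ is a biased-estimator artifact that does \emph{not} vanish in expectation, one cannot simply use the martingale trick and must instead tightly bound $\|\beta\|$ by $\bias\epsilon/10$ (a factor strictly smaller than $\bias\epsilon/5$, saved by the $1/2$ from the integral remainder), which is exactly what allows the linear-in-$\bias$ error terms to be absorbed by the $\bias\cdot(1-\epsilon^2/5)/\epsilon^2$ savings from the fresh-start branch.
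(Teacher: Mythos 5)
Your proof is correct and follows essentially the same route as the paper's: the same induction on $t$, the same case split over the reset coin, and the same bias/variance decomposition of the Hessian-vector-product path integral, with the bias controlled via Lipschitz continuity of $\hess F$ and the variance via the dynamic choice of $K$ on \pref{line:rvr-params}. The only (harmless) difference is that you bound the cross term $\tri{e^{(t-1)},\beta}$ by Cauchy--Schwarz together with the inductive bound $\En\nrm{e^{(t-1)}}\le\epsilon$, whereas the paper applies Young's inequality, $(1+\bias/2)\nrm{e^{(t-1)}}^2+(1+2/\bias)\nrm{\beta}^2$, to obtain a self-contained one-step contraction $V_t\le(1-\En[\bias]/2)V_{t-1}+\En[\bias]\epsilon^2/2$; both variants close the induction.
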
  
\begin{proof} 
	We prove that 
	\begin{equation*}
	\En{}{ \nrm[\big]{ \gradestc{t} - \derF{x\ind{t}}{}}^2} \leq 
	\prn*{1-\frac{\E\brk{\bias\ind{t}}}{2}}\En{}{ \nrm[\big]{ \gradestc{t-1} - 
	\derF{x\ind{t-1}}{}}^2}  + \frac{\E\brk{\bias\ind{t}}}{2} \epsilon^2,
	\end{equation*}
	whence the result follows by a simple induction whose basis is
	\begin{equation*}
	\En{}{ \nrm[\big]{ \gradestc{1} - \derF{x\ind{1}}{}}^2}
	\le \frac{\sigma_1^2}{n} \le \epsilon^2.
	\end{equation*}
	
	\newcommand{\err}{\mathfrak{e}}
	
	Let $\coin\ind{t}$ denote the value of the coin toss in the $t^\text{th}$ call to  
	\pref{alg:gradient_estimator} (\pref{line:coin-toss}), recalling that 
	$\coin\ind{t}\sim\bern(\bias\ind{t})$. Writing $\err\ind{t} = \gradestc{t} 
	- \derF{x\ind{t}}{}$ for brevity, we have
	\begin{equation}
	\En{}\brk*{ \nrm[\big]{ \err\ind{t} }^2 ~\Big\vert~ 
	\bias\ind{t}} = \bias\ind{t} \En{}\brk*{ \nrm[\big]{ \err\ind{t}}^2 
	~\Big\vert~ \coin\ind{t}=1} +
	 (1-\bias\ind{t}) 
	\En{}\brk*{ \nrm[\big]{ \err\ind{t}}^2 ~\Big\vert~
	\coin\ind{t}=0}.\label{eq:coin-errors}
	\end{equation}
	Clearly,
	\begin{equation}
	\En{}\brk*{ \nrm[\big]{ \err\ind{t}}^2 
		~\Big\vert~ \coin\ind{t}=1} \le \frac{\sigma_1^2}{n} = 
		\frac{\epsilon^2}{5}.\label{eq:coin1-error}
	\end{equation}
	Moreover, conditional on $\coin\ind{t}=0$, we have from the definition 
	of 
	the gradient estimator that
	\begin{equation*}
	\err\ind{t} = \err\ind{t-1} + \psi\ind{t},
	\end{equation*}
	where
	\begin{equation*}
	\psi\ind{t}\defeq 
	\sum_{k=1}^{\numbabysteps\ind{t}} 
	\stocder{x\ind{t,k-1}, z\ind{t, k}}{2} \prn*{ x\ind{t,k} - 
		x\ind{t,k-1}} - \grad F(x\ind{t}) + \grad F(x\ind{t-1}),
	\end{equation*}
	and
	\begin{equation}\label{eq:key-scaling}
	\numbabysteps\ind{t} = \ceil*{\frac{5 \prn*{\sigma_2^2 + 
			L_2\epsilon}}{\bias\ind{t} \epsilon^2}\cdot 
			\norm{x\ind{t}-x\ind{t-1}}^2},
                    \end{equation}
                    where $x\ind{t,k}$ and $x\ind{t,k}$ respectively
                    denote the values of $x\ind{k}$ and $z\ind{k}$
                    (defined on \pref{line:hvp_update_step}) during the \tth
                    call to \pref{alg:gradient_estimator}.
	
	\newcommand{\expectedErr}[1]{\E \brk[\big]{ \psi\ind{#1} ~\big\vert~ 
			\cG\ind{#1}}}
	
	We may therefore decompose the error conditional on $\coin\ind{t}=0$ 
	as
	\begin{flalign*}
	&\En{}\brk*{ \nrm[\big]{ \err\ind{t}}^2 
		~\Big\vert~ \coin\ind{t}=0} 
	\overeq{\proman{1}}
	\En{}\nrm[\big]{ \err\ind{t-1} + \expectedErr{t} }^2 
	+ \En{} \nrm[\big]{ \psi\ind{t} - \expectedErr{t} }^2
	\\& \quad
	\overleq{\proman{2}}
	\En{}\brk[\bigg]{ \prn[\bigg]{1+\frac{\bias\ind{t}}{2}}\nrm[\big]{ \err\ind{t-1} }^2 }
	+ \En{}\brk*{ \prn*{1+\frac{2}{\bias\ind{t}}}\nrm[\big]{ \expectedErr{t} 
	}^2 }
	+ \En{} \nrm[\big]{ \psi\ind{t} - \expectedErr{t} }^2,
	\label{eq:coin0-error}
	\numberthis
	\end{flalign*}
	where $\proman{1}$ is due to $\err\ind{t-1}\in\cG\ind{t}$ and 
	$\proman{2}$ is due to Young's inequality.

	The facts that $z\ind{t,k}$ is independent from $\cG\ind{t}$,  that 
	$\grad F(x\ind{t})-\grad F(x\ind{t-1}) \in \cG\ind{t}$, and that 
	$\stocder{\cdot}{2}$ is unbiased give
	\begin{equation*}
	\E \brk*{ \psi\ind{t} ~\Big\vert~ \cG\ind{t}} =  
	\sum_{k=1}^{\numbabysteps\ind{t}} 
	\hess F (x\ind{t,k-1})\prn*{ x\ind{t,k} - 	x\ind{t,k-1}} - \grad F(x\ind{t}) 
	+ \grad F(x\ind{t-1}) 
	\end{equation*}
	for every $t$. Consequently, the scaling \pref{eq:key-scaling} and  
	Hessian 
	estimator variance bound imply
	\begin{flalign}
	&\E \brk*{ \norm[\Big]{ \psi\ind{t} - \E \brk*{ \psi\ind{t} ~\big\vert~ 
	\cG\ind{t}}}^2 ~\Big\vert~\cG\ind{t}
	}
	\nonumber \\& \qquad
	\overeq{(\star)} \frac{1}{(\numbabysteps\ind{t})^2} 
	\sum_{k=1}^{\numbabysteps\ind{t}} 
	\En \brk*{\nrm*{\prn{ \stocder{x\ind{t, k-1}, z\ind{t, k}}{2} - \grad^2 F 
	\prn{x\ind{t, k-1}}}  \prn{x\ind{t} - x\ind{t-1}} }^2 \lmid 
	{\cG\ind{t}}}
	\nonumber \\& \qquad
	\le 
	\frac{1}{(\numbabysteps\ind{t})^2}\sum_{k=1}^{\numbabysteps\ind{t}} 
	\En \brk*{\nrm*{\stocder{x\ind{t, k-1}, z\ind{t, k}}{2} - \grad^2 
	F\prn{x\ind{t, k-1}} }_{\op}^2 \lmid 
		{\cG\ind{t}}}  \norm[\big]{x\ind{t} - x\ind{t-1}}^2
		\nonumber \\& \qquad
	\le   \sigma_2^2\cdot  
	\frac{\norm{x\ind{t}-x\ind{t-1}}^2}{\numbabysteps\ind{t}} \le 
	\bias\ind{t} 
	\cdot 
	\frac{\epsilon^2}{5},
	\label{eq:hess-vec-variance}
	\end{flalign}
	where the equality $(\star)$ above is due to the fact that 
	$z\ind{t,1},\ldots,z\ind{t,\numbabysteps\ind{t}}$ are i.i.d., as well as 
	$x\ind{t,k}-x\ind{t,k-1}=\frac{1}{\numbabysteps\ind{t}}(x\ind{t}-x\ind{t-1})$.

	Next, we observe that Taylor's theorem and fact that $F$ has $L_2$-Lipschitz 
	Hessian implies that $\norm{\grad F(x')-\grad F(x) - \hess(x)F (x'-x)}\le 
	\frac{L_2}{2}\norm{x'-x}^2$ for all $x,x'\in\R^d$. Therefore, 
	\begin{flalign}
	\norm*{\E \brk*{ \psi\ind{t} ~\Big\vert~ \cG\ind{t}} }
	&
	= 
	\norm[\Bigg]{\sum_{k=1}^{\numbabysteps\ind{t}}
		\grad F(x\ind{t,k})- 
		\grad F(x\ind{t,k-1}) -  
		\hess F (x\ind{t,k-1})\prn*{ x\ind{t,k} - 	x\ind{t,k-1}}
	}
	\nonumber \\& 
	\le \sum_{k=1}^{\numbabysteps\ind{t}}
	\norm*{
		\grad F(x\ind{t,k})- 
		\grad F(x\ind{t,k-1}) -  
		\hess F (x\ind{t,k-1})\prn*{ x\ind{t,k} - 	x\ind{t,k-1}}
	}
	\nonumber \\& 
	\le \numbabysteps\ind{t} \cdot  \frac{L_2}{2} \cdot 
	\prn*{\frac{\norm{x\ind{t}-x\ind{t-1}}}{\numbabysteps\ind{t}}}^2
		\le 
	\bias\ind{t} \cdot \frac{\eps}{50},
	\label{eq:hess-vec-bias}
	\end{flalign}
	where we used \pref{eq:key-scaling} again. 
	
	Substituting back through equations~\pref{eq:hess-vec-bias}, 
	\pref{eq:hess-vec-variance}, \pref{eq:coin0-error}, \pref{eq:coin1-error} 
	and \pref{eq:coin-errors}, we have
	\begin{flalign*}
	\E\norm[\big]{\err\ind{t}}^2 &\le \E \brk*{\bias\ind{t} \cdot 
	\tfrac{\epsilon^2}{5} 
	+ (1-\bias\ind{t}) \prn*{(1+\tfrac{\bias\ind{t}}{2}) 
	\norm[\big]{\err\ind{t-1}}^2
	+( 1 + \tfrac{2}{\bias\ind{t}}) (\tfrac{\bias\ind{t} \epsilon}{50})^2 + 
	\bias\ind{t}\cdot \tfrac{\epsilon^2}{5}}
}
	\\ & \leq 
	\prn*{1-\tfrac{\E\brk{\bias\ind{t}}}{2}}\En{}{ \nrm[\big]{ \gradestc{t-1} - 
			\derF{x\ind{t-1}}{}}^2}  + \tfrac{\E\brk{\bias\ind{t}}}{2} \epsilon^2\leq\eps^{2},
	\end{flalign*}
	as required; the second inequality follows from algebraic manipulation and 
	the fact that 
	$\err\ind{t-1}$ is 
	independent of $\bias\ind{t}$ by 
	assumption.
\end{proof}

The following lemma bounds the number of oracle queries made per call to 
the gradient estimator. 

\begin{lemma} 
\label{lem:gradient_estimator_oracle_complexity_meta_lemma}
The expected number of stochastic oracle queries made by 
$\textnormal{\getgradientfunc}$ when called a single time with arguments ($x$, $\xprev, \gprev$)
and parameters $(\eps,\bias)$ is at most
\[ 6\prn*{1+\frac{\bias \sigma_1^2}{\epsilon^2} + \frac{(\sigma_2^2 + L_2 
\epsilon) \cdot \nrm*{x - \xprev}^2}{b \epsilon^2}}.\]
\end{lemma}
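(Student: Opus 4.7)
The plan is a direct expectation computation. First, I will identify the per-branch query count of \pref{alg:gradient_estimator}. In the reset branch (\pref{line:fresh_start}), which is entered whenever the Bernoulli $\coin \sim \bern(\bias)$ equals $1$ or the previous estimate $\gprev$ equals $\bot$, the algorithm issues exactly $n = \ceil*{5\sigma_1^2/\epsilon^2}$ gradient queries. In the recursive branch (\pref{line:hvp_update_step}), the algorithm issues exactly $\numbabysteps = \ceil*{5(\sigma_2^{2}+L_2\eps)\nrm{x-\xprev}^2/(\bias\eps^2)}$ Hessian-vector product queries.

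Next, I condition on the values of $x$, $\xprev$ and $\gprev$ (which determine $n$ and $K$ deterministically), and take expectation over the independent coin flip $\coin$. Since entering the reset branch has probability at most $\Pr(C=1) + \indicator{\gprev = \bot} \leq \bias + \indicator{\gprev = \bot}$, and entering the recursive branch has probability at most $1-\bias$, the expected number of oracle queries for a single call is at most
\[
(\bias + \indicator{\gprev = \bot}) \cdot n + (1-\bias)\cdot \numbabysteps.
\]
In the lemma we consider only the conditional expectation given the arguments, and the stated bound applies regardless of whether $\gprev=\bot$, so it suffices to treat the worst case $\bias \cdot n + (1-\bias)\cdot \numbabysteps + \mathbf{1}\{\gprev=\bot\}\cdot n$; the indicator contributes only when called at the first step, and its contribution is absorbed into the constant factor.

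I then apply the elementary bound $\ceil*{a} \leq a+1$ to both ceilings to obtain
\[
\bias\cdot n \leq \bias + \frac{5\bias\sigma_1^2}{\eps^2}\leq 1 + \frac{5\bias\sigma_1^2}{\eps^2},\qquad (1-\bias)\cdot \numbabysteps \leq 1 + \frac{5(\sigma_2^2 + L_2\eps)\nrm{x-\xprev}^2}{\bias\eps^2}.
\]
Summing these two inequalities (and the small extra reset contribution) yields an upper bound of the form $C\prn[\big]{1 + \tfrac{\bias\sigma_1^2}{\eps^2} + \tfrac{(\sigma_2^2+L_2\eps)\nrm{x-\xprev}^2}{\bias\eps^2}}$ for an absolute constant $C$; collecting constants loosely shows $C=6$ suffices, which gives the claimed bound.

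There is no real obstacle here: the only subtlety is being careful about the additional $+1$ terms introduced by the ceilings and about the edge case $\gprev = \bot$, both of which are easily absorbed into the leading ``$1$'' inside the parentheses.
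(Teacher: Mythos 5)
Your proof is correct and follows essentially the same route as the paper's: condition on the Bernoulli reset coin, weight each branch's deterministic query count ($n$ or $\numbabysteps$) by its probability, and apply $\ceil*{a}\le a+1$ to collect the constants. The only difference is that you explicitly flag the $\gprev=\bot$ edge case (which the paper's proof silently ignores); be aware that when $\gprev=\bot$ the cost is $n=\ceil*{5\sigma_1^2/\eps^2}$ regardless of $\bias$, so for small $\bias$ it is not literally ``absorbed into the constant factor'' of the stated bound---rather, the lemma is implicitly applied to calls with $\gprev\neq\bot$, and the single initialization call is accounted for separately in the downstream complexity bounds.
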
 

\begin{proof} Let $m$ denote the number of oracle calls made by the gradient estimator when invoked with arguments ($x$, $\xprev, \gprev$).  For any call to the estimator, there are two cases, either (a) $\coin = 1$, or (b) $\coin = 0$. In the first case, the gradient estimator queries the oracle $n$ times at the point $x$ and returns the empirical average of the returned stochastic estimates (see \pref{line:fresh_start} in \pref{alg:gradient_estimator}). Thus, $m = n$ for this case. In the second case, the estimator queries the oracle once for each point in the set $\prn*{x\ind{k - 1}}_{k=1}^K$, and updates the gradient using a stochastic path integral as in \pref{line:hvp_update_step}. Thus, $m = K$ for this case.

Combining the two cases, using $\coin\sim\bern(\bias)$ and substituting 
in the values of $n$ and $\numbabysteps$, we get \begin{align*}
   \En \brk*{m} &= \Pr \prn*{\coin = 1} \En \brk*{m \mid{} \coin = 1} + \Pr \prn*{\coin = 0} \En \brk*{m \mid{}  \coin = 0} \\
    &= \En \brk*{ \bias \cdot n + \prn*{1 - \bias} \cdot \numbabysteps } \\ 
    & = { \ceil*{\frac{5\bias\sigma_1^2}{\epsilon^2}} + 
    \ceil*{\frac{5(\sigma_2^2 + L_2 \epsilon) \cdot \nrm*{x - \xprev}^2}{b 
    \epsilon^2}}} \\
    &\le 
    6\prn*{\frac{\bias \sigma_1^2}{\epsilon^2} + \frac{(\sigma_2^2 + L_2 
    \epsilon) \cdot \nrm*{x - \xprev}^2}{b \epsilon^2} + 1} ,
\end{align*} 
where the final inequality follows from  $\ceil*{x} \leq x + 1$.
\end{proof}

\section{Supporting technical results}
\subsection{Error bound for empirical Hessian} \label{app:hess_estimator}
In order to find the negative curvature direction at a given point or
to build a cubic regularized sub-model,
\arxiv{\pref{alg:epsilon_cubic} and \pref{alg:eg_cubic} estimate}
\colt{\pref{alg:epsilon_cubic} estimates}
the
Hessian by computing an empirical average of the stochastic Hessian
queries to the oracle. The following lemma is a standard result which bounds the expected
error for the empirical Hessian. 

\begin{lemma} \label{lem:hess_error_gen_lemma}
Given a function  $F \in \cF_2(\Delta, \infty, L_2)$, a stochastic oracle in 
$\stocpOracle[2]$ and a  point $x$, let $H \ldef{} \frac{1}{m} \sum_{i=1}^m 
\stocder{x, z\ind{i}}{2}$ denote the empirical Hessian at the point $x$ 
estimated using $m$ stochastic queries at $x$, where $z\ind{i}\iidsim{}P_z$.  Then \begin{align*} 
\En \brk*{ \nrm*{ \hessest - \grad^2 F(x)}^2_\op} \leq \frac{22 \sigma_2^2 \log(d)}{m}.
                                                     \end{align*}
\end{lemma}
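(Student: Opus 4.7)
\textbf{Proof plan for \pref{lem:hess_error_gen_lemma}.} The plan is to reduce the statement to a standard noncommutative Khintchine/Rosenthal-type inequality for a sum of independent, mean-zero, self-adjoint random matrices, with the dimensional factor $\log d$ coming from the usual matrix concentration bound. Define the error matrices
\[
Y_i \defeq \stocder{x,z\ind{i}}{2} - \grad^2 F(x), \qquad i=1,\ldots,m,
\]
so that $H - \grad^2 F(x) = \frac{1}{m}\sum_{i=1}^m Y_i$. Since the stochastic Hessian is (without loss of generality) symmetric and $\grad^2 F(x)$ is symmetric, each $Y_i$ is self-adjoint. By unbiasedness of the oracle, $\En[Y_i]=0$, and by the oracle variance bound~\pref{eq:oracle_variance}, $\En\norm{Y_i}_\op^2 \le \sigma_2^2$. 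The $Y_i$ are also i.i.d.\ since the $z\ind{i}$ are.

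Next, I would control the matrix variance parameter. For any unit vector $v\in\R^d$,
\[
v^{\top}\En[Y_i^2]\,v = \En\norm{Y_i v}^2 \le \En\norm{Y_i}_\op^2 \le \sigma_2^2,
\]
so $\norm{\En[Y_i^2]}_\op \le \sigma_2^2$, and therefore the total matrix variance satisfies $\nrm[\big]{\sum_{i=1}^m \En[Y_i^2]}_\op \le m\sigma_2^2$. With the mean-zero, self-adjoint, i.i.d.\ structure in place, I would invoke a matrix Rosenthal/Khintchine-type inequality (for instance, the second-moment bound for self-adjoint sums via exchangeable pairs in \citet{mackey2014matrix}, which gives, for independent self-adjoint mean-zero matrices,
\[
\En\nrm[\Big]{\sum_{i=1}^m Y_i}_\op^2 \le C\log(d)\cdot\nrm[\Big]{\sum_{i=1}^m \En[Y_i^2]}_\op
\]
with an explicit absolute constant $C$). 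Tracking constants and dividing by $m^2$ yields
\[
\En\nrm*{H - \grad^2 F(x)}_\op^2 \le \frac{C\log(d)\sigma_2^2}{m},
\]
and a careful choice of $C$ (matching the constant in the cited inequality) gives the stated value $22$.

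The only genuinely delicate point in this plan is the constant: a naive application of the matrix Bernstein tail bound plus integration gives a qualitatively correct result but a worse constant, and some versions of matrix Khintchine require a uniform almost-sure bound on $\norm{Y_i}_\op$ that we do not assume. Hence the right tool is a second-moment (Rosenthal/Khintchine) inequality that only uses $\En\norm{Y_i}_\op^2 \le \sigma_2^2$ and the matrix-variance bound above; verifying that such a statement with constant $22$ can be obtained from \citet{mackey2014matrix} (or by a direct polarization/symmetrization argument combined with the $\log d$ factor in the noncommutative Khintchine inequality) is the main bookkeeping task. Everything else --- i.i.d.\ reduction, unbiasedness, and the elementary variance computation --- is routine.
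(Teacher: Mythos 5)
Your overall strategy---reduce to an i.i.d.\ sum of mean-zero self-adjoint matrices and apply a Khintchine-type inequality in a Schatten norm with exponent of order $\log d$---is the same as the paper's (which routes through its \pref{lem:matrix_var}). However, the key inequality you invoke is false as stated, and this is a genuine gap rather than a bookkeeping issue. A second-moment bound of the form
\begin{equation*}
\En\nrm[\Big]{\sum_{i=1}^m Y_i}_{\op}^{2} \;\leq\; C\log(d)\cdot\nrm[\Big]{\sum_{i=1}^m \En\brk{Y_i^2}}_{\op}
\end{equation*}
cannot hold for independent mean-zero self-adjoint matrices under second-moment control alone: take $m=1$ and $Y_1=\sqrt{d}\,\varepsilon\, e_J e_J^{\top}$ with $\varepsilon$ a Rademacher sign and $J$ uniform on $\{1,\dots,d\}$; then $\En\brk{Y_1^2}=I$, so the right-hand side is $C\log d$, while $\nrm*{Y_1}_{\op}=\sqrt{d}$ almost surely, so the left-hand side equals $d$. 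This is exactly why matrix Bernstein requires an almost-sure bound on the summands and why the matrix Rosenthal inequality carries an additional higher-moment term; no ``matrix-variance-only'' second-moment inequality of the kind you describe is available in \citet{mackey2014matrix} or elsewhere. So the delicate point is not the constant $22$, but the existence of the tool itself.

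The fix---and the paper's actual argument---is to target a bound in terms of $\sum_i\En\nrm*{Y_i}_{\op}^{2}$ rather than $\nrm*{\sum_i\En\brk{Y_i^2}}_{\op}$. After symmetrization (a factor of $4$) and matrix Khintchine in the Schatten $2p$-norm with $p=\log d$, one is left with $(2p-1)\,\nrm*{\sum_i Y_i^2}_{S_p}$ still \emph{inside} the expectation over the $Y_i$; the triangle inequality $\nrm*{\sum_i Y_i^2}_{S_p}\leq\sum_i\nrm*{Y_i^2}_{S_p}\leq e\sum_i\nrm*{Y_i}_{\op}^2$ followed by a termwise expectation using $\En\nrm*{Y_i}_{\op}^2\leq\sigma_2^2$ gives $4e(2\log d-1)\cdot m\sigma_2^2\leq 22\, m\sigma_2^2\log d$, and dividing by $m^2$ yields the lemma. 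Note that in this application both variance proxies happen to be bounded by $m\sigma_2^2$, so your final numerical answer is correct; but your elementary computation bounds the wrong quantity, and the step connecting it to $\En\nrm*{\sum_i Y_i}_{\op}^2$ is precisely where the proof as written would fail.
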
  
\begin{proof}This is an immediate consequence of
  \pref{lem:matrix_var} below, using $A_i \ldef{} \stocder{x, z\ind{i}}{2}$
  and $B \ldef{} \derF{x}{2}$.
\end{proof} 

\begin{lemma} 
  \label{lem:matrix_var} 
  Let $(A_i)_{i=1}^{n}$ be a collection of i.i.d. matrices in $\symd$, with $\En\brk*{A_i}=B$ and
  $\En\nrm*{A_i-B}_{\op}^{2}\leq{}\sigma^{2}$. Then it holds that
  \[
    \En\nrm*{\frac{1}{n}\sum_{i=1}^{n}A_i-B}_{\op}^{2} \leq{} \frac{22 \sigma^{2}\log{}d}{n}.
  \]
\end{lemma}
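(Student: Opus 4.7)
The plan is to reduce the claim to a matrix concentration bound on the centered sum. Writing $X_i \defeq A_i - B$, the $X_i$ are i.i.d., self-adjoint, centered, and satisfy $\En\nrm*{X_i}_{\op}^2\leq\sigma^2$. The target inequality is equivalent to showing
\[
\En\nrm[\Big]{\sum_{i=1}^n X_i}_{\op}^{2}\leq{}22\,\sigma^{2}\log(d)\,n,
\]
after which dividing by $n^2$ yields the lemma. The challenge is to obtain the logarithmic (rather than linear) dependence on $d$ without an almost-sure bound on $\nrm*{X_i}_{\op}$, so standard matrix Bernstein is not directly applicable.

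The first step is symmetrization. Introducing independent copies $X_i'$ and i.i.d.\ Rademacher signs $\epsilon_i$, Jensen's inequality (using $\En X_i' = 0$) gives $\En\nrm*{\sum X_i}_{\op}^2 \leq \En\nrm*{\sum (X_i-X_i')}_{\op}^2$, the distributional identity $\sum(X_i-X_i')\overset{d}{=}\sum\epsilon_i(X_i-X_i')$ holds by symmetry of $X_i-X_i'$, and the parallelogram-style bound $\nrm*{a+b}_{\op}^2\leq 2\nrm*{a}_{\op}^2+2\nrm*{b}_{\op}^2$ yields
\[
\En\nrm[\Big]{\sum_i X_i}_{\op}^{2}\leq 4\,\En\nrm[\Big]{\sum_i \epsilon_i X_i}_{\op}^{2}.
\]

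The second step is the non-commutative Khintchine inequality, applied conditionally on the $X_i$. Buchholz's inequality gives, for any integer $p\geq 1$, $\En_\epsilon\nrm*{\sum \epsilon_i X_i}_{S_{2p}}^{2p}\leq (2p)^{p}\nrm*{\sum X_i^2}_{S_p}^{p}$. Sandwiching $\nrm*{M}_{\op}\leq\nrm*{M}_{S_{2p}}$ and $\nrm*{M}_{S_p}\leq d^{1/p}\nrm*{M}_{\op}$ and then taking the $p$-th root (using Jensen for the convex map $x\mapsto x^p$) produces $\En_\epsilon\nrm*{\sum\epsilon_i X_i}_{\op}^{2}\leq 2p\,d^{1/p}\nrm*{\sum X_i^2}_{\op}$. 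Choosing $p=\log d$ so that $d^{1/p}=e$ gives the clean bound $\En_\epsilon\nrm*{\sum\epsilon_i X_i}_{\op}^{2}\leq 2e\log(d)\nrm*{\sum X_i^2}_{\op}$.

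The last step controls $\En\nrm*{\sum X_i^2}_{\op}$. Since each $X_i^2$ is positive semidefinite, the triangle inequality in $\nrm*{\cdot}_{\op}$ gives $\nrm*{\sum X_i^2}_{\op}\leq\sum\nrm*{X_i^2}_{\op}=\sum\nrm*{X_i}_{\op}^{2}$, and taking expectations bounds this by $n\sigma^2$. Chaining the three steps yields $\En\nrm*{\sum X_i}_{\op}^{2}\leq 8e\log(d)\,n\sigma^{2}$, and since $8e<22$ (with a minor check for small $d$, noting $\log d$ should be interpreted as $\log\max\{d,2\}$), the claim follows. The main conceptual obstacle is that the hypothesis only provides a second-moment bound on $\nrm*{X_i}_{\op}$; the symmetrization step is what makes this sufficient, because the Khintchine inequality only sees the deterministic quantity $\nrm*{\sum X_i^2}_{\op}$ and then the second-moment hypothesis is applied in a single averaging step afterwards, avoiding the uniform boundedness that Bernstein-type proofs require.
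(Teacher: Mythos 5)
Your proof is correct and follows essentially the same route as the paper's: symmetrization to a Rademacher sum (costing a factor of $4$), non-commutative Khintchine at Schatten exponent $2p$ with $p=\log d$ so that the dimensional factor $d^{1/p}$ is $O(1)$, the triangle inequality to bound $\nrm[\big]{\sum_i X_i^2}_{\op}$ by $\sum_i\nrm*{X_i}_{\op}^2$, and the same final constant $8e<22$. The only cosmetic difference is that you invoke Buchholz's form of the Khintchine inequality with constant $(2p)^p$ where the paper uses the exchangeable-pairs version of Mackey et al.\ with constant $(2p-1)$ per root.
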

\begin{proof}
  We drop the normalization by $n$ throughout this proof. We first
  symmetrize. Observe that by Jensen's inequality we have
  \begin{align*}
    \En\nrm*{\sum_{i=1}^{n}A_i-B}_{\op}^{2} &\leq{}
                                              \En_{A}\En_{A'}\nrm*{\sum_{i=1}^{n}A_i-A_i'}_{\op}^{2}\\
                                            &=
    \En_{A}\En_{A'}\nrm*{\sum_{i=1}^{n}(A_i-B)-(A_i'-B)}_{\op}^{2}\\
    &=
      \En_{A}\En_{A'}\En_{\eps}\nrm*{\sum_{i=1}^{n}\eps_i((A_i-B)-(A_i'-B))}_{\op}^{2}
      \leq{}
      4\En_{A}\En_{\eps}\nrm*{\sum_{i=1}^{n}\eps_i(A_i-B)}_{\op}^{2},
  \end{align*}
  where $(A')_{i=1}^{n}$ is a sequence of independent copies of
  $(A_i)_{i=1}^{n}$ and  $(\eps_i)_{i=1}^{n}$ are Rademacher random
  variables. Henceforth we condition on $A$. Let $p=\log{}d$, and let
  $\nrm*{\cdot}_{S_p}$ denote the Schatten $p$-norm. In what follows,
  we will use that for any matrix $X$,
  $\nrm*{X}_{\op}\leq{}\nrm*{X}_{S_{2p}}\leq{}e^{1/2}\nrm*{X}_{\op}$. To
  begin, we have
  \[
    \En_{\eps}\nrm*{\sum_{i=1}^{n}\eps_i(A_i-B)}_{\op}^{2}
    \leq{}\En_{\eps}\nrm*{\sum_{i=1}^{n}\eps_i(A_i-B)}_{S_{2p}}^{2}
    \leq{} \prn*{\En_{\eps}\nrm*{\sum_{i=1}^{n}\eps_i(A_i-B)}_{S_{2p}}^{2p}}^{1/p},
  \]
  where the second inequality follows by Jensen. We now apply the
  matrix Khintchine inequality \citep[Corollary
  7.4]{mackey2014matrix}, which implies that
  \begin{align*}
    \prn*{\En_{\eps}\nrm*{\sum_{i=1}^{n}\eps_i(A_i-B)}_{S_{2p}}^{2p}}^{1/p}
    \leq{} (2p-1) \nrm*{\sum_{i=1}^{n}(A_i-B)^{2}}_{S_{2p}}
    &\leq{} (2p-1) \sum_{i=1}^{n}\nrm*{(A_i-B)}_{S_{2p}}^{2}\\
    &\leq{} e(2p-1)\sum_{i=1}^{n}\nrm*{(A_i-B)}_{\op}^{2}.
  \end{align*}
  Putting all the developments so far together and taking expectation
  with respect to $A$, we have
  \begin{align*}
    \En\nrm*{\sum_{i=1}^{n}A_i-B}_{\op}^{2}
    \leq{} 4e(2p-1)\sum_{i=1}^{n}\En_{A_i}\nrm*{(A_i-B)}_{\op}^{2}
    \leq{} 4e(2p-1)n\sigma^{2}.
  \end{align*}
  To obtain the final result we normalize by $n^{2}$.
\end{proof}
\subsection{Descent lemma for stochastic gradient descent} \label{apx:supporting_lemmas}

The following lemma characterizes the effect of gradient descent update step used by \pref{alg:epsilon_hvp} and \pref{alg:epsilon_gamma_hvp}.
 \begin{lemma} 
\label{lem:general_lemma_gradient_descent}
Given a function $F \in \cF_2(\Delta, L_1, \infty)$, a point $x$, and
gradient estimator $\gradest{}$ at x, define \[ y \ldef{} x - \eta \gradest. \] Then, for any $\eta \leq \frac{1}{2L_1}$, the point $y$ satisfies   
\begin{align*}
 F\prn{x} - F\prn{y}  \geq  \frac{\eta}{8} \nrm*{ \nabla F\prn{x}}^2 -  \frac{3\eta}{4} \nrm*{\nabla F\prn{x} - \gradest}^2.
\end{align*}
\end{lemma}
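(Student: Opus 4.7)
The plan is to derive the bound from the standard descent lemma for $L_1$-smooth functions together with the identity $g = \nabla F(x) - e$, where $e \defeq \nabla F(x) - g$ denotes the gradient estimation error. By $L_1$-Lipschitzness of $\nabla F$, the standard quadratic upper bound gives
\[
F(y) \le F(x) + \tri*{\nabla F(x), y - x} + \frac{L_1}{2}\nrm*{y-x}^2.
\]
Substituting the update $y - x = -\eta g$ and rearranging, I would obtain
\[
F(x) - F(y) \ge \eta\tri*{\nabla F(x), g} - \frac{L_1\eta^2}{2}\nrm*{g}^2.
\]

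Next, I would substitute $g = \nabla F(x) - e$ into both inner products and norms. This gives $\tri*{\nabla F(x), g} = \nrm*{\nabla F(x)}^2 - \tri*{\nabla F(x), e}$ and $\nrm*{g}^2 = \nrm*{\nabla F(x)}^2 - 2\tri*{\nabla F(x),e} + \nrm*{e}^2$. Using the step-size bound $\eta \le \frac{1}{2L_1}$, which implies $\frac{L_1\eta^2}{2} \le \frac{\eta}{4}$, and plugging in, the inequality becomes
\[
F(x) - F(y) \ge \frac{3\eta}{4}\nrm*{\nabla F(x)}^2 - \frac{\eta}{2}\tri*{\nabla F(x), e} - \frac{\eta}{4}\nrm*{e}^2.
\]

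The final step is to absorb the cross term via Young's inequality $|\tri*{a,b}| \le \frac{1}{2\lambda}\nrm*{a}^2 + \frac{\lambda}{2}\nrm*{b}^2$ with $\lambda = 2$, which yields $\frac{\eta}{2}|\tri*{\nabla F(x), e}| \le \frac{\eta}{8}\nrm*{\nabla F(x)}^2 + \frac{\eta}{2}\nrm*{e}^2$. Combining everything gives
\[
F(x) - F(y) \ge \frac{5\eta}{8}\nrm*{\nabla F(x)}^2 - \frac{3\eta}{4}\nrm*{e}^2 \ge \frac{\eta}{8}\nrm*{\nabla F(x)}^2 - \frac{3\eta}{4}\nrm*{\nabla F(x) - g}^2,
\]
as claimed. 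There is no real obstacle here; the only thing to get right is the choice of Young's constant, which has to be tuned (together with the $\eta\le\frac{1}{2L_1}$ slack) so that the $\nrm*{e}^2$ coefficient matches the required $\frac{3\eta}{4}$ while leaving a positive residual on $\nrm*{\nabla F(x)}^2$.
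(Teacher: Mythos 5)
Your proof is correct and follows essentially the same route as the paper: the $L_1$-descent lemma, the step-size bound $\eta\le\frac{1}{2L_1}$, and a Young/AM--GM step to absorb the cross term. The only (immaterial) difference is bookkeeping: the paper expands $\tri*{\nabla F(x),\gradest}=\tri*{\nabla F(x)-\gradest,\gradest}+\nrm*{\gradest}^2$ and converts $\nrm*{\gradest}^2$ back to $\nrm*{\nabla F(x)}^2$ at the end, whereas you substitute $\gradest=\nabla F(x)-e$ from the start; both yield the stated constants.
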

\begin{proof}
\noindent
Since, the gradient of $F$ is $\Lip{1}$-Lipschitz, we have
\begin{align*}
   F\prn{y} &\leq F\prn{x} + \tri*{\nabla F\prn{x}, y - x} + \frac{\Lip{1}}{2} \nrm*{y - x}^2 \\
							&\overeq{\proman{1}} F\prn{x} - \eta \tri*{\nabla F\prn{x}, \gradest} + \frac{\Lip{1} \eta^2}{2} \nrm*{\gradest}^2 \\ 
     						&\overeq{}  F\prn{x} - \eta \tri*{\nabla F\prn{x} - \gradest, \gradest}  - \eta \nrm*{\gradest}^2 + \frac{\Lip{1} \eta^2}{2} \nrm*{\gradest}^2  \\ 
     						&\overleq{\proman{2}}  F\prn{x} + \eta  \nrm*{\nabla F\prn{x} - \gradest}\nrm*{ \gradest}  - \eta \prn*{1 - \frac{L_1 \eta}{2}} \nrm*{\gradest}^2  \\
     						&\overleq{\proman{3}}  F\prn{x} + \frac{\eta}{2}  \nrm*{\nabla F\prn{x} - \gradest}^2 - \eta \prn*{\frac{1}{2} - \frac{L_1 \eta}{2}} \nrm*{\gradest}^2 \\ 
     					    &\overleq{\proman{4}}  F\prn{x} + \frac{\eta}{2}  \nrm*{\nabla F\prn{x} - \gradest}^2 - \frac{\eta}{4} \nrm*{\gradest}^2 \\ 
     					    &\overleq{\proman{5}} F\prn{x} + \frac{3\eta}{4} \nrm*{\nabla F\prn{x} - \gradest}^2 - \frac{\eta}{8} \nrm*{ \nabla F\prn{x}}^2, \numberthis \label{eq:epsilon_hvp_proof1}
\end{align*} 
where  $\proman{1}$ uses that  $ y - x =  \eta\gradest$, $\proman{2}$
is due to the Cauchy-Schwarz inequality, $\proman{3}$  is given by an
application of the AM-GM inequality and $\proman{4}$ holds because $\eta  \leq \frac{1}{2L_1}$. Finally, $\proman{5}$ follows by invoking Jensen's inequality for the function $\nrm*{\cdot}^2$ to upper bound $\nrm*{\nabla F(x)}^2 \leq 2 \prn*{ \nrm*{\nabla F(x - \gradest)}^2 + \nrm*{\gradest}^2 }$. Rearranging the terms in \pref{eq:epsilon_hvp_proof1}, we get,
\begin{align*}
 F\prn{x} - F\prn{y}  \geq  \frac{\eta}{8} \nrm*{ \nabla F\prn{x}}^2 -  \frac{3\eta}{4} \nrm*{\nabla F\prn{x} - \gradest}^2.
\end{align*}
\end{proof}

\subsection{Descent lemma for cubic-regularized trust-region method}
The following lemmas establish properties for the updates step involving constrained minimization of the cubic regularized model in used in \pref{alg:epsilon_cubic}\arxiv{ and \pref{alg:eg_cubic}}. 

 \begin{lemma} 
\label{lem:constrained_cubic_struct_lem1}
Given a function $F \in \cF_2(\Delta, \infty, L_2)$, gradient estimator $\gradest{}\in\bbR^{d}$ and hessian estimator $\hessest{}\in\symd$, define \[m_x(y) = F(x) + \tri*{\gradest{},y-x} + \frac{\hessest{}}{2}\brk*{y-x, y-x} + \frac{M}{6}\nrm*{y-x}^{3},\]  and let $y \in \argmin_{z \in \ball_\eta(x)} m_x(z)$. Then, for any $M \geq 4 L_2$ and $\eta \geq 0$, the point $y$ satisfies   
\begin{equation*}
	F(x) - F(y) \geq  \frac{M}{12} \nrm*{y - x}^3 -  \frac{8}{\sqrt{M}}\nrm*{\grad F(x) - \gradest{}}^{\frac{3}{2}} + \frac{4\eta^\frac{3}{2}}{\sqrt{M}} \nrm*{\grad^2 F(x) - \hessest{}}^{\frac{3}{2}}.   
\end{equation*} 
\end{lemma}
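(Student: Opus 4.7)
The plan is to decompose the function decrease in terms of the model $m_x$ itself:
\[
F(x) - F(y) \;=\; \bigl[m_x(x) - m_x(y)\bigr] \;-\; \bigl[F(y) - m_x(y)\bigr],
\]
using $m_x(x) = F(x)$. I would lower bound the first bracket by $\tfrac{M}{12}\|y-x\|^{3}$ using optimality of $y$ for $m_x$ over $\mathbb{B}_\eta(x)$, and upper bound the second bracket using Taylor's expansion of $F$ around $x$ together with Young's inequality applied to the gradient and Hessian estimation errors.

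For the model decrease, I would adapt the classical Nesterov--Polyak argument to the ball-constrained setting. Specifically, I would use the KKT conditions of the trust-region-style subproblem to produce a Lagrange multiplier $\mu\ge 0$ such that $y$ solves the unconstrained minimization of the modified cubic model $\tilde m_x(z) \defeq m_x(z) + \tfrac{\mu}{2}\|z-x\|^{2}$. The corresponding first-order condition reads $g + (H+\mu I)(y-x) + \tfrac{M}{2}\|y-x\|(y-x) = 0$, and the second-order condition gives PSDness of $\nabla^{2}\tilde m_x(y)$ on the relevant subspace. Running the standard Nesterov--Polyak calculation---use the FOC to eliminate $\langle g, y-x\rangle$ in $\tilde m_x(x) - \tilde m_x(y)$ and the SOC to lower bound the remaining quadratic form---yields $\tilde m_x(x) - \tilde m_x(y) \ge \tfrac{M}{12}\|y-x\|^{3}$. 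Since $\tilde m_x(x) - \tilde m_x(y) = m_x(x) - m_x(y) - \tfrac{\mu}{2}\|y-x\|^{2} \le m_x(x) - m_x(y)$, the same bound holds for $m_x$.

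For the model mismatch, Taylor's theorem together with the $L_{2}$-Lipschitz Hessian assumption gives
\[
F(y) - F(x) \le \langle \grad F(x), y-x\rangle + \tfrac{1}{2}\langle y-x, \grad^{2}F(x)(y-x)\rangle + \tfrac{L_{2}}{6}\|y-x\|^{3}.
\]
Subtracting $m_x(y) - F(x)$ and using $M \ge 4 L_{2}$ (so that $\tfrac{L_{2}-M}{6} \le -\tfrac{M}{8}$), I obtain
\[
F(y) - m_x(y) \le \langle \grad F(x) - g, y-x\rangle + \tfrac{1}{2}\langle y-x, (\grad^{2}F(x) - H)(y-x)\rangle - \tfrac{M}{8}\|y-x\|^{3}.
\]
I would then apply Young's inequality $ab \le \tfrac{2}{3}\lambda^{1/2}a^{3/2} + \tfrac{1}{3\lambda}b^{3}$ twice: once with $a = \|\grad F(x)-g\|$, $b = \|y-x\|$ for the gradient mismatch, and once---after invoking the ball constraint to write $\|\grad^{2}F(x)-H\|_{\op}\|y-x\|^{2} \le \eta\,\|\grad^{2}F(x)-H\|_{\op}\|y-x\|$---with $a = \eta\|\grad^{2}F(x)-H\|_{\op}$, $b = \|y-x\|$ for the Hessian mismatch. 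Choosing the two scaling parameters so that the $\|y-x\|^{3}$ contributions together consume at most $\tfrac{M}{8} - \tfrac{M}{12} = \tfrac{M}{24}$ of the cubic budget produces the claimed prefactors $\tfrac{8}{\sqrt{M}}$ and $\tfrac{4\eta^{3/2}}{\sqrt{M}}$ on the respective $3/2$-power error terms.

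The main obstacle I anticipate is the Nesterov--Polyak-type decrease bound in the constrained setting, since $y$ may lie on the sphere $\|y-x\|=\eta$ and so need not be an unconstrained stationary point of $m_x$. The cleanest workaround is the Lagrangian reduction above, which requires verifying that the SOC for the ball-constrained cubic subproblem really does upgrade $\nabla^{2}m_x(y)$ by $+\mu I$ in the right subspace; this is essentially the well-known Mor\'e--Sorensen characterization adapted to the cubic model. Once that step is in place, the remainder of the proof is just careful bookkeeping of the two Young's-inequality scalings to reproduce the precise constants $\tfrac{1}{12}$, $8$, and $4$ appearing in the statement.
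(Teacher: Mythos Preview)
Your decomposition and your handling of the model mismatch $F(y)-m_x(y)$ are exactly what the paper does. The divergence is in the model-decrease term $m_x(x)-m_x(y)$: you propose a Nesterov--Polyak/KKT argument to extract $\tfrac{M}{12}\|y-x\|^{3}$ from it, and you correctly flag the constrained version of that bound as the main obstacle. The paper simply skips this step. It uses only the trivial inequality $m_x(y)\le m_x(x)$ (valid because $x\in\mathbb{B}_\eta(x)$ and $y$ is the minimizer), contributing nothing to the cubic term. The entire $\tfrac{M}{12}\|y-x\|^{3}$ in the conclusion comes from the over-regularization in the model mismatch: since $M\ge 4L_2$ gives $\tfrac{L_2-M}{6}\le -\tfrac{M}{8}$, and the two Young's inequalities (with the paper's loose constants) consume only $\tfrac{M}{64}+\tfrac{M}{128}$, at least $\tfrac{M}{12}$ remains. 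So the ``main obstacle'' you identify is entirely avoidable.

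Your route would also work and in fact yields a larger cubic budget, $\tfrac{M}{12}+\tfrac{M}{8}=\tfrac{5M}{24}$, but at the cost of a Mor\'e--Sorensen-type characterization of the ball-constrained cubic subproblem, which is not free. Incidentally, your final bookkeeping is inconsistent with your own plan: if you actually use both the Nesterov--Polyak contribution and the $-\tfrac{M}{8}\|y-x\|^{3}$ from the mismatch, the Young's budget needed to retain $\tfrac{M}{12}$ is $\tfrac{5M}{24}-\tfrac{M}{12}=\tfrac{M}{8}$, not $\tfrac{M}{24}$. The figure $\tfrac{M}{24}$ is exactly what you get if you drop the Nesterov--Polyak term and keep only $m_x(x)-m_x(y)\ge 0$---which is precisely the paper's argument.
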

\begin{proof}
Since $\hess F$ is $L_2$-Lipschitz, we have
\begin{align*}
   F(y) - F(x)  &\leq F(x) + \tri*{\grad F(x), y-x} + \frac{1}{2} \grad^2 F(x) \brk*{y-x, y-x} + \frac{L_2}{6}\nrm*{y-x}^3 - F(x) \\ 
   					&\overeq{\proman{1}} m_x(y) + \frac{L_2 - M}{6} \nrm*{y-x}^3 + \tri*{\grad F(x) - \gradest{}, y-x} + \frac{1}{2} \grad^2 F(x)\brk*{y-x, y-x} \\ 
   					& \qquad  - \frac{1}{2} \hessest{}\brk*{y-x, y-x} - m_x(x) \\ 
   					&\overleq{\proman{2}}  - \frac{M}{8} \nrm*{y-x}^3 + \nrm*{\grad F(x) - \gradest{}}\nrm*{y-x}  + \frac{1}{2} \nrm*{\grad^2 F(x) \brk*{y-x, \cdot} - \hessest{} \brk*{y-x, \cdot}} \nrm*{y-x},  \numberthis \label{eq:cubic_partlem1_eq1} 
\end{align*} where $\proman{1}$ follows from the definition of $m_x(\cdot)$ and $\proman{2}$ follows by the fact that $y \in \argmin_{y' \ball_\eta(x)} m_x(y')$, along with an application of the  Cauchy-Schwarz inequality for remainder of the terms, and because $M \geq 4 L_2$. Additionally, using Young's inequality, we have
\begin{align*}
   \nrm*{\grad F(x) - \gradest{}}\nrm*{y-x} &\leq  \frac{8}{\sqrt{M}}\nrm*{\grad F(x) - \gradest{}}^{\frac{3}{2}} + \frac{M}{64}\nrm*{y-x}^3, 
\intertext{and,} 
	\nrm*{\grad^2 F(x) \brk*{y-x, \cdot} - \hessest{} \brk*{y-x, \cdot}} \nrm*{y-x} &\leq  \frac{8}{\sqrt{M}} \nrm*{\grad^2 F(x) \brk*{y-x, \cdot} - \hessest{} \brk*{y-x, \cdot}}^\frac{3}{2}  +  \frac{M}{64} \nrm*{y-x}^3. 
\end{align*}
Plugging these bounds into \pref{eq:cubic_partlem1_eq1}, we have
\begin{align*}
      F(y) - F(x) &\leq - \frac{M}{12} \nrm*{y-x}^3 +  \frac{8}{\sqrt{M}}\nrm*{\grad F(x) - \gradest{}}^{\frac{3}{2}} 
							  + \ \frac{4}{ \sqrt{M}} \nrm*{\grad^2 F(x) \brk*{y-x, \cdot} - \hessest{} \brk*{y-x, \cdot}}^\frac{3}{2}  \\
						&\overleq{\proman{1}} - \frac{M}{12} \nrm*{y-x}^3 +  \frac{8}{\sqrt{M}}\nrm*{\grad F(x) - \gradest{}}^{\frac{3}{2}} 
							  + \ \frac{4}{ \sqrt{M}} \nrm*{\grad^2 F(x) - \hessest{}}_\op^\frac{3}{2} \nrm*{y-x}^\frac{3}{2}  \\
						&\overleq{\proman{2}} - \frac{M}{12} \nrm*{y-x}^3 +  \frac{8}{\sqrt{M}}\nrm*{\grad F(x) - \gradest{}}^{\frac{3}{2}} 
							  + \ \frac{4}{ \sqrt{M}} \nrm*{\grad^2 F(x) - \hessest{}}_\op^\frac{3}{2} \cdot \eta^\frac{3}{2},
\end{align*} where $\proman{1}$ follows by the definition of the operator norm and $\proman{2}$ follows by observing that $\nrm*{y-x} \leq \eta$. Rearranging the terms, we have
\begin{equation*} 
	F(x) - F(y) \geq  \frac{M}{12} \nrm*{y - x}^3 -  \frac{8}{\sqrt{M}}\nrm*{\grad F(x) - \gradest{}}^{\frac{3}{2}} + \frac{4\eta^\frac{3}{2}}{\sqrt{M}} \nrm*{\grad^2 F(x) - \hessest{}}^{\frac{3}{2}}.   
\end{equation*} 
\end{proof}

\begin{lemma}
\label{lem:constrained_cubic_struct_lem2} Under the same setting as \pref{lem:constrained_cubic_struct_lem1}, the point $y$ satisfies 
\begin{align*} 
\indicator{ \nrm*{\nabla F(y)} \geq \frac{M\eta^2}{2}} &\leq   \frac{2}{\eta^2} \nrm*{y-x}^2 + \frac{2}{M\eta^2} \prn*{{\nrm*{\nabla F(x) - \gradest{}} } +  {\eta \nrm*{ \nabla^2 F(x) - \hessest}_\op}}.    
\end{align*}
\end{lemma}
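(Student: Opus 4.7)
Since the left-hand side is at most $1$ and the right-hand side is non-negative, it suffices to consider the nontrivial case $\nrm*{\nabla F(y)} \geq \frac{M\eta^2}{2}$, and to show under this assumption that
\begin{equation*}
	\tfrac{M\eta^2}{2} \leq M\nrm*{y-x}^2 + \nrm*{\nabla F(x) - \gradest{}} + \eta\nrm*{\nabla^2 F(x) - \hessest{}}_{\op},
\end{equation*}
which is precisely the claimed inequality after dividing by $M\eta^{2}/2$. I would split into two cases based on whether the trust-region constraint is active.

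\textbf{Boundary case $\nrm*{y-x}=\eta$.} Then $M\nrm*{y-x}^2 = M\eta^2$, which alone dominates $M\eta^{2}/2$, and the bound is immediate.

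\textbf{Interior case $\nrm*{y-x} < \eta$.} Here $y$ is an unconstrained local minimizer of the smooth function $m_x$, so the first-order optimality condition gives
\begin{equation*}
\gradest{} + \hessest{}\,(y-x) + \tfrac{M}{2}\nrm*{y-x}(y-x) = 0.
\end{equation*}
I would then write the decomposition
\begin{align*}
\nabla F(y) &= \bigl[\nabla F(y) - \nabla F(x) - \nabla^2 F(x)(y-x)\bigr] + \bigl[\nabla F(x) - \gradest{}\bigr] \\
	&\quad + \bigl[\nabla^2 F(x) - \hessest{}\bigr](y-x) + \bigl[\gradest{} + \hessest{}\,(y-x)\bigr],
\end{align*}
and apply the triangle inequality. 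The first bracket is bounded by $\frac{L_2}{2}\nrm*{y-x}^2$ by the $L_2$-Lipschitz continuity of $\nabla^2 F$; the third by $\nrm*{\nabla^2 F(x) - \hessest{}}_{\op}\nrm*{y-x}$; and the fourth equals $\frac{M}{2}\nrm*{y-x}^2$ in norm by the first-order condition. Using $M \geq 4L_2$ to absorb $L_2/2$ into $M/8$, and bounding $\nrm*{y-x} \leq \eta$ in the Hessian-error term, I obtain
\begin{equation*}
	\nrm*{\nabla F(y)} \leq \tfrac{5M}{8}\nrm*{y-x}^2 + \nrm*{\nabla F(x) - \gradest{}} + \eta\nrm*{\nabla^2 F(x) - \hessest{}}_{\op}.
\end{equation*}
Combining this with the assumed lower bound $\nrm*{\nabla F(y)} \geq M\eta^2/2$ yields the required inequality (with a factor of $5/4$ in place of $2$ on the $\nrm*{y-x}^2$ term, which only makes the stated bound weaker).

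There is no substantive obstacle here: the proof is a short exercise in KKT/optimality conditions combined with a Taylor-remainder bound. The only subtlety is that one must not try to use the KKT equation directly in the boundary case, since the Lagrange multiplier $\mu$ associated with the constraint $\nrm*{y-x}\le\eta$ would appear in the decomposition without an obvious a priori bound; isolating $\nrm*{y-x}=\eta$ as its own trivial case cleanly sidesteps this.
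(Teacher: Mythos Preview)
Your proof is correct and follows essentially the same approach as the paper: both split into the boundary case $\nrm*{y-x}=\eta$ (trivial) and the interior case, where the first-order optimality condition $\gradest{} + \hessest{}(y-x) + \tfrac{M}{2}\nrm*{y-x}(y-x)=0$ is combined with the Taylor remainder bound to control $\nrm*{\nabla F(y)}$. The paper routes the final step through the inequality $\min\{\tfrac{M\eta^2}{2},\nrm*{\nabla F(y)}\}\le M\nrm*{y-x}^2+\cdots$ before converting to the indicator, whereas you condition on $\nrm*{\nabla F(y)}\ge \tfrac{M\eta^2}{2}$ up front---a cosmetic difference only.
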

\begin{proof}
There are two scenarios: $\proman{1}$ either $y$ lies on the boundary of $\bbB_\eta(x)$, or $\proman{2}$ $y$ is in the interior of  $\bbB_\eta(x)$. In the first case, $\nrm*{y - x} = \eta$. In the second case, \begin{align*} 
   \nrm*{\nabla F(y)} &\overleq{\proman{1}} \nrm*{ \nabla F(y) - \nabla F(x) - \nabla^2 F(x)\brk*{y-x, \cdot} } + \nrm*{\nabla F(x) + \nabla^2 F(x)\brk*{y - x, \cdot}} \\
  								&\overleq{\proman{2}}  \frac{L_2}{2} \nrm*{y-x}^2  + {\nrm*{\nabla F(x) + \nabla^2 F(x)\brk*{y - x, \cdot}}}  \\ 
   								&\overleq{\proman{3}} \frac{L_2}{2} { \nrm*{y-x}^2} + {\nrm*{\nabla F(x) - \gradest{}} } + { \nrm*{ \nabla^2 F(x)\brk*{y - x, \cdot} - \hessest \brk*{y - x, \cdot}}} + {\nrm*{ \gradest{}+ \hessest{}[y-x, \cdot]}} \\ 
   								&\overleq{\proman{4}} \frac{L_2}{2} { \nrm*{y-x}^2} + {\nrm*{\nabla F(x) - \gradest{}} } + { \nrm*{ \nabla^2 F(x) - \hessest}_\op \cdot \eta }  +  {\nrm*{ \gradest{}+ \hessest{}[y-x, \cdot]}} \\ 
   								&\overleq{\proman{5}} \frac{L_2 + M}{2} { \nrm*{y-x}^2} +  {\nrm*{\nabla F(x) - \gradest{}}} + { \nrm*{ \nabla^2 F(x) - \hessest}_\op \cdot \eta}, \numberthis \label{eq:cubic_partlem2_eq1}
\end{align*} 
where $\proman{1}$ follows by triangle inequality, $\proman{2}$ follows by Taylor expansion of  $\nabla F(y)$ at $x$ and observing that $F$ is $\Lip{2}$-hessian Lipschitz, $\proman{3}$ follows by another application of the triangle inequality, $\proman{4}$ follows from Cauchy-Schwarz inequality and observing that $\nrm*{y-x} \leq \eta$, and $\proman{5}$ follows by using first order optimization conditions for $y \in \argmin_{\ball_{\eta}(x)} m_x(y)$, i.e., 
\begin{align*}
 \nrm*{\nabla \hm_x(y)} = 0, \text{~ or, \quad}  \gradest{}+ \hessest\brk*{y-x, \cdot} + \frac{M}{2} \nrm*{y-x} \prn*{ y -x} = \mb{0}.   
\end{align*}

\noindent
Rearranging the terms in \pref{eq:cubic_partlem2_eq1}, we get, 
\begin{align*}
	\nrm*{y-x}^2 \geq \frac{2}{L_2 + M} \prn*{    \nrm*{\nabla F(y)}  -   {\nrm*{\nabla F(x) - \gradest{}}} -  { \nrm*{ \nabla^2 F(x) - \hessest}_\op \cdot \eta} }.
\end{align*} 

\noindent
Since one of the two cases ($\nrm*{y - x} < \eta$ or $\nrm*{y - x} = \eta$) must hold, we have,  
\begin{align*}
   \nrm*{y - x}^2 &\geq \min \crl*{\eta^2,  \frac{2}{L_2 + M} \prn*{ { \nrm*{\nabla F(y)}}  -  {\nrm*{\nabla F(x) - \gradest{}} } - {\eta \cdot \nrm*{ \nabla^2 F(x) - \hessest}^2_\op}}} \\ 
   						  &\geq \min \crl*{\eta^2,  \frac{2}{L_2 + M} { \nrm*{\nabla F(y)}}}   -  \frac{2}{L_2 + M}{\nrm*{\nabla F(x) - \gradest{}} } - { \frac{2 \eta}{L_2 + M} \nrm*{ \nabla^2 F(x) - \hessest}_\op}. 
\end{align*} 
Rearranging the terms, and using the fact that $M \geq 2L_2$, we have
\begin{equation*}
 \min \crl*{\frac{M\eta^2 }{2},  { \nrm*{\nabla F(y)}}} \leq  M \nrm*{y-x}^2 + {\nrm*{\nabla F(x) - \gradest{}} } +  {\eta \nrm*{ \nabla^2 F(x) - \hessest}_\op}. 
\end{equation*} 
Finally, using the fact that for any $a, b \geq 0$, $\min\crl*{a, b} \leq a \indicator{b \geq a}$, we have
\begin{align*}
 \frac{M\eta^2 }{2} \indicator{ \nrm*{\nabla F(y)} \geq \frac{M\eta^2}{2}} &\leq  M \nrm*{y-x}^2 + {\nrm*{\nabla F(x) - \gradest{}} } +  {\eta \nrm*{ \nabla^2 F(x) - \hessest}_\op}, \intertext{or, equivalently, }
\indicator{ \nrm*{\nabla F(y)} \geq \frac{M\eta^2}{2}} &\leq   \frac{2}{\eta^2} \nrm*{y-x}^2 + \frac{2}{M\eta^2} \prn*{{\nrm*{\nabla F(x) - \gradest{}} } +  {\eta \nrm*{ \nabla^2 F(x) - \hessest}_\op}}.    
\end{align*}
\end{proof}

\begin{lemma} \label{lem:epsilon_cubic_descent_epsilon_step} Consider the same setting as \pref{lem:constrained_cubic_struct_lem1}, but let $H\in\symd$ and $g\in\bbR^{d}$ be random variables. Then the random variable $y$ satisfies
\begin{align*}
      \En \brk*{F\prn{ x} -  F\prn{y}} &\geq  \frac{M \eta^3}{60}  \Pr \Big( \nrm*{\derF{y}{}} \geq \frac{M\eta^2}{2} \Big) - \frac{9}{\sqrt{M}} \cdot { \En \brk*{ \nrm*{\nabla  F\prn{ x} - \gradest}^2}}^\frac{3}{4} \\ 
      & \qquad \qquad \quad - \frac{5 \eta^{\frac{3}{2}}}{\sqrt{M}} \cdot {\En \brk*{   \nrm*{ \nabla^2  F\prn{ x} - \hessest}^2_\op}}^{\frac{3}{4}},
\end{align*}
where $\Pr(\cdot)$ and $\En\brk*{\cdot}$ are taken with respect to the randomness over $H$ and $g$.
\end{lemma}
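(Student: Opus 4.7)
The plan is to combine the two structural lemmas, \pref{lem:constrained_cubic_struct_lem1} and \pref{lem:constrained_cubic_struct_lem2}, and pass from pointwise to moment bounds via Markov and Jensen.

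First I would apply \pref{lem:constrained_cubic_struct_lem1} to the random pair $(g,H)$, yielding the pointwise descent
\[
F(x) - F(y) \geq \tfrac{M}{12}\|y-x\|^3 - \tfrac{8}{\sqrt{M}}\|\nabla F(x) - g\|^{3/2} - \tfrac{4\eta^{3/2}}{\sqrt{M}}\|\nabla^2 F(x) - H\|_{\op}^{3/2}.
\]
After taking expectations, Jensen's inequality $\E\|Z\|^{3/2} \leq (\E\|Z\|^2)^{3/4}$ immediately turns the last two terms into the desired $3/4$-powers of variance; the only remaining task is to lower-bound $\E\|y-x\|^3$ by the probability of the event $\{\|\nabla F(y)\| \geq M\eta^2/2\}$, with sharp enough constants.

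Next, I would rewrite \pref{lem:constrained_cubic_struct_lem2} as
\[
\|y-x\|^2 \geq \tfrac{\eta^2}{2}\indicator{\|\nabla F(y)\| \geq \tfrac{M\eta^2}{2}} - \tfrac{1}{M}E, \qquad E \defeq \|\nabla F(x) - g\| + \eta\|\nabla^2 F(x) - H\|_{\op},
\]
and perform a case split against a parameter $c \in (0, \tfrac12)$ to be tuned at the very end. When the good event holds and $E \leq M\eta^2(\tfrac12 - c)$, the display forces $\|y-x\|^2 \geq c\eta^2$, hence $\|y-x\|^3 \geq c^{3/2}\eta^3$. In all other cases, the candidate inequality
\[
\|y-x\|^3 \geq c^{3/2}\eta^3 \indicator{\|\nabla F(y)\| \geq \tfrac{M\eta^2}{2}} - c^{3/2}\eta^3\,\indicator{E > M\eta^2(\tfrac12 - c)}
\]
is trivial---either the two indicators cancel, or the right side is non-positive while $\|y-x\|^3 \geq 0$.

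Taking expectations in the displayed inequality and applying Markov's inequality at power $3/2$ to the bad event, $\Pr\{E > M\eta^2(\tfrac12 - c)\} \leq \E E^{3/2}/(M\eta^2(\tfrac12 - c))^{3/2}$, I can then split the gradient and Hessian contributions using $(a+b)^{3/2} \leq \sqrt{2}(a^{3/2} + b^{3/2})$ and apply Jensen once more to replace each $\E\|\cdot\|^{3/2}$ by $(\E\|\cdot\|^2)^{3/4}$. Plugging back into the descent bound yields
\[
\E[F(x) - F(y)] \geq \tfrac{c^{3/2}M\eta^3}{12}\Pr\{\cdot\} - \tfrac{\alpha_1(c)}{\sqrt{M}}\bigl(\E\|\nabla F(x) - g\|^2\bigr)^{3/4} - \tfrac{\alpha_2(c)\eta^{3/2}}{\sqrt{M}}\bigl(\E\|\nabla^2 F(x) - H\|_{\op}^2\bigr)^{3/4},
\]
with $\alpha_1(c) = 8 + \sqrt{2}\,c^{3/2}/(12(\tfrac12-c)^{3/2})$ and $\alpha_2(c) = 4 + \sqrt{2}\,c^{3/2}/(12(\tfrac12-c)^{3/2})$.

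The main obstacle is the constant tuning at the end: I need $c^{3/2}/12 \geq 1/60$, i.e.\ $c \geq (1/5)^{2/3} \approx 0.34$, while simultaneously keeping $(\tfrac12 - c)^{-3/2}$ small enough that the Markov correction absorbs neatly into $8 \to 9$ and $4 \to 5$. The choice $c = 2/5$ lands in the sweet spot: $c^{3/2}/12 \approx 0.021 > 1/60$ and $\sqrt{2}\,c^{3/2}/(12(\tfrac12 - c)^{3/2}) < 1$, so $\alpha_1(c) \leq 9$ and $\alpha_2(c) \leq 5$, matching the lemma as stated.
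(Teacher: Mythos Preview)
Your proof is correct, but the paper takes a somewhat cleaner route that avoids the case split and the parameter tuning. After establishing the same first step from \pref{lem:constrained_cubic_struct_lem1}, the paper simply raises both sides of the inequality in \pref{lem:constrained_cubic_struct_lem2} to the power $3/2$ (the indicator on the left is unchanged), and then applies the power-mean inequality $(a+b+c)^{3/2}\le\sqrt{3}(a^{3/2}+b^{3/2}+c^{3/2})$ to split the right-hand side. This yields directly
\[
\indicator{\nrm{\grad F(y)}\ge \tfrac{M\eta^2}{2}}\le \tfrac{5}{\eta^3}\nrm{y-x}^3+\tfrac{5}{M^{3/2}\eta^3}\bigl(\chi^{3/2}+(\eta\zeta)^{3/2}\bigr),
\]
which after rearranging gives a lower bound on $\E\nrm{y-x}^3$ with the correct constants ($\tfrac{\eta^3}{5}$ on the probability and $\tfrac{1}{M^{3/2}}$ on the errors). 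Plugging into the descent bound produces coefficients $8+\tfrac{1}{12}$ and $4+\tfrac{1}{12}$, comfortably inside $9$ and $5$.

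Your Markov-plus-case-split argument reaches the same destination but has to thread a needle at the end: you need $c$ large enough that $c^{3/2}/12\ge 1/60$, yet small enough that the Markov blow-up $(\tfrac12-c)^{-3/2}$ does not overflow the slack $8\to 9$ and $4\to 5$. The choice $c=2/5$ works (just), so the argument is sound; the paper's direct exponentiation is less delicate and makes the origin of the constants more transparent.
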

\begin{proof} For the ease of notation, let $\chi$ and  $\zeta$ denote the error in the gradient estimator $\gradest$ and the hessian estimator $\hessest$ at $x$ respectively, i.e. 
\begin{align*} 
     \chi \ldef{} \nrm*{\nabla  F\prn{ x} - \gradest} \quad \text{and }  \quad \zeta \ldef{} \nrm*{ \nabla^2  F\prn{ x} - \hessest}_\op.   
\end{align*}
We prove the desired statement by combining the following two results. 
\begin{enumerate}[label=$\bullet$] 
\setlength{\itemindent}{-.14in} 
\item	 First, plugging $x = x$, and $z = y$ in to \pref{lem:constrained_cubic_struct_lem1}, we have
\begin{align*}
   F\prn{ x} -  F\prn{y} \geq   \frac{M}{12} \nrm*{y - x}^3 - \frac{8}{\sqrt{M}} \chi^\frac{3}{2}_t - \frac{4}{\sqrt{M}} \prn*{ \eta \zeta }^{\frac{3}{2}}. 
\end{align*}
 
Taking expectations on both the sides, we get, 
\begin{align*}
   \En \brk*{F\prn{ x} -  F\prn{y}} &\geq  \frac{M}{12} \En \brk*{ \nrm*{y - x}^3} - \frac{8}{\sqrt{M}} \En \brk*{ \prn*{\chi}^\frac{3}{2} } - \frac{4}{\sqrt{M}} \En \brk*{  \prn*{ \eta \zeta }^{\frac{3}{2}} } \\
   &\geq  \frac{M}{12} \En \brk*{ \nrm*{y - x}^3} - \frac{8}{\sqrt{M}} \prn*{ \En \brk*{ \chi^2_t}}^\frac{3}{4} - \frac{4}{\sqrt{M}}  \prn*{\eta^2 \En \brk*{   \zeta^2_t }}^{\frac{3}{4}}  \numberthis \label{eq:eg_cubic_sublemma1_1}, 
\end{align*} where the last inequality follows from an application of Jensen's inequality. 

\item Similarly, plugging $x = x$, $z = y$ in \pref{lem:constrained_cubic_struct_lem2}, we get
\begin{align*}
\indicator{ \nrm*{\derF{y}{}} \geq \frac{M\eta^2}{2}} &\leq   \frac{2}{\eta^2} \nrm*{ y -x}^2 + \frac{2}{M\eta^2} \prn*{\chi  +  \eta \zeta }.    
\end{align*}  Raising both the sides with the exponent of $\frac{3}{2}$, we get
\begin{align*}
\indicator{ \nrm*{\derF{y}{}} \geq \frac{M\eta^2}{2}} &\leq  \prn*{ \frac{2}{\eta^2} \nrm*{y-x}^2 + \frac{2}{M\eta^2} \prn*{\chi  +  \eta \zeta }}^{\frac{3}{2}} \\ 
&\leq \frac{5}{\eta^3} \nrm*{y-x}^3 + \frac{5}{M^\frac{3}{2}\eta^3} \prn*{\chi^\frac{3}{2}  +  \prn*{\eta \zeta}^\frac{3}{2}}.    
\end{align*} Taking expectations on both the sides and rearranging the terms implies that     
\begin{align*}
\En \brk*{ \nrm{ x\ind{t +1} - x}^3} &\geq \frac{\eta^3}{5}  \Pr \prn*{ \nrm*{\derF{y}{}} \geq \frac{M\eta^2}{2}}  - \frac{1}{M^\frac{3}{2}} \En \brk*{\chi^\frac{3}{2}  +  \prn*{\eta \zeta}^\frac{3}{2}} \\ 
&\geq \frac{\eta^3}{5}  \Pr \prn*{ \nrm*{\derF{y}{}} \geq \frac{M\eta^2}{2}}  - \frac{1}{M^\frac{3}{2}} \prn*{\prn*{\En \brk*{\chi^2_t}}^\frac{3}{4}  +  \prn*{\eta^2 \En \brk*{ \zeta^2_t}}^\frac{3}{4}},   \numberthis \label{eq:eg_cubic_sublemma1_2}  
\end{align*} where the last inequality follows from an application of the Jensen's inequality. 
\end{enumerate} 
Plugging \pref{eq:eg_cubic_sublemma1_2}  into  \pref{eq:eg_cubic_sublemma1_1}, we get
\begin{align*}
      \En \brk*{F\prn{ x} -  F\prn{y}} &\geq  \frac{M \eta^3}{60}  \Pr \prn*{ \nrm*{\derF{y}{}} \geq \frac{M\eta^2}{2}} - \frac{9}{\sqrt{M}} \prn*{ \En \brk*{ \chi^2_t}}^\frac{3}{4} - \frac{5 \eta^{\frac{3}{2}}}{\sqrt{M}}  \prn*{\En \brk*{   \zeta^2_t }}^{\frac{3}{4}}. 
 \end{align*}
The final statement follows from the above inequality by using the definition of $\chi$ and $\zeta$. 

\end{proof}

\subsection{Stochastic negative curvature search} \label{apx:supporting_lemmas_eg} 
The following lemma establishes properties of the negative curvature search step used in
\pref{alg:epsilon_gamma_hvp} and \pref{alg:eg_cubic}.
\begin{lemma} 
\label{lem:general_lem_curvature_descent}
Let $\gamma > 0$, and $F \in \cF_2(\Delta, \infty, L_2)$ be given.
Let $x\in\bbR^{d}$ be given, and let $H\in\symd$ be a random variable
(representing a stochastic estimator for the Hessian at $x$). Define
$y$ via
\begin{align*}
   y \ldef{} \left\{  \begin{array}{ll} x +  \frac{r \gamma}{L_2} \cdot u, & \text{if} \quad \eigmin(H) \leq - 4\gamma,
   											  			\\  
 x, & \textnormal{otherwise.}  
   											  				\end{array} \right.,
\end{align*} 
where $r$ is an independent Rademacher random variable and $u$ is an arbitrary unit vector such that $H[u, u] \leq - 2\gamma$. Then, the point $y$ satisfies 
\begin{align*} 
  \En  \brk*{ \func{x} -  \func{y}} \geq  \frac{5\gamma^3}{6 L_2^2} \Pr \prn*{\eigmin(H) \leq - 4\gamma} - \frac{\gamma^2}{2L_2^2} \En \brk*{\nrm*{ \derF{x}{2} - H}_\op},
\end{align*}
where $\Pr(\cdot)$ and $\En\brk*{\cdot}$ are taken with respect to the
randomness in $H$ and $r$.
\end{lemma}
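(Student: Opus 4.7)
The plan is to use the third-order Taylor expansion afforded by the $L_2$-Lipschitz Hessian, then exploit the Rademacher sign $r$ to eliminate the first-order term in expectation, leaving the negative-curvature term to dominate the cubic remainder. The only subtle issue is that the unit vector $u$ is chosen as a function of $H$, so we must take expectations in the right order.

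First I will apply Taylor's theorem to $F$ at $x$: since $\nabla^2 F$ is $L_2$-Lipschitz,
\[
  F(y) \le F(x) + \langle \nabla F(x), y-x\rangle + \tfrac{1}{2}\nabla^2 F(x)[y-x,y-x] + \tfrac{L_2}{6}\|y-x\|^3.
\]
Let $E \ldef \{\lambda_{\min}(H)\le -4\gamma\}$. On $E^c$ we have $y=x$ deterministically, so $F(y)-F(x)=0$. On $E$, we have $y-x = (r\gamma/L_2)u$ with $\|u\|=1$ and $H[u,u]\le -2\gamma$, so $\|y-x\|^3 = (\gamma/L_2)^3$ and $(y-x)^{\otimes 2} = (\gamma/L_2)^2 u u^\top$ (using $r^2=1$). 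Decomposing $\nabla^2 F(x)[u,u] = H[u,u] + (\nabla^2 F(x)-H)[u,u]$ and bounding the second term by $\|\nabla^2 F(x)-H\|_{\op}$ gives, on $E$,
\[
  F(y) - F(x) \le \frac{r\gamma}{L_2}\langle\nabla F(x), u\rangle + \frac{\gamma^2}{2L_2^2}\bigl(-2\gamma + \|\nabla^2 F(x)-H\|_{\op}\bigr) + \frac{\gamma^3}{6 L_2^2}.
\]

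Next, I will condition on $H$ and take expectation over $r$. Since $r$ is independent of $H$ (and hence of $u$ and of the event $E$), while $\nabla F(x)$ is deterministic, the linear term satisfies $\E[r\mid H] = 0$, so it vanishes. Collecting the remaining terms on $E$,
\[
  \E\bigl[F(y)-F(x)\bigm| H\bigr]\,\indicator{E} \le \Bigl(-\frac{\gamma^3}{L_2^2} + \frac{\gamma^3}{6L_2^2} + \frac{\gamma^2}{2L_2^2}\|\nabla^2 F(x)-H\|_{\op}\Bigr)\indicator{E} = \Bigl(-\frac{5\gamma^3}{6L_2^2} + \frac{\gamma^2}{2L_2^2}\|\nabla^2 F(x)-H\|_{\op}\Bigr)\indicator{E}.
\]

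Finally, I will take total expectation and combine with the trivial bound on $E^c$:
\[
  \E\bigl[F(x)-F(y)\bigr] = \E\bigl[(F(x)-F(y))\indicator{E}\bigr] \ge \frac{5\gamma^3}{6L_2^2}\Pr(E) - \frac{\gamma^2}{2L_2^2}\E\bigl[\|\nabla^2 F(x)-H\|_{\op}\,\indicator{E}\bigr],
\]
and drop the indicator in the last term (since the quantity is nonnegative) to recover the stated bound. The main obstacle is conceptual rather than computational: one must verify that $r$ is independent of $(H,u,\nabla F(x))$ so that the Rademacher averaging kills the gradient contribution regardless of how large $\|\nabla F(x)\|$ is; otherwise the argument breaks, because we have no control on the first-order term in isolation.
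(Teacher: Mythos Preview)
Your proof is correct and follows essentially the same route as the paper: third-order Taylor expansion using $L_2$-Lipschitz Hessian, Rademacher averaging to kill the linear term, the decomposition $\nabla^2 F(x)[u,u] = H[u,u] + (\nabla^2 F(x)-H)[u,u]$ together with $H[u,u]\le -2\gamma$, and then a case split on the event $\{\lambda_{\min}(H)\le -4\gamma\}$. The only cosmetic difference is that the paper upper-bounds $0$ on $E^c$ by $\frac{\gamma^2}{2L_2^2}\|\nabla^2 F(x)-H\|_{\op}$ before combining, whereas you combine first and then drop the indicator on that nonnegative term; the two are equivalent.
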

\begin{proof} There are two cases: either (a) $\eigmin(H) > -4\gamma$, or, (b) $\eigmin(H) \leq  -4\gamma$. In the first case,  $y = x$, and thus, 
\begin{align*}
   F(y) - F(x) = 0 \leq \frac{\gamma^2}{2L_2^2}  \nrm*{H - \derF{x}{2}}_\op \numberthis \label{eq:eps_gamma_proof_gamma_descent1} 
\end{align*}
In the second case, Taylor expansion for $\func{y}$ at $\func{x}$ implies that   
\begin{align*}
F\prn{y} &\leq F\prn{x} +  \tri*{\grad \func{x}, \tilde{u}} + \frac{1}{2} \grad^2 \func{x} \brk*{\tilde{u}, \tilde{u}}  + \frac{L_2}{6} \nrm*{\tilde{u}}^3, \\ 
\intertext{where $\tilde{u} \ldef \frac{r \gamma }{L_2} \cdot u$. Taking expectations on both the sides with respect to $r$, we get}
		\En_r \brk*{F\prn{y} }	 &\overeq{\proman{1}} F\prn{x}  + \frac{\gamma^2}{2L_2^2}   \grad^2 \func{x} \brk*{u, u} +  \frac{\gamma^3}{6L_2^2} \nrm*{ u}^3 \\ 
		 &\leq F\prn{x}  + \frac{\gamma^2}{2L_2^2}   \prn*{ H\brk*{u, u} + \grad^2 \func{x}  \brk*{u, u} - H\brk*{u, u}} +  \frac{\gamma^3}{6L_2^2} \nrm*{ u}^3 \\ 
		  &\overeq{\proman{2}} F\prn{x}  + \frac{\gamma^2}{2L_2^2}\prn*{ - 2\gamma + \nrm*{ \derF{x}{2} - H}_\op} + \frac{\gamma^3}{6L_2^2} \\
           &\leq  F\prn{x}  - \frac{5\gamma^3}{6 L_2^2} + \frac{\gamma^2}{2L_2^2}  \nrm*{ \derF{x}{2} - H}_\op,  \numberthis \label{eq:eps_gamma_proof_gamma_descent2} 
\end{align*} where $\proman{1}$  is given by the fact that $\En_{r}
\brk*{ \tri*{ \grad \func{x},  r u}} = 0$, and $\proman{2}$ follows from the fact that $u$ is chosen such that $\En \brk*{ \derF{x}{2}[u, u]} \leq -2\gamma$ and $\nrm*{u} = 1$, and the fact that for any matrix $A$ and vector $b$, $\nrm*{Ab} \leq \nrm*{A}_\op \nrm*{b}$.  

Since, one of the two cases ($\eigmin(H) > -4\gamma$ or $\eigmin(H) \leq  -4\gamma$) must hold, combining \pref{eq:eps_gamma_proof_gamma_descent1}  and \pref{eq:eps_gamma_proof_gamma_descent2}, we have
\begin{align*}
  \En_r \brk*{ \func{y}} \leq \func{x} - \frac{5\gamma^3}{6 L_2^2} \indicator{\eigmin(H) \leq -4\gamma} + \frac{\gamma^2}{2L_2^2}  \nrm*{ \derF{x}{2} - H}_\op. 
\end{align*}
Taking expectation on both the sides gives the desired statement:
\begin{align*}
  \En  \brk*{ \func{x} -  \func{y}} \geq  \frac{5\gamma^3}{6 L_2^2} \Pr \prn*{\eigmin(H) \leq - 4\gamma} - \frac{\gamma^2}{2L_2^2} \En \brk*{\nrm*{ \derF{x}{2} - H}_\op}.
\end{align*}  \end{proof}
The following lemma establishes properties of Oja's method ($\ojaF$),
as used in \pref{alg:epsilon_gamma_hvp}. 
 \begin{lemma}[\citet{allen2018natasha}, Lemma 5.3] 
	\label{lem:ojas_algorithm} 
	The procedure $\ojaF$ takes as input a point $x \in \bbR^d$, 
	a stochastic Hessian-vector product oracle  $\estimator[2] \in 
	\stocpOracleas$ for some function $F \in \cF_2(\Delta, L_1, \infty)$, a 
	precision parameter $\gamma > 0$ and a failure probability $\delta 
	\in(0,1)$, and 
	runs 
	outputs $u \in \bbR^d \cup \crl*{\bot}$ such that with probability at 
	least 
	$1 - \delta$, either\footnote{Note that if this event fails, the algorithm still returns either $\bot$ or a unit vector $u$.}
	\begin{enumerate}[label=\alph*)]
		\item  $u = \bot$, and $\grad^2 F(x) \succeq -2\gamma I $.
		\item if $u\neq\bot$, then $\nrm*{u} = 1$ and $\tri{u, \grad^2 \func{x} u} \leq 
		-\gamma$.  
	\end{enumerate}
	Moreover, when invoked as above, the procedure uses at most
	\[O\prn*{\frac{\prn*{\bar{\sigma}_2 + L_1}^2}{4 \gamma^2} 
		\log^2\prn*{\frac{d}{\delta}}}\] queries to the stochastic 
	Hessian-vector product oracle. 
\end{lemma}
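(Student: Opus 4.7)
The plan is to follow the standard streaming PCA argument for Oja's method, applied to the shifted negative Hessian at $x$. Since $F$ has $L_1$-Lipschitz gradient, we have $\nrm*{\grad^2 F(x)}_\op \le L_1$, and by assumption the stochastic Hessian samples obtained through Hessian-vector product queries satisfy $\nrm*{\stochess{x,z} - \grad^2 F(x)}_\op \le \bar{\sigma}_2$ almost surely. Hence the samples $M_t \defeq (L_1 + \bar{\sigma}_2) I - \stochess{x, z\ind{t}}$ are i.i.d., almost-surely PSD with operator norm at most $2(L_1 + \bar{\sigma}_2)$, and have mean $M \defeq (L_1 + \bar{\sigma}_2) I - \grad^2 F(x)$. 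Detecting negative curvature of $\grad^2 F(x)$ below $-\gamma$ is equivalent to detecting a top eigenvector of $M$ with eigenvalue at least $L_1 + \bar{\sigma}_2 + \gamma$.

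Next, I would run Oja's method on the stream $M_1, M_2, \ldots$ with an appropriately tuned learning rate $\eta \propto \gamma/(L_1 + \bar{\sigma}_2)^2$ for $T = \widetilde{\Theta}((L_1+\bar{\sigma}_2)^2/\gamma^2)$ iterations, producing a unit vector $w_T$. The key martingale guarantee, proven by unrolling the random matrix product $\prod_t (I + \eta M_t)$ as a noisy perturbation of $(I + \eta M)^T$ and controlling the noise via matrix Bernstein-type inequalities, yields the following: if $\lambdamin(\grad^2 F(x)) \le -4\gamma$ (equivalently, $\lambdamax(M) \ge L_1 + \bar{\sigma}_2 + 4\gamma$), then with probability at least $1-\delta/2$, the iterate satisfies $w_T^\top \grad^2 F(x) w_T \le -2\gamma$, where each iteration uses exactly one stochastic Hessian-vector product query.

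After Oja's method produces $w_T$, I would use an empirical test: query $m = \widetilde{O}(\bar{\sigma}_2^2/\gamma^2)$ additional Hessian-vector products at $x$ in the direction $w_T$, form the empirical estimate $\widehat{v} = \frac{1}{m}\sum_{i} \stochess{x,z\ind{i}} w_T$, and return $w_T$ if $w_T^\top \widehat{v} \le -\gamma$, returning $\bot$ otherwise. By a scalar Bernstein inequality applied to the bounded i.i.d.\ variables $w_T^\top \stochess{x,z\ind{i}} w_T$, the empirical Rayleigh quotient is within $\gamma/2$ of the true value $w_T^\top \grad^2 F(x) w_T$ with probability at least $1-\delta/2$. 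Union bounding over the Oja and testing failure events yields the claimed $1-\delta$ success probability for both cases: if $\grad^2 F(x) \succeq -2\gamma I$ then no unit vector $w_T$ can have Rayleigh quotient below $-\gamma$ after adding the tolerance, so the output must be $\bot$; and if negative curvature below $-4\gamma$ exists then Oja finds it and the test accepts.

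The main obstacle is obtaining the tight $\log^2(d/\delta)$ dependence in the iteration complexity, rather than a polynomial dependence on $d$ from a naive coupon-collector style argument. This requires the refined martingale analysis of Oja's method developed by \citet{allen2017follow}, which carefully tracks the growth of the iterate's projection onto the top eigenspace while simultaneously controlling lower-order noise through matrix exponential moment bounds. Since our setting only requires instantiating this analysis with the (bounded-noise) stochastic Hessian oracle, we inherit the stated guarantee directly by appeal to \citet[Lemma 5.3]{allen2018natasha}, which packages exactly this reduction.
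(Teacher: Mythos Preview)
The paper does not prove this lemma at all: it is stated as a black-box citation to \citet[Lemma 5.3]{allen2018natasha}, with no accompanying argument. Your proposal ultimately does the same thing---after sketching the reduction to streaming PCA via Oja's method and the testing phase, you conclude by invoking the identical citation. In that sense your approach and the paper's ``proof'' coincide.

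That said, your intermediate sketch has a coverage gap worth flagging. You invoke the Oja guarantee only under the hypothesis $\lambdamin(\grad^2 F(x)) \le -4\gamma$, and then argue the test rejects when $\grad^2 F(x) \succeq -2\gamma I$. But the lemma requires that whenever $\lambdamin(\grad^2 F(x)) < -2\gamma$ (so that case~(a) is impossible), the algorithm returns a valid $u$ satisfying case~(b); your argument says nothing about the range $\lambdamin \in (-4\gamma,-2\gamma)$. Relatedly, your test threshold of $-\gamma$ combined with a $\gamma/2$ estimation tolerance only certifies $w_T^\top \grad^2 F(x) w_T \le -\gamma/2$, not $\le -\gamma$ as case~(b) demands. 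Both issues are fixable by tightening the Oja precision and shifting the test threshold (e.g., run Oja so that $\lambdamin \le -2\gamma$ yields $w_T^\top \grad^2 F(x) w_T \le -3\gamma/2$, then test against $-3\gamma/2$), but as written the constants do not close. Since you defer to the cited lemma anyway, this does not affect correctness of your final claim, but the sketch as stated would not stand alone.
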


\section{Upper bounds for finding $\epsilon$-stationary points} \label{app:upper}
\subsection{Proof of Theorem \ref*{thm:epsilon_hvp}}
\label{app:epsilon_hvp}

\begin{proof}[\pfref{thm:epsilon_hvp}] In the following, we first show that \pref{alg:epsilon_hvp} returns a point $\hx$ such that, $\En \brk*{ \nrm*{ \nabla F(\hx)}}  \leq 32 \epsilon$. We then bound the expected number of oracle queries used throughout the execution.\footnote{In the proof, we show convergence to a $32\epsilon$-stationary point. A simple change of variable, i.e. running \pref{alg:epsilon_hvp} with $\epsilon \leftarrow \frac{\epsilon}{32}$, returns a point $\hx$ that enjoys the guarantee that $\nrm*{\derF{\hat{x}}{}} \leq \epsilon$.}

\noindent
Since, $\eta = \frac{1}{2\sqrt{L_1^2 + \bsigma_2^2 + \tepsilon L_2}} \leq \frac{1}{2L_1}$ and $F$ has $\Lip{1}$-Lipschitz gradient, \pref{lem:general_lemma_gradient_descent} implies that the point $x\ind{t+1}$ computed using the update rule $x\ind{t+1} \leftarrow x\ind{t} - \eta g\ind{t}$ satisfies   
\begin{align*} 
  \frac{\eta}{8} { \nrm*{ \nabla F \prn{x\ind{t}}}^2 } &\leq  F \prn{x\ind{t}} - F \prn{x\ind{t + 1}}  +  \frac{3\eta}{4} { \nrm*{\nabla F \prn{x\ind{t}} - \gradest[t]}^2}. 
\intertext{Telescoping the above from $t$ from $1$ to $T$, this implies}
  \frac{\eta}{8} \sum_{t=1}^{T}\nrm*{\derF{x\ind{t}}{}}^2  &\leq  F\prn{x\ind{0}} - F\prn{x\ind{T + 1}}  + \frac{3\eta}{4} \sum_{t=1}^{T} \nrm*{\nabla F\prn{x\ind{t}} - \gradest[t]}^2 \\
   &\leq  \Delta  + \frac{3\eta}{4} \sum_{t=1}^{T} \nrm*{\nabla F\prn{x\ind{t}} - \gradest[t]}^2,
\end{align*} where the last inequality follows from the fact that  $F\prn{x\ind{0}} - F\prn{x\ind{T + 1}} \leq \Delta$. Next, taking expectation on both the sides (with respect to the stochasticity of the oracle and the algorithm's internal randomization), we get
\begin{align*}
  \frac{\eta}{8} \En \brk*{ \sum_{t=1}^{T}\nrm*{\nabla F\prn{x\ind{t}}}^2 } &\leq \Delta + \frac{3\eta}{4} \sum_{t=1}^{T} \En \brk*{ \nrm*{\nabla F\prn{x\ind{t}} - \gradest[t]}^2}.
 \end{align*}
{Using \pref{lem:gradient_estimator_general_lemma}, we have $\En \brk*{ \nrm*{\nabla F\prn{x\ind{t}} - \gradest[t]}^2} \leq \epsilon^2$ for all $t \geq 1$. Dividing both the sides by $\frac{\eta T}{8}$, and plugging in the value of the parameters $T$ and $\eta$, we get, }   \begin{align*}
\En \brk*{ \frac{1}{T} \sum_{t=1}^{T}\nrm*{\derF{x\ind{t}}{}}^2 } &\leq  \frac{8 \Delta }{\eta T} + \frac{6}{T} \sum_{t=1}^{T} \En \brk*{ \nrm*{\nabla F\prn{x\ind{t}} - \gradest[t]}^2} \leq \frac{8 \Delta }{\eta T} + 6\epsilon^2 \leq 14 \epsilon^2.   \numberthis \label{eq:epsilon_hvp_proof2}	   	
\end{align*} 
Thus, for $\hx$ chosen uniformly at random from the set $\prn*{x\ind{t}}_{t=1}^{T}$, we have
\begin{align*}
   \En \nrm*{ \nabla F(\hx)} &= \frac{1}{T} \sum_{t=1}^{T} \En \nrm*{ \nabla F\prn{x\ind{t}}} \leq  \sqrt{\En \brk*{ \frac{1}{T} \sum_{t=1}^{T} \nrm*{ \nabla F\prn{x\ind{t}}}^2}} \leq 4 \epsilon. 	 
\end{align*}
Finally, Markov's inequality implies that with probability at least $\frac{7}{8}$, 
\begin{align*}
   \nrm*{\derF{\hx}{}} \leq 32 \epsilon. \numberthis \label{eq:gradient_bound_eps_hvp}
\end{align*}

\paragraph{Bound on the number of oracle queries.}
 \pref{alg:epsilon_hvp} queries the stochastic oracle in only when it invokes 
 \mainalg in \pref{line:gradient_estimate_epsilon_hvp} to compute the 
 gradient estimate $g\ind{t}$ at time $t$. Let $M$ denote the total number 
 of oracle calls made up until time  $T$. Invoking 
 \pref{lem:gradient_estimator_oracle_complexity_meta_lemma} to bound 
 the expected number of stochastic oracle calls for each $t \geq 1$, and 
 ignoring all the mutiplicative constants,  we get
\begin{align*}
\En \brk*{M} &\leq   5 \sum_{t=1}^{T} \En \brk*{ \frac{b 
\sigma_1^2}{\epsilon^2} + \frac{ \nrm*{x\ind{t + 1} - x\ind{t}}^2 \cdot 
\prn*{\sigma_2^2 + \epsilon L_2}}{b \epsilon^2} + 1  } \\ 
   		&\overleq{\proman{1}} O\prn*{ \sum_{t=1}^{T} \En \brk*{ \frac{b \sigma_1^2}{\epsilon^2} + \frac{ \nrm*{\eta \gradest[t]}^2 \cdot \prn*{\sigma_2^2 + \epsilon L_2}}{b \epsilon^2} + 1 }} \\ 
   		& \overleq{\proman{2}} O\prn*{\frac{\Delta}{\eta \epsilon^2} \cdot \prn*{  \frac{b \sigma_1^2}{\epsilon^2} + \En \brk*{ \frac{1}{T}  \sum_{t=1}^T \nrm*{\gradest[t]}^2} \cdot \frac{\eta^2 \prn*{\sigma_2^2 + \epsilon L_2}}{b \epsilon^2} + 1 }} \\
   		&\overeq{\proman{3}} O\prn*{\frac{\Delta}{\eta \epsilon^2} \cdot \prn*{  \frac{b \sigma_1^2}{\epsilon^2} + \frac{\eta^2 \prn*{\sigma_2^2 + \epsilon L_2}}{b} + 1 }}, \numberthis \label{eq:eps_gamma_hvp_oc22}
\end{align*} where $\proman{1}$ is given by plugging in the update rule from \pref{line:hvp_update_rule} and by dropping multiplicative constants, $\proman{2}$ is given by rearranging the terms, plugging in the value of $T$ and using that $T \geq 1$ (to simplify the ceiling operator) under the assumption $\epsilon \leq \sqrt{\Delta L_1}$,  and $\proman{3}$ follows by observing that
\begin{align*}
   \En \brk*{\frac{1}{T}  \sum_{t=1}^T \nrm*{\gradest[t]}^2} &\leq 2 \En \brk*{\frac{1}{T}  \sum_{t=1}^T \nrm*{\gradest[t] - \derF{x\ind{t}}{}}^2 + \frac{1}{T}  \sum_{t=1}^T \nrm*{ \derF{x\ind{t} }{}}^2} \leq  30 \epsilon^2,
\end{align*}
as a consequence of \pref{lem:gradient_estimator_general_lemma} and the bound in \pref{eq:epsilon_hvp_proof2}. Next, note that since we assume $\epsilon < \sigma_1$, and since we have $\eta \leq \frac{1}{2 \sqrt{\sigma_2^2 + \epsilon L_2}}$, the parameter $\bias$ is equal to $\frac{\eta \epsilon \sqrt{\sigma_2^2 + \epsilon L_2}}{\sigma_1}$ (as this is smaller than $1$). Thus, plugging the value of $\bias$ and $\eta$ in the bound \pref{eq:eps_gamma_hvp_oc22}, we get, 
 \begin{align*}
     				\En \brk*{m(T)}	  	&= O\prn*{ \frac{\Delta\sigma_1 \sqrt{
     				\sigma_2^2 + \epsilon L_2}}{\epsilon^3} + \frac{\Delta \sqrt{L_1^2 + \sigma_2^2 + \epsilon L_2}}{\epsilon^2}}  \\ 
     				&= O\prn*{ \frac{\Delta \sigma_1 \sigma_2}{\epsilon^3} + \frac{\Delta \sigma_1 \sqrt{L_2}}{\epsilon^{2.5}} + \frac{\Delta \sigma_2}{\epsilon^2} + \frac{\Delta L_1}{\epsilon^2} + \frac{\Delta \sqrt{L_2}}{\epsilon^{1.5}}}.
\end{align*}
Using Markov's inequality, we have that with probability at least $\frac{7}{8}$, 
\begin{align*}
   M \leq  O\prn*{ \frac{\Delta \sigma_1 \sigma_2}{\epsilon^3} + \frac{\Delta \sigma_1 \sqrt{L_2}}{\epsilon^{2.5}} + \frac{\Delta \sigma_2}{\epsilon^2} + \frac{\Delta L_1}{\epsilon^2} + \frac{\Delta \sqrt{L_2}}{\epsilon^{1.5}}}. \numberthis \label{eq:epsilon_hvp_proof3} 
\end{align*}
The final statement follows by taking a union bound with failure probabilities for \pref{eq:gradient_bound_eps_hvp} and \pref{eq:epsilon_hvp_proof3}.  
\end{proof}

\arxiv{\subsection{Proof of Theorem \ref*{thm:epsilon_cubic}}}
\colt{\subsection{Full statement and proof for Algorithm \ref*{alg:epsilon_cubic}}}
\label{app:upper_bounds_e_L2}

\colt{%
\begin{algorithm}[t]
   \caption{Subsampled cubic-regularized trust-region method with 
     \mainalg} 
	\label{alg:epsilon_cubic}
	\begin{algorithmic}[1] 
		\Require 
		 Oracle $(\estimator[2], P_z) \in 
\stocpOracle[2]$ for $F \in \cF_2\prn*{\Delta, \infty, L_2}$.
		 Precision parameter $\epsilon$.
		\State Set $M = 5 \max\crl*{L_2, \frac{\epsilon\sigma_2^2\log (d)}{\sigma_1^2}}$, $\eta =  25 \sqrt{\frac{\epsilon}{M}}$, $T = 
		\ceil*{\frac{5 \Delta}{3 \eta \epsilon}}$ and $n_H = \ceil*{\frac{22 \sigma_2^2 \eta^2 \log(d)}{\epsilon^2}}$.
		\State Set $\bias=\min\crl*{1, \frac{\eta \sqrt{\sigma_2^2 + \epsilon L_2}}{25 \sigma_1}}$. 
		\State Initialize $x\ind{0}, x\ind{1} \leftarrow 0$,~ $\gradest[0] \leftarrow \bot$.  
		\For {$t = 1 ~\text{to}~ T$}
		\State \label{line:Hessian_estimator}Query the oracle $n_H$ times at 
		$x \ind{t}$ and compute 
		\begin{equation*}
		\hessest[t] \leftarrow \frac{1}{n_H} \sum_{j=1}^{n_H} 
		\stocder{x\ind{t}, z\ind{t, j}}{2},\quad\text{where}\quad{}z\ind{t, 
		j}\overset{\mathrm{i.i.d.}}{\sim}P_z.
		\end{equation*}
		\State $\gradest[t] \leftarrow 
		\getgradientfunc_{\newestparams, \bias}\prn*{x\ind{t}, x\ind{t-1}, g\ind{t-1}}$.\label{line:gradient_estimation_cubic}  
		\State\label{line:constrained_cubic_step}Set the next point $x\ind{t+1}$ as 
	\begin{equation*}
			x\ind{t+1} \gets \argmin_{y:\norm{y-x\ind{t}}\le\eta} 
		\tri[\big]{\gradest[t], y - x\ind{t}} + \frac{1}{2}\tri[\big]{y-x\ind{t}, 
		\hessest[t](y - x\ind{t})} + \frac{M}{6} 
		\nrm[\big]{y-x\ind{t}}^{3}.
	\end{equation*}
		\EndFor
		\State \textbf{return} $\hx$ chosen uniformly at random from 
		$\crl*{x\ind{t}}_{t=2}^{T + 1}.$ 
\end{algorithmic} 
\end{algorithm}

}

\begin{proof}[Proof of  \pref{thm:epsilon_cubic}]In the following, we first show that \pref{alg:epsilon_cubic}
  returns a point $\hat{x}$, such that with probability at least $\frac{7}{8}$, $ \nrm*{ \nabla F(\hat{x})}  \leq 350 \epsilon$. We then bound, with probability at least $\frac{7}{8}$, the total number of oracle queries made up until time $T$.

Note that, using \pref{lem:gradient_estimator_general_lemma} 
 and \pref{lem:hess_error_gen_lemma}, we have for all $t \geq 0$, 
\begin{align*} 
 \En \brk*{ \nrm{\nabla  F\prn*{ x \ind{t}} - \gradest[t]} } \leq \epsilon^2,  \quad \text{and }  \quad \En \brk*{ \nrm{ \nabla^2  F\prn*{ x \ind{t}} - \hessest[t]}_\op } \leq \frac{\epsilon^2}{\eta^2}.  \numberthis \label{eq:epsilon_cubic_t_var_bound}   
\end{align*} 
Thus, for each $t\geq{}1$, invoking  \pref{lem:epsilon_cubic_descent_epsilon_step} and plugging in the bounds from \pref{eq:epsilon_cubic_t_var_bound}, and using the value of $\eta$, we get
\begin{align*}
      \En \brk*{F\prn{x\ind{t}} -  F\prn{x\ind{t+1}}} &\geq  \frac{M \eta^3}{60}  \Pr \Big( \nrm*{\derF{x\ind{t+1}}{}} \geq \frac{M\eta^2}{2} \Big) - \frac{14 \epsilon^{\frac{3}{2}}}{\sqrt{M}} \\ 
      &\geq  \frac{240 \epsilon^\frac{3}{2}}{\sqrt{M}}  \prn*{ \Pr \Big( \nrm*{\derF{x\ind{t+1}}{}} \geq 350 \epsilon \Big) - \frac{1}{16}}.
 \end{align*} 
Telescoping this inequality from $t=1$ to $T$, we have that
\begin{align*}
      \En \brk*{F\prn{x\ind{1}} -  F\prn{x\ind{T+1}}} &\geq \frac{240  \epsilon^\frac{3}{2}}{\sqrt{M}}  \cdot T \cdot \prn*{\frac{1}{T} \sum_{t=1}^T \Pr \Big( \nrm*{\derF{x\ind{t+1}}{}} \geq 350 \epsilon \Big) - \frac{1}{16}} \\
      &= \frac{240  \epsilon^\frac{3}{2}}{\sqrt{M}}  \cdot T \cdot \prn*{ \Pr \prn*{  \nrm*{\derF{\hx}{}} \geq 350 \epsilon}  - \frac{1}{16}}, 
 \end{align*} where the equality follows because $\hx$ is sampled uniformly at  random from the set $\crl*{x\ind{t}}_{t=2}^{T+1}$.  Next, using the fact that, $\func{ x \ind{t}}{} -  F\prn*{x\ind{T+1}} \leq  \Delta$, rearranging the terms, and plugging in the value of $T$, we get
 \begin{align*}
    \Pr \prn*{  \nrm*{\derF{\hx}{}} \geq 350 \epsilon} \leq \frac{\Delta \sqrt{M}}{240 \epsilon^\frac{3}{2} T} + \frac{1}{16} \leq \frac{1}{8}.
\end{align*}
Thus, with probability at least $\frac{7}{8}$, 
\begin{align*}
   \nrm*{\derF{\hx}{}} \leq 350 \epsilon. \numberthis \label{eq:gradient_bound_eps_cubic}
\end{align*}

\paragraph{Bound on the number of oracle queries.}\pref{alg:epsilon_cubic} 
queries the stochastic oracle in  \pref{line:Hessian_estimator} and 
\pref{line:gradient_estimation_cubic} only to compute the respective 
Hessian and gradient estimates.  Let $M_h$ and $M_g$ denote the total 
number of stochastic oracle queries made by  \pref{line:Hessian_estimator} 
and  \pref{line:gradient_estimation_cubic} till time $T$ respectively.  
Further, Let $M=M_h + M_g$ denote the total number of oracle queries made till time 
$T$.

In what follows, we first bound $\En \brk*{M_h}$ and $\En \brk*{M_g}$. Then, we invoke Markov's inequality to deduce that the desired bound on $M$ holds with probability at least $\frac{7}{8}$. 
\begin{enumerate} 
\setlength{\itemindent}{-.13in} 
\item \textbf{Bound on $\En \brk*{M_h}$.} Since the algorithm queries the stochastic Hessian oracle $n_H$ times per iteration, $M_h = T \cdot n_H$. Plugging the values of $T$, $n_H$ and $M$ as specified in \pref{alg:epsilon_cubic}, and ignoring multiplicative constant, we get, 
\begin{align*} 
   \En \brk*{M_h} &= \ceil*{ \frac{5 \Delta \sqrt{M}}{3 \epsilon^{1.5}}} \cdot \ceil*{\frac{22 \sigma_2^2 \eta^2 \log(d)}{\epsilon^2}} \\ 
   &\leq O\prn*{\frac{\Delta \sqrt{M}}{\epsilon^{1.5}} + \frac{\Delta \sigma_2^2 \log(d)}{\epsilon^{2.5} \sqrt{M}}}  \\ &\leq O\prn*{\frac{\Delta \sqrt{L_2}}{\epsilon^{1.5}} + \frac{\Delta \sigma_2}{\epsilon^{2}} + \frac{\Delta \sigma_1 \sigma_2 \sqrt{\log(d)}}{\epsilon^{3}} },
       \numberthis \label{eq:cubic_compexity_bound2}
\end{align*} 
 where the first inequality above follows from the fact that $\frac{\Delta \sqrt{M}}{\epsilon^{1.5}} \geq 1$  under the natural choice for the precision parameter $\epsilon \leq \Delta^{\frac{2}{3}}  M^{\frac{1}{3}}$ and using the identity $\ceil*{x} \leq x + 1$ for $x \geq 0$. 
 
 \item \textbf{Bound on $\En \brk*{M_g}$.} Invoking \pref{lem:gradient_estimator_oracle_complexity_meta_lemma} for each $t \geq 1$, we get
\begin{align*} 
\En \brk*{M_g} &= 6 \sum_{t=1}^T \En \brk*{ \frac{\bias \sigma_1^2}{\epsilon^2} + \frac{(\sigma_2^2 + L_2 \epsilon) \cdot \nrm*{x\ind{t} - x\ind{t - 1}}^2}{b \epsilon^2} + 1 } \\
						&\overeq {\proman{1}} O\prn*{ T \cdot  \prn*{ \frac{\bias \sigma_1^2}{\epsilon^2} + \frac{(\sigma_2^2 + L_2 \epsilon) \cdot \eta^2}{b \epsilon^2} + 1 }} \\ 
						&\overeq{\proman{2}} O\prn*{ \frac{\Delta}{\eta \epsilon} \cdot  \prn*{ \frac{\bias \sigma_1^2}{\epsilon^2} + \frac{(\sigma_2^2 + L_2 \epsilon) \cdot \eta^2}{b \epsilon^2} + 1 }} \numberthis \label{eq:cubic_compexity_bound1_eta}
\end{align*} where $\proman{1}$ follows by observing $\nrm*{x\ind{t} - x\ind{t-1}} \leq \eta$ due to the update rule in \pref{line:constrained_cubic_step} and $\proman{2}$ is given by plugging in the value of $T  \leq O(\frac{\Delta}{\eta \epsilon})$ for the natural choice of parameter $\epsilon = O(\Delta^{\frac{2}{3}} M^{\frac{1}{3}})$. Next, note that since $M > L_2$, and since we assume $\epsilon < \sigma_1$, the parameter $b$ is equal to $\frac{\eta \sqrt{\sigma_2^2 + \epsilon L_2}}{25 \sigma_1}$ (which is smaller than $1$). Thus, plugging the value of $b$ and $\eta$ in the bound \pref{eq:cubic_compexity_bound1_eta}, we get
\begin{align*}
   	\En \brk*{M_g} &= O\prn*{ \frac{\Delta \sigma_1 \sqrt{\sigma_2^2 + \epsilon L_2}}{\epsilon^3} + \frac{\Delta \sqrt{M}}{\epsilon^{1.5}}} \\ 
			            &= O\prn*{ \frac{\Delta \sigma_1 \sigma_2}{\epsilon^3} + \frac{\Delta \sigma_1 \sqrt{L_2}}{\epsilon^{2.5}} +  \frac{\Delta \sigma_2}{\epsilon^2}\sqrt{\log(d)} + \frac{\Delta \sqrt{L_2}}{\epsilon^{1.5}}}, \numberthis \label{eq:cubic_compexity_bound1}
\end{align*}
where the second equality follows by using that $\eps\leq{}\sigma_1$ to simplify the term $\frac{\Delta \sqrt{M}}{\epsilon^{1.5}}$.
\end{enumerate} 

\noindent
Adding \pref{eq:cubic_compexity_bound1}  and \pref{eq:cubic_compexity_bound2}, the total number of oracle queries made by \pref{alg:epsilon_cubic} till time $T$ is bounded, in expectation, by
\begin{align*}
  \En \brk*{M} = \En \brk*{M_g + M_h} =  O\prn*{\frac{\Delta \sigma_1 \sigma_2}{\epsilon^3} \sqrt{\log(d)} + \frac{\Delta \sigma_1 \sqrt{L_2}}{\epsilon^{2.5}} +  \frac{\Delta \sigma_2 }{\epsilon^{2}}\sqrt{\log(d)}  +  \frac{\Delta \sqrt{L_2}}{\epsilon^{1.5}}}. 
\end{align*} 
Using Markov's inequality, we get that, with probability at least $\frac{7}{8}$, 
\begin{align*}
   M \leq  O\prn*{\frac{\Delta \sigma_1 \sigma_2}{\epsilon^3} \sqrt{\log(d)} + \frac{\Delta \sigma_1 \sqrt{L_2}}{\epsilon^{2.5}} +  \frac{\Delta \sigma_2 }{\epsilon^{2}}\sqrt{\log(d)} +  \frac{\Delta \sqrt{L_2}}{\epsilon^{1.5}}}.  \numberthis \label{eq:oracle_complexity_bound_eps_cubic}
\end{align*}
The final statement follows by taking a union bound for the failure probability of \pref{eq:gradient_bound_eps_cubic} and \pref{eq:oracle_complexity_bound_eps_cubic}.  

\end{proof}

 \section{Upper bounds for finding $(\epsilon, 
 \gamma)$-second-order-stationary points} \label{app:upper_eg}
 
 \newcommand{\biasg}{\bias_g}
 \newcommand{\biash}{\bias_H}
 
\subsection{Full statement and proof for Algorithm \ref*{alg:epsilon_gamma_hvp}}
 \label{app:epsilon_gamma_hvp} 
 \begin{algorithm}[H]
	\caption{Stochastic gradient descent with negative curvature search and \mainalg} 
	\label{alg:epsilon_gamma_hvp}
	\begin{algorithmic}[1]
		\Require Oracle $(\estimator[2], P_z) \in \stocpOracleas$ for $F \in 
		\cF_2\prn*{\Delta, L_1, L_2}$. Precision parameters $\epsilon, 
		\gamma$.		
		\State \label{line:parameters_eps_gamma_hvp}  Set $\eta = \min\crl*{\frac{\gamma}{\epsilon L_2}, 
		\frac{1}{2 \sqrt{ L_1^2 + \bsigma_2^2 + \epsilon L_2}}}$,    $T 
		= \ceil*{\frac{20 \Delta L_2^2}{\gamma^3 } + \frac{2\Delta}{\eta 
		\epsilon^2}}$, $p = \frac{\gamma^3}{\gamma^3 + 10 \Delta L_2^2 
		\eta \epsilon^2}$, $\delta=\frac{\gamma}{40^2L_2}$.
		\State Set $\biasg = \min\{1, 
		\frac{\eta \epsilon \sqrt{\bsigma_2^2 + 
		\epsilon L_2}}{\sigma_1 }\}$ and $\biash = \min\{1,  
		\frac{\gamma \sqrt{\bsigma_2^2 + 
		\epsilon L_2}}{\sigma_1 
		L_2}\}$.
		\State Initialize $x\ind{0}, x\ind{1} \leftarrow 0$, $g\ind{1}\leftarrow 
		\getgradientfunc_{\newestparams,\biasg}(x\ind{1}, x\ind{0}, \bot)$. 
		\For {$t = 1$ to $T$} 
		\State Sample $\Q_t\sim$ Bernoulli($p$).
		\If {$\Q_t = 1$}{} 
				\State $x \ind{t+1} \leftarrow x \ind{t} - \eta \cdot \gradest[t]$. 
				\label{line:gradient_step_hvp_alg_eps_gamma} 
				\State $\gradest[t+1] \leftarrow 
				\getgradientfunc_{\newestparams,\biasg}(x \ind{t+1}, x\ind{t}, 
				\gradest[t])$.  \label{line:gradient_step_update_gradest}  
		\Else
		\State $u\ind{t} \leftarrow \ojaF\prn*{x\ind{t}, \estimator[2], 
		2\gamma, \delta}$. \label{line:negative_curvature_search}   
              \hfill\algcomment{Oja's algorithm (\pref{lem:ojas_algorithm}).}
		\If{$ u\ind{t} \equiv \bot $} 
		\State $x\ind{t+1} \leftarrow x\ind{t}$.    \label{line:u_t_bot} 
		\State $\gradest[t+1]\gets \gradest[t]$.
		\Else  \label{line:if_statement_eps_gamma_hvp} 
			\State Sample $r\ind{t} \sim \text{Uniform}\prn*{\crl*{-1, 1}}$.
			\State  $x\ind{t+1} \leftarrow x\ind{t} + 
			\frac{\gamma}{L_2} \cdot r\ind{t} \cdot  u\ind{t}$. 
			\label{line:negative_curvature_update}
			\State $\gradest[t+1] \leftarrow 
			\getgradientfunc_{\newestparams,\biash}(x \ind{t+1 }, x\ind{t}, 
			\gradest[t])$.  \label{line:negative_curvature_update_gradest}  
		\EndIf 
		\EndIf 
		\EndFor
		\State \textbf{return} $\hx$ chosen uniformly at random from $\prn*{x\ind{t}}_{t = 1}^{T}$. 
\end{algorithmic} 
\end{algorithm} 
\begin{proof}[Proof of \pref{thm:epsilon_gamma_hvp}]
We first show that \pref{alg:epsilon_gamma_hvp} returns a point $\hx$
such that, $\En \brk*{ \nrm*{ \nabla F(\hx)}}  \leq 8 \epsilon$ and
$\eigmin\prn*{\derF{\hx}{2}} \geq -4\gamma$. We then bound the
expected number of oracle queries used throughout the execution.

To begin, note that, for any $t \geq 1$, there are two scenarios: (a) either $\Q_t = 1$ and $x\ind{t+1}$ is set using the update rule in \pref{line:gradient_step_hvp_alg_eps_gamma}, or, (b) $\Q_t = 0$ and we set $x\ind{t+1}$ using \pref{line:negative_curvature_search}, respectively. We analyze the two cases separately below.  
\paragraph{Case 1: $\Q_t = 1$.}  Since, $\eta \leq \frac{1}{2\sqrt{L_1^2 + \bsigma_2^2 + \tepsilon L_2}} \leq \frac{1}{2L_1}$ and $F$ has $\Lip{1}$-Lipschitz gradient, using \pref{lem:general_lemma_gradient_descent}, we have
\begin{align*} 
 F \prn{x\ind{t}} - F \prn{x\ind{t + 1}}  &\geq  \frac{\eta}{8} { \nrm*{ \nabla F \prn{x\ind{t}}}^2 } -  \frac{3\eta}{4} { \nrm*{\nabla F \prn{x\ind{t}} - \gradest[t]}^2}.
\end{align*}
Taking expectation on both the sides, while conditioning on the event that $\Q_t = 1$, we get
\begin{align*}
  \En \brk*{ \func{x\ind{t}} - \func{x\ind{t+1}} \mid{} \Q_t = 1} &\geq \frac{\eta}{8} \En \brk*{\nrm*{ \nabla F \prn{x\ind{t}}}^2}-  \frac{3\eta}{4} \En \brk*{ \nrm*{\nabla F \prn{x\ind{t}} - \gradest[t]}^2}   \\
  &\geq \frac{\eta}{8} \En \brk*{\nrm*{ \nabla F \prn{x\ind{t}}}^2} -  \frac{3\eta \epsilon^2}{4},  
  \numberthis \label{eq:eps_gamma_ncd_q1}
\end{align*}
where the last inequality follows using \pref{lem:gradient_estimator_general_lemma}. 
\paragraph{Case 2: $\Q_t = 0$.} Let $\Eoja(t)$ denote the event that $\oja$ succeeds at time $t$, in the sense that the event in \pref{lem:ojas_algorithm} holds: $(i)$ if $u \ind{t} = \bot$ then $\grad^2 F(x\ind{t}) \succeq -2\gamma I $, and $(ii)$ otherwise, $u\ind{t}$ satisfies $\tri{u \ind{t}, \grad^2 \func{x \ind{t}} u \ind{t}} \leq  -\gamma$. 

Then, using \pref{lem:oja_nc_descent}, we are guaranteed that
  \[
        \En\brk*{\func{x\ind{t}} - \func{x\ind{t+1}}\mid{}Q_t=0}
        \geq   \frac{5\gamma^3}{6L_2^2}\prn*{ \Pr\prn*{\eigmin(\nabla^2 F(x\ind{t})) \leq - 4\gamma} -\frac{2L_1}{\gamma}\Pr\prn*{\lnot\Eoja(t)\mid{}Q_t=0}}.
      \]
      In particular, we are guaranteed by \pref{lem:ojas_algorithm} that
      \begin{equation}
        \En\brk*{\func{x\ind{t}} - \func{x\ind{t+1}}\mid{}Q_t=0}
        \geq   \frac{5\gamma^3}{6L_2^2}\prn*{ \Pr\prn*{\eigmin(\nabla^2 F(x\ind{t})) \leq - 4\gamma} -\frac{2L_1}{\gamma}\delta}.
        \label{eq:eps_gamma_ncd_q0}
      \end{equation}

\noindent{}Combining the two cases ($\Q_t = 0$ and $\Q_t = 1$) from \pref{eq:eps_gamma_ncd_q1} and \pref{eq:eps_gamma_ncd_q0} above, we get
\begin{align}
&   \En \brk*{\func{x\ind{t}} - \func{x \ind{t+1}}}  \\&=  \sum_{q \in \crl*{0, 1}} \Pr \prn{\Q_t = q} \En \brk*{\func{x\ind{t}} - \func{x \ind{t+1}} \mid{} \Q_t = q} \notag \\ 
   		&\geq \frac{5(1-p)\gamma^3}{6L_2^2} \prn*{\Pr\prn*{\eigmin(\nabla^2 F(x\ind{t})) \leq - 4 \gamma}-\frac{2L_1}{\gamma}\delta}   + p \prn*{   		 \frac{\eta}{8} \En \brk*{\nrm*{ \nabla F \prn{x\ind{t}}}^2} -  \frac{3\eta \epsilon^2}{4}}    \label{eq:eps_gamma_ncd_q3}. 
\end{align}
Using that $ \En \brk*{\nrm*{ \nabla F \prn{x\ind{t}}}^2} \geq (8\epsilon)^2 \cdot \Pr \prn*{\nrm*{ \nabla F \prn{x\ind{t}}} \geq 8 \epsilon}$ and that $\delta\leq{}\frac{\gamma}{1600L_1}$, we have
\begin{align*}
&  \En \brk*{\func{x\ind{t}} - \func{x \ind{t+1}}}	\\&\geq \frac{5 (1 - p) \gamma^3}{6L_2^2} \prn*{\Pr\prn*{\eigmin(\nabla^2 F(x\ind{t})) \leq - 4 \gamma}-\frac{1}{800}} + 8 p \eta \epsilon^2  \prn*{ \Pr \prn*{\nrm*{ \nabla F \prn{x\ind{t}}} \geq 8 \epsilon}  -  \frac{3}{32}}.  
\end{align*}
Telescoping this inequality for $t$ from $1$ to $T$ and using the bound $\En \brk*{\func{x\ind{1}} - \func{x\ind{T + 1}}} \leq \Delta$, we get
\begin{align*}
  \Delta &\geq \En \brk*{\func{x\ind{1}} - \func{x\ind{T + 1}}} \\ 
         &\geq  \frac{5 T (1- p) \gamma^3}{6 L_2^2} \Big( \frac{1}{T} \sum_{t=0}^{T-1}  \Pr\prn*{\eigmin(\nabla^2 F(x\ind{t})) \leq - 4 \gamma} -\frac{1}{800}\Big)  \\
  &~~~~+ 8Tp\eta \epsilon^2 \Big( \frac{1}{T} \sum_{t=0}^{T-1} \Pr \prn*{\nrm*{\nabla \func{x \ind{t}}} \geq 8 \epsilon} - \frac{3}{32} \Big) \\ 
 &\overgeq{\proman{1}}  \frac{5 T (1- p) \gamma^3}{6 L_2^2}   \prn*{\Pr\prn*{\eigmin(\nabla^2 F(\hx)) \leq - 4 \gamma}-\frac{1}{800}}  + 8Tp\eta \epsilon^2 \Big(\Pr \prn*{\nrm*{\nabla \func{\hx}} \geq 8 \epsilon} - \frac{3}{32} \Big) \\ 
  &\overgeq{\proman{2}} 16 \Delta \prn*{ \Pr\prn*{\eigmin(\nabla^2 F(\hx)) \leq - 4 \gamma}  +  \Pr \prn*{\nrm*{\nabla \func{\hx}} \geq 8 \epsilon}   - \frac{1}{4}},   \numberthis \label{eq:eps_gamma_ncd_q2} 
\end{align*} where $\proman{1}$ follows because $\hx$ is sampled uniformly at random from $\prn*{x\ind{t}}_{t=1}^{T}$  and $\proman{2}$ follows from \pref{lem:TP_eg_hvp}. Rearranging the terms, we get 
\begin{align*}
    \Pr\prn*{\eigmin(\nabla^2 F(\hx)) \leq - 4 \gamma}  +  \Pr \prn*{\nrm*{\nabla \func{\hx}} \geq 8 \epsilon}  \leq \frac{5}{16}, 
\intertext{which further implies that} 
    \Pr\prn*{\eigmin(\nabla^2 F(\hx)) \geq - 4 \gamma \wedge \nrm*{\nabla \func{\hx}} \leq 8 \epsilon}  \geq \frac{11}{16}. \numberthis \label{eq:eps_gamma_hvp@high_probbaility_guarantee2} 
\end{align*} 

\paragraph{Bound on the number of oracle queries.}  At every iteration, 
\pref{alg:epsilon_gamma_hvp} queries the stochastic oracle in either 
\pref{line:gradient_step_update_gradest} or
\pref{line:negative_curvature_update_gradest} (to compute the stochastic 
gradient estimator and to execute Oja's algorithm, respectively), and possibly
\pref{line:negative_curvature_search} (to update the gradient
estimator after a negative curvature step).  Let $m_g(t)$ denote 
the total number of stochastic oracle queries made by  
\pref{line:gradient_step_update_gradest} or  
\pref{line:negative_curvature_update_gradest} at time $t$, and let 
$M_{\mathrm{g}}=\sum_{t=1}^{T} m_g(t)$. Further, let $M_{\text{nc}}$ denote the 
total 
number of oracle calls made by \pref{line:negative_curvature_search}, and 
further let $M = M_{\mathrm{g}} + M_{\text{nc}}$ be the total number of oracle queries 
made up 
until time $T$. 

In what follows, we first bound $\En \brk*{M_{\mathrm{g}}}$ and $\En 
\brk*{M_{\mathrm{nc}}}$. Then, we invoke Markov's inequality to bound $M$ 
with probability at least $\frac{19}{20}$.

\paragraph{Bound on $M_{\mathrm{g}}$.}  For any $t > 0$, there are two
scenarios, either (a) $\Q_{t} = 1$ and we go through
\pref{line:gradient_step_hvp_alg_eps_gamma}, or (b) $\Q_{t} = 0$ and
\pref{line:negative_curvature_update_gradest} is executed.  Thus, 
\begin{align*}
 \En \brk*{M_{\mathrm{g}}} &= {\sum_{t=1}^{T} \Pr \prn*{\Q_t = 0} \En \brk*{m_g(t) \mid \Q_t = 0}}
   + {\sum_{t=1}^{T} \Pr \prn*{\Q_t = 1} \En \brk*{m_g(t) \mid \Q_t = 1}} \numberthis \label{eq:eps_gamma_hvp_oc0}
\end{align*}
We denote the two terms on the right hand side above by $\markedterm{A}$ and $\markedterm{B}$, respectively. We bound them separately as follows.
\begin{enumerate}[label=$\bullet$]
\setlength{\itemindent}{-.13in}
\item \textbf{Bound on $\markedterm{A}$.} Using \pref{lem:gradient_estimator_oracle_complexity_meta_lemma} with the fact that $\Pr \prn*{\Q_{t} = 0}  = 1 - p$, we get 
\begin{align*}
\markedterm{A}  &
\overeq{}  O(1) \sum_{t=1}^{T} (1 - p) \cdot \En \brk*{ 
	\biash \frac{\sigma_1^2}{\epsilon^2} + {\nrm[\big]{x\ind{t+1} - 
			x\ind{t}}^2 \cdot \frac{  
			{\bsigma_2^2 + \epsilon L_2}}{\biash\epsilon^2}} + 1
	   \lmid \Q_{t} = 0} \\ 
					   &\overeq{\proman{1}} O\prn*{ T \cdot (1 - p) \cdot \prn*{ 
					   \frac{ \gamma \sigma_1 \sqrt{\bsigma_2^2 + \epsilon 
					   L_2}}{L_2 \epsilon^2} + \frac{\gamma^2}{\epsilon^2} \cdot 
					   \frac{\bsigma^2 + \epsilon L_2}{L_2^2}+1   }} \\
					   &\overleq{\proman{2}} O\prn*{ \frac{\Delta L_2 \sigma_1 
					   \sqrt{\bsigma^2_2 + \epsilon L_2}}{\gamma^2 \epsilon^2}  
					   + \frac{\Delta (\bsigma_2^2 + \epsilon L_2)}{\gamma 
					   \epsilon^2} +  \frac{\Delta L_2^2}{\gamma^3}},  
					   \numberthis \label{eq:eps_gamma_hvp_oc1}
\end{align*} where $\proman{1}$ is given by plugging in 
$\norm{x\ind{t}-x\ind{t-1}}=\gamma/L_2$. The inequality $\proman{2}$ follows by 
using the bound on $T \cdot (1 - p)$ from \pref{lem:TP_eg_hvp}. 

\item \textbf{Bound on $\markedterm{B}$.} Using \pref{lem:gradient_estimator_oracle_complexity_meta_lemma} with the fact that $\Pr \prn*{\Q_{t} = 1}  = p$, we get 
\begin{align*}
   \markedterm{B} &\overeq{}  O(1) \sum_{t=1}^{T} p \cdot \En \brk*{ 
   		\biasg \frac{\sigma_1^2}{\epsilon^2} + {\nrm[\big]{x\ind{t+1} - 
   			x\ind{t}}^2 \cdot \frac{  
   			{\bsigma_2^2 + \epsilon L_2}}{\biasg\epsilon^2}} + 1
   	\lmid \Q_{t} = 1} \\ 
   		&\overeq{\proman{1}} O(1)\sum_{t=1}^{T} p \cdot \En \brk*{ 
   		\biasg \frac{\sigma_1^2}{\epsilon^2} +
   		{\nrm[\big]{\eta \gradest[t]}^2 \cdot \frac{  
   				{\bsigma_2^2 + \epsilon L_2}}{\biasg\epsilon^2}} + 1
   			\lmid \Q_{t} = 1}  \\
   		& \overleq{\proman{2}} O\Bigg(\frac{\Delta}{\eta \epsilon^2} \cdot 
   		\Big( \En \brk*{ \frac{1}{T}  \sum_{t=1}^{T} \nrm*{ \gradest[t]}^2}   
   		\cdot \frac{  
   			{\eta^2(\bsigma_2^2 + \epsilon L_2)}}{\biasg\epsilon^2} + \biasg 
   			\frac{\sigma_1^2}{\epsilon^2} + 1\Big) \Bigg) \\
   		&\overeq{\proman{3}}  O \Big( \frac{  \Delta \sigma_1
   		\sqrt{\bsigma_2^2 + 
   		\epsilon L_2}}{\epsilon^3} + \frac{\Delta}{\eta \epsilon^2} \Big), 
   		\numberthis \label{eq:eps_gamma_hvp_oc2}
\end{align*}
where $\proman{1}$ follows by plugging in the update rule from 
\pref{line:gradient_step_hvp_alg_eps_gamma} (when $\Q_t = 1$), 
$\proman{2}$ follows by rearranging the terms and using the bound on $T 
\cdot p$ from \pref{lem:TP_eg_hvp}, and $\proman{3}$ is follows 
from the choices of $\biasg$ (in particular, our assumption that
$\eps\leq{}\sigma_1$ implies that $\biasg=\frac{\eta \epsilon \sqrt{\bsigma_2^2 + 
		\epsilon L_2}}{\sigma_1 }$) and $\eta$, as well as the following bound for
 $\En \brk*{\frac{1}{T}  
\sum_{t=1}^T \nrm*{\gradest[t]}^2}$:
\begin{align*}
   \En \brk*{\frac{1}{T}  \sum_{t=1}^T \nrm{\gradest[t]}^2} &\leq \En 
   \brk*{\frac{2}{T}  \sum_{t=1}^T \nrm*{\gradest[t] - \derF{x\ind{t}}{}}^2 + 
   \frac{2}{T}  \sum_{t=1}^T \nrm*{ \derF{x\ind{t} }{} }^2}  \\  
   	  &\leq  O\prn*{\epsilon^2 + \norm{\grad F(\hx{})}^2} \le O(\epsilon^2), 
\end{align*} 
where the last inequality is uses 
\pref{lem:gradient_estimator_general_lemma} and 
\pref{lem:techincal_lemma_on_Hx_hvp_eg}. 
\end{enumerate} 
Combining the bounds from \pref{eq:eps_gamma_hvp_oc1} and \pref{eq:eps_gamma_hvp_oc2} in \pref{eq:eps_gamma_hvp_oc0}, we have
\begin{align*}
   \En \brk*{M_{\mathrm{g}}} \leq O\prn*{ \frac{\Delta L_2 \sigma_1 
   \sqrt{\bsigma^2_2 + \epsilon L_2}}{\gamma^2 \epsilon^2} + 
   	\frac{\Delta (\bsigma_2^2 + \epsilon L_2)}{\gamma \epsilon^2}  +
   	 \frac{\Delta L_2^2}{\gamma^3} + \frac{  \Delta \sigma_1 
   	 \sqrt{\bsigma_2^2 + \epsilon L_2}}{\epsilon^3} + \frac{\Delta}{\eta 
   	 \epsilon^2}} \numberthis \label{eq:eps_gamma_hvp_oc3}. 
\end{align*}

 \paragraph{Bound on $M_{\mathrm{nc}}$.} Using the law of total probability 
 with the observation that \pref{alg:epsilon_gamma_hvp} enters 
 \pref{line:negative_curvature_search} only if $\Q_t = 0$,  we get 
\begin{align*} 
   \En \brk*{\sum_{t=1}^T m_{\mathrm{nc}}(t)} &=  \sum_{t=1}^T \sum_{q \in 
   \crl*{0, 1}}\Pr \prn*{\Q_t = q} \En \brk*{ m_{\mathrm{nc}}(t) \mid \Q_t = q}  
\\ 
 &= \sum_{t=1}^T \Pr \prn*{\Q_t = 0} \En 
 \brk*{ m_{\mathrm{nc}}(t) \mid \Q_t = 0}  \\ 
 &=  T \cdot (1 - p) \cdot n_H \leq O\prn*{ \frac{\Delta L_2^2}{\gamma^3}  
 \cdot n_H},  \numberthis \label{eq:eps_gamma_hvp_oc4}
\end{align*} where $n_H$ denotes the number of oracle queries made by 
$\ojaF$, the last inequality follows by bounding $T \cdot (1-p)$ as in 
\pref{eq:eps_gamma_ncd_q2}. Note that \pref{lem:ojas_algorithm} implies 
that for $\delta = \frac{\gamma}{1600L_1}$,
\begin{align*}
n_H \leq O\prn[\Bigg]{\frac{\prn*{\bar{\sigma}_2 + L_1}^2}{\gamma^2} \log^2\prn*{\frac{L_1}{\gamma}d}}.    \numberthis \label{eq:eps_gamma_hvp_oc5}
\end{align*}
Combining the above bounds for $M_{\mathrm{g}}$ and $M_{\mathrm{nc}}$ (in 
\pref{eq:eps_gamma_hvp_oc3} and \pref{eq:eps_gamma_hvp_oc4} 
respectively), we get 
\begin{align*}
   \En \brk*{M}  &\leq 20 \En \brk*{ M_{\mathrm{g}} + M_{\mathrm{nc}} } \\
   									&= O\prn*{ \frac{\Delta L_2 \sigma_1 
   									\sqrt{\bsigma^2_2 + \epsilon L_2}}{\gamma^2 
   									\epsilon^2} +  
   										\frac{\Delta (\bsigma_2^2 + \epsilon 
   										L_2)}{\gamma \epsilon^2}  +
   										\frac{\Delta L_2^2}{\gamma^3} + \frac{\Delta 
   										\sigma_1 \sqrt{\bsigma_2^2 + \epsilon 
   										L_2}}{\epsilon^3} + \frac{\Delta}{\eta \epsilon^2} 
   										+ \frac{\Delta L_2^2}{\gamma^3} \cdot n_H}.
 \end{align*} 
Plugging in the value of $\eta$ from \pref{alg:epsilon_gamma_hvp} and $n_H$ from \pref{eq:eps_gamma_hvp_oc5}, and using Markov's inequality, we get that, with probability at least $\frac{15}{16}$,  
\begin{align*}
M  &= O\Bigg(\frac{  \Delta \sigma_1 \sqrt{\bsigma_2^2 + \epsilon 
L_2}}{\epsilon^3}  +  \frac{\Delta L_2 \prn*{\sigma_1\bsigma_2 + 
\sqrt{\epsilon L_2} + \gamma \bsigma_2^2 /L_2 +\gamma 
\epsilon}}{\gamma^2 \epsilon^2} +  \frac{\Delta L^2_2}{\gamma^3} 
\prn*{\frac{\prn*{\bar{\sigma}_2 + L_1}^2}{\gamma^2} 
\log^2\prn*{\frac{L_1}{\gamma}d}} \\
   &~~~~+  O\Bigg(\frac{\Delta L_2^2}{\gamma^3} + \frac{\Delta \sqrt{L_1^2 + \bsigma_2^2 + \epsilon L_2}}{\epsilon^2} \Bigg). \numberthis \label{eq:eps_gamm_full_lc_bound} 
\end{align*}    	
Ignoring the lower-order terms, we have 
\begin{align*}
M  &= \wt{O} \prn[\Bigg]{ \frac{  \Delta \sigma_1 \bsigma_2}{\epsilon^3}  +  \frac{\Delta L_2 \sigma_1\bsigma_2}{\gamma^2 \epsilon^2} +  \frac{\Delta L^2_2 \prn*{\bsigma_2 + L_1}^2}{\gamma^5} }. 
\end{align*} 
The final statement follows by taking a union bound for the failure probability of the claims in \pref{eq:eps_gamma_hvp@high_probbaility_guarantee2} and  \pref{eq:eps_gamm_full_lc_bound}.
\end{proof}

\begin{lemma}
  \label{lem:oja_nc_descent}
  Under the setting of \pref{thm:epsilon_gamma_hvp}, we are guaranteed that
   \colt{ \begin{align*}
      &\En\brk*{\func{x\ind{t}} - \func{x\ind{t+1}}\mid{}Q_t=0}\\
      &\geq   \frac{5\gamma^3}{6L_2^2}\prn*{ \Pr\prn*{\eigmin(\nabla^2 F(x\ind{t})) \leq - 4\gamma} -\frac{2L_1}{\gamma}\Pr\prn*{\lnot\Eoja(t)\mid{}Q_t=0}}.
  \end{align*}} 
  \arxiv{ \begin{align*}
      \En\brk*{\func{x\ind{t}} - \func{x\ind{t+1}}\mid{}Q_t=0} &\geq   \frac{5\gamma^3}{6L_2^2}\prn*{ \Pr\prn*{\eigmin(\nabla^2 F(x\ind{t})) \leq - 4\gamma} -\frac{2L_1}{\gamma}\Pr\prn*{\lnot\Eoja(t)\mid{}Q_t=0}}.
  \end{align*}}
  \end{lemma}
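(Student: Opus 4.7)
The plan is to condition on the outcome of Oja's subroutine and apply a second-order Taylor expansion of $F$ about $x\ind{t}$, exploiting the fact that the Rademacher sign $r\ind{t}$ is independent of $u\ind{t}$ and kills the linear term in expectation. Since Oja is invoked with precision parameter $2\gamma$, \pref{lem:ojas_algorithm} guarantees that on the success event $\Eoja(t)$ either $u\ind{t}=\bot$ and $\derF{x\ind{t}}{2} \succeq -4\gamma I$, or $\nrm*{u\ind{t}}=1$ with $\tri*{u\ind{t},\derF{x\ind{t}}{2}u\ind{t}} \leq -2\gamma$. In the first sub-case the update is trivial, $x\ind{t+1}=x\ind{t}$, and the indicator $\indicator{\eigmin(\derF{x\ind{t}}{2})\leq -4\gamma}$ is simultaneously zero, so there is nothing to prove. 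In the second sub-case, the $L_2$-Lipschitz Hessian Taylor bound at step size $\gamma/L_2$, combined with $\En\brk*{r\ind{t}}=0$, yields
\[
\En_{r\ind{t}}\brk*{\func{x\ind{t+1}}\mid{}u\ind{t}} \leq \func{x\ind{t}} + \frac{\gamma^2}{2L_2^2}\tri*{u\ind{t},\derF{x\ind{t}}{2}u\ind{t}} + \frac{\gamma^3}{6L_2^2} \leq \func{x\ind{t}} - \frac{5\gamma^3}{6L_2^2}.
\]

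On the complementary event $\lnot\Eoja(t)$ I no longer have a curvature guarantee, so I bound the worst-case increase in $F$ via the $L_1$-smoothness inequality. The step has norm at most $\gamma/L_2$, and expectation over the independent $r\ind{t}$ again eliminates the gradient term, giving $\En_{r\ind{t}}\brk*{\func{x\ind{t+1}}\mid{}u\ind{t}}-\func{x\ind{t}} \leq \frac{L_1\gamma^2}{2L_2^2}$ (the case $u\ind{t}=\bot$ contributes zero and is trivially covered). Combining the two events by the law of total probability and using the elementary bound $\Pr\prn*{\eigmin(\derF{x\ind{t}}{2}) \leq -4\gamma \wedge \Eoja(t)\mid{}Q_t=0} \geq \Pr\prn*{\eigmin(\derF{x\ind{t}}{2}) \leq -4\gamma\mid{}Q_t=0} - \Pr\prn*{\lnot\Eoja(t)\mid{}Q_t=0}$ produces
\[
\En\brk*{\func{x\ind{t}}-\func{x\ind{t+1}}\mid{}Q_t=0} \geq \frac{5\gamma^3}{6L_2^2}\Pr\prn*{\eigmin(\derF{x\ind{t}}{2}) \leq -4\gamma\mid{}Q_t=0} - \prn*{\frac{5\gamma^3}{6L_2^2} + \frac{L_1\gamma^2}{2L_2^2}}\Pr\prn*{\lnot\Eoja(t)\mid{}Q_t=0}.
\]

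The final step is a numerical check: I verify that $\frac{5\gamma^3}{6L_2^2}+\frac{L_1\gamma^2}{2L_2^2} \leq \frac{5\gamma^3}{6L_2^2}\cdot\frac{2L_1}{\gamma} = \frac{5L_1\gamma^2}{3L_2^2}$, which reduces to $\gamma \leq \frac{7L_1}{5}$ and is therefore implied by the standing hypothesis $\gamma \leq L_1$ in \pref{thm:epsilon_gamma_hvp}. The main obstacle is just this bookkeeping of constants; the conceptual work is entirely the conditional Taylor expansion in which Rademacher symmetrization of $r\ind{t}$ annihilates the first-order term even on the failure event of Oja's method, leaving Oja's curvature certificate to do the work on the success event.
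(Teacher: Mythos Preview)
Your proposal is correct and follows essentially the same route as the paper: split on the Oja success event, use the second-order Taylor expansion with Rademacher symmetrization to get the $\frac{5\gamma^3}{6L_2^2}$ descent on success, bound the possible ascent on failure, and close with the numerical check $\gamma\le L_1$. The only cosmetic difference is that on the failure event you invoke the $L_1$-smoothness inequality directly to get $\frac{L_1\gamma^2}{2L_2^2}$, whereas the paper reuses the third-order expansion together with $\nrm{\grad^2 F}_{\op}\le L_1$ to obtain $\frac{2L_1\gamma^2}{3L_2^2}$; either bound suffices.
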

\begin{proof}
  Recall that \pref{alg:epsilon_gamma_hvp} calls $\ojaF$ with the precision parameter $2\gamma$. To begin, suppose that $\Eoja(t)$ holds. Then if $\ojaF$ returns $\bot$, then $\eigmin\prn*{\derF{x\ind{t}}{2}} \geq -4\gamma$, otherwise $\ojaF$ returns a unit vector  $u\ind{t}$ such that $\derF{x\ind{t}}{2}[u\ind{t},u\ind{t}] \leq -2\gamma$.  Thus, using \pref{lem:general_lem_curvature_descent} with $H = \derF{x\ind{t}}{2}$ and $u\ind{t}$, we conclude that---conditioned on the history up to time $t$, and on $Q_t=0$---we have
  \begin{align*}
    \indic\crl{\Eoja(t)}\prn{\func{x\ind{t}} - \func{x\ind{t+1}}} 
    &\geq   \frac{5\gamma^3}{6L_2^2} \indic\crl{\eigmin(\nabla^2 F(x\ind{t})) \leq - 4 \gamma\wedge{}\Eoja(t)}.
  \end{align*}
  In particular, this implies that
  \begin{align*}
    &\func{x\ind{t}} - \func{x\ind{t+1}} \\
    &\geq   \frac{5\gamma^3}{6L_2^2}\prn*{ \indic\crl{\eigmin(\nabla^2 F(x\ind{t})) \leq - 4\gamma} -\indic\crl{\lnot\Eoja(t)}}
      - \indic\crl{\lnot\Eoja(t)}\prn{\func{x\ind{t}} - \func{x\ind{t+1}}}.
  \end{align*}
  Taking conditional expectations, this further implies that
  \begin{align*}
    \En\brk*{\func{x\ind{t}} - \func{x\ind{t+1}}\mid{}Q_t=0} 
    &\geq   \frac{5\gamma^3}{6L_2^2}\prn*{ \Pr\prn*{\eigmin(\nabla^2 F(x\ind{t})) \leq - 4\gamma} -\Pr\prn*{\lnot\Eoja(t)\mid{}Q_t=0}}\\
    &~~~~- \En\brk*{\indic\crl{\lnot\Eoja(t)}\prn{\func{x\ind{t}} - \func{x\ind{t+1}}}\mid{}Q_t=0}.
  \end{align*}
  Now, consider the term
  \begin{align*}
    &    \En\brk*{\indic\crl{\lnot\Eoja(t)}\prn{\func{x\ind{t}} - \func{x\ind{t+1}}}\mid{}Q_t=0}\\
    &= \Pr(\lnot\Eoja(t)\mid{}Q_t=0)\cdot{}\En\brk*{\func{x\ind{t}} - \func{x\ind{t+1}}\mid{}Q_t=0,\lnot\Eoja(t)}.
  \end{align*}
  Given that $\oja{}$ fails, there are two cases two consider: The first case is where it returns $\bot$ (even though we may not have $\eigmin\prn*{\derF{x\ind{t}}{2}} \geq -4\gamma$), which we denote by $P_t=0$, and the second case is that it returns some vector $u\ind{t}$ (which may not actually satisfy $\grad^{2}F(x\ind{t})[u\ind{t},u\ind{t}]\leq-2\gamma$), which we denote $P_t=1$. If $P_t=0$, we have $x\ind{t+1}-x\ind{t}$, so \[\En\brk*{\func{x\ind{t}} - \func{x\ind{t+1}}\mid{}Q_t=0,\lnot\Eoja(t),P_t=0}=0.\] Otherwise, using a third-order Taylor expansion, and following the same reasoning as the proof of \pref{lem:general_lem_curvature_descent}, we have
  \begin{align*}
    &    \En\brk*{\func{x\ind{t}} - \func{x\ind{t+1}}\mid{}Q_t=0,\lnot\Eoja(t),P_t=1}\\
    &\leq{}    \En\brk*{
      \frac{\gamma^2}{2L_2^2}   \grad^2 \func{x} \brk{u\ind{t}, u\ind{t}} +  \frac{\gamma^3}{6L_2^2} \nrm{u\ind{t}}^3
      \mid{}Q_t=0,\lnot\Eoja(t),P_t=1}\\
        &\leq{}    
          \frac{\gamma^2}{2L_2^2}L_1 +  \frac{\gamma^3}{6L_2^2}
    \leq{}    
      \frac{2}{3}\frac{\gamma^2L_1}{L_2^2}.
  \end{align*}
Combining this bound with the earlier inequalities (and being rather loose with constants), we conclude that
  \begin{align*}
    &\En\brk*{\func{x\ind{t}} - \func{x\ind{t+1}}\mid{}Q_t=0} \\
    &\geq   \frac{5\gamma^3}{6L_2^2}\prn*{ \Pr\prn*{\eigmin(\nabla^2 F(x\ind{t})) \leq - 4\gamma} -\prn*{1+\frac{L_1}{\gamma}}\Pr\prn*{\lnot\Eoja(t)\mid{}Q_t=0}}\\
    &\geq   \frac{5\gamma^3}{6L_2^2}\prn*{ \Pr\prn*{\eigmin(\nabla^2 F(x\ind{t})) \leq - 4\gamma} -\frac{2L_1}{\gamma}\Pr\prn*{\lnot\Eoja(t)\mid{}Q_t=0}}.
  \end{align*}\end{proof}
  
\begin{lemma}
\label{lem:techincal_lemma_on_Hx_hvp_eg} Under the same setting as 
\pref{thm:epsilon_gamma_hvp}, the point $\hx$ returned by 
\pref{alg:epsilon_gamma_hvp} satisfies
\begin{align*}
   \En \brk*{\nrm{ \nabla F \prn{x\ind{t}}}^2} \leq 17 \epsilon^2.
\end{align*}
\end{lemma}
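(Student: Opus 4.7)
My plan is to obtain the bound on $\En\nrm{\grad F(\hx)}^2$ by re-examining the telescoping argument used in the proof of \pref{thm:epsilon_gamma_hvp}. Since $\hx$ is drawn uniformly from $\{x\ind{t}\}_{t=1}^{T}$, it suffices to show $\frac{1}{T}\sum_{t=1}^T\En\nrm{\grad F(x\ind{t})}^2 \le 17\eps^2$, which is essentially a byproduct of the descent inequality already established in \pref{eq:eps_gamma_ncd_q3}.

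Concretely, I would start from \pref{eq:eps_gamma_ncd_q3}, which asserts that for each $t$,
\begin{equation*}
   \En[\func{x\ind{t}} - \func{x\ind{t+1}}] \geq \frac{5(1-p)\gamma^3}{6L_2^2}\prn[\big]{\Pr(\eigmin\grad^2 F(x\ind{t})\!\le\!-4\gamma)-\tfrac{2L_1\delta}{\gamma}} + \frac{p\eta}{8}\En\nrm{\grad F(x\ind{t})}^2 - \frac{3p\eta\eps^2}{4}.
\end{equation*}
The difference compared to the proof of \pref{thm:epsilon_gamma_hvp} is that here I retain the full expectation $\En\nrm{\grad F(x\ind{t})}^2$ instead of lower bounding it by a probability. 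Next I would drop the non-negative probability term and keep only the negative $-\frac{2L_1\delta}{\gamma}$ contribution from the Oja failure event, telescope from $t=1$ to $T$ using $F(x\ind{1})-F(x\ind{T+1})\leq\Delta$, and rearrange to obtain
\begin{equation*}
   \En\Big[\tfrac{1}{T}\sum_{t=1}^{T}\nrm{\grad F(x\ind{t})}^2\Big] \leq \frac{8\Delta}{p\eta T} + \frac{(1-p)T}{p\eta T}\cdot\frac{5\gamma^{2}\cdot 2L_1\delta}{3L_2^{2}} + 6\eps^2.
\end{equation*}

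To finish, I would substitute the bounds $pT = \Omega(\Delta/(\eta\eps^2))$ and $(1-p)T = O(\Delta L_2^2/\gamma^{3})$ supplied by \pref{lem:TP_eg_hvp}, which makes the first term $O(\eps^2)$ and turns the middle term into $O(L_1\delta\eps^2/\gamma)$; the algorithm's choice $\delta=\gamma/(1600L_1)$ (in fact $\delta=\gamma/(40^2 L_2)$ in the algorithm statement, but the argument goes through with the tighter $\delta=\gamma/(1600 L_1)$ used just above \pref{eq:eps_gamma_ncd_q0}) makes this contribution a small constant times $\eps^2$. Tracking all the constants carefully should yield the claimed $17\eps^2$.

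The main obstacle is bookkeeping the constants: one must verify that the combined contribution from the $\Delta/(p\eta T)$ residual, the Oja-failure penalty, and the $\frac{3}{4}\eps^2$ per-step slack add up to at most $17\eps^2-6\eps^2 = 11\eps^2$. This is straightforward because the Oja penalty is proportional to $L_1\delta/\gamma\le 1/1600$ and the initial-suboptimality residual is bounded by $\eps^2$ via the choice of $T\ge 2\Delta/(\eta\eps^2)\ge 2\Delta/(p\eta\eps^2)\cdot p$; no new analytic input beyond what is already in the proof of \pref{thm:epsilon_gamma_hvp} is required.
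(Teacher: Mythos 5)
Your proposal follows essentially the same route as the paper's own proof: both start from the descent inequality \pref{eq:eps_gamma_ncd_q3}, keep the full $\En\nrm{\grad F(x\ind{t})}^2$ term, drop the non-negative eigenvalue-probability term while retaining the Oja-failure penalty, telescope against the initial gap $\Delta$, and finish using the bounds on $Tp$ and $T(1-p)$ from \pref{lem:TP_eg_hvp} together with the choice of $\delta$. Your parenthetical observation about the mismatch between the algorithm's $\delta=\gamma/(40^2 L_2)$ and the $\delta\le\gamma/(1600 L_1)$ used in the analysis is a fair catch of a minor inconsistency in the paper, but it does not change the argument.
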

\begin{proof}
Starting from \pref{eq:eps_gamma_ncd_q3} in the proof of \pref{thm:epsilon_gamma_hvp}, we have
\begin{align*}
&  \En \brk*{\func{x\ind{t}} - \func{x \ind{t+1}}}  \\&\geq
\frac{5(1-p)\gamma^3}{6L_2^2} \prn*{\Pr\prn*{\eigmin(\nabla^2 F(x\ind{t})) \leq - 4 \gamma}-\frac{2L_1}{\gamma}\delta}   + p \prn*{   		 \frac{\eta}{8} \En \brk*{\nrm{ \nabla F \prn{x\ind{t}}}^2} -  \frac{3\eta \epsilon^2}{4}}.	\end{align*}
Ignoring the positive term $\Pr\prn*{\eigmin(\nabla^2 F(x\ind{t})) \leq - 4 \gamma}$ on the right hand side in the above, we get 
\begin{align*}
   \En \brk*{\func{x\ind{t}} - \func{x \ind{t+1}}}  &\geq \frac{p \eta}{8} \prn*{  \En \brk*{\nrm{ \nabla F \prn{x\ind{t}}}^2} -  6\epsilon^2} - \frac{5(1-p)\gamma^3}{3L_2^2} \frac{L_1}{\gamma}\delta.
 \end{align*}
 Telescoping this inequality for $t$ from $1$ to $T$ and using that $F(x\ind{1}) - F(x\ind{T+1}) \leq \Delta$, we get 
\begin{align*}
  \Delta &\geq \frac{T p \eta }{8} \prn*{  \En \brk*{\nrm{ \nabla F \prn{\xhat}}^2} -  6\epsilon^2} - T\frac{5(1-p)\gamma^3}{3L_2^2} \frac{L_1}{\gamma}\delta\geq \frac{\Delta}{4\epsilon^2} \prn*{  \En \brk*{\nrm{ \nabla F \prn{\xhat}}^2} -  12 \epsilon^2}
           - 70\Delta\frac{L_1}{\gamma}\delta,
\end{align*}
where the last inequality follows from \pref{lem:TP_eg_hvp}. Rearranging the terms, we get
   \[
     \En \brk*{\nrm{ \nabla F \prn{\xhat}}^2}  \leq 16 \epsilon^2 + 280\eps^{2}\cdot{}\frac{L_1}{\gamma}\delta
     \leq{} 17\eps^{2},
     \]
     where the last inequality uses that 
     $\delta\leq{}\frac{\gamma}{1600L_1}$. 
\end{proof}

\begin{lemma} 
\label{lem:TP_eg_hvp}
For the values of the parameters $T$ and $p$ specified in \pref{alg:epsilon_gamma_hvp},
\begin{align*}
   \frac{2\Delta}{\eta \epsilon^2} \leq T p \leq \frac{4\Delta}{\eta \epsilon^2}, \quad \text{and,} \quad  \frac{20 \Delta L_2^2}{\gamma^3} \leq T (1 - p) \leq  \frac{40 \Delta L_2^2}{\gamma^3}.                                                                                                                                                                                                                                            \end{align*}
\end{lemma}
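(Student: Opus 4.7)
The claim is essentially a routine algebra check about how closely the ceiling $T=\lceil A+B\rceil$ and the mixing probability $p$ implement the desired budget split, so the plan is to reduce everything to two elementary quantities and then bound the rounding error.

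The plan is to introduce shorthand
\[
A \;\defeq\; \frac{20\Delta L_2^2}{\gamma^3}, \qquad B \;\defeq\; \frac{2\Delta}{\eta \epsilon^2},
\]
so that the parameter choices in \pref{alg:epsilon_gamma_hvp} read simply $T=\lceil A+B\rceil$ and (after clearing the factor $\gamma^3$ in numerator and denominator) $p=\frac{B}{A+B}$, $1-p=\frac{A}{A+B}$. This last identity is the content of the first step: verify the algebraic equivalence between the formula $p=\frac{\gamma^3}{\gamma^3+10 L_2^2\eta\epsilon^2}$ used in the algorithm and the ratio $B/(A+B)$, which is immediate once one multiplies numerator and denominator by $\eta\epsilon^2/(2\Delta)$ and rearranges.

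Given this representation, both lower bounds are instant: from $T\geq A+B$ we get $Tp\geq (A+B)\cdot\frac{B}{A+B}=B$ and $T(1-p)\geq A$, which are precisely the left inequalities $Tp\geq\frac{2\Delta}{\eta\epsilon^2}$ and $T(1-p)\geq\frac{20\Delta L_2^2}{\gamma^3}$. For the upper bounds, $T\leq A+B+1$ gives
\[
Tp \;\leq\; (A+B+1)\cdot\frac{B}{A+B} \;=\; B\prn[\Big]{1+\frac{1}{A+B}}, \qquad T(1-p) \;\leq\; A\prn[\Big]{1+\frac{1}{A+B}},
\]
so both upper bounds follow as soon as one establishes $A+B\geq 1$, which makes the bracket at most $2$.

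The only nontrivial step, and hence the main obstacle, is checking $A+B\geq 1$ from the hypotheses of \pref{thm:epsilon_gamma_hvp}. I would handle this by using $B$ alone: the theorem assumes $\epsilon\leq\sqrt{\Delta L_1}$ and the algorithm takes $\eta\leq\frac{1}{2\sqrt{L_1^2+\bar\sigma_2^2+\epsilon L_2}}\leq\frac{1}{2L_1}$, so $B=\frac{2\Delta}{\eta\epsilon^2}\geq \frac{2\Delta\cdot 2L_1}{\Delta L_1}=4$, which suffices. (If one wants to be conservative, the weaker sufficient condition $B\geq 1$, equivalent to $\eta\epsilon^2\leq 2\Delta$, holds by an even coarser chain of estimates.) With $A+B\geq 1$ in hand the four inequalities assemble into the stated two-sided bounds, completing the proof.
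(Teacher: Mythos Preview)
Your proposal is correct and follows essentially the same route as the paper: both arguments rewrite $p$ and $1-p$ as the ratios $B/(A+B)$ and $A/(A+B)$, use the assumptions $\eta\le\tfrac{1}{2L_1}$ and $\epsilon\le\sqrt{\Delta L_1}$ to show $B=\tfrac{2\Delta}{\eta\epsilon^2}\ge 4$, and then bound the ceiling (the paper via $x\le\lceil x\rceil\le 2x$ for $x\ge 1$, you via $\lceil x\rceil\le x+1$ together with $A+B\ge 1$, which are equivalent here). The only cosmetic discrepancy is that the displayed formula for $p$ in \pref{alg:epsilon_gamma_hvp} carries an extra $\Delta$ in the denominator; your identity $p=B/(A+B)$ (and the paper's own computation) requires the version without it, so you have implicitly corrected a typo.
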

\begin{proof} 
Since, $\eta \leq \frac{1}{2\sqrt{L_1^2 + \bsigma_2^2 + \epsilon L_2}} \leq \frac{1}{2L_1}$ and $\epsilon \leq \sqrt{\Delta L_1}$, we have that 
\begin{align*}
  T \geq \frac{2\Delta}{\eta \epsilon^2} &\geq \frac{4 \Delta L_1}{\epsilon^2} \geq 4. 
\end{align*}
Thus, using the fact that $x \leq \ceil*{x} \leq 2x$ for all $x \geq 1$, we get
\begin{align} 
  \frac{20 \Delta L_2^2}{\gamma^3 } + \frac{2\Delta}{\eta 
		\epsilon^2}  \leq T \leq \frac{40 \Delta L_2^2}{\gamma^3 } + \frac{4\Delta}{\eta 
		\epsilon^2} \label{eq:ceil_bound_eg_hvp}. 
\end{align} Consequently, by plugging in the values of $T$ and $p$, we have 
\begin{align*}
   T (1 - p) &= \ceil*{\frac{20 \Delta L_2^2}{\gamma^3 } + \frac{2\Delta}{\eta 
		\epsilon^2}} \cdot \prn*{ 1 - \frac{\gamma^3}{\gamma^3 + 10 \Delta L_2^2 
		\eta \epsilon^2}} \\ 
				&\leq  \prn*{\frac{40 \Delta L_2^2}{\gamma^3 } + \frac{4\Delta}{\eta 
		\epsilon^2}} \cdot \prn*{ \frac{10 \Delta L_2^2 \eta \epsilon^2}{\gamma^3 + 10 \Delta L_2^2 
		\eta \epsilon^2} }  = \frac{40 \Delta L_2^2}{\gamma^3}, 
\end{align*} where the first inequality is due to \pref{eq:ceil_bound_eg_hvp}. Similarly, we have that 
\begin{align*}
   T (1 - p) &\geq  \prn*{\frac{20 \Delta L_2^2}{\gamma^3 } + \frac{2\Delta}{\eta 
		\epsilon^2}} \cdot \prn*{ \frac{10 \Delta L_2^2 \eta \epsilon^2}{\gamma^3 + 10 \Delta L_2^2 
		\eta \epsilon^2} }  = \frac{20 \Delta L_2^2}{\gamma^3}. 
\end{align*} Together, the above two bounds imply that 
\begin{align*}
   \frac{20 \Delta L_2^2}{\gamma^3} \leq T (1 - p) \leq  \frac{40 \Delta L_2^2}{\gamma^3}. 
\end{align*} The bound on $T \cdot p$ follows similarly. 
\end{proof} 

{\subsection{Full statement and proof for Algorithm \ref*{alg:eg_cubic}}} \label{app:eg_cubic_upper}
\begin{algorithm}[H] 
   \caption{Subsampled cubic-regularized trust-region method with \mainalg} 
	\label{alg:eg_cubic}
	\begin{algorithmic}[1] 
		\Require 
		\Statex~~~~Stochastic second-order oracle $(\estimator[2], P_z) \in \stocpOracle[2]$, where $F \in \cF_2\prn*{\Delta, \infty, L_2}$.
		\Statex~~~~Precision parameter $\epsilon$. 
		\State Set $M = 4\max \crl*{L_2, \frac{\sigma_2^2 \epsilon \log(d)}{\sigma_1^2}}$, $\eta = 30 \sqrt{\frac{\epsilon}{M}}$, $T = \ceil*{\frac{18 \Delta L_2^2}{\gamma^3} + \frac{\Delta\sqrt{M}}{30 \epsilon^{{3/2}}}}$, $p = \frac{\sqrt{M} \gamma^{3/2} }{\sqrt{M} \gamma^{3/2} + 540 L_2^2\epsilon^{3/2}}$. 
		\State\label{line:parameter_setting_eg_cubic1}Set $\hessn{1} = \ceil*{ \frac{2 \cdot 10^4 \cdot \sigma_2^2 \log(d)}{\epsilon M}}$, $\hessn{2} = \ceil*{\frac{440\sigma_2^2 \log(d)}{\gamma^2}}$.
		\State Set $\biasg = \min\crl*{1, \frac{\eta \sqrt{\sigma_2^2 + \epsilon L_2}}{30 \sigma_1}}$ and $\biash = \min\crl*{1, \frac{\gamma \sqrt{\sigma_2^2 + \epsilon L_2}}{\sigma_1 L_2}}$.
		\State Initialize $x\ind{0}, x\ind{1} \leftarrow 0$, $\gradest[1] \leftarrow \getgradientfunc_{\newestparams,\biasg}\prn*{x\ind{1}, x\ind{0}, \bot}$.
		\For {$t = 1 ~\text{to}~ T$} 
		\State Sample $\Q_t\sim$ Bernoulli($p$) with bias $p$. 
		\If {$\Q_t = 1$}{} 
		\State \label{line:Hessian_estimator_eg_cubic1}  Query the oracle 
		$\hessn{1}$ times at $x \ind{t}$ and compute 
		\begin{equation*}
			\hessest[t]_1 \leftarrow \frac{1}{\hessn{1}} \sum_{j=1}^{\hessn{1}} \stocder{x\ind{t}, z\ind{t, j}}{2},\quad\text{where}\quad{}z\ind{t, j}\overset{\mathrm{i.i.d.}}{\sim}P_z. 
		\end{equation*} 
		\State \label{line:constrained_step_eg_cubic} Set the next point $x\ind{t+1}$ as \begin{equation*}  
		\hspace{0.5in}x\ind{t+1} \gets \argmin_{\nrm{y -x\ind{t}}\leq\eta} 
		\tri[\big]{\gradest[t], y - x\ind{t}} + \frac{1}{2}\tri[\big]{y-x\ind{t}, 
		\hessest[t]_1(y - x\ind{t})} + \frac{M}{6} 
		\nrm{y-x\ind{t}}^{3}. \end{equation*}
		\State $\gradest[t +1] \leftarrow \getgradientfunc_{\newestparams, \biasg}\prn*{x\ind{t +1}, x\ind{t}, g\ind{t}}$.\label{line:gradient_estimation_eg_cubic}
 		\Else 
 		\State \label{line:Hessian_estimator_eg_cubic2}  Query the oracle 
 		$\hessn{2}$ times at $x \ind{t}$ and compute 
		\begin{equation*}
			\hessest[t]_2 \leftarrow \frac{1}{\hessn{2}} \sum_{j=1}^{\hessn{2}} \stocder{x\ind{t}, z\ind{t, j}}{2},\quad\text{where}\quad{}z\ind{t, j}\overset{\mathrm{i.i.d.}}{\sim}P_z. 
		\end{equation*}
		\If {$\eigmin \prn*{H\ind{t}_2} \leq - 4\gamma$} 
  			\State Find a unit vector $u\ind{t}$ such that $\hessest[t]_2\brk*{u\ind{t}, u\ind{t}} \leq - 2\gamma$.
  			\State $x\ind{t+1} \leftarrow x\ind{t} + \frac{\gamma}{L_2} \cdot{}r\ind{t}\cdot u\ind{t}$, where $r\ind{t} \sim \text{Uniform}\prn*{\crl*{-1, 1}}$.   \label{line:negative_curvature_update_eg_cubic}
			\State $\gradest[t +1] \leftarrow \getgradientfunc_{\newestparams, \biash}\prn*{x\ind{t +1}, x\ind{t}, g\ind{t}}$.\label{line:gradient_estimation_eg_cubic2}
		\Else
			\State $x\ind{t+1} \leftarrow x\ind{t}$.\label{line:negative_curvature_update_eg_cubic_equality}
			\State $\gradest[t+1] \leftarrow \gradest[t]$.
		\EndIf
		\EndIf 
		\EndFor 
		\State \textbf{return} $\hx$ chosen uniformly at random from 
		$\crl*{x\ind{t}}_{t=1}^{T-1}$. 
\end{algorithmic} 
\end{algorithm} 

\colt{\begin{theorem}
\label{thm:eg_cubic_upper} For any function $F \in \cF_2(\Delta, \infty, 
L_2)$, stochastic second order oracle in $\cO_{2}(F, \sigma_1, \sigma_2)$, 
$\epsilon \leq \sigma_1$, and $\gamma \leq \min\crl{\sigma_2, 
\sqrt{\epsilon{}L_2}, \Delta^{\frac{1}{3}} L_2^{\frac{2}{3}}}$, with probability at least $\frac{3}{5}$, \pref{alg:eg_cubic} returns a point $\xhat$ such that  
   \[
\nrm*{\grad \func{\xhat}} \leq \epsilon \quad\text{ 
   and} \quad\lambda_{\min{}} \prn*{\grad^2 \func{\xhat}} \geq -\gamma,
\]
 and performs at most
 \begin{align*}
\widetilde{O} \prn*{\frac{  \Delta \sigma_1 \sigma_2}{\epsilon^3}  +  
\frac{\Delta L_2 \sigma_1\sigma_2}{\gamma^2 \epsilon^2} +  \frac{\Delta 
L^2_2 \sigma_2^2}{\gamma^5}}
\end{align*}
 stochastic gradient and Hessian queries.
\end{theorem}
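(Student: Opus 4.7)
The plan is to mirror the proof of \pref{thm:epsilon_gamma_hvp}, but use the cubic-regularized trust-region descent in place of plain SGD and use direct empirical Hessian subsampling in place of Oja's method for negative curvature detection. For each iteration $t$, I would split the analysis into two cases according to the value of $Q_t$. When $Q_t = 1$, the algorithm performs a constrained cubic step with estimates $(\gradest[t], \hessest[t]_1)$ of $(\grad F(x\ind{t}), \grad^2 F(x\ind{t}))$, so \pref{lem:epsilon_cubic_descent_epsilon_step} (combined with the gradient error bound $\E\|\gradest[t]-\grad F(x\ind{t})\|^2 \le \eps^2$ from \pref{lem:gradient_estimator_general_lemma} and the Hessian error bound $\E\|\hessest[t]_1 - \grad^2 F(x\ind{t})\|_{\op}^2 \le 22 \sigma_2^2 \log d/n_{H,1} = O(\eps M)$ from \pref{lem:hess_error_gen_lemma}, in view of $n_{H,1}$) yields
\[
\E\brk*{F(x\ind{t}) - F(x\ind{t+1}) \mid Q_t=1} \;\ge\; \frac{M\eta^3}{60} \Pr\!\prn*{\|\grad F(x\ind{t+1})\| \ge \tfrac{M\eta^2}{2}} - O\!\prn*{\tfrac{\eps^{3/2}}{\sqrt{M}}},
\]
exactly as in the proof of \pref{thm:epsilon_cubic}. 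When $Q_t = 0$, I would invoke \pref{lem:general_lem_curvature_descent} with $H = \hessest[t]_2$: the choice $n_{H,2} = \widetilde{O}(\sigma_2^2/\gamma^2)$ ensures (again via \pref{lem:hess_error_gen_lemma} and Jensen) that $\E\|\hessest[t]_2-\grad^2 F(x\ind{t})\|_{\op} \le \gamma/\sqrt{20}$, and therefore
\[
\E\brk*{F(x\ind{t})-F(x\ind{t+1}) \mid Q_t=0} \;\ge\; \frac{5\gamma^3}{6L_2^2}\Pr\!\prn*{\eigmin(\grad^2 F(x\ind{t})) \le -4\gamma} - O\!\prn*{\tfrac{\gamma^3}{L_2^2}},
\]
with the second term dominated by the first up to a small constant.

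Next I would combine the two cases using $\Pr(Q_t=1)=p$ and $\Pr(Q_t=0)=1-p$, telescope over $t=1,\ldots,T$, and bound the left-hand side by $\Delta$. Substituting $M\eta^3 = O(\eps^{3/2}/\sqrt{M})$ times a constant and using the calibration $p = \frac{\sqrt{M}\gamma^{3/2}}{\sqrt{M}\gamma^{3/2} + 540 L_2^2 \eps^{3/2}}$, $T = \ceil{\tfrac{18\Delta L_2^2}{\gamma^3} + \tfrac{\Delta\sqrt{M}}{30\eps^{3/2}}}$ (chosen so the two descent rates balance in the same style as \pref{lem:TP_eg_hvp}), I obtain
\[
\frac{1}{T}\sum_{t=1}^{T}\brk*{ \Pr\!\prn*{\|\grad F(x\ind{t+1})\|\ge \tfrac{M\eta^2}{2}} + \Pr\!\prn*{\eigmin(\grad^2 F(x\ind{t})) \le -4\gamma}} \;\le\; \text{const}.
\]
Picking $\hat x$ uniformly from $\{x\ind{t}\}$ and Markov's inequality then give $\|\grad F(\hat x)\| \le O(\eps)$ (since $M\eta^2 = O(\eps)$) and $\eigmin(\grad^2 F(\hat x)) \ge -O(\gamma)$ with constant probability, which after a standard change of variables absorbs into $\eps$ and $\gamma$.

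For the oracle complexity, I would decompose the total count $M_{\mathrm{tot}} = M_H + M_g$ where $M_H$ is the number of Hessian queries in \pref{line:Hessian_estimator_eg_cubic1} and \pref{line:Hessian_estimator_eg_cubic2}, and $M_g$ is the number of queries made inside the \mainalg calls on lines \ref{line:gradient_estimation_eg_cubic} and \ref{line:gradient_estimation_eg_cubic2}. The Hessian contribution is $\E[M_H] \le Tp\cdot n_{H,1} + T(1-p)\cdot n_{H,2}$; plugging in the tight bounds on $Tp$ and $T(1-p)$ (a direct analogue of \pref{lem:TP_eg_hvp} using the new $p,T$) produces $\widetilde O(\Delta \sigma_1\sigma_2/\eps^3 + \Delta L_2 \sigma_1\sigma_2/(\gamma^2\eps^2) + \Delta L_2^2\sigma_2^2/\gamma^5)$. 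For the gradient contribution I apply \pref{lem:gradient_estimator_oracle_complexity_meta_lemma} separately in each case, exactly as in the proof of \pref{thm:epsilon_gamma_hvp}: under $Q_t=0$ the displacement is at most $\gamma/L_2$, yielding a term of order $\Delta L_2 \sigma_1 \sigma_2/(\gamma^2 \eps^2)$, while under $Q_t=1$ the displacement is at most $\eta$ and the bound $\E\tfrac{1}{T}\sum \|\gradest[t]\|^2 = O(\eps^2)$ (from \pref{lem:gradient_estimator_general_lemma} plus a technical lemma analogous to \pref{lem:techincal_lemma_on_Hx_hvp_eg}) gives $O(\Delta\sigma_1\sigma_2/\eps^3)$ after substituting the choices of $\biasg$, $\biash$, and $\eta$.

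The main obstacle I anticipate is getting the constants in the parameter calibration right so that: (i) the Hessian error terms in both the cubic descent lemma and the negative curvature descent lemma are dominated by the corresponding descent terms; (ii) $Tp$ and $T(1-p)$ bracket the two ``natural'' iteration counts ($\Delta\sqrt{M}/\eps^{3/2}$ and $\Delta L_2^2/\gamma^3$) simultaneously; and (iii) the $O(\eps)$ target for $\|\grad F(\hat x)\|$ emerges from the condition $\|\grad F(\hat x)\|\le M\eta^2/2$ without degrading the complexity. Once the parameter algebra is in place, the remainder is bookkeeping analogous to \pref{thm:epsilon_cubic} and \pref{thm:epsilon_gamma_hvp}, finally closing out via a union bound over the Markov-type estimates for the stationarity claim and the oracle-complexity claim.
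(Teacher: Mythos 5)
Your proposal is correct and follows essentially the same route as the paper's proof: the same case split on $Q_t$, the same invocations of \pref{lem:epsilon_cubic_descent_epsilon_step} and \pref{lem:general_lem_curvature_descent} with the error bounds from \pref{lem:gradient_estimator_general_lemma} and \pref{lem:hess_error_gen_lemma}, the same telescoping with a $Tp$/$T(1-p)$ calibration lemma (\pref{lem:TP_eg_cubic} in the paper), and the same decomposition of the oracle count via \pref{lem:gradient_estimator_oracle_complexity_meta_lemma}. The remaining work you flag is indeed just the constant bookkeeping the paper carries out.
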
} 
\arxiv{\newpage \begin{proof}[Proof of  \pref{thm:eg_cubic_upper}]}
\colt{\begin{proof}}
We first show that \pref{alg:eg_cubic} returns a point $\hx$ such that, $\nrm*{ \nabla F(\hx)}  \leq 450 \epsilon$ and $\eigmin\prn*{\derF{\hx}{2}} \geq -4\gamma$. We then bound the expected number of oracle queries used throughout the execution.

\noindent
Before we delve into the proof, first note that using \pref{lem:gradient_estimator_general_lemma}, we have for all $t \geq 1$, 
\begin{align*} 
 \En \brk*{ \nrm*{\nabla  F\prn{ x \ind{t}} - \gradest[t]}^2 } \leq \epsilon^2. 
\end{align*}
Further, using  \pref{lem:hess_error_gen_lemma} with our choice of $\hessn{1}$ and $\hessn{2}$, we have, for all $t \geq 1$,  
\begin{align*} 
\En \brk*{ \nrm*{ \nabla^2  F\prn{ x \ind{t}} - \hessest[t]_1}^2_\op } \leq \frac{\epsilon M}{900}, \quad \text{and,} 
 \quad \En \brk*{ \nrm*{ \nabla^2  F\prn{ x \ind{t}} - \hessest[t]_2}^2_\op } \leq \frac{\gamma^2}{20}.  \numberthis \label{eq:epsilon_cubic_hess_var_bound}   
\end{align*} 

To begin the proof, we observe that for any $t \geq 0$, there are two scenarios: (a) either $\Q_t = 1$ and the algorithm goes through \pref{line:constrained_step_eg_cubic}, or, (b) $\Q_t = 0$ and the algorithm goes through \pref{line:negative_curvature_update_eg_cubic}. We analyze the two cases separately below. 
\begin{enumerate}[label=(\alph*)]
\setlength{\itemindent}{-.12in} 
\item \textbf{Case 1: $\Q_t = 1$.} In this case, we set $x\ind{t+1}$ using the update rule in \pref{line:constrained_step_eg_cubic}. Invoking \pref{lem:epsilon_cubic_descent_epsilon_step} with the bound in \pref{eq:epsilon_cubic_hess_var_bound} and $\eta = 30 \sqrt{\frac{\epsilon}{M}}$, we get
\begin{align}
      \En \brk*{F\prn{x\ind{t}} -  F\prn{x\ind{t+1}} \lmid \Q_t = 1} 
      &\geq  \frac{450 \epsilon^{3/2}}{\sqrt{M}}  \prn*{ \Pr \Big( \nrm*{\derF{x\ind{t+1}}{}} \geq 450 \epsilon \Big) - \frac{1}{32}}.  \label{eq:eg_cubic_proof3}
 \end{align} 

\item \textbf{Case 2: $\Q_t = 0$.} In this case, either $\eigmin\prn*{H\ind{t}_2} > -4\gamma$, in which case we set $x\ind{t+1} = x\ind{t}$, or we compute $x\ind{t+1}$ using the update rule in \pref{line:negative_curvature_update_eg_cubic} in \pref{alg:eg_cubic}. Thus, using  \pref{lem:general_lem_curvature_descent} with \pref{eq:epsilon_cubic_hess_var_bound}, we get 
\begin{align} 
		 \En \brk*{ \func{x\ind{t}} - \func{x\ind{t+1}} \lmid \Q_t = 0} \geq \frac{5\gamma^3}{6L_2^2} \prn*{ \Pr \prn*{\eigmin\prn*{H\ind{t}_2} \leq  \gamma} - \frac{1}{32}}. \label{eq:eg_cubic_proof4}
\end{align} 
\end{enumerate}
Combining the two cases ($\Q_t = 0$ or $\Q_t = 1$)  from \pref{eq:eg_cubic_proof3} and \pref{eq:eg_cubic_proof4} above, we get 
\begin{align*}
   \En \brk*{\func{x\ind{t}} - \func{x \ind{t+1}}}  &=  \sum_{q \in \crl*{0, 1}} \Pr \prn{\Q_t = q} \En \brk*{\func{x\ind{t}} - \func{x \ind{t+1}} \mid{} \Q_t = q} \\ 
   		&\overgeq{} (1 - p) \cdot \frac{5\gamma^3}{6L_2^2} \prn*{ \Pr\prn*{\eigmin(\nabla^2 F(x\ind{t})) \leq -4\gamma} - \frac{1}{32}} \\
   		&\qquad \qquad + p \cdot \frac{450 \epsilon^{3/2}}{\sqrt{M}} \prn*{  \Pr \prn*{ \nrm*{\derF{x\ind{t+1}}{}} \geq 450 \epsilon}  - \frac{1}{32}}. \end{align*} 
Telescoping the inequality above for $t$ from 0 to $T-1$, and using the bound $\En \brk*{\func{x\ind{0}} - \func{x \ind{T}}} \leq \Delta$, we get  
\begin{align*}
\Delta  &\overgeq{}    \En \brk*{\func{x\ind{0}} - \func{x \ind{T}}} \\ 
			&\geq \frac{5 T  (1 - p) \gamma^3}{6L_2^2}  \prn*{ \frac{1}{T} \sum_{t=0}^{T-1} \Pr\prn*{\eigmin(\derF{x\ind{t}}{2}) \leq -4\gamma} - \frac{1}{32}} \\
			& \qquad \qquad\qquad  + \frac{450 Tp\epsilon^{3/2}}{\sqrt{M}} \prn*{ \frac{1}{T} \sum_{t=1}^{T} \Pr \prn*{ \nrm*{\derF{x\ind{t}}{}} \geq 450 \epsilon}  - \frac{1}{32}} \\ 
   		&\overgeq{\proman{1}}15 \Delta \prn*{  \frac{1}{T}  { \sum_{t=0}^{T-1} \Pr\prn*{\eigmin(\derF{x\ind{t}}{2}) \leq -4\gamma} +  \frac{1}{T} \sum_{t=1}^{T} \Pr \prn*{ \nrm*{\derF{x\ind{t}}{}} \geq 450 \epsilon}}  - \frac{1}{8}}  \\	
   		&\overgeq{\proman{2}}15 \Delta \prn*{  \frac{5}{6(T-1)} \sum_{t=1}^{T-1} \prn*{  \Pr\prn*{\eigmin(\derF{x\ind{t}}{2}) \leq -4\gamma} + \Pr \prn*{ \nrm*{\derF{x\ind{t}}{}} \geq 450 \epsilon}}  - \frac{1}{8}}  \\ 
   		&\overgeq{\proman{3}}15 \Delta \prn*{  \frac{5}{6} \prn*{  \Pr\prn*{\eigmin(\derF{\hx}{2}) \leq -4\gamma} + \Pr \prn*{ \nrm*{\derF{\hx}{}} \geq 450 \epsilon}}  - \frac{1}{8}}, \numberthis \label{eq:eg_cubic_proof5}
\end{align*} where the inequality in $\proman{1}$ follows from \pref{lem:TP_eg_cubic}. The inequality in $\proman{2}$ is given by ignoring the (non-negative) terms  $\Pr \prn*{\derF{x\ind{0}}{2} \leq -4 \gamma}$ and $ \Pr \prn*{ \nrm*{\derF{x\ind{T}}{}} \geq 450 \epsilon}$ on the right-hand side and using the fact that $T \geq 6$. Finally, $\proman{3}$ follows by recalling the definition of $\hx$ as samples uniformly at random from the set $\prn{x\ind{t}}_{t=1}^{T-1}$. Rearranging the terms, we get
\begin{align*}
\Pr\prn*{\eigmin(\derF{\hx}{2}) \leq -4\gamma} + \Pr \prn*{ \nrm*{\derF{\hx}{}} \geq 450 \epsilon} \leq \frac{1}{4}, \intertext{which further implies that the returned point $\hx$ satisfies}
    \Pr \prn*{\eigmin(\nabla^2 F(\hx)) \geq - \gamma \wedge \nrm*{\nabla F(\hx)} \leq 450 \epsilon} &\geq  \frac{3}{4}. \numberthis \label{eq:eg_cubic_hp_guarantee}
\end{align*} 

\paragraph{Bound on the number of oracle queries.}
Let us first introduce some notation to count the number of oracle calls made in each iteration of the algorithm.
\begin{itemize}
  \item On \pref{line:gradient_estimation_eg_cubic} and \pref{line:gradient_estimation_eg_cubic2}, \pref{alg:eg_cubic} queries the stochastic oracle through the subroutine $\getgradientfunc$. Let $m_g(t)$ denote the 
    total number of oracle queries resulting from either line at iteration $t$.
    \item Let 
$m_{h, 1}(t)$ and $m_{h, 2}(t)$ denote the total number of oracle calls made 
by \pref{line:Hessian_estimator_eg_cubic1} and  
\pref{line:Hessian_estimator_eg_cubic2} at iteration $t$ to compute $H\ind{t}_1$ and 
$H\ind{t}_2$ respectively.
\end{itemize}
Define $M_g$, $M_{h, 1}$ and $M_{h, 2}$ 
by $\sum_{t=1}^{T} m_g(t)$, $\sum_{t=1}^{T} m_{h, 1}(t)$ and 
$\sum_{t=1}^{T} m_{h, 2}(t)$ respectively. In what follows, we give separate bounds for $\En \brk*{M_g}$, $\En \brk*{M_{h, 1}}$ and $\En \brk*{M_{h, 2}}$. The final statement on the total number of oracle calls follows by an application of Markov's inequality. 

\paragraph{Bound on $\En \brk*{M_g}$.}  For any $t > 0$, there are two scenarios, either  (a) $\Q_{t} = 1$ and we update $x\ind{t+1}$ through \pref{line:constrained_step_eg_cubic}, or (b) $\Q_{t} = 0$ and we update $x\ind{t+1}$ through \pref{line:negative_curvature_update_eg_cubic} or\pref{line:negative_curvature_update_eg_cubic_equality}.  Thus, using the law of total expectation
\begin{align*}
 \En \brk*{M_g} &= {\sum_{t=0}^{T-1} \Pr \prn*{\Q_t = 0} \En \brk*{m_g(t) \mid \Q_{t} = 0}}
   +  {\sum_{t=0}^{T-1} \Pr \prn*{\Q_t = 1} \En \brk*{m_g(t) \mid \Q_t = 1}}. \numberthis \label{eq:eg_cubic_part_complexity0}
\end{align*} 
We denote the two terms on the right hand side above by $\markedterm{A}$ and $\markedterm{B}$, respectively. We bound them separately in as follows.
\begin{enumerate}[label=(\alph*)]
\setlength{\itemindent}{-.13in}
\item \textbf{Bound on $\markedterm{A}$.} Using \pref{lem:gradient_estimator_oracle_complexity_meta_lemma} with the fact that $\Pr \prn*{\Q_t = 0}  = 1 - p$, we get 
\begin{align*}
\markedterm{A}  &\overeq{}  6 \sum_{t =1}^{T} (1 - p) \cdot \En \brk*{ \frac{\biash \sigma_1^2}{\epsilon^2} + \frac{(\sigma_2^2 + L_2 \epsilon) \cdot \nrm*{x\ind{t +1} - x\ind{t}}^2}{\biash \epsilon^2} + 1  \lmid \Q_t = 0} \\ 
					   &\overeq{\proman{1}}  6 T (1 - p) \cdot \prn*{ \frac{\biash \sigma_1^2}{\epsilon^2} + \frac{(\sigma_2^2 + L_2 \epsilon) \cdot \gamma^2}{\biash \epsilon^2 L_2^2} + 1 } \\ 
					   &\overeq{\proman{2}} O\prn*{ \frac{\Delta L_2^2}{\gamma^3}  \cdot \prn*{\frac{\biash \sigma_1^2}{\epsilon^2} + \frac{(\sigma_2^2 + L_2 \epsilon) \cdot \gamma^2}{\biash \epsilon^2 L_2^2} + 1}} \\
					   &\overeq{\proman{3}} O\prn*{ \frac{\Delta L_2 \sigma_1 \sqrt{\sigma^2_2 + \epsilon L_2}}{\gamma^2 \epsilon^2} + \frac{\Delta \prn*{\sigma_2^2 + \epsilon L_2}}{\gamma \epsilon^2} + \frac{\Delta L_2^2}{\gamma^3}}, \numberthis \label{eq:eg_cubic_part_complexity1}
\end{align*} where $\proman{1}$ holds because when $Q_t=0$, we either have $\nrm*{x\ind{t} - x\ind{t-1}} \leq \frac{\gamma}{L_2}$ (if we follow the update rule in \pref{line:negative_curvature_update_eg_cubic}) or $\nrm*{x\ind{t} - x\ind{t-1}}=0$ (if we follow \pref{line:negative_curvature_update_eg_cubic_equality}). The inequality $\proman{2}$ uses the bound on $T \cdot (1 - p)$ from \pref{lem:TP_eg_cubic} and $\proman{3}$ follows from plugging in the value of $\biash$.

\item \textbf{Bound on $\markedterm{B}$.} Using \pref{lem:gradient_estimator_oracle_complexity_meta_lemma} with the definition $\Pr \prn*{\Q_t = 1}  = p$, we get 
\begin{align*}
   \markedterm{B} &\overeq{}  6 \sum_{t=1}^{T} p \cdot \En \brk*{ {\frac{\biasg \sigma_1^2}{\epsilon^2} + \frac{(\sigma_2^2 + L_2 \epsilon) \cdot \nrm*{x\ind{t + 1} - x\ind{t}}^2}{\biasg \epsilon^2} + 1} \lmid \Q_t = 1} \\ 
   		&\overeq{\proman{1}} 6  T p \cdot \prn*{ {\frac{\biasg \sigma_1^2}{\epsilon^2} + \frac{(\sigma_2^2 + L_2 \epsilon) \cdot \eta^2}{\biasg \epsilon^2} + 1}} \\
   		& \overeq{\proman{2}}  O\prn*{ \frac{\Delta \sqrt{M}}{\epsilon^{1.5}} \cdot \prn*{ {\frac{\biasg \sigma_1^2}{\epsilon^2} + \frac{(\sigma_2^2 + L_2 \epsilon) \cdot \eta^2}{\biasg \epsilon^2} + 1}}} \\ 
   		& \overeq{\proman{3}} O\prn*{\frac{\Delta \sigma_1 \sqrt{\sigma_2^2 + \epsilon L_2}}{\epsilon^3} +  \frac{\Delta \sqrt{M}}{\epsilon^{1.5}}} \\ 
   		&\overeq{\proman{4}} O\prn*{\frac{\Delta \sigma_1 \sqrt{\sigma_2^2 + \epsilon L_2}}{\epsilon^3} +  \frac{\Delta \sqrt{L_2}}{\epsilon^{1.5}} + \frac{\Delta \sigma_2 \sqrt{\log(d)}}{\epsilon^2}}, \numberthis \label{eq:eg_cubic_part_complexity2}
\end{align*} where $\proman{1}$ is given by the update rule from \pref{line:constrained_step_eg_cubic} and the fact that $\getgradientfunc$ uses parameter $\biasg$ in this case, and $\proman{2}$ follows by using the bound on $T \cdot p$ from \pref{lem:TP_eg_cubic}. The inequality $\proman{3}$ follows because for the choice of parameters $\eta$ and $M$ and the assumed range of $\eps$ in the theorem statement, $\biasg = \frac{\eta \sqrt{\sigma_2^2 + \epsilon L_2}}{\sigma_1} < 1$. Finally, the inequality $\proman{4}$ is given by plugging in the value of $M$ and using that $\eps\leq{}\sigma_1$. 
\end{enumerate}

Plugging the bound in \pref{eq:eg_cubic_part_complexity1} and \pref{eq:eg_cubic_part_complexity2} back in \pref{eq:eg_cubic_part_complexity0}, we get 
\begin{align*}
   \En \brk*{M_g} &= O\prn*{\frac{\Delta L_2 \sigma_1 \sqrt{\sigma^2_2 + \epsilon L_2}}{\gamma^2 \epsilon^2} + \frac{\Delta \prn*{\sigma_2^2 + \epsilon L_2}}{\gamma \epsilon^2} +  \frac{\Delta L_2^2}{\gamma^3}} \\ 
   & \qquad \qquad + O\prn*{\frac{\Delta \sigma_1 \sqrt{\sigma_2^2 + \epsilon L_2}}{\epsilon^3} +  \frac{\Delta \sqrt{L_2}}{\epsilon^{1.5}} + \frac{\Delta \sigma_2 \sqrt{\log(d)}}{\epsilon^2}}. \numberthis \label{eq:eg_cubic_part_complexity3}
\end{align*}

\paragraph{Bound on $\En \brk*{M_{H, 1}}$.}  For each $t \geq 0$,  
\pref{alg:eg_cubic} samples an independent Bernoulli $\Q_t$ with bias 
$\En \brk*{Q_t} = p$ and executes \pref{line:Hessian_estimator_eg_cubic1} 
if $\Q_t = 1$. For every such pass through 
\pref{line:Hessian_estimator_eg_cubic1}, the algorithm queries the 
stochastic Hessian oracle $\hessn{1}$ times. Thus, \begin{align*} 
   \En \brk*{M_H} &=  \En \brk*{ \sum_{t=0}^{T-1} \indicator{\Q_t = 1} \cdot  \hessn{1}} = T \cdot p \cdot \hessn{1} \\ &\overeq{\proman{1}} O\prn*{  \frac{\Delta \sqrt{M}}{\epsilon^{1.5}} \cdot \ceil*{\frac{900 \sigma_2^2 \log(d)}{\epsilon M}}} = O\prn*{\frac{\Delta \sqrt{L_2}}{\epsilon^{1.5}} + \frac{\Delta \sigma_1 \sigma_2 \sqrt{\log(d)}}{\epsilon^{3}}}, \numberthis \label{eq:eg_cubic_part_complexity4}
   \end{align*} 
where $\proman{1}$ follows by plugging in the values of $\hessn{1}$ and $M$ as specified in \pref{alg:eg_cubic} (using that $\eps\leq\sigma_1$ to simplify), and using the bound on  $T \cdot p$ from  \pref{lem:TP_eg_cubic} . 

\paragraph{Bound on $\En \brk*{M_{H, 2}}$.} The algorithm executes \pref{line:Hessian_estimator_eg_cubic2} 
only if $\Q_t = 0$, which happens with probability $1-p$. For every such pass through 
\pref{line:Hessian_estimator_eg_cubic2}, the algorithm queries the 
stochastic Hessian oracle $\hessn{2}$ times. Consequently,
\begin{align*} 
   \En \brk*{M_H} &=  \En \brk*{ \sum_{t=0}^{T-1} \indicator{\Q_t = 0} \cdot \hessn{1}} = T \cdot (1 - p) \cdot \hessn{1} \\ &\overeq{\proman{1}} O\prn*{  \frac{\Delta L_2^2}{\gamma^3} \cdot \ceil*{\frac{20 \sigma_2^2 \log(d)}{\gamma^2}}} = O\prn*{  \frac{\Delta L_2^2 \sigma_2^2 \log(d)}{\gamma^5} + \frac{\Delta L_2^2}{\gamma^3}} , \numberthis \label{eq:eg_cubic_part_complexity5}
   \end{align*} 
where $\proman{1}$ follows by plugging in the values of $\hessn{1}$ as specified in \pref{alg:eg_cubic}, and using the bound on  $T \cdot p$ from  \pref{lem:TP_eg_cubic}. 

Adding together all the bounds above (from \pref{eq:eg_cubic_part_complexity3}, \pref{eq:eg_cubic_part_complexity4}, and \pref{eq:eg_cubic_part_complexity5}), we have that the total number of oracle queries by \pref{alg:eg_cubic} till time $T$ is bounded in expectation by
\begin{align*}
\begin{aligned}
   \En \brk*{M} &= \En \brk*{M_g + M_{H, 1} + M_{H, 2}} \\
   						&= O\Bigg(\frac{\Delta L_2^2 \sigma_2^2 \log(d)}{\gamma^5} + \frac{\Delta L_2 \sigma_1 \sqrt{\sigma^2_2 + \epsilon L_2}}{\gamma^2 \epsilon^2}  + \frac{\Delta \sigma_1 \sigma_2  \sqrt{ \log(d)}}{\epsilon^3} + \frac{\Delta \sigma_1 \sqrt{L_2}}{\epsilon^{2.5}}\Bigg) \\ 
   						&~~~~ \qquad + O\Bigg( \frac{\Delta \prn*{\sigma_2^2 + \epsilon L_2}}{\gamma \epsilon^2} + \frac{\Delta L_2^2}{\gamma^3} +  \frac{\Delta \sigma_2 \sqrt{\log(d)}}{\epsilon^{2}} +  \frac{\Delta \sqrt{L_2}}{\epsilon^{1.5}} \Bigg).
\end{aligned}				
\end{align*}
Using Markov's inequality, this implies that with probability at least $\frac{7}{8}$, 
\begin{align*}
  M  &= O\Bigg(\frac{\Delta L_2^2 \sigma_2^2 \log(d)}{\gamma^5} + \frac{\Delta L_2 \sigma_1 \sqrt{\sigma^2_2 + \epsilon L_2}}{\gamma^2 \epsilon^2}  + \frac{\Delta \sigma_1 \sigma_2  \sqrt{ \log(d)}}{\epsilon^3} + \frac{\Delta \sigma_1 \sqrt{L_2}}{\epsilon^{2.5}}\Bigg) \\ 
   						&~~~~ \qquad + O\Bigg( \frac{\Delta \prn*{\sigma_2^2 + \epsilon L_2}}{\gamma \epsilon^2} + \frac{\Delta L_2^2}{\gamma^3} +  \frac{\Delta \sigma_2 \sqrt{\log(d)}}{\epsilon^{2}} +  \frac{\Delta \sqrt{L_2}}{\epsilon^{1.5}} \Bigg). 
\end{align*}Ignoring the lower order terms, we have 
\begin{align*}
  M  &= \widetilde{O}\Bigg(\frac{\Delta L_2^2 \sigma_2^2}{\gamma^5} + \frac{\Delta L_2 \sigma_1 \sigma_2}{\gamma^2 \epsilon^2}  + \frac{\Delta \sigma_1 \sigma_2}{\epsilon^3} \Bigg). \numberthis \label{eq:eg_cubic_full_bound}
\end{align*} The final statement follows by union bound, using the failure probabilities for \pref{eq:eg_cubic_hp_guarantee} and \pref{eq:eg_cubic_full_bound}. 
\end{proof} 

\begin{lemma} 
\label{lem:TP_eg_cubic}
For the values of the parameters $T$ and $p$ specified in \pref{alg:eg_cubic}, we have
 \begin{align*}
   \frac{\Delta \sqrt{M}}{30 \epsilon^\frac{3}{2}} \leq T p \leq     \frac{2\Delta \sqrt{M}}{30 \epsilon^\frac{3}{2}} \quad \text{and} \quad  \frac{18 \Delta L_2^2}{\gamma^3} \leq T (1 - p) \leq  \frac{36 \Delta L_2^2}{\gamma^3}.
\end{align*} \end{lemma}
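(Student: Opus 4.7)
The plan is to follow exactly the template of \pref{lem:TP_eg_hvp}: unpack the ceiling in $T$, use the fact that $p$ is chosen to exactly allocate iterations between the cubic step and the negative-curvature step, and then convert the resulting products into the stated sandwiches. Introduce the shorthand $A \defeq \frac{18\Delta L_2^2}{\gamma^3}$ and $B \defeq \frac{\Delta\sqrt{M}}{30\epsilon^{3/2}}$, so that $T=\lceil A+B \rceil$. The key observation is that $p$ and $1-p$ are designed so that, up to constants, $\frac{p}{1-p}$ matches $\frac{B}{A}$; more precisely, multiplying through by $\frac{\gamma^3\epsilon^{3/2}}{\Delta}$ shows
\[
1-p \;=\; \frac{540\, L_2^2\epsilon^{3/2}}{\sqrt{M}\,\gamma^{3/2}+540\,L_2^2\epsilon^{3/2}}, \qquad \frac{A}{A+B} \;=\; \frac{540 \,L_2^2 \epsilon^{3/2}}{\sqrt{M}\,\gamma^{3}+540 \,L_2^2 \epsilon^{3/2}},
\]
and these coincide up to the parameter scaling built into $p$ in \pref{alg:eg_cubic}; verifying this identity (or the analogous $p=\tfrac{B}{A+B}$ relation used in \pref{lem:TP_eg_hvp}) is the first algebraic step.

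Next, I would show $T\geq 1$, which requires a lower bound on $A+B$. The hypothesis $\gamma\leq \Delta^{1/3}L_2^{2/3}$ in \pref{thm:eg_cubic_upper} gives $\gamma^3\leq \Delta L_2^2$, hence $A\geq 18\geq 1$; consequently $\lceil A+B\rceil \leq A+B+1 \leq 2(A+B)$. Combined with the trivial bound $T\geq A+B$, I get
\[
(A+B)(1-p) \;\leq\; T(1-p) \;\leq\; 2(A+B)(1-p).
\]
Plugging in the identity $(A+B)(1-p)=A$ from the previous step yields $A\leq T(1-p)\leq 2A$, which is exactly the second claimed inequality $\tfrac{18\Delta L_2^2}{\gamma^3}\leq T(1-p)\leq \tfrac{36\Delta L_2^2}{\gamma^3}$.

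The bound on $Tp$ proceeds identically: $(A+B)p=B$ by construction, and the same ceiling sandwich gives $B\leq Tp\leq 2B$, i.e.\ $\tfrac{\Delta\sqrt{M}}{30\epsilon^{3/2}}\leq Tp\leq \tfrac{2\Delta\sqrt{M}}{30\epsilon^{3/2}}$. The only nontrivial ingredient is the identity $(A+B)(1-p)=A$ (equivalently $(A+B)p=B$); the rest is manipulation of the ceiling. The main obstacle I anticipate is that the form of $p$ written in \pref{alg:eg_cubic} and the form of $T$ must be compatible up to the specified constants, which requires tracking factors of $540=18\cdot 30$ and reconciling the powers of $\gamma$; if any of these factors is only off by a constant, the lemma still goes through with a slightly larger constant, but one must be careful to propagate the exact constants stated in the lemma. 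Once the identity is confirmed, everything else is an application of $x\leq \lceil x\rceil \leq 2x$ for $x\geq 1$.
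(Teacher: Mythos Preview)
Your approach is exactly the paper's: write $A=\tfrac{18\Delta L_2^2}{\gamma^3}$, $B=\tfrac{\Delta\sqrt{M}}{30\epsilon^{3/2}}$, use $x\le\lceil x\rceil\le 2x$ for $x\ge 1$ (justified via $\gamma\le\Delta^{1/3}L_2^{2/3}$), and reduce to the identity $(A+B)(1-p)=A$. Your instinct about ``reconciling the powers of $\gamma$'' is on target: as written in \pref{alg:eg_cubic}, $p$ has $\gamma^{3/2}$ in it, whereas the identity requires $\gamma^{3}$ (your own displayed computation of $1-p$ versus $\tfrac{A}{A+B}$ shows the mismatch). The paper's proof simply asserts the equality $(2A+2B)(1-p)=2A$ without flagging this; it goes through verbatim once $p$ is read as $\tfrac{\sqrt{M}\gamma^{3}}{\sqrt{M}\gamma^{3}+540L_2^2\epsilon^{3/2}}$, which is evidently the intended definition.
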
 \begin{proof}
Under the assumption that  $\gamma \leq \Delta^{\frac{1}{3}} L_2^{\frac{2}{3}}$, we have that 
\begin{align*}
  T \geq \frac{18 \Delta L_2^2}{ \gamma^3} &\geq 18. 
\end{align*}
Thus, using the fact that $x \leq \ceil*{x} \leq 2x$ for any $x \geq 1$, we get
\begin{align}
\frac{18 \Delta L_2^2}{\gamma^3} + \frac{\Delta\sqrt{M}}{30 \epsilon^{\frac{3}{2}}} \leq T \leq \frac{36 \Delta L_2^2}{\gamma^3} + \frac{\Delta\sqrt{M}}{15 \epsilon^{\frac{3}{2}}} \label{eq:ceil_bound_eg_hvp2}. 
\end{align} Thus, plugging in the value of $T$ and $p$, we get
\begin{align*}
   T (1 - p) &= \ceil*{ \frac{18 \Delta L_2^2}{\gamma^3} + \frac{\Delta\sqrt{M}}{30 \epsilon^{\frac{3}{2}}}} \cdot \prn*{ 1 - \frac{\sqrt{M} \gamma^\frac{3}{2} }{\sqrt{M} \gamma^\frac{3}{2} + 540 L_2^2\epsilon^\frac{3}{2}} } \\ 
				&\leq  \prn*{\frac{36 \Delta L_2^2}{\gamma^3} + \frac{\Delta\sqrt{M}}{15 \epsilon^{\frac{3}{2}}}} \cdot { \frac{540 L_2^2\epsilon^\frac{3}{2}}{\sqrt{M} \gamma^\frac{3}{2} + 540 L_2^2\epsilon^\frac{3}{2}} }   = \frac{36 \Delta L_2^2}{\gamma^3}, 
\end{align*} where the first inequality is due to \pref{eq:ceil_bound_eg_hvp2}. Similarly, we have that 
\begin{align*}
   T (1 - p) &\geq \ceil*{\frac{18 \Delta L_2^2}{\gamma^3} + \frac{\Delta\sqrt{M}}{30 \epsilon^{\frac{3}{2}}}} \cdot { \frac{540 L_2^2\epsilon^\frac{3}{2}}{\sqrt{M} \gamma^\frac{3}{2} + 540 L_2^2\epsilon^\frac{3}{2}}}  = \frac{18 \Delta L_2^2}{\gamma^3}. 
\end{align*} Together, the above two bounds imply that
\begin{align*}
   \frac{18 \Delta L_2^2}{\gamma^3} \leq T (1 - p) \leq  \frac{36 \Delta L_2^2}{\gamma^3}
\end{align*} The bound on $T \cdot p$ follows similarly. 
\end{proof}

\section{Lower bounds}
\label{app:lower}

\subsection{Proof of Theorem \ref*{thm:eps_so_zero_respecting}}
\label{app:epsilon_lower}

In this section, we prove \pref{thm:eps_so_zero_respecting}. We begin by generalizing the
lower bound framework of \cite{arjevani2019lower}---which centers
around the notion of zero-respecting algorithms and stochastic
gradient estimators called \emph{probabilistic zero-chains}---to higher-order derivatives.
	Given a $q$th-order tensor $\tensT\in\R^{\otimes^q d }$, we define
	$\support{\{\tensT\}}\defeq \{i\in[d]~|~\tensT_i\neq 0\}$, 
	where $\tensT_i$ is the $(q-1)$-order subtensor defined by 		
	$[\tensT_i]_{j_1,\dots,j_{q-1}}=T_{i,j_1,\dots,j_{q-1}}$. Given a 
	tuple of tensors $\cT = \prn*{\tensT^{(1)}, \tensT^{(2)},\dots}$, we let 
	$\support{\{\cT\}}\defeq \bigcup_{i} \support\{\tensT^{(i)}\}$ be the 
	union 
	of the supports of $\tensT^{(i)}$. Lastly, given an algorithm $\alg$ and a 
	an oracle $\oracle_{F}^p$, we let $x\ind{t}_{\alg\brk{\oracle_{F}^{p}}}$ denote 
	the (possibly randomized) $t$th query point generated by $\alg$ when fed 
	by 
	information from $\oracle$ (i.e.,
        $x\ind{t}_{\alg\brk{\oracle_{F}^{p}}}$ is a measurable
        function of $\crl*{\oracle_{F}^p(x\ind{i},z\ind{i})}_{i=1}^{t-1}$,
        and possibly a random seed $r\ind{t}$).
	\begin{definition}
		A stochastic $p$th-order algorithm $\alg$ is \emph{zero-respecting} 
		if for any function $F$ and any $p$th-order oracle $\oracle_F^p$, the 
		iterates  
		$\{x\ind{t}\}_{t\in\N}$ produced by $\alg$ by querying
		$\oracle_F^p$ satisfy
		\begin{equation}\label{eq:zero_respecting}
		\support\prn[\big]{x\ind{t}}\subseteq
		\bigcup_{i < t} 
		\support\prn[\big]{\oracle_F^p(x\ind{i},z\ind{i})},~  
		\text{ for all } t\in\N,
		\end{equation}
		with probability one with respect the randomness of the algorithm and the 
		realizations of $\{z\ind{t}\}_{t\in\N}$.
	\end{definition}
	Given $x\in\R^d$, we define
	\begin{equation}
	\label{eq:progress}
	\prog_{\alpha}(x) \defeq \max\crl*{i\ge 0 \mid \abs{x_i} > 
		\alpha}~~\mbox{(where we set $x_0\defeq 1$)},
	\end{equation}
	which represents the highest index of $x$ whose entry is $\alpha$-far from zero, for 
	some threshold $\alpha\in[0,1)$. To lighten notation, we further let 
	$\prog\defeq \prog_0$. For a 
	tensor $\tensT$, we let $\prog(\tensT)\defeq \max\crl{ \support{\{\tensT\}}}$ 
	denote the highest index in $\support{\{\tensT\}}$ (where 
	$\prog(\tensT)\defeq0$ if $\support\{\tensT\}=\emptyset$), and let 
	$\prog(\cT)\defeq \max_{i}\prog(\tensT^{(i)})$ be the overall maximal index 
	of $\prog(T^{(i)})$ for a tuple of tensors $\cT=\prn*{\tensT^{(1)}, 
	\tensT^{(2)},\dots}$. 
	\begin{definition}
		\label{def:prob_p_zero_chain}
		A collection of derivative estimators 
		$\pest{1}(x,z),\dots,\pest{p}(x,z)$ for a function $F$
		forms a probability-$\pchain$ zero-chain if 
		\begin{align*}
		\Pr \prn*{ \exists x~\mid~ \prog\prn{\pest{1}(x, z), \dots, 
		\pest{p}(x, 	z)}
		= \progf{1}{4}(x) + 1} &\leq \pchain \intertext{and} \Pr \prn*{ \exists
		x~\mid~ \prog\prn{\pest{1}(x, z), \dots, \pest{p}(x,	z)} = 	
		\progf{1}{4}(x) +
		i} &=0, ~i>1.
		\end{align*}
		No constraint is imposed for  $i \leq 	\progf{1}{4}(x)$.
	\end{definition}
	We note that the constant $1/4$ is used here for compatibility with the 
	analysis in \citet[Section 3]{arjevani2019lower}. Any non-negative constant 
	less than $1/2$ would suffice in its place. The next lemma formalizes the 
	idea that 
	any zero-respecting algorithm interacting with a probabilistic zero-chain 
	must wait many rounds to activate all the coordinates. 
	
	\begin{lemma}
		\label{lem:prob-zero-chain}
		Let $\pest{1}(x,z),\dots,\pest{p}(x,z)$ be a 
		collection of probability-$\pchain$ zero-chain derivative estimators 
		for $F:\bbR^{T}\to\bbR$, and let $\stocOhigh_F$ be an oracle 
		with $\stocOhigh_F(x,z)=(\pest{q}(x,z))_{q\in[p]}$. Let 
		$\crl[\big]{x\ind{t}_{\alg[\oracle_F]}}$ be a sequence of queries 
		produced by $\alg\in\AlgZR(K)$ interacting with $\stocOhigh_F$. Then, 
		with probability at least $1-\delta$, 
		\begin{equation*}
		\prog\prn*{	x\ind{t}_{\alg[\stocOhigh_F]}
		} < T, \quad\text{for all } t \leq{} 
		\frac{T-\log(1/\delta)}{2\pchain}.
		\end{equation*}
	\end{lemma}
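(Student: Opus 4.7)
The plan is to reduce the statement to a concentration bound on a sum of iid Bernoulli$(\pchain)$ random variables that dominates the maximum progress of any oracle answer. First, I would introduce the process $M_s \defeq \max_{i\le s} \prog\prn{\stocOhigh_F(x\ind{i},z\ind{i})}$, which tracks the highest coordinate any oracle answer has activated so far. The zero-respecting property of $\alg$ immediately gives $\prog(x\ind{t})\le M_{t-1}$, because the support of $x\ind{t}$ is contained in the union of supports of previous oracle answers. Combined with the trivial inequality $\prog_{1/4}(x)\le \prog(x)$, this guarantees $\progf{1}{4}(x\ind{t})\le M_{t-1}$ for every $t$.

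Second, I would exploit the fact that the zero-chain property of \pref{def:prob_p_zero_chain} is uniform in $x$: for each $t$, there is an event $\mathcal{E}_t$ depending only on the random seed $z\ind{t}$, with $\Pr(\mathcal{E}_t)\le \pchain$, such that $\prog\prn{\stocOhigh_F(x,z\ind{t})}\le \progf{1}{4}(x)$ for \emph{all} $x$ on $\mathcal{E}_t^c$, and $\prog\prn{\stocOhigh_F(x,z\ind{t})}\le \progf{1}{4}(x)+1$ always. Defining the indicators $B_t\defeq \indicator{\mathcal{E}_t}$, we obtain $M_t-M_{t-1}\le B_t$ and therefore $M_t \le \sum_{i=1}^{t} B_i$. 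Because the $z\ind{t}$ are drawn independently of each other (and of the algorithm's internal randomness), the $B_t$ are iid Bernoulli$(\pchain')$ for some $\pchain'\le\pchain$, and can be stochastically dominated by iid Bernoulli$(\pchain)$ variables.

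Finally, a standard Chernoff bound on $S_t\defeq \sum_{i=1}^t B_i$ closes the argument. Setting $t^*=\lfloor (T-\log(1/\delta))/(2\pchain)\rfloor$, we have $\E[S_{t^*}]\le (T-\log(1/\delta))/2$; a multiplicative Chernoff estimate (or a direct calculation using $\Pr(S_{t^*}\ge T)\le e^{-T}\,\E[e^{S_{t^*}}]$ with the mgf $(1-\pchain+\pchain e)^{t^*}\le e^{t^*\pchain(e-1)}$) yields $\Pr(S_{t^*}\ge T)\le \delta$. Since $\prog(x\ind{t})\le M_{t-1}\le S_{t^*}$ for all $t\le t^*$, the conclusion follows by a union bound over the failure event.

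The main obstacle, in my view, will be the coupling step rather than the concentration step. One must verify that the ``bad event'' $\mathcal{E}_t$ can be taken to depend on $z\ind{t}$ alone, independently of the adaptive choice of $x\ind{t}$; this is exactly what the uniform-over-$x$ quantifier in \pref{def:prob_p_zero_chain} buys us, but care is needed to formalize it alongside the algorithm's internal randomization $r\ind{t}$ and the measurable dependence $x\ind{t}=x\ind{t}(z\ind{1:t-1},r\ind{1:t})$. A clean way to handle this is to condition on the filtration generated by $(z\ind{1:t-1},r\ind{1:t})$, note that $\mathcal{E}_t$ is measurable with respect to $\sigma(z\ind{t})$ and hence independent of this filtration, and then apply the concentration bound to the resulting iid Bernoulli sequence.
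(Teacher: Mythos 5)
Your proof is correct and follows essentially the same route as the paper's: track the maximum progress index of the oracle answers, use the zero-respecting property to bound the queries' progress by it, bound the per-round increments using the zero-chain property, and close with a Chernoff bound yielding the factor $(1-\pchain+\pchain e)^{t}\le e^{2\pchain t}$. The only (minor) difference is that you extract genuinely i.i.d.\ indicators $B_t$ from the uniform-over-$x$ quantifier in \pref{def:prob_p_zero_chain} and dominate the increments by them, whereas the paper runs a conditional Chernoff argument on the increments directly via the filtration $\cG^{(i)}$; both are valid and yield the same bound.
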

	The proof of \pref{lem:prob-zero-chain} is a simple adaptation of the proof 
	of Lemma 1 of \cite{arjevani2019lower} to high-order 
	zero-respecting 
	methods---we provide it here for completeness. The proof idea is that any 
	zero-respecting algorithm must activate coordinates in sequence, and must 
	wait on average at least $\Omega({1}/{\rho})$ rounds between activations, 
	leading to a total wait time of $\Omega({T}/{\rho})$ rounds. 
	\newcommand{\gammac}{\delta}
	\begin{proof}
		Let $\{\pest{q}(x^{(i)},z^{(i)})\}_{q\in[p]}$ denote the oracle responses 
		for the $i$th query made at the point $x^{(i)}$, and let $\cG^{(i)}$ be 
		the natural filtration for the algorithm's iterates, the oracle randomness, and the 
		oracle answers up to time $i$. We measure the progress of the algorithm 
		through two quantities: 
		\begin{align*}
		\pi^{(t)} &\ldef{} \max_{i \leq t} \prog 
		\prn*{x^{(i)}} = 	\max \crl*{j \leq d \mid{} x_j^{(i)} \neq 0 \text{ 
		for some } 	i \leq t  }, \\
		\gammac^{(t)} &\ldef{}  \max_{i \leq t}  \prog
		\prn*{\grad^{q} F(x^{(i)}, z^{(i)}) }\\ 
		&= \max \crl*{ j \leq d \mid 			\grad^{q} 
		f(x^{(i)}, \zi)_j 
			\neq 0 \text{ for some } i 
			\leq t  \text{ and } q\in[p]} .
		\end{align*}
		
		Note that $\pi^{(t)}$ is the largest non-zero coordinate in 
		$\support\{\prn{x^{(i)}}_{i \leq t}\}$, and that $\pi^{(0)}=0$ and 
		$\gammac^{(0)}=0$. Thus, for any zero-respecting 
		algorithm 
		\begin{equation}
		\pi^{(t)} \leq \gammac^{(t-1)}, 
		\label{eq:gamma_pi_1}  
		\end{equation}
		for all $t$. Moreover, observe that with probability one,
		\begin{equation}
		\prog\prn*{\grad^{q} F(x^{(t)}, z^{(t)}) } \leq 
		1 + \prog_\frac{1}{4}(x^{(t)}) \leq 
		1 + \prog(x^{(t)}) \leq 1 + \pi^{(t)} \leq 1 + 
		\gammac^{(t-1)},  
		\label{eq:gamma_pi_2} 
		\end{equation}
		where the first inequality follows by the zero-chain property. Further, 
		using the $\pchain$-zero chain property, it follows that conditioned 
		on $\cG^{(i)}$, with probability at least $1- \pchain$, 
		\begin{equation}
		\prog\prn*{\nabla^{q} F(x^{(t)}, z^{(t)}) } \leq 
		\prog_\frac{1}{4}(x^{(t)}) \leq 
		\prog(x^{(t)}) \leq  \pi^{(t)} \leq  
		\gammac^{(t-1)}.
		\label{eq:gamma_pi_3}
		\end{equation}
		Combining \pref{eq:gamma_pi_2} and \pref{eq:gamma_pi_3}, we 
		have that conditioned on $\cG^{(i-1)}$,
		\begin{align*}
		\gammac^{(t-1)} \leq  \gammac^{(t)} \leq 
		\gammac^{(t-1)} 
		+ 1 
		\qquad 
		\text{and} \qquad \Pr \brk*{\gammac^{(t)} = 
		\gammac^{(t-1)} + 
		1 } \leq \rho. 
		\end{align*}
		Thus, denoting the increments $\iota^{(t)} \ldef{} 
		\gammac^\bt - 
		\gammac^{(t-1)}$, we 
		have 	via the Chernoff method,  
		\begin{align*}
		\Pr \brk*{ \gammac^\bt \geq T} 
		&= \Pr \brk*{ \sum_{j=1}^t \iota^{(j)} \geq T} 
		\leq \frac{\En \brk*{\exp\prn*{\sum_{j=1}^t 
		\iota^{(j)}}}}{\exp(T)} 
		= e^{-T} \En \brk*{ \prod_{i=1}^{t} \En 
		\brk*{\exp\prn*{ \iota^{(i)}} \mid \cG^{(i-1)}}}\\
		&\leq e^{-T} \prn*{ 1 - \rho+ \rho\cdot e}^t \leq e^{2\rho{}t - 
		T}.
		\end{align*}
		Thus, $ \Pr \brk*{ \gammac^\bt \geq T}  \leq 
		\delta$ for all $t \leq 
		\frac{T - \log({1}/{\delta})}{2\rho}$; combined with 		
		\pref{eq:gamma_pi_1}, this yields the desired result. 
	\end{proof} 
	
	In light of \pref{lem:prob-zero-chain}, our lower bound
        strategy is as follows. We construct a function $F\in\HOSF$ that both admits probability-$\pchain$ 
	zero-chain derivative estimators and has large gradients for all 
	$x\in\bbR^T$ with $\prog\prn*{x\ind{i}} < T$. Together with 
	\pref{lem:prob-zero-chain}, this ensures that any zero-respecting algorithm 
	interacting with a $p$th-order oracle must perform $\Omega(T/\pchain)$ steps to make the gradient of $F$ 
	small. We make this approach concrete by adopting the construction 
	used in 
		\cite{arjevani2019lower}, and adjusting it so as to be consistent 
		with the additional high-order Lipschitz and variance parameters.  For 
		each $T\in\bbN$, we define
		\begin{equation} \label{eq:hard_function_eps_LB}
		\Funscaled(x) \defeq -\Psi(1)\Phi(x_1) +
		\sum_{i=2}^{T}\brk*{\Psi(-x_{i-1})\Phi(-x_i) -
			\Psi(x_{i-1})\Phi(x_i)},
		\end{equation}
		where the component functions $\Psi$ and $\Phi$ are
		\begin{equation}
		\Psi(x) = \left\{
		\begin{array}{ll}
		0,\quad&x\leq{}1/2,\\
		\exp\prn*{1-\frac{1}{(2x-1)^{2}}},\quad&x>1/2
		\end{array}
		\right.\quad\quad\text{and}\quad\quad\Phi(x) =
		\sqrt{e}\int_{-\infty}^{x}e^{-\frac{1}{2}t^{2}}dt.\label{eq:psi_phi}
		\end{equation}
	We start by collecting some 
	relevant properties of $\Funscaled$.
	\begin{lemma}[\citet{carmon2019lower_i}]
		\label{lem:deterministic-construction} The function
		$\Funscaled$ satisfies:
		\begin{enumerate}
			\item \label{item:val} $\Funscaled(0) -
			\inf_{x}\Funscaled(x) \leq \Delta_0\cdot T$, where 
			$\Delta_0 = 12$.
			
			\item \label{item:lip} For $p\ge1$, the $p$th 
			order derivatives of  $\Funscaled$ are  $\lip{p}$-Lipschitz 
			continuous, where $\lip{p}\le e^{\frac{5}{2}p \log p + 	cp}$ for a 
			numerical constant	$c<\infty$. 

			\item \label{item:grad} For all $x\in\bbR^{T}$, $p\in\N$
			and $i\in[T]$, we have $\nrm*{\grad^p_i \Funscaled(x)}_{\op}\le 
			\ell_{p-1}$.
 
 			\item \label{item:zero-chain} For all $x\in\R^T$ 
			and $p\in\N$, 			
			$\prog\prn*{\grad^{p}\Funscaled(x)}\leq{}\prog_{\frac{1}{2}}(x)+1$.
			 
			\item \label{item:large-grad} For all $x\in\R^T$, 
			if $\prog_1(x)<T$ then $\nrm*{\grad\Funscaled(x)} \ge 
			|\grad_{\prog_{1}(x)+1}\Funscaled(x)| > 1$.
		\end{enumerate}
		
	\end{lemma}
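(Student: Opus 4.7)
The plan is to verify each of the five items by exploiting the separable pairwise structure of $\Funscaled$, writing it as a sum over summands $-\Psi(\pm x_{i-1})\Phi(\pm x_i)$ each depending on only two consecutive coordinates. Since this lemma is attributed to \citet{carmon2019lower_i}, the proof in that paper is the reference, and I would largely reduce each part to uniform estimates on the one-dimensional building blocks $\Psi$ and $\Phi$ (and their derivatives). Concretely: for item~\ref{item:val}, I would observe that $\abs{\Psi(u)}\le 1$ and $\abs{\Phi(u)}\le\sqrt{2\pi e}$, which bound each of the $T$ summands uniformly by a universal constant $\Delta_0$; the bound on $\Funscaled(0)-\inf\Funscaled$ then follows by summation, with $\inf \Funscaled$ attained as all coordinates are sent to large positive values so that $\Psi(x_{i-1})\Phi(x_i)$ dominates.

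For item~\ref{item:lip} and item~\ref{item:grad}, any mixed \pth partial of a summand $\Psi(\pm x_{i-1})\Phi(\pm x_i)$ factors as $\Psi^{(k)}(\pm x_{i-1})\Phi^{(p-k)}(\pm x_i)$ for some $k\in\{0,\dots,p\}$, so the task reduces to bounding $\nrm{\Psi^{(k)}}_\infty$ and $\nrm{\Phi^{(k)}}_\infty$. For $\Phi$, derivatives are Hermite-polynomial multiples of a Gaussian density, yielding $\nrm{\Phi^{(k)}}_\infty \le e^{O(k\log k)}$. For $\Psi$, the essential singularity at $x=1/2$ requires more care; I would bound $\Psi^{(k)}$ either via Faà di Bruno applied to $u\mapsto e^{1-u^{-2}}\circ(2x-1)$ combined with the fact that $u^{-2}$ has derivatives of size $k!\cdot u^{-2-k}$, or---cleaner---via Cauchy's integral formula on a contour where $\Psi$ is analytic and bounded, giving $\nrm{\Psi^{(k)}}_\infty \le e^{O(k\log k)}$. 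Multiplying the two bounds and summing over the at most two summands touching coordinate $i$ yields the $\lip{p-1}$ bound on $\nrm{\grad^p_i \Funscaled}_\op$, and in turn the Lipschitz constant $\lip{p}=e^{(5/2)p\log p + cp}$ since each entry of $\grad^p\Funscaled$ is touched by at most two summands and only a constant number of entries per coordinate are nonzero.

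Item~\ref{item:zero-chain} is the heart of the zero-chain property. The key observation is that $\Psi$ and all its derivatives vanish on $(-\infty,1/2]$, so $\Psi^{(k)}(\pm x_{i-1})\neq 0$ requires $|x_{i-1}|>1/2$, i.e., $i-1\le \prog_{1/2}(x)$. Any partial derivative of $\Funscaled$ whose support contains an index $j$ must come from the $j$th or $(j+1)$th summand; in the latter case it contains a factor $\Psi^{(k)}(\pm x_j)$ with $k\ge 1$, which is zero unless $|x_j|>1/2$. Tracing this through shows $\prog(\grad^p \Funscaled(x))\le \prog_{1/2}(x)+1$.

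Finally, for item~\ref{item:large-grad}, set $j=\prog_1(x)+1$, so that $|x_j|\le 1$ while $|x_{j-1}|>1$ (with the convention $x_0=1$). The partial $\grad_j\Funscaled(x)$ picks up a single dominant term proportional to $-\Psi(\pm x_{j-1})\Phi'(\pm x_j)$ from the $j$th summand, plus a term from the $(j+1)$st summand proportional to $\Psi'(\pm x_j)\Phi(\pm x_{j+1})$; the latter vanishes because $|x_j|\le 1$ is below the threshold where $\Psi'$ activates. Using $\Psi(u)\ge\Psi(1)>0$ for $|u|>1$ and the explicit lower bound on $\abs{\Phi'}$ on $[-1,1]$ (coming from the $\sqrt{e}\cdot e^{-t^2/2}$ form), one checks that $|\grad_j \Funscaled(x)|>1$. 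The main obstacle---essentially the only nontrivial step---is the derivative bound for $\Psi^{(k)}$ in item~\ref{item:lip}, where obtaining the sharp $e^{(5/2)k\log k}$ constant requires the contour/Faà di Bruno argument mentioned above rather than a naive chain-rule calculation.
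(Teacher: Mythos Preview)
Your outline matches the paper's approach closely: the paper defers items~\ref{item:val}, \ref{item:lip}, \ref{item:zero-chain}, \ref{item:large-grad} to \citet{carmon2019lower_i} and supplies a separate argument for item~\ref{item:grad}. Two points need correction.

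\textbf{Item~\ref{item:large-grad}: the vanishing claim is false.} You write that the contribution from the $(j{+}1)$th summand, namely $-\Psi'(-x_j)\Phi(-x_{j+1})-\Psi'(x_j)\Phi(x_{j+1})$, ``vanishes because $|x_j|\le 1$ is below the threshold where $\Psi'$ activates.'' But $\Psi'$ vanishes only on $(-\infty,1/2]$, so $|x_j|\le 1$ does \emph{not} kill this term when $1/2<|x_j|\le 1$. The correct argument is a sign analysis: since $\Psi,\Psi',\Phi,\Phi'\ge 0$ everywhere, all four terms in
\[
\grad_j \Funscaled(x)= -\Psi(-x_{j-1})\Phi'(-x_j)-\Psi(x_{j-1})\Phi'(x_j)-\Psi'(-x_j)\Phi(-x_{j+1})-\Psi'(x_j)\Phi(x_{j+1})
\]
are nonpositive, so $|\grad_j\Funscaled(x)|$ is at least the absolute value of any single term. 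Taking the one with $\Psi(|x_{j-1}|)\ge\Psi(1)=1$ (strict when $|x_{j-1}|>1$) and $\Phi'(\pm x_j)=\sqrt{e}\,e^{-x_j^2/2}\ge 1$ for $|x_j|\le 1$ gives the bound; the strict inequality comes from $|x_{j-1}|>1$ (or, when $j=1$, from the now-nonvanishing $(j{+}1)$th-summand terms).

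\textbf{Item~\ref{item:grad}: the counting needs to be explicit.} Saying ``sum over the at most two summands touching coordinate $i$'' undersells what is needed. The $(p{-}1)$-tensor $\grad^p_i\Funscaled(x)$ has up to $2^{p}-1$ nonzero entries (indexed by $\delta\in\{0,1\}^{p-1}\cup\{0,-1\}^{p-1}$), so to land on $\lip{p-1}$ you need a pointwise entry bound with a compensating $2^{-(p+1)}$ factor---which the derivative estimates on $\Psi^{(k)}$ and $\Phi^{(k)}$ indeed deliver, but this balance should be stated. The paper's proof of this part (the only part it proves in-house) makes precisely this count.
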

	\begin{proof}
		Parts~\ref{item:val} and \ref{item:lip} follow 
		from Lemma 3 in \cite{carmon2019lower_i} and its proof; 
		Part~\ref{item:grad} is proven in \pref{sec:proof_ell_infty_bound}; 
		Part~\ref{item:zero-chain} follows from Observation 3 in \cite{carmon2019lower_i} and Part \ref{item:large-grad} is the same as Lemma 
		2 in \cite{carmon2019lower_i}. 
			\end{proof}
	The derivative estimators we use are defined as
	\begin{align} \label{eq:basic-construction}
	\brk*{\pestunscaled{q}(x,z)}_i \defeq 
	\prn*{1+\indicator{i > 
	\prog_{\frac{1}{4}}(x)}\prn*{\frac{z}{\rho}-1}}\cdot
	\grad_i^q \Funscaled(x),
	\end{align}
	where $z\sim  \mathrm{Bernoulli}(\pchain)$. 
	\begin{lemma}\label{lem:pzc-pair}
		The estimators $\pestunscaled{q}$ form a probability-$\pchain$ 
		zero-chain, are unbiased for $\grad^{q} \Funscaled$, and satisfy
		\begin{equation}\label{eq:pzc-var-mss}
		\E \,\norm{\pestunscaled{q}(x,z) - \grad^q 
		\Funscaled(x)}^2 
		\le 
		\frac{\ell_{q-1}^2(1-\pchain)}{\pchain},\quad \text{ 
		for all } x\in \R^T.
		\end{equation}
		
	\end{lemma}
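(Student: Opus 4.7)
}
The plan is to verify the three claims separately---unbiasedness, the probability-$\pchain$ zero-chain property, and the variance bound---using the structural properties of $\Funscaled$ from \pref{lem:deterministic-construction} (in particular, the zero-chain property in part \ref{item:zero-chain} and the per-slice operator-norm bound in part \ref{item:grad}). Unbiasedness is immediate: since $\E[z/\pchain]=1$, the scalar multiplier in front of $\grad^{q}_i \Funscaled(x)$ has mean $1$ for every coordinate $i$, which gives $\E[\pestunscaled{q}(x,z)]=\grad^{q}\Funscaled(x)$.

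For the zero-chain property, I would first observe that $\prog_{1/4}(x)\ge \prog_{1/2}(x)$ (larger threshold allows fewer coordinates), so part \ref{item:zero-chain} of \pref{lem:deterministic-construction} gives $\prog(\grad^{q}\Funscaled(x))\le \prog_{1/2}(x)+1 \le \prog_{1/4}(x)+1$ deterministically. Since $\pestunscaled{q}(x,z)$ is obtained by scaling each slice of $\grad^{q}\Funscaled(x)$ by a scalar, its support can only shrink; hence $\prog(\pestunscaled{q}(x,z))\le \prog_{1/4}(x)+1$ always, which handles the $i>1$ case. For $i=1$, note that whenever $z=0$ (which occurs with probability $1-\pchain$), the factor $(z/\pchain-1)=-1$ cancels every slice with $i>\prog_{1/4}(x)$, so $\prog(\pestunscaled{q}(x,z))\le \prog_{1/4}(x)$; the probability that $\prog$ equals $\prog_{1/4}(x)+1$ is therefore at most $\pchain$.

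For the variance bound, I would write the error as a single tensor
\[
E \ldef \pestunscaled{q}(x,z)-\grad^{q}\Funscaled(x) = (z/\pchain - 1)\cdot T',
\]
where $T'$ is the tensor whose $i$-th slice equals $\grad^{q}_i\Funscaled(x)$ for $i>\prog_{1/4}(x)$ and is zero otherwise. The key observation, reusing the zero-chain bound $\prog(\grad^{q}\Funscaled(x))\le \prog_{1/4}(x)+1$, is that $T'$ has \emph{at most one} nonzero slice, namely the $(\prog_{1/4}(x)+1)$-th; hence $\nrm{T'}_{\op}=\nrm{\grad^{q}_{\prog_{1/4}(x)+1}\Funscaled(x)}_{\op}\le \ell_{q-1}$ by part \ref{item:grad}. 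Combining this with $\E[(z/\pchain-1)^{2}]=(1-\pchain)/\pchain$ gives the claimed bound $\E\nrm{E}_{\op}^{2}\le \ell_{q-1}^{2}(1-\pchain)/\pchain$.

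The main subtlety I anticipate is the operator-norm handling for $q\ge 2$: naively summing per-slice norms would give a factor of $T$ that we cannot afford. The resolution---and the crucial use of the zero-chain structure---is that the ``new'' coordinates contribute at most one nonzero slice to $E$, so the variance calculation reduces to a single scalar multiplier times a single slice. Once this is recognized, the bound $\ell_{q-1}$ on individual slice norms from part \ref{item:grad} of \pref{lem:deterministic-construction} does the rest, and the proof is essentially a careful bookkeeping exercise.
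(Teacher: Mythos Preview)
Your proposal is correct and follows essentially the same approach as the paper: unbiasedness via $\E[z/\pchain]=1$, the probability-$\pchain$ zero-chain property via the monotonicity $\prog_{1/4}\ge\prog_{1/2}$ combined with \pref{lem:deterministic-construction}.\ref{item:zero-chain}, and the variance bound via the observation that the error tensor has at most one nonzero slice (at index $\prog_{1/4}(x)+1$) whose operator norm is controlled by \pref{lem:deterministic-construction}.\ref{item:grad}. The only cosmetic difference is that you spell out the $z=0$ case explicitly for the $i=1$ part of the zero-chain property, whereas the paper states it more tersely.
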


	\begin{proof}
		First, we observe that $\E\brk*{\pestunscaled{q}(x,z)} =  		
		\grad^{q} \Funscaled(x)$ for all $x\in\R^T$, as $\E\brk{{z}/{\pchain}} 
		= 1$. Second, we argue that the probability-$\pchain$ zero-chain 
		property holds. Recall that $\prog_{\alpha}(x)$ is non-increasing in 
		$\alpha$ (in particular, $\progf{1}{4}(x) \ge \progf{1}{2}(x)$). 
		Therefore, by 
		\pref{lem:deterministic-construction}.\ref{item:zero-chain},
		 $\brk{\pestunscaled{q}(x,z)}_i = \grad_i \Funscaled(x) =0$ 
		for all $i>\progf{1}{4}(x)+1$, all $x\in\R^T$ and all $z\in\{0,1\}$.
		In addition, since $z\sim \mathrm{Bernoulli}(\pchain)$, we have $\Pr 
		\prn*{ \exists x~\mid~ \prog\prn{\pestunscaled{1}(x, z), \dots, 		
		\pestunscaled{p}(x, z)} = \progf{1}{4}(x) + 1} \leq \pchain$, 
		establishing that the oracle is a probability-$\pchain$ zero-chain. \\
		\\
		To bound the variance of the derivative estimators, we observe that 
		$\pestunscaled{q}(x,z) - \grad^{q} \Funscaled(x)$ has at most one 
		nonzero $(q-1)$-subtensor in the coordinate $i_x = \progf{1}{4}(x)+1$. 
		Therefore,
		\begin{equation*}
		\E \norm{\pestunscaled{q}(x,z) - \grad^q 
		\Funscaled(x)}^2
		= \nrm*{\grad^q_{i_x} \Funscaled(x)}^2 
		\E\prn*{\frac{z}{\pchain}-1}^2
		= \nrm*{\grad^q_{i_x} \Funscaled(x)}^2 
		\frac{1-\pchain}{\pchain}
		\le \frac{(1-\pchain)\lip{{q-1}}^2}{\pchain},
		\end{equation*}
		where the final inequality is due to 		
		\pref{lem:deterministic-construction}.\ref{item:grad}, 
		establishing the variance bound in~\pref{eq:pzc-var-mss}.
	\end{proof}

	\begin{proof}[\pfref{thm:eps_so_zero_respecting}]
We now prove the \pref{thm:eps_so_zero_respecting} by scaling 
the construction $\Funscaled$ appropriately. Let ${\Delta}_0$ and $\lip{2}$ be 
	the numerical 
	constants in \pref{lem:deterministic-construction}. Let the accuracy 
	parameter $\epsilon$, initial suboptimality $\Delta$, derivative order 
	$p\in\N$, smoothness parameters $\Lip{1},\dots,\Lip{p}$, and variance 
	parameters $\sigma_1,\dots,\sigma_p$ be fixed. We set 
	\begin{align*}
	\Fscaled(x)=\alpha 
	\Funscaled\left(\beta x\right),
	\end{align*}
	for some scalars $\alpha$ and $\beta$ to be determined. The 
	relevant properties of $\Fscaled$ scale as follows
	\begin{align}
	\Fscaled(0) - \inf_{x} \Fscaled(x) &= 
	\alpha \prn[\big]{\Funscaled\left(0\right) - 
		\inf_{x}\Funscaled\left(\alpha x\right)}
	\le	\alpha{\Delta}_0 T, \\
	\nrm*{\grad^{q+1}\Fscaled(x)} &=  
	\alpha\beta^{q+1}\nrm*{\grad^{q+1}\Funscaled\left(\beta 
	x\right)}\le 
	{\alpha\beta^{q+1}}\lip{q},	\\
	\norm{\grad \Fscaled(x)}
	&\ge \alpha \beta \nrm{\grad \Funscaled(x)}\ge 
	\alpha\beta,~ \forall x \text{ s.t., } \prog_{1}(x)<T.
	\end{align}
	The corresponding scaled derivative estimators 
	$\pestscaled{q}(x,z)= \alpha\beta^q\pestunscaled{q}(\beta 
	x,z)$ clearly form a probability-$\pchain$ zero-chain. Therefore, by
	\pref{lem:prob-zero-chain}, we have that for every zero respecting 
	algorithm $\alg$ interacting with $\stocOhigh_{\Fscaled}$, with 
	probability at least $1/2$, $\prog\prn*{x\ind{t}_{\alg[\oracle^p_F]}} < T$ 
	for all $t\le(T-1)/2\pchain$. Hence, since $\prog_{1}(x)\le \prog(x)$ for 
	any $x\in\R^T$, we have by \pref{lem:deterministic-construction},
	\begin{align}\label{eq:eps_gen_rate}
	\E \nrm{ \grad 
	\Fscaled\prn[\big]{x\ind{t}_{\alg[\oracle^p_F]}}}
	&= 
	\alpha \beta \E \nrm{ \grad \Funscaled\prn[\big]{\beta 
			x\ind{t}_{\alg[\oracle^p_F]}}}\ge \frac{\alpha\beta 
			}{2}, 
			\quad \forall 
	t\le(T-1)/2\pchain.
	\end{align}
	We bound the variance of the scaled derivative estimators as 
	\begin{align*}
	\E \norm{\pestscaled{q}(x,z) - \grad^q \Fscaled(x)}^2 &= 
	\alpha^2\beta^{2q}
	\E \left\|{\pestunscaled{q}\left(\beta{x},z\right) - 
	\grad^q
		\Funscaled\left({\beta x}\right)}\right\|^2 
	\le \frac{ \alpha^2\beta^{2q}  
	\lip{q-1}^2(1-\pchain)}{\pchain},
	\end{align*}
	where the last inequality follows by \pref{lem:pzc-pair}. Our goal 
	now is to meet the following set of constraints:
	\begin{itemize}
		\item $\Delta\text{-constraint}\!:\quad\alpha{\Delta}_0 T \le 
		\Delta$
		\item 
		$L_{q}\text{-constraint}\!:\quad{\alpha\beta^{q+1}}\lip{q}\le 
		L_{q},$ for $q\in[p]$
		\item 
		$\eps\text{-constraint}\!:\quad\frac{\alpha\beta}{2}\ge\eps$
		\item $\sigma_q\text{-constraint}\!:\quad\frac{ 
		\alpha^2\beta^{2q}  
		\lip{q-1}^2(1-\pchain)}{\pchain}\le
		\sigma_q^2,$ for $q\in[p]$
	\end{itemize}
	Generically, since there are more inequalities to satisfy than
        the number of degrees of freedom ($\alpha,\beta,T$ and $\rho$) in our construction, not all 
	inequalities can be activated (that is, met by equality) simultaneously. 
	Different compromises will yield  different rates. \\
	\\
	First, to have a tight dependence in terms of $\eps$, we activate the 
	$\eps$-constraint by setting $\alpha = 2\epsilon/\beta$.	
	Next, we activate the $\sigma_1$-constraint, by setting $\rho=\min\{ 
	(\alpha\beta\lip{0}/\sigma_1)^2,1\}= \min \{ (2\eps \lip{0}/\sigma_1)^2 
	,1\}$. The bound on the variance of the q$th$-order derivative now reads 
	\begin{align*}
	\frac{ \alpha^2\beta^{2q}  
	\lip{q-1}^2(1-\pchain)}{\pchain}\le 
	\frac{\sigma_1^2 \alpha^2\beta^{2q}  
		\lip{q-1}^2}{(\alpha\beta\lip{0})^2}=
	\frac{\lip{q-1}^2 \beta^{2(q-1)}  
		\sigma_1^2}{\lip{0}^2}, \quad q=2,\dots,p.
	\end{align*}
	Since $\beta$ is the only degree of freedom which can be tuned to meet 
	though (not necessarily activate) the $\sigma_q$-constraint for 
	$q=2,\dots,p$ and the $\Lip{q}$-constraints for $q=1,\dots,p$, we are 
	forced to set
	\begin{align}
	\beta = 
	\min_{\substack{q=2,\dots,p\\q'=1,\dots, p}}  \min\crl*{ 
		\prn*{\frac{\ell_0\sigma_q}{\ell_{q-1}\sigma_1}}^{\frac{1}{q-1}},
		\prn*{\frac{L_{q'}}{2\eps \lip{q'}}}^{1/q'}}.
	\end{align}
	Lastly, we activate the $\Delta$-constraint by setting
	\begin{align*}
	T = \floor*{\frac{\Delta}{\alpha{\Delta}_0}}
	= \floor*{\frac{\Delta\beta }{2{\Delta}_0\epsilon} }.
	\end{align*}
	Assuming $(2\eps \lip{0}/\sigma_1)^2 \le 1$ and $T\ge 3$, 
	we have by \pref{eq:eps_gen_rate} that the number of 
	oracle queries required to obtain an $\eps$-stationary 
	point for $\barFscaled$ is bounded from below by 
	\begin{align}\label{eq:eps_bound_der}
	\frac{T-1}{2\pchain}\nonumber
	&=
	\frac{1}{2\pchain}\prn*{\floor*{\frac{\Delta\beta 
	}{2{\Delta}_0\epsilon}
		}-1}\nonumber\\ 
	&\stackrel{(\star) }{\ge} \frac{1}{2\pchain}\cdot \frac{\Delta\beta
	}{4{\Delta}_0\epsilon}\nonumber \\ 
		&\ge
	\frac{\sigma_1^2}{2(2\lip{0}\epsilon)^2}\cdot
	\frac{\Delta}{4{\Delta}_0\epsilon}	\cdot 
	\min_{\substack{q=2,\dots,p\\q'=1,\dots, p}}  \min\crl*{ 
		\prn*{\frac{\ell_0\sigma_q}{\ell_{q-1}\sigma_1}}^{\frac{1}{q-1}},
		\prn*{\frac{L_{q'}}{2\eps \lip{q'}}}^{1/q'}}\nonumber\\
	&\ge
	\frac{\Delta\sigma_1^2}{2^5{\Delta}_0\lip{0}^2\epsilon^3}\cdot 
	\min_{\substack{q=2,\dots,p\\q'=1,\dots, p}}  \min\crl*{ 
		\prn*{\frac{\ell_0\sigma_q}{\ell_{q-1}\sigma_1}}^{\frac{1}{q-1}},
		\prn*{\frac{L_{q'}}{2\eps \lip{q'}}}^{1/q'}},
	\end{align}
	where $(\star)$ uses $\floor{\xi}-1\geq{}\xi/2$ whenever 
	$\xi\geq{}3$, implying the desired bound. Lastly, we note that one can
	obtain tight lower complexity bounds for deterministic oracles by setting 
	$\rho=1$. Following the same chain of inequalities as in 
	\pref{eq:eps_bound_der}, in this case we get a lower oracle-complexity bound 
	of
	\begin{align}
	\frac{\Delta}{8{\Delta}_0\epsilon}	 
	\min_{q=1,\dots, p}
	\prn*{\frac{L_{q}}{2\eps \lip{q}}}^{1/q}.
	\end{align}

        \end{proof}
	
        	\subsubsection{Bounding the operator norm of $\nabla^{p}_i\Funscaled$}\label{sec:proof_ell_infty_bound}
          In this subsection we complete the proof of
          \pref{lem:deterministic-construction} by proving Part
          \pref{item:grad}.
	Our proof follows along the lines of the proof of Lemma 3 of \cite{carmon2019lower_i}. Let $x\in\bbR^{T}$ and 	
	$i_1,\dots,i_p\in[T]$, and note that by the chain-like structure of 
	$\Funscaled$, $\partial_{i_1}\cdots \partial_{i_p} 
	\Funscaled(x)$ is non-zero if and only if $|i_j-i_k|\le 1$ 
	for any $j,k\in[p]$. A straightforward calculation yields
	\begin{align} \label{ineq:high_der_pointwise_bound}
	\abs{\partial_{i_1}\cdots \partial_{i_p} 
		\Funscaled(x)} &\le 
	\max_{i\in[T]}\max_{\delta\in\{0,1\}^{p-1}\cup\{0,-1\}^{p-1}}
	\abs{\partial_{i+ \delta_1}\cdots \partial_{i 
	+\delta_{p-1}}\partial_{i} 	
		\Funscaled(x)}\\
	&\le 
	\max_{k\in[p]}\crl*{2\sup_{\xi\in\R}\abs*{\Psi^{k}(\xi)}~\sup_{\xi'\in\R}\abs*{\Phi^{p-k}(\xi')}}
	\le \exp(2.5p\log p+4p +9) \le 
	\frac{\ell_{p-1}}{2^{p+1}},\nonumber
	\end{align} 
	where the penultimate inequality is due to Lemma 1 of \cite{carmon2019lower_i}. Therefore, for a fixed $i\in[T]$, 
	we have
	\begin{align*}
	\nrm{ \grad^p_i \Funscaled(x)}_{\op}
	&\stackrel{(a)}{=} \sup_{\nrm{v}=1} \abs{\tri{ 
			\grad^p_i 
			\Funscaled(x),v}}\\ 
	&= \sup_{\nrm{v}=1} \abs*{\sum_{i_1,\dots,i_{p-1}\in 
	[T]} 	
		\partial_{i_1}\cdots \partial_{i_{p-1}}	
		\partial_{i} 	
		\Funscaled(x)v_{i_1}\cdots v_{i_{p-1}}} \\
	&\stackrel{(b)}{\leq} 
	\sum_{\delta\in\{0,1\}^{p-1}\cup\{0,-1\}^{p-1}} 	
	\abs{\partial_{i+ \delta_1}\cdots \partial_{i 
			+\delta_{p-1}}\partial_{i} 	\Funscaled(x)}\\
	&\stackrel{(c)}{\le} 
	(2^{p}-1)\frac{\ell_{p-1}}{2^{p+1}}\le 
	\ell_{p-1},
	\end{align*}	
	where $(a)$ follows from the definition of the operator norm, $(b)$ follows by the chain-like 
	structure of $\Funscaled$, and $(c)$ follows from 
	\pref{ineq:high_der_pointwise_bound},  concluding the proof.

 \subsection{Proof of Theorem \ref*{thm:gamma_so_zero_respecting}}
\label{app:gamma_lower}
In this section we prove \pref{thm:gamma_so_zero_respecting} following
the schema outlined in \pref{sec:epsilon_gamma_lower}.
We start by collecting all the relevant properties of 
$\Psi$ and $\Phibar$ from the construction in \pref{eq:hard_function_gamma_LB}.
\begin{lemma}\label{lem:second_order_LB_psiphi_properties}
	The functions $\Psi$ and $\Phibar$ satisfy the following properties: 
	\label{lem:gamma_second_order_LB_psiphi_properties}
	\begin{enumerate}

		\item For all $x \leq 1/2$ and for all $k \in \N\cup \{0\}$, 
		$\Psi^{(k)}(x) = 
		0$. 
		\item \label{item:psi_bound} The function $\Psi$ is non-negative and its first- and 
		second-order derivatives are bounded by
		\[0 \leq \Psi \leq e, \qquad 0 \leq \Psi' \leq \sqrt{54/e}, \qquad 
		-40 
		\leq \Psi'' \leq 40.\] 
		\item \label{item:ups_bound} The function $\Phibar$ and its first- and second-order 
		derivatives are bounded by
		\[ -8 \leq \Phibar \leq 0, \qquad -6 \leq \Phibar'  \leq 6, \qquad  
		-8 
		\leq \Phibar'' 
		\leq 4. \]
		\item Both $\Psi$ and $\Phibar$ are infinitely differentiable, and 
		for 
		all $k 
		\in \bbN$, we have
		\[ \sup_x \abs*{\Psi^{(k)}(x)} \leq \exp \prn*{\frac{5k}{2} 
			\log(4k)} \quad 
		\text{and} \quad \sup_x \abs*{\Phibar^{(k)}(x)} \leq  
		\frac{8}{\sqrt{e}}\cdot \exp  	\prn*{\frac{3(k + 
				1)}{2} \log\prn*{\frac{3(k+1)}{2}}}. 
		\] \label{item:moments_bounds}
	\end{enumerate}
\end{lemma}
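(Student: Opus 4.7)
The plan is to verify the four properties in sequence, each by direct calculation, leveraging the explicit formulas for $\Psi$ and $\Phibar$.

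For part 1, I would note that $\Psi(x) = \exp(1 - 1/(2x-1)^2)$ for $x > 1/2$ and $0$ otherwise. The standard bump-function argument shows that $\Psi$ is $C^\infty$ at $x=1/2$: a straightforward induction establishes that $\Psi^{(k)}(x) = R_k(1/(2x-1)) \Psi(x)$ for $x > 1/2$, where $R_k$ is a polynomial in its argument, and since $\exp(1 - u^2)$ decays faster than any polynomial grows as $u \to \infty$, all one-sided derivatives vanish at $1/2$, matching the trivial derivatives from the left.

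For parts 2 and 3, I would just compute. $\Psi$ is nonnegative and equals $0$ for $x \le 1/2$; for $x > 1/2$, the exponent $1 - 1/(2x-1)^2$ is maximized by $1$, giving $\Psi \le e$. Differentiating, $\Psi'(x) = \frac{4}{(2x-1)^3} \Psi(x)$ for $x > 1/2$, and a single-variable optimization in $u = 2x-1 > 0$ of $(4/u^3)\exp(1 - 1/u^2)$ reduces (via critical-point analysis) to the bound $\sqrt{54/e}$. A similar explicit computation handles $\Psi''$. For $\Phibar(x) = 8(e^{-x^2/2} - 1)$, I have $\Phibar'(x) = -8x e^{-x^2/2}$ and $\Phibar''(x) = 8(x^2 - 1) e^{-x^2/2}$; since $e^{-x^2/2} \in (0,1]$, the bound $-8 \le \Phibar \le 0$ is immediate, the critical points of $\Phibar'$ are at $x = \pm 1$ giving $\|\Phibar'\|_\infty = 8/\sqrt{e} \le 6$, and the extrema of $\Phibar''$ occur at $x = 0$ (value $-8$) and at $x = \pm\sqrt{3}$ (value $16 e^{-3/2} < 4$), yielding the stated two-sided bound.

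For part 4, the bound on $\Psi^{(k)}$ is already established as Lemma~1 in \citet{carmon2019lower_i}, so I would simply cite it. For $\Phibar^{(k)}$ with $k\ge 1$, I would use the identity $\frac{d^k}{dx^k} e^{-x^2/2} = (-1)^k He_k(x) e^{-x^2/2}$, where $He_k$ is the $k$th (probabilist's) Hermite polynomial, so that $\Phibar^{(k)}(x) = 8 (-1)^k He_k(x) e^{-x^2/2}$. I would then invoke the classical uniform bound $\sup_{x\in\R} |He_k(x)| e^{-x^2/2} \le e^{-1/2}\sqrt{k!}$ (which follows from Cramér's inequality after rescaling between probabilist's and physicist's Hermite polynomials) and apply Stirling's formula $\sqrt{k!} \le \exp\prn{\tfrac{1}{2}k\log k - \tfrac{k}{2} + \tfrac{1}{2}\log(2\pi k)}$ to obtain the claimed bound $\tfrac{8}{\sqrt e}\exp\prn{\tfrac{3(k+1)}{2}\log\tfrac{3(k+1)}{2}}$, which is comfortably loose.

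The main obstacle will be part 4 for $\Psi$, but since this is exactly the content of a prior lemma in \citet{carmon2019lower_i}, the work reduces to citation; otherwise one would have to handle the Faà di Bruno combinatorics for the composition $\exp \circ g$ with $g(x) = 1 - 1/(2x-1)^2$, tracking how the factorial blow-up of $g^{(j)}$ interacts with the Bell-polynomial structure. For the $\Phibar$ bound in part 4, the only delicate point is choosing a clean uniform bound on $|He_k(x)| e^{-x^2/2}$ that is sharp enough to yield the $\exp(O(k\log k))$ form with the stated constants; the slack in the claimed bound ($3(k+1)/2$ rather than $k/2$) leaves ample room.
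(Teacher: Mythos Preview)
Your approach is essentially correct and coincides with the paper for Parts~1--3 and for the $\Psi$ bound in Part~4. For the $\Phibar$ bound in Part~4 you take a different route: you expand $\Phibar^{(k)}$ via Hermite polynomials and invoke a Cram\'er-type uniform bound, then Stirling. The paper instead observes that since $\Phi'(x)=\sqrt{e}\,e^{-x^2/2}$ and $\Phibar(x)=8(e^{-x^2/2}-1)$, one has the identity $\Phibar^{(k)}=\tfrac{8}{\sqrt{e}}\,\Phi^{(k+1)}$ for $k\ge 1$, so the stated bound follows directly from the bound on $\sup_x|\Phi^{(k)}(x)|$ already contained in Lemma~1 of \citet{carmon2019lower_i}. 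This is shorter and keeps the proof entirely within the toolkit already being cited; your route is more self-contained but heavier.

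One small correction: the inequality you quote, $\sup_x |He_k(x)|\, e^{-x^2/2}\le e^{-1/2}\sqrt{k!}$, is false as stated---already at $k=2$ the left side equals $1$ (attained at $x=0$) while the right side is $e^{-1/2}\sqrt{2}\approx 0.86$. Cram\'er's inequality, after the probabilist/physicist rescaling, yields $\sup_x|He_k(x)|\,e^{-x^2/4}\le C\sqrt{k!}$ for an absolute constant $C$ (and hence $\sup_x|He_k(x)|\,e^{-x^2/2}\le C\sqrt{k!}$ since $e^{-x^2/2}\le e^{-x^2/4}$), but not with $C=e^{-1/2}$. The target bound has more than enough slack to absorb any fixed constant, so your argument goes through once this constant is corrected.
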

\begin{proof}
	Parts 1-4 are immediate. Part 5 follows from Lemma 1 of \cite{carmon2019lower_i} and by noting that 
	\begin{align*}
	\sup_{x}\abs*{\Phibar^{(k)}(x)} = 
	\frac{8}{\sqrt{e}}\sup_{x}\abs*{\Phi^{(k+1)}(x)}
	\le \frac{8}{\sqrt{e}}\cdot 
	\exp  \prn*{\frac{3(k + 1)}{2} \log\prn*{\frac{3(k+1)}{2}}}.
	\end{align*}		
\end{proof} 
Using these basic properties of $\Psi$ and $\Phibar$, we establish the 
following properties of the construction $\barF$ (analogous to \pref{lem:deterministic-construction}).
\begin{lemma}	\label{lem:gamma_rescaling_second_order_LB}	
	The function $\barF$ satisfies the following properties: 
	\begin{enumerate}
		\item $\barF(0) - \inf_x(\barF(x)) \leq \overline{\Delta}_0T$, 
		with $\overline{\Delta}_0 = 40$.
		\item For $p\ge1$, the $p$th order derivatives 
		of  $\barF$ are  $\lipBar{p}$-Lipschitz continuous, where 
		$\lipBar{p}\le 
		e^{cp \log p +c'p}$ for a numerical 
		constant $c,c'<\infty$. 
		\item For all $x\in\bbR^{T}$, and $i\in[T]$, we have 
	$\nrm*{\grad^p_i \barF(x)}_{\op}\le \lipBar{p}$.\label{item:bargrad} 
		\item  For all $x\in\R^T$ and $q\in[p]$,
		$\prog\prn*{\grad^{(q)}\barF(x)}\leq{}\progf{1}{2}(x)+1$.\label{item:progBar}
		\item For all $x\in\R^T$, if $\progf{9}{10}(x)<T-1$ 
		then
		$\lambdamin \prn*{\nabla^2 \barF(x)} \leq -0.5$, and  
		$\lambdamin 
		\prn*{\nabla^2 \barF(x)} \leq 700$ otherwise.
		\label{item:second_order_LB_large_eigenvalue}	
	\end{enumerate} 
\end{lemma}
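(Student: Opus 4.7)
The proof splits into five parts. Parts~1--4 are routine adaptations of the analysis of the construction $F_T$ in \cite{carmon2019lower_i}, while Part~5 is the central new contribution specific to $\barF$ and hinges on a careful sign analysis of $\Psi''$.

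For the routine parts: Part~1 follows from $\barF(0) = 0$ (since $\Psi(0) = \Phibar(0) = 0$) together with the bounds $\Psi \leq e$ and $|\Phibar| \leq 8$ and the observation that the supports of $\Psi(x_{i-1})$ and $\Psi(-x_{i-1})$ are disjoint, giving $|\barF(x)| \leq 8 + 8e(T-1) \leq 40\,T$. Part~2 is a Leibniz-rule expansion of each entry of $\nabla^p \barF$ into products $\Psi^{(k)}(\pm\cdot)\, \Phibar^{(p-k)}(\pm\cdot)$, each bounded via the derivative estimates in \pref{lem:gamma_second_order_LB_psiphi_properties}. Part~3 refines Part~2 using the chain structure of $\barF$: only the $i$-th and $(i+1)$-th summands depend on $x_i$, so $\nabla^p_i \barF$ has at most $2^{p-1}$ nonzero entries, each bounded as in Part~2. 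Part~4 uses that $\Psi$ and all its derivatives vanish on $(-\infty, 1/2]$: for any $j > \progf{1}{2}(x) + 1$ we have $|x_{j-1}|, |x_j| \leq 1/2$, so every $x_j$-derivative of $\barF$ picks up a factor $\Psi^{(a)}(\pm x_{j-1})$ or $\Psi^{(a)}(\pm x_j)$, each of which vanishes, forcing $\nabla^q_j \barF(x) = 0$.

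For Part~5, the critical new step, I would set $j^* \defeq \progf{9}{10}(x) + 1 \leq T - 1$ and prove $(\nabla^2 \barF(x))_{j^*, j^*} \leq -1/2$, which gives $\lambdamin(\nabla^2 \barF(x)) \leq -1/2$ by the Rayleigh quotient with $v = e_{j^*}$. Only the $i = j^*$ and $i = j^* + 1$ summands of $\barF$ contribute to this entry. The $i = j^*$ summand contributes $[\Psi(-x_{j^*-1}) + \Psi(x_{j^*-1})]\Phibar''(x_{j^*})$ using the evenness of $\Phibar''$; from $\Phibar''(x) = 8 e^{-x^2/2}(x^2-1)$ and $|x_{j^*}| \leq 9/10$ this factor is at most $-1$, while since $|x_{j^*-1}| > 9/10$ by definition of $\progf{9}{10}$, exactly one of $\Psi(\pm x_{j^*-1})$ is at least $\Psi(9/10) = e^{-9/16} > 1/2$ (the edge case $j^* = 1$ uses $x_0 = 1$ and $\Psi(1) = 1$ directly, giving $\Phibar''(x_1) \leq -1$). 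The $i = j^* + 1$ summand contributes $\sum_{\sigma \in \{\pm 1\}} \Psi''(\sigma x_{j^*}) \Phibar(\sigma x_{j^*+1})$; a direct differentiation yields $\Psi''(x) = \Psi(x)\bigl[8(2x-1)^{-6} - 12(2x-1)^{-4}\bigr]$ on $x > 1/2$, which is non-negative precisely when $(2x-1)^2 \leq 2/3$, i.e.\ $x \leq (1+\sqrt{2/3})/2 \approx 0.9082$; hence for $|x_{j^*}| \leq 9/10 < 0.9082$ we have $\Psi''(\pm x_{j^*}) \geq 0$, and combining with $\Phibar \leq 0$ makes the $i = j^* + 1$ contribution non-positive. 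Summing yields $(\nabla^2 \barF(x))_{j^*, j^*} \leq -1/2$. The ``otherwise'' clause $\lambdamin \leq 700$ follows trivially from the band structure of $\nabla^2 \barF$ (only indices with $|i-j|\leq 1$ contribute) combined with the entrywise bounds of Parts~2--3, which yield a uniform operator-norm bound.

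The main technical hurdle is the sign analysis of $\Psi''$: the threshold $9/10$ in the definition of $\progf{9}{10}$ is carefully tuned to fall strictly below the root $(1+\sqrt{2/3})/2$ of $\Psi''$, so that the potentially disruptive $\Psi''\cdot \Phibar$ cross-term cannot offset the negative curvature delivered by $\Psi \cdot \Phibar''$. Once this calculus fact is in hand, the remaining numerical constants follow from a routine evaluation of $\Psi(9/10)$ and $\Phibar''(9/10)$ at the boundary.
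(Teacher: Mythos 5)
Your proof is correct and follows essentially the same route as the paper's: Parts 1--4 are the standard adaptations of the analysis of the chain construction, and for Part 5 you bound the diagonal Hessian entry at index $\progf{9}{10}(x)+1$ via the Rayleigh quotient, using $\Psi\cdot\Phibar''\le -1/2$ for the negative-curvature term and $\Psi''\ge 0$ together with $\Phibar\le 0$ to make the cross term non-positive---exactly the paper's argument, with your sign analysis of $\Psi''$ making explicit the "straightforward calculation" the paper only asserts. The only blemishes are cosmetic: your formula for $\Psi''$ is off by a factor of $2$ (it should be $\Psi(x)\bigl[16(2x-1)^{-6}-24(2x-1)^{-4}\bigr]$, which has the same root and hence the same sign pattern), and the count of potentially nonzero entries of $\nabla^p_i\barF$ is $2^p-1$ rather than $2^{p-1}$, which only changes an absorbed constant.
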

\begin{proof} We prove the individual parts of the lemma one by one:
	\begin{enumerate}
		\item Since $\Psi(0)=\Phibar(0)=0$, we have
		\begin{align*}
		\barF(0) = 
		\Psi(1)\Phibar(0) + \sum_{i=2}^T ~ \brk*{ 
			\Psi(0) 
			\Phibar(0) + \Psi(0) \Phibar(0)) } = -\Psi(1)\Phibar(0)=0.
		\end{align*}
		On the other hand, 
		\begin{align*}
		\barF(x) &= \Psi(1)\Phibar(x_1) + \sum_{i=2}^T ~ \brk*{ 
			\Psi(-x_{i-1}) 
			\Phibar(-x_i) + \Psi(x_{i-1}) \Phibar(x_i)) }  \\
		&\geq - 8 e T \qquad (\text{by 
			\pref{lem:second_order_LB_psiphi_properties}.\ref{item:psi_bound}.
			and 
			\pref{lem:second_order_LB_psiphi_properties}.\ref{item:ups_bound}})\\
		&\geq -40 T.
		\end{align*}
		
		\item The proof follows along the same lines of Lemma
                  3 of \cite{carmon2019lower_i} together with the derivative 
		bounds stated in 		
		\pref{lem:second_order_LB_psiphi_properties}.\ref{item:moments_bounds}.
		
		\item The claim follows using the same calculation as
                  in \pref{sec:proof_ell_infty_bound},
		with the derivative bounds replaced by those in
		\pref{lem:second_order_LB_psiphi_properties}.\ref{item:moments_bounds},
		mutatis mutandis.

		\item The claim follows 
	Observation 
		3 in \cite{carmon2019lower_i}, mutatis mutandis.
				
		\item We have
		\begin{align}
		\frac{\del \barFunscaled }{\del x_j} = 
		-\Psi(-x_{j-1})\Phibar'(-x_j) + \Psi(x_{j-1})\Phibar'(x_j)
		-\Psi'(-x_{j})\Phibar(-x_{j+1}) + 
		\Psi'(x_{j})\Phibar(x_{j+1}).
		\end{align}
		Therefore, for any $x\in\R^d$, 
		$\nabla^2 
		\barF(x)$ is a tridiagonal 
		matrix 
		specified as follows.
		\begin{align*}
		\nabla^2 \barF(x)_{i,j} &= 
		\begin{cases}
		\Psi(-x_{i-1}) \Phibar''(-x_i) + \Psi(x_{i-1}) \Phibar''(x_i) \\
		\quad \quad 
		+\Psi''(-x_i)\Phibar(-x_{i+1})+\Psi''(x_i)\Phibar(x_{i+1})
		& 
		\text{if~~} i = j, \\ 
		\Psi'(-x_j) \Phibar'(-x_i) + \Psi'(x_j) \Phibar'(x_i) & \text{if } 
		j = 
		i -1, \\ 
		\Psi'(-x_i) \Phibar'(-x_j) + \Psi'(x_i) \Phibar'(x_j) & \text{if } 
		j 
		= i + 1 ,\\ 
		0 & \text{otherwise}.
		\end{cases}
		\end{align*}
		The following facts can be 
		verified by a straightforward 
		calculation:
		\begin{enumerate}[label=(\roman*)]
			\item $\Psi(x)\ge 0.5$ for all $x\ge9/10$.
			\item $\Psi''(x)\ge 0$ for all  $\abs{x}<9/10$.
			\item $\Phibar''(x)\le -1$ for all $\abs{x}<9/10$.
		\end{enumerate}
		Next, assuming 	$k\defeq \progf{9}{10}(x)+1< T$, we have, by 
		definition, that 
		$\abs{x_{k+1}},\abs{x_k}<\frac{9}{10}\le 
		\abs{x_{k-1}}$, implying, 
				\begin{align*}
		\lambda_{\min} (\nabla^2 \barF(x)) &= \min_{y \in 
			\bbR^n} 
		\frac{y^T \nabla^2 \barF(x) y}{y^T y} & \text{(Rayleigh 
			quotient)}\\  
		&\leq \frac{e_k^T \nabla^2 \barF(x) e_k}{e_k^T e_k} \\
		&= \nabla^2 \barF(x)_{k,k} \\
		&=
		\Psi(-x_{k-1}) \Phibar''(-x_k) + \Psi(x_{k-1}) \Phibar''(x_k) \\
		&~~~~+\Psi''(-x_k)\Phibar(-x_{k+1})+\Psi''(x_k)\Phibar(x_{k+1}) \\
		&\le  \Psi(-x_{k-1}) \Phibar''(-x_k) + \Psi(x_{k-1}) 
		\Phibar''(x_k) & 
		(\text{(ii) and } \Phibar\le0)\\
		&=  \Psi(\abs{x_{k-1}}) \Phibar''(\operatorname{sign}\{x_{k-1}\} 
		x_k) & (\Psi(x)= 0, ~	\forall 
		x<0)\\
		&\leq -1\cdot 0.5=-0.5. & (\text{(i) and (iii)})
		\end{align*}
		
		Otherwise, if nothing is assumed on $x$, then the same 
		chain of inequalities, using 
		$k=2$, can be used to bound 
		the minimal value 
		of $\grad^2 
		\barFunscaled(x)$.
		\begin{align*}
		\lambda_{\min} (\nabla^2 \barF(x)) &= \min_{y \in 
			\bbR^n} 
		\frac{y^T \nabla^2 \barF(x) y}{y^T y} & \text{(Rayleigh 
			quotient)}\\  
		&\leq \frac{e_k^T \nabla^2 \barF(x) e_k}{e_k^T e_k} \\
		&= \nabla^2 \barF(x)_{k,k} \\
		&=
		\Psi(-x_{k-1}) \Phibar''(-x_k) + \Psi(x_{k-1}) \Phibar''(x_k) \\
		&+\Psi''(-x_k)\Phibar(-x_{k+1})+\Psi''(x_k)\Phibar(x_{k+1}) \\
		&\le  2\prn*{4e + 320}\le 700,
		\end{align*}
		thus giving the desired bound.
	\end{enumerate}
\end{proof}
We employ similar derivative estimators to the proof of 
\pref{thm:eps_so_zero_respecting}, only this 
time we provide a noiseless estimate for the gradient. Formally, we 
set
\begin{align} 
\brk*{\pestunscaledBar{q}(x,z)}_i \defeq 
\begin{cases}
\grad_i \barF(x) & q = 1,\\
\prn*{1+\indicator{i > 
\prog_{\frac{1}{4}}(x)}\prn*{\frac{z}{p}-1}}\cdot
\grad_i^q \barF(x) & q \ge 2,
\end{cases}
\end{align}
where $z\sim  \mathrm{Bernoulli}(\pchain)$. The dynamics of 
zero-respecting methods can be now characterized in an analogous 
way to the proof of \pref{thm:eps_so_zero_respecting}. The only 
 difference is that here, since $\Lambda'(0)=\Psi'(0)=0$, it 
follows that $\prog_0\prn{\grad \barF\prn{x}} = \prog_0(x)$.
Therefore, the collection of estimators defined above is a 
$\pchain$-probability zero-chain---with respect to $\prog_0$ (rather than 
$\prog_{\frac{1}{4}}$ as in 
\pref{def:prob_p_zero_chain})\footnote{Using $\prog_{0}$, rather 
	than $\prog_\frac{1}{4}$, carries one major disadvantage: our bounds for 
	finding $\gamma$-weakly convex points cannot be directly extended to 
	arbitrary randomized algorithm using the technique presented in 
	Section 
	3.4 of \cite{carmon2019lower_i} as is (at least, not without
        the degrading the dependence on problem parameters). We 
	defer such an extension to future 
	work.}---in 
	which 
the variance of the gradient estimator is $0$; a key property that shall be 
used soon.  Following the proof of \pref{lem:prob-zero-chain}, mutatis 
mutandis, gives us the same bound on the number of non-zero entries acquired 
over time. That is, we have that with 
probability at least $1-\delta$,
\begin{equation}\label{eq:gamma_dyn_sto}
\prog\prn*{	x\ind{t}_{\alg[\oracle^{p}_F]}
} < T, \quad\text{for all } t \leq{} 
\frac{T-\log(1/\delta)}{2\pchain},
\end{equation}
where we employ the same notation as in \pref{lem:prob-zero-chain}.
The proof now proceeds along the same lines of the proof of 
\pref{thm:eps_so_zero_respecting}. The estimators have variance
bounded as
\begin{equation}\label{eq:gamma_vara_bounds}
\E \,\norm{\pestunscaledBar{q}(x,z) - \grad^q \barF(x)}^2 
\le \begin{cases}
0 &q=1,\\
\frac{\lipBar{q-1}^2(1-\pchain)}{\pchain},\quad \text{ for all } 
x\in \R^T & q \ge 2,
\end{cases}
\end{equation}
which can established the same fashion as \pref{lem:pzc-pair} by invoking 
\pref{lem:gamma_rescaling_second_order_LB}.\ref{item:bargrad}
and 
\pref{lem:gamma_rescaling_second_order_LB}.\ref{item:progBar}.\\
\begin{proof}[\pfref{thm:gamma_so_zero_respecting}]
We now complete the proof of \pref{thm:gamma_so_zero_respecting} for $p\ge2$ by
scaling $\barF$ appropriately. Let ${\Delta}_0$ and 
$\lipBar{p}$ be the numerical 
constants in \pref{lem:gamma_rescaling_second_order_LB}. 
Let the accuracy parameter $\gamma$, initial suboptimality $\Delta$, 
derivative order $p\in\N$, smoothness 
parameter $\Lip{1},\dots,\Lip{p}$, and 
variance parameter $\sigma_1,\sigma_2,\dots,\sigma_p$
 be fixed. We let 
\begin{align*}
\barFscaled(x)\ldef\alpha 
\barFunscaled\left(\beta x\right),
\end{align*}
for scalars $\alpha$ and $\beta$ to be determined. The relevant 
properties of $\barFscaled$ are as follows:
\begin{align}
\barFscaled(0) - \inf_{x} \barFscaled(x) &= 
\alpha \prn[\big]{\barFunscaled\left(0\right) - 
	\inf_{x}\barFunscaled\left(\alpha x\right)}
\le	\alpha{\barzDelta} T, \\
\nrm*{\grad^{q+1}\barFscaled(x)} &=  
\alpha\beta^{q+1}\nrm*{\grad^{q+1}\barFunscaled\left(\beta x\right)}\le 
{\alpha\beta^{q+1}}\lipBar{q},	\\
\lambdamin\prn{ \grad^2 \barFscaled\prn[\big]{x}}
&=\alpha\beta^2 \lambdamin\prn{ \grad^2 \barFunscaled\prn[\big]{x}}\le 
-\frac{\alpha\beta^2}{2},\quad\quad\forall x \text{ s.t., } \prog_{9/10}(x)<T.
\end{align}
The corresponding scaled derivative estimators 
$\pestscaledBar{q}(x,z)= \alpha\beta^q\pestunscaledBar{q}(\beta x,z)$
clearly form a probability-$\pchain$ zero-chain, thus by
\pref{eq:gamma_dyn_sto}, we have that for every zero respecting algorithm 
$\alg$ 	interacting with $\stocOhigh_{\barFscaled}$, with probability at 
least 
$1-1/(4\cdot700)$, $\prog\prn*{x\ind{t}_{\alg[\oracle^{p}_F]}} < T-1$ for 
all 
$t\le(T-2)/2\pchain$. 
Therefore, since $\prog_{9/10}(x)\le 
\prog(x)$ for any $x\in\R^T$, we have by 
\pref{lem:gamma_rescaling_second_order_LB}.\ref{item:second_order_LB_large_eigenvalue},
\begin{align}\label{eq:gamma_exp_bound}
\E\brk*{ \lambdamin\prn{ \grad^2 
	\barFscaled\prn[\big]{x\ind{t}_{\alg[\oracle^{p}_{F}]}}}}
&= 
\alpha\beta^2 \lambdamin\prn{ \grad^2 
	\barFunscaled\prn[\big]{\beta x\ind{t}_{\alg[\oracle^{p}_F]}}}\nonumber\\
&\le 	\alpha\beta^2\prn*{ -0.5\cdot \prn{1-\frac{1}{4\cdot 700}} + 
	700\cdot\frac{1}{4\cdot 700}  }\nonumber\\
&\le \frac{-\alpha\beta^2}{5},
\end{align}
for any $t\le(T-2)/2\pchain$. The variance of the scaled derivative 
estimators can be bounded as
\begin{align*}
\E \norm{\pestscaledBar{q}(x,z) - \grad^q \barFscaled(x)}^2 &= 
\alpha^2\beta^{2q}
\E \left\|{\pestunscaledBar{q}\left(\beta{x},z\right) - \grad^q
	\barFunscaled\left({\beta x}\right)}\right\|^2 
\le \frac{ \alpha^2\beta^{2q}  \lipBar{q-1}^2(1-\pchain)}{\pchain},
\end{align*}
where the last inequality is by \pref{eq:gamma_vara_bounds}. Our goal now is to meet the 
following set of 
constraints:
\begin{itemize}
	\item $\Delta\text{-constraint}\!:\quad 
	\alpha{\barzDelta} 
	T \le \Delta\quad $.
	\item $L_{q}\text{-constraint}\!:\quad 
	{\alpha\beta^{q+1}}\lipBar{q}\le \Lip{q}~$  for $q=1,\dots,p$.
	\item $\gamma\text{-constraint}\!:\quad 
	-\frac{\alpha\beta^2}{5}\le-\gamma$.
	\item $\sigma_q\text{-constraint}\!:\quad \frac{ 
	\alpha^2\beta^{2q}  \lipBar{q-1}^2(1-\pchain)}{\pchain}\le
	\sigma_q^2~$ for $q=1,\dots,p$.
\end{itemize}
As there are more inequalities to satisfy than 
the four degrees of freedom
($\alpha,\beta,T$ and $\rho$) in our 
construction, generically, not all inequalities can be activated (that 
is, met by equality) simultaneously. 
Different compromises may yield different bounds. First, to have a tight 
dependence in terms of $\gamma$,  we activate the 
$\gamma$-constraint by setting $\alpha =5\gamma/\beta^2$.	Next, we 
activate the $\sigma_2$-constraint, by 
setting $\rho=\min\{ (\alpha\beta^2\lipBar{1}/\sigma_2)^2,1\}=\min\{ 
(5\lipBar{1}\gamma/\sigma_2)^2,1\}$. The bound on the variance of the 
$q$th derivative for $q=3,\dots,p,$ now reads
\begin{align*}
\frac{ \alpha^2\beta^{2q}  \lipBar{q-1}^2(1-\pchain)}{\pchain}
\le
\frac{\sigma_2^2 \alpha^2\beta^{2q}
	\lipBar{q-1}^2}{(\alpha\beta^2\lipBar{1})^2}=
\frac{\lipBar{q-1}^2 \beta^{2(q-2)}  
	\sigma_2^2}{\lipBar{1}^2}, \quad q=3,\dots,p.
\end{align*}
Since $\beta$ is the only degree of freedom which can be tuned to meet 
(though not necessarily activate) the 
$\sigma_q$-constraints for 
$q=3,\dots,p$, and the $L_{q'}$-constraint for 
$q'=2,\dots,p$, we are forced to have
\begin{align}
\beta = 
\min_{\substack{q=3,\dots,p\\q'=2,\dots,p}} \min\crl*{ 
	\prn*{\frac{\lipBar{1}\sigma_q}{\lipBar{q-1}\sigma_2}}^{\frac{1}{q-2}},
	\prn*{\frac{\Lip{q'}}{5\lipBar{q'}\gamma}}^{\frac{1}{q'-1}}}.
\end{align}
Note that, by definition, the $\sigma_1$-constraint always holds 
(as the variance of the gradient estimator is zero, see 
\pref{eq:gamma_vara_bounds}). To satisfy the $L_1$-constraint, 
i.e., 
$\alpha\beta^2 \lipBar{{1}}\le\Lip{1}$, we must have
\begin{align}\label{eq:gamma_L1_const}
\gamma\le\Lip{1}/5\lipBar{{1}}.
\end{align}
This constraint holds w.l.o.g. 
as $L_1$ also bounds the 
absolute value of the Hessian eigenvalues (in other words, any 
point $x$ is trivially $O(L_1)$-weakly convex). Lastly, we activate 
the $\Delta$-constraint, by setting
\begin{align*}
T = \floor*{\frac{\Delta}{\alpha{\barzDelta}}}
= \floor*{\frac{\Delta\beta^2 }{5{\barzDelta}\gamma} }.
\end{align*}
Assuming $(5\lipBar{1}\gamma/ \sigma_2)^2\le 1$ (i.e., $\gamma = 
O(\sigma_2)$) and $T\ge 3$, we 
have by \pref{eq:gamma_exp_bound} that the number of oracle queries 
required to obtain a point $x$ such that $	\lambdamin\prn{ \grad^2 
\barFscaled\prn[\big]{x}}\le \lambda$, is bounded from below by 
\begin{align}\label{eq:gamma_final}
\frac{T-2}{2\pchain}\nonumber
&=
\frac{1}{2\pchain}\prn*{
	\floor*{\frac{\Delta\beta^2 }{5{\barzDelta}\gamma} }
	-2}\nonumber\\ 
&\stackrel{(\star)}{\ge}
\frac{1}{2\pchain}
\frac{\Delta\beta^2 }{5^2{\barzDelta}\gamma}\nonumber\\ 
&\ge 
\frac{\sigma_2^2}{(5\lipBar{1}\gamma)^2}\cdot\frac{\Delta\beta^2 
}{5^2{\barzDelta}\gamma}\nonumber\\
&=	\frac{\sigma_2^2}{(5\lipBar{1}\gamma)^2}\cdot\frac{\Delta 
}{5^2{\barzDelta}\gamma}
\min_{\substack{q=3,\dots,p\\q'=2,\dots,p}}\min\crl*{  
	\prn*{\frac{\lipBar1\sigma_q}{\lipBar{q-1}\sigma_2}}^{\frac{2}{q-2}},
	\prn*{\frac{\Lip{q'}}{5\lipBar{q'}\gamma}}^{\frac{2}{q'-1}}}\nonumber\\
&=
\frac{1}{5^4\lipBar{1}^2{\barzDelta}}\cdot\frac{\Delta \sigma_2^2
}{ \gamma^3 }
	\min_{\substack{q=3,\dots,p\\q'=2,\dots,p}} \min
	\crl*{ 	
	\prn*{\frac{\lipBar1\sigma_q}{\lipBar{q-1}\sigma_2}}^{\frac{2}{q-2}},
	\prn*{\frac{\Lip{q'}}{5\lipBar{q'}\gamma}}^{\frac{2}{q'-1}}},
\end{align}
where $(\star)$ uses that $\floor{\xi}-2\geq{}\xi/5$ whenever 
$\xi\geq{}3$, implying the desired result (note that this bound 
does not depend on $\Lip{1}$ and $\sigma_1$.).

If $\sigma_1=\cdots=\sigma_p = 0$, we obtain the following lower 
complexity bound for noiseless oracles (where $\pchain$ is 
effectively set to one), assuming $\gamma = O(L_1)$ (this holds 
without loss of generality, as we discuss above). As before, we set
$\alpha =5\gamma/\beta^2$. The $L_1$-constraint is satisfied under 
the same condition stated in \pref{eq:gamma_L1_const}. Thus, 
letting 
\begin{align*}
\beta = \min_{q=2,\dots,p} 
	\crl*{\prn*{\frac{\Lip{q}}{5\lipBar{q}\gamma}}^{\frac{1}{q-1}}},
\end{align*} 
it follows that our construction is $L_q$-Lipschitz for any $q=1,\dots,p$. 
Following the same chain of inequalities as in \pref{eq:gamma_final} yields 
an oracle complexity lower bound of
\begin{align}
\frac{\Delta\beta^2 }{5^3{\barzDelta}\gamma}\nonumber
=
\frac{\Delta }{5^3{\barzDelta}\gamma}
\min_{q=2,\dots,p} 
\crl*{\prn*{\frac{\Lip{q}}{5\lipBar{q}\gamma}}^{\frac{2}{q-1}}}.
\end{align}
Note that this bound does not depend on $L_1$.

\end{proof}

 \end{document}